\title{Applications of Gaussian Processes at Extreme Lengthscales: From Molecules to Black Holes}
\author{Ryan-Rhys Griffiths}
\keywords{{LaTeX} {PhD Thesis} {Engineering} {University of
Cambridge}}
\begin{document}

\frontmatter

\maketitle

\begin{declaration}

This thesis is the result of my own work and includes nothing which is the outcome of work done in collaboration except as declared in the Preface and specified in the text. I further state that no substantial part of my thesis has already been submitted, or, is being concurrently submitted for any such degree, diploma or other qualification at the University of Cambridge or any other University or similar institution except as declared in the Preface and specified in the text. It does not exceed the prescribed word limit for the relevant Degree Committee

\end{declaration}

\begin{acknowledgements}      

I would like to thank all those who contributed in some part, indirect or otherwise, to my research productivity over the past years. I would like to thank Alpha Lee, my supervisor, firstly for giving me an opportunity to return from the working world to pursue research at a time when it seemed like all doors had been closed, and secondly, for striking a balance between giving me the freedom to explore new topics and offering excellent guidance and support when needed.\\
\indent In rough chronological order, I would like to thank Philippe Schwaller who introduced me to Alpha following a chance encounter on the streets of Cambridge in \textbf{April 2018}. My life may have been very different save for that meeting! Additionally, I would like to thank Philippe for his ongoing collaboration and sharing his expertise in all things involving sequence data and chemical reactions. \\
\indent From my time at Prowler.io (now Secondmind Labs) from \textbf{2017-2018}, I would like to thank Alexis Boukouvalas for acting as a fantastic mentor and supporting me in all my endeavours. I learned a great deal about both machine learning and software engineering during our pair programming sessions, especially when developing the code for adaptive sensor placement \citep{2019_Grant}. I would like to thank James Hensman, Richard Turner and Carl Rasmussen for giving lectures on Gaussian processes which sparked my interest in the topic. I would also like to thank Adithya Devraj for the numerous interesting conversations about machine learning and the philosophy of science. \\
\indent I would like to thank my colleagues from Cambridge Spark for keeping me up to speed with machine learning in industry from \textbf{2017-2022}. In particular, Raoul Gabriel-Urma, Petar Velickovic, Tim Hillel, Patrick Short, Catalina Cangea, Sahan Bulathwela, Ilyes Khemakhen, Chris Davis, Fred Hallgren and Kevin Lemagnen. Acting as a mentor for the Schmidt Data for Science Residency program was a highlight where I had the opportunity to learn about areas ranging from synthetic biology to geophysics and climate modelling. \\
\indent I would like to thank my colleagues from the Lee group at TCM from \textbf{2018-2022}, namely, Philip Verpoort, Alex Aldrick, Penelope Jones, Yunwei Zhang, William McCorkindale, Felix Faber, Alwin Bucher, Rhys Goodall, Janosh Riebesell, Rokas Elijosius, David Kovacs and Emma King-Smith for social interactions and academic discussions when I was not absent due to the global pandemic or internships. \\
\indent I would like to thank Bingqing Cheng, whom I first met in \textbf{2019} for involving me in her work on the ASAP library \citep{2020_Cheng}, and for taking the time to introduce me to a broad network of researchers applying machine learning to problems in physics and materials science. \\
\indent At the Institute of Astronomy I would like to thank Jiachen Jiang for introducing me to the world of high-energy astrophysics in the summer of \textbf{2019} and for providing excellent guidance on the contents of Chapter 3, namely modelling the multiwavelength variability of Mrk-335 using Gaussian processes \citep{2021_Mrk}. I would also like to thank Douglas Buisson, Dan Wilkins and Luigi Gallo for their feedback as well as Andy Fabian and Christopher Reynolds for being kind enough to attend an astrophysics talk given by a PhD student (myself) with no background in astrophysics! \\
\indent At the Computer Lab I would like to thank Ben Day, Simon Mathis and Arian Jamasb, whom I first met in \textbf{2020}, for discussions on graphs, molecules, proteins and antibodies. In particular, I would like to thank Arian for our almost daily Slack conversations and for answering my endless lists of questions! \\
\indent I would like to thank my colleagues at Huawei Noah's Ark Lab whom I began working with in \textbf{October 2020}. I would like to thank Haitham Bou-Ammar for his mentorship as well as Rasul Tutunov, Vincent Moens, Alexander Cowen-Rivers, Alexander Maraval, Antoine Grosnit and Hang Ren whom I learned a great deal from during our joint work \citep{2020_Grosnit, 2020_Rivers, 2021_Grosnit}. \\
\indent I would like to thank Anthony Bourached, George Cann, Gregory Kell and David Stork for their collaboration applying machine learning to artwork starting in late \textbf{2020} \citep{2021_Bourached_art, 2021_Cann, 2021_Stork, 2022_Kell}. In particular David has been an excellent mentor on the subject of research practices. \\
\indent I would like to thank Miguel Garcia-Ortegon, Vidhi Lalchand, James Wilson and Luke Corcoran for informative discussions on heteroscedastic Bayesian optimisation \citep{2021_Griffiths}, the topic of Chapter 6. I would like to thank Ajmal Aziz and Edward Kosasih at the Institute for Manufacturing for introducing me to supply chain logistics in the summer of \textbf{2021}, and in particular to applications of graph neural networks for supply chain problems \citep{2021_Aziz}. \\
\indent I would like to thank Jian Tang for supervising me at MILA from \textbf{January 2022} as well as Bojana Rankovic, Sang Truong, Leo Klarner, Aditya Ravuri, Yuanqi Du, Julius Schwartz, Austin Tripp, Alex Chan, Jacob Moss, Felix Opolka and Chengzhi Guo for their contributions to the GAUCHE library. \\
\indent I would like to thank Jake Greenfield for providing his photoswitch expertise and in particular, for helping out in a tight spot by rediscovering an old batch of lost molecules during a laboratory cleanup after the new batch had been mislaid by courier following a 3-month journey through customs. \\ 
\indent I would like to thank Henry Moss, whom I met virtually during the pandemic in the summer of \textbf{2020} and with whom I began developing a Gaussian process library for chemistry in the form of FlowMO \citep{2020_flowmo}. The evolved version, GAUCHE, comprises the contents of Chapter 4. I also appreciate the daily Slack discussions about all things to do with Bayesian optimisation and Gaussian processes. \\
\indent I would like to thank David Ginsbourger for hosting myself and Henry Moss in Bern in \textbf{April 2022} to discuss extensions of the ideas comprising Chapters 4 and 5 with Athénaïs Gautier and Anna Broccard. I would like to thank Ekansh Verma and Souradip Chakraborty for involving me in their work on invariances in Bayesian optimisation \citep{2021_Verma} in the summer of \textbf{2022}. I would like to thank my long-time friend S. F. for rigorously inspecting the notation of the final thesis in \textbf{July 2022}. I would like to thank Victor Prokhorov for always providing interesting food for thought during our many conversations in Cambridge. \\
\indent On a personal note, I would like to thank Leandro Charanga and Monika Jankauskaite for teaching me how to dance, Subhankar, Thomas C and Dan for their advice on ethical dilemmas, Thomas M for discussions on mathematics and Bachata, Teja for trying to teach me some gymnastics and my parents for their ongoing support.

\end{acknowledgements}

\begin{abstract}

In many areas of the observational and experimental sciences data is scarce. Observation in high-energy astrophysics is disrupted by celestial occlusions and limited telescope time while laboratory experiments in synthetic chemistry and materials science are both time and cost-intensive. On the other hand, knowledge about the data-generation mechanism is often available in the experimental sciences, such as the measurement error of a piece of laboratory apparatus. \\
\indent Both characteristics make Gaussian processes (\textsc{gp}s) ideal candidates for fitting such datasets. \textsc{gp}s can make predictions with consideration of uncertainty, for example in the virtual screening of molecules and materials, and can also make inferences about incomplete data such as the latent emission signature from a black hole accretion disc. Furthermore, \textsc{gp}s are currently the workhorse model for Bayesian optimisation, a methodology foreseen to be a vehicle for guiding laboratory experiments in scientific discovery campaigns. \\
\indent The first contribution of this thesis is to use \textsc{gp} modelling to reason about the latent emission signature from the Seyfert galaxy Markarian 335, and by extension, to reason about the applicability of various theoretical models of black hole accretion discs. The second contribution is to deliver on the promised applications of \textsc{gp}s in scientific data modelling by leveraging them to discover novel and performant molecules. The third contribution is to extend the \textsc{gp} framework to operate on molecular and chemical reaction representations and to provide an open-source software library to enable the framework to be used by scientists. The fourth contribution is to extend current \textsc{gp} and Bayesian optimisation methodology by introducing a Bayesian optimisation scheme capable of modelling aleatoric uncertainty, and hence theoretically capable of identifying molecules and materials that are robust to industrial scale fabrication processes.

\end{abstract}

\tableofcontents

\printnomenclature

\mainmatter

\chapter{Introduction}

\ifpdf
    \graphicspath{{Introduction/Figs/Raster/}{Introduction/Figs/PDF/}{Introduction/Figs/}}
\else
    \graphicspath{{Introduction/Figs/Vector/}{Introduction/Figs/}}
\fi

\section{Motivation}

The past decade has seen deep learning models achieve breakthroughs in computer vision \citep{2012_Krizhevsky}, speech recognition \citep{2013_Graves}, and natural language processing \citep{2017_Vaswani}. In fact, progress on developing deep learning architectures has proceeded so rapidly that, as of 2021, machine learning pioneer Andrew Ng has voiced the opinion that research into improving deep architectures has plateaued, at least in the traditional domains of vision, speech and language. Ng is now calling for a shift in focus towards data-centric AI, arguing that the dataset, as opposed to the model, is now the performance bottleneck in many real-world problems \citep{2021_Ng}.

In the natural sciences, however, model development is by no means a solved problem. Large scientific datasets have existed for some time, such as those generated by the Large Hadron Collider at CERN \citep{2013_Cern}, or the Chemical Universe Database, GDB-17 \citep{2012_Ruddigkeit}, which enumerates 166 billion small molecules. Developing effective models for scientific data is still an active and fast-moving field of research however \citep{2022_Kalinin}. In contrast to artificial data such as images, speech, and text, scientific data can often be inexorably tied to causal paradigms, entailing challenges for purely data-driven approaches seeking to achieve strong out-of-distribution (OOD) performance. Lines of inquiry in this direction include incorporating invariances due to symmetries into deep learning models for proteins and molecules \citep{2021_Jumper, 2020_Hermann}, as well as causal mechanisms for problems in physics \citep{2021_Scholkopf}.

A further challenge for building performant machine learning models for scientific applications stems from the availability of data. While large datasets in the sciences have undoubtedly been a key driver of research, there are also areas of scientific discovery which will always be limited to small data. Examples include molecular design, where one wishes to predict the properties of a new class of molecule for which few experimental measurements exist, as well as high-energy astrophysics, where one wishes to draw inferences from astronomical time series with short observation periods. In the past years researchers have achieved success in porting breakthroughs in deep learning to large scientific datasets \citep{2019_Bolgar, 2020_Chithrananda, 2022_White}. Deep learning models, however, are known to struggle in small data regimes to the extent that leading deep learning expert Yoshua Bengio previously voiced a preference for a model called a Gaussian process (\textsc{gp}) for small datasets \citep{2011_Bengio}. As such, leveraging them directly for small data scientific discovery could prove to be difficult.  

\textsc{gp}s have received comparatively less attention relative to deep learning over the past decade due to a variety of factors including a higher barrier to entry in terms of the mathematical background required to use them, fewer open-source software implementations, and perhaps most importantly, concerns over the ability of \textsc{gp}s to carry out representation learning, a stance summed up in the following prescient quote from \cite{2003_MacKay} which foreshadows some of the challenges currently encountered in supervised deep learning for the sciences.  

\begin{displayquote}
"According to the hype of 1987, neural networks were meant to be intelligent models that discovered features and patterns in data. Gaussian processes in contrast are simply smoothing devices. How can Gaussian processes possibly replace neural networks? Were neural networks over-hyped, or have we underestimated the power of smoothing methods? I think both these propositions are true. The success of Gaussian processes shows that many real-world data modelling problems are perfectly well-solved by sensible smoothing methods. The most interesting problems, the task of feature discovery for example are not ones that Gaussian processes will solve. But maybe multilayer perceptrons can't solve them either. Perhaps a fresh start is needed, approaching the problem of machine learning from a paradigm different from the supervised feedforward mapping."
\end{displayquote}

One of the motivations for focussing on \textsc{gp}s in this thesis, is the proposition that many scientific discovery problems are instances of the real-world problems described by MacKay. Furthermore, \textsc{gp}s are more than just smoothing devices. In addition to admitting exact Bayesian inference which can be used to perform plausible reasoning \citep{2003_Jaynes} over scientific hypotheses, \textsc{gp}s are also a longstanding workhorse of Bayesian optimisation (\textsc{bo}) and active learning \citep{2012_Settles}, two methodologies that have already shown promise in accelerating scientific discovery \citep{2020_Pyzer, 2021_Shields}. The goals of this thesis are twofold: First, to showcase some of the use-cases for \textsc{gp}s in modelling scientific data and second, to extend current \textsc{gp} methodology and software implementations to enable their application to scientific problems. Specifically, the problem domains considered are:

\begin{enumerate}
    \item \textbf{High-Energy Astrophysics} - It is challenging to test theories in high-energy astrophysics due to the inability to perform physical experiments at the far reaches of the universe. As such, the analysis of observational data is important to guide the development of theory. It is shown how \textsc{gp} modelling can play a role in performing inference over the structure of black hole accretion discs and hence inform the development of future accretion disk theories.
    \item \textbf{Photoswitch Chemistry} - In synthetic chemistry, new areas of chemical space are constantly being explored and often little experimental data exists to guide exploration. It is shown how \textsc{gp} modelling can be used for molecular property prediction to prioritise the synthesis of novel molecules. We validate the modelling approach with laboratory experiments, discovering new and performant photoswitch molecules.
    \item \textbf{Methodology/Software} - From a methodological standpoint a novel \textsc{bo} algorithm is introduced that identifies and penalises input-dependent (heteroscedasatic) measurement noise, an important consideration for the discovery of robust materials suitable for industrial scale manufacturing. From a software standpoint, an open-source \textsc{gp} library for chemistry is introduced, providing implementations of bespoke kernels designed for common molecular and chemical reaction representations.
\end{enumerate}

\begin{figure}[h]
\centering
{\includegraphics[width=\textwidth]{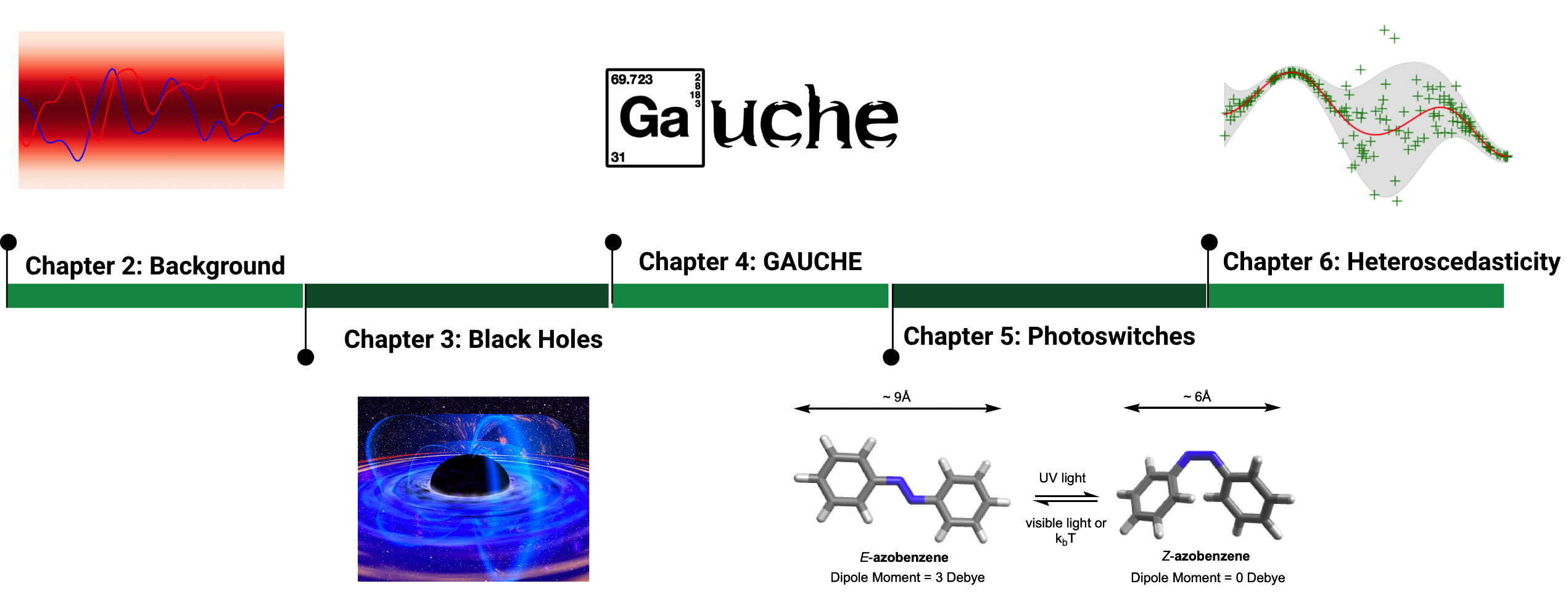}}
\caption{A pictorial overview of the thesis.}
\label{fig:overview_thesis}
\end{figure}

\section{Overview and Contributions}

A pictorial overview of the chapters of this thesis is available in \autoref{fig:overview_thesis}. The detailed summary and contributions of each chapter are as follows:

\paragraph{Chapter 2} The requisite background is provided on \textsc{gp}s and \textsc{bo}, the machine learning methodologies used across chapters of this thesis.

\paragraph{Chapter 3} A self-contained background is provided on the elements of high-energy astrophysics required to understand our findings. The gapped lightcurves of the Seyfert galaxy Markarian 335 (Mrk 335) are interpolated using \textsc{gp} modelling with the intention of inferring the structure of the black hole accretion disk through cross-correlation analysis. In a simulation study, Bayesian model selection through the marginal likelihood is investigated as a means of evaluating the most appropriate choice of \textsc{gp} kernel. Following \textsc{gp} modelling of the observational data, it is found that the distance between the UV and X-ray emission regions of Mrk 335 predicted by the Shakura-Sunyaev accretion disk model is shorter than the light travel time measured using \textsc{gp}-based inference. Tentative evidence is obtained for a short lag feature in the coherence and lag spectra which could indicate the presence of an extended UV
emission region on the accretion disk where reverberation happens.

\paragraph{Chapter 4} A self-contained background is provided on the elements of molecular machine learning required to understand the findings presented. GAUCHE is introduced, a software library for Gaussian processes in chemistry, tackling the problem of extending the \textsc{gp} framework to molecular representations such as graphs, strings and bit vectors. By designing bespoke molecular kernels, the door is opened to uncertainty quantification and \textsc{bo} directly on molecules and chemical reactions.

\paragraph{Chapter 5} A small dataset of experimentally-determined properties for $405$ photoswitch molecules is used in conjunction with the machinery made available in GAUCHE to train a multioutput \textsc{gp} with a Tanimoto kernel to screen a large virtual library of $7,265$ photoswitches, identifying 11 performant candidates validated through laboratory experiment. Additionally, a predictive performance comparison is conducted between the multioutput \textsc{gp} model and a cohort of trained human photoswitch chemists with the \textsc{gp} model outperforming the human experts. From a benchmark comparison against other machine learning models, it is concluded that the curated dataset, as opposed to the choice of model, is the key determinant of performance.

\paragraph{Chapter 6} A novel method for performing \textsc{bo} is introduced that is robust to experimental measurement noise featuring a heteroscedastic \textsc{gp} surrogate model. From an extensive empirical study, it is concluded that a moderately-sized initialisation set is required for the model to be able to distinguish heteroscedastic noise from intrinsic function variability. The chapter concludes with recommendations on how future research might enable the approach to be scaled to high-dimensional datasets.

\paragraph{Chapter 7} The thesis contributions are reviewed and discussed in the broader context of identifying and enabling further applications of \textsc{gp}s in the natural sciences.

\section{List of Publications}

What follows is the full list of publications co-authored during the PhD process starting in October 2018. J1 \citep{2021_Griffiths}, J2 \citep{2021_Mrk}, J3 \citep{2020_Griffiths} and W7 \citep{2022_Gauche} comprise the thesis. In J1 and J2, all coauthors acted in advisory roles, fine-tuning ideas and the final manuscripts. I conducted all experiments, mathematical derivations and implemented all code contributions. In J3, Aditya Raymond Thawani curated the training dataset, designed and recruited participants for the human performance comparison study and specified the set of performance criteria. Jake Greenfield performed the spectral characterisation of the discovered molecules in the Fuchter group laboratory at Imperial College London. I conducted all machine learning experiments and implemented all code contributions with the exception of the results in \autoref{tab1_photo} and \autoref{tab2_photo} of Appendix~\ref{benchmark_ml}, where Penelope Jones, William McCorkindale, Arian Jamasb and Henry Moss obtained results for the attentive neural process (ANP), smooth overlap of atomic positions (SOAP) kernel, graph neural network (GNN) and string kernel models respectively. 

In W7, I ran all experiments excluding the Buchwald-Hartwig reaction optimisation experiments which were run by Bojana Rankovic and the Weisfehler-Lehman (WL) graph kernel table entries which were run by Aditya Ravuri. The remaining co-authored articles are not included in the thesis for ease of exposition.

J4 \citep{2020_Griffiths} is a paper resulting from the continuation of my work from the MPhil in Machine learning at the University of Cambridge in 2017. J5 \citep{2020_Cheng} was work principally led by Dr. Bingqing Cheng. J6 \citep{2020_Rivers} and J7 \citep{2020_Grosnit} were articles written during an internship at the Huawei Noah's Ark Lab. J8 \citep{2020_Zagar} is a continuation of my work during an MSci at Imperial College London in 2016. J9 \citep{2022_Bourached} was principally led by Anthony Bourached. C1 \citep{2019_Grant} is a continuation of work undertaken whilst a machine learning researcher at Secondmind Labs prior to commencement of the PhD. C2-C5 \citep{2022_Kell, 2021_Stork, 2021_Cann, 2021_Bourached_art} are articles published in a domain unrelated to the topic of the thesis. The (unpublished) workshop contributions, W1-W3 \citep{2020_flowmo} are early versions of J3 and W7. W4 \citep{2018_Griffiths} is unrelated to the topic of the thesis although a figure from this paper is used as \autoref{reaction}. W5 \citep{2021_Aziz} was principally led by Ajmal Aziz and so is not included in the thesis. W6 is a condensed version of J8. P1 \citep{2021_Grosnit} resulted from work undertaken whilst at Huawei Noah's Ark Lab and so is not included in the thesis though the subject matter is related. P2 \citep{bourached2021hierarchical} was principally led by Anthony Bourached and so is not included in the thesis. P3 \citep{2023_Frieder} and P4 \citep{2022_Rankovic} was work led by Simon Frieder and Bojana Rankovic respectively and is not included in the thesis.\\

\noindent \textbf{Refereed Journal Papers} \\

\begin{enumerate}[label={[J\arabic*]}]
      \item \textbf{Griffiths RR}, Aldrick A, Garcia-Ortegon M, Lalchand V, Lee, AA. \href{https://iopscience.iop.org/article/10.1088/2632-2153/ac298c}{Achieving Robustness to Aleatoric Uncertainty with Heteroscedastic Bayesian Optimisation}. \textit{Machine Learning: Science and Technology}. 2021.
      \item \textbf{Griffiths RR}, Jiang J, Buisson D, Wilkins D, Gallo L, Ingram, A, Lee AA, Grupe D, Kara M, Parker ML, Alston W, Bourached A, Cann G, Young A, Komossa S. \href{https://iopscience.iop.org/article/10.3847/1538-4357/abfa9f/meta}{Modelling the Multiwavelength Variability of Mrk-335 using Gaussian Processes}. \textit{The Astrophysical Journal}. 2021. %
              \item \textbf{Griffiths RR}, Greenfield JL, Thawani AR, Jamasb A, Moss HB, Bourached A, Jones P, McCorkindale W, Aldrick AA, Fuchter, MJ, Lee AA. \href{https://pubs.rsc.org/en/content/articlelanding/2022/sc/d2sc04306h}{Data-Driven Discovery of Molecular Photoswitches with Multioutput Gaussian Processes}. \textit{Chemical Science}. 2022.
      \item \textbf{Griffiths RR}, Hern\'andez-Lobato JM. \href{https://pubs.rsc.org/en/content/articlelanding/2020/sc/c9sc04026a#!divAbstract}{Constrained Bayesian Optimization for Automatic Chemical Design using Variational Autoencoders}. \textit{Chemical Science}. 2020. %
      \item Cheng B, \textbf{Griffiths RR}, Wengert S, Kunkel C, Stenczel T, Zhu B, Deringer VL, Bernstein N, Margraf JT, Reuter K, Csanyi G. \href{https://pubs.acs.org/doi/abs/10.1021/acs.accounts.0c00403}{Mapping Datasets of Molecules and Materials}. \textit{Accounts of Chemical Research}. 2020.
      \item Cowen-Rivers A, Lyu W, Tutunov R, Wang Z, Grosnit A, \textbf{Griffiths RR}, Hao J, Wang J, Bou-Ammar H. \href{https://jair.org/index.php/jair/article/view/13643}{HEBO: Pushing the Limits of Sample-Efficient Hyper-parameter Optimisation}. \textit{Journal of Artificial Intelligence Research}, 2022.
      \item Grosnit A, Cowen-Rivers A, Tutunov R, \textbf{Griffiths RR}, Wang J, Bou-Ammar H. \href{https://www.jmlr.org/papers/v22/20-1422.html?ref=https://githubhelp.com}{Are We Forgetting About Compositional Optimisers in Bayesian Optimisation}. \textit{Journal of Machine Learning Research}. 2021. %
      \item Zagar C, \textbf{Griffiths RR}, Podgornik R, Kornyshev AA. \href{https://www.sciencedirect.com/science/article/pii/S1572665720305038}{On the Voltage-Controlled Self-Assembly of NP Arrays at Electrochemical Solid/Liquid Interfaces}. \textit{Journal of Electroanalytical Chemistry}. 2020. %
      \item Bourached A, \textbf{Griffiths RR}, Gray R, Jha A, Nachev P. \href{https://onlinelibrary.wiley.com/doi/full/10.1002/ail2.63}{Generative Model-Enhanced Human Motion Prediction}. \textit{Applied AI Letters}. 2021.
\end{enumerate}

\noindent \textbf{Refereed Conference Papers} \\

\begin{enumerate}[label={[C\arabic*]}]
      \item Grant J, Boukouvalas A, \textbf{Griffiths RR}, Leslie D, Vaikili S, Munoz de Cote E. \href{http://proceedings.mlr.press/v97/grant19a.html}{Adaptive Sensor Placement for Continuous Spaces}. \textit{International Conference on Machine Learning}. 2019. %
      \item Kell G, \textbf{Griffiths RR}, Bourached A, Stork D. \href{https://arxiv.org/abs/2203.07026}{Extracting Associations and Meanings of Objects Depicted in Artworks through Bi-Modal Deep Networks}, Electronic Imaging 2022.
      \item Stork D, Bourached A, Cann G, \textbf{Griffiths RR}. \href{https://arxiv.org/abs/2102.02732}{Computational Identification of Significant Actors in Paintings through Symbols and Attributes}, Electronic Imaging, 2021.
      \item Cann G, Bourached A, \textbf{Griffiths RR}, Stork D. \href{https://arxiv.org/abs/2102.00209}{Resolution Enhancement in the Recovery of Underdrawings Via Style Transfer by Generative Adversarial Deep Neural Networks}, Electronic Imaging, 2021.
      \item Bourached A, Cann G, \textbf{Griffiths RR}, Stork D. \href{https://arxiv.org/abs/2101.10807}{Recovery of Underdrawings and Ghost-Paintings via Style Transfer by Deep Convolutional Neural Networks: A Digital Tool for Art Scholars}, Electronic Imaging, 2021.
\end{enumerate}

\noindent \textbf{Refereed Workshop Papers} \\

\begin{enumerate}[label={[W\arabic*]}]
      \item \textbf{Griffiths RR*}, Moss H*. \href{https://arxiv.org/abs/2010.01118}{Gaussian Process Molecular Machine Learning with FlowMO}. \textit{NeurIPS Workshop on Machine Learning for Molecules}. 2020 (Contributed Talk - top 5\%, * joint first authorship). %
      \item \textbf{Griffiths RR}, Jones P, McCorkindale W, Aldrick AA, Jamasb A, Day B. Benchmarking Scalable Active Learning Strategies on Molecules. \textit{ICLR Workshop on Fundamental Science in the Era of AI}. 2020. %
      \item \textbf{Griffiths RR}, Thawani AR, Elijosius R. \textit{Enhancing the Diversity of Molecular Machine Learning Benchmarks: An Open-Source Dataset for Molecular Photoswitches}. \textit{ICLR Workshop on Fundamental Science in the Era of AI}. 2020. %
      \item \textbf{Griffiths RR}, Schwaller P, Lee AA. \href{https://chemrxiv.org/articles/Dataset_Bias_in_the_Natural_Sciences_A_Case_Study_in_Chemical_Reaction_Prediction_and_Synthesis_Design/7366973}{Dataset Bias in the Natural Sciences: A Case Study in Chemical Reaction Prediction and Synthesis Design}. \textit{NeurIPS Workshop on Critiquing and Correcting Trends in Machine Learning.} 2018. %
      \item Aziz A, Kosasih EE, \textbf{Griffiths RR}, Brintrup A. \href{https://arxiv.org/abs/2107.10609}{Data Considerations in Graph Representation Learning for Supply Chain Networks}. \textit{ICML Workshop on Machine Learning for Data: Automated Creation, Privacy, Bias}. 2021 %
      \item Bourached A, \textbf{Griffiths RR}, Gray R, Jha A, Nachev P. \href{https://arxiv.org/abs/2010.11699}{Generative Model-Enhanced Human Motion Prediction}. \textit{NeurIPS Workshop on Interpretable Inductive Biases and Physically-Structured Learning}. 2020. %
      \item \textbf{Griffiths RR}, Klarner L, Moss Henry B., Ravuri A, Rankovic B, Truong S, Du Y, Jamasb A, Schwartz J, Tripp A, Kell G, Bourached A, Chan A, Moss J, Guo C, Lee AA, Schwaller P, Tang J, \href{https://arxiv.org/abs/2010.11699}{GAUCHE: A Library for Gaussian Processes in Chemistry}. \textit{ICML Workshop on AI4Science}. 2022. %
      
\end{enumerate}

\noindent \textbf{Preprints} \\

\begin{enumerate}[label={[P\arabic*]}]
      \item \textbf{Griffiths RR*}, Grosnit A*, Tutunov R*, Maraval AM*, Cowen-Rivers A, Yang L, Lin Z, Lyu W, Chen Z, Wang J, Peters J, Bou-Ammar H. \href{https://arxiv.org/abs/2106.03609}{High-Dimensional Bayesian Optimisation with Variational Autoencoders and Deep Metric Learning}. \textit{arXiv}. 2021. (* joint first authorship)
      \item Bourached A, Gray R, \textbf{Griffiths RR}, Jha A, Nachev P. \href{https://arxiv.org/abs/2111.12602}{Hierarchical Graph-Convolutional Variational Autoencoding for Generative Modelling of Human Motion}. \textit{arXiv}. 2021.
      \item Frieder S, Pinchetti, L, \textbf{Griffiths RR}, Salvatori, T, Lukasiewicz, T, Petersen, PC, Chevalier, A and Berner, J, 2023. \href{https://arxiv.org/abs/2301.13867}{Mathematical capabilities of ChatGPT.}. \textit{arXiv}. 2023.
      \item Ranković, B, \textbf{Griffiths, RR}, Moss, HB and Schwaller, P. \href{https://chemrxiv.org/engage/chemrxiv/article-details/638e196ae6f9a162aa2ce493}{Bayesian optimisation for additive screening and yield improvements in chemical reactions–beyond one-hot encodings}. \textit{ChemRxiv}, 2022.
\end{enumerate}

\noindent \textbf{PhD Thesis} \\
\begin{enumerate}[label={[T\arabic*]}]
       \item \textbf{Griffiths RR}, \href{https://www.repository.cam.ac.uk/handle/1810/346223}{Applications of Gaussian Processes at Extreme Lengthscales: From Molecules to Black Holes}. \textit{University of Cambridge}. 2022.
\end{enumerate}

\section{List of Software}

The following list details the open-source software contributed to over the duration of the PhD process:

\begin{enumerate}[label={[S\arabic*]}]
    \item Constrained Bayesian optimisation for automatic chemical design: \textbf{Ryan-Rhys Griffiths} (2018). Code to reproduce the experiments from \cite{2020_Griffiths}.\\
    
    Available at: \href{https://github.com/Ryan-Rhys/Constrained-Bayesian-Optimisation-for-Automatic-Chemical-Design}{https://github.com/Ryan-Rhys/Constrained-Bayesian-Optimisation-for-Automatic-Chemical-Design}
    
    \item Mapping materials and molecules: Bingqing Cheng, \textbf{Ryan-Rhys Griffiths}, Tamas Stenczel, Bonan Zhu, Felix Faber (2020). A software library containing automatic selection tools for materials and molecules \citep{2020_Cheng}.\\
    
    Available at: \href{https://github.com/BingqingCheng/ASAP}{https://github.com/BingqingCheng/ASAP}
    
    \item Achieving robustness to aleatoric uncertainty with heteroscedastic Bayesian optimisation: \textbf{Ryan-Rhys Griffiths} (2019). Code to reproduce the experiments from \cite{2021_Griffiths}.\\
    
    Available at: \href{https://github.com/Ryan-Rhys/Heteroscedastic-BO}{https://github.com/Ryan-Rhys/Heteroscedastic-BO}
    
    \item The photoswitch dataset: \textbf{Ryan-Rhys Griffiths}, Aditya Raymond Thawani, Arian Jamasb, William McCorkindale, Penelope Jones (2020). Code to reproduce the experiments from Chapter 5.\\
    
    Available at: \href{https://github.com/Ryan-Rhys/The-Photoswitch-Dataset}{https://github.com/Ryan-Rhys/The-Photoswitch-Dataset}
    
    \item Modelling the multiwavelength variability of Mrk-335: \textbf{Ryan-Rhys Griffiths} (2021). Code to reproduce the experiments from \cite{2021_Mrk}.\\
    
    Available at: \href{https://github.com/Ryan-Rhys/Mrk_335}{https://github.com/Ryan-Rhys/Mrk\_335}
    
    \item An empirical study of assumptions in Bayesian optimisation: Alexander I. Cowen-Rivers, Wenlong Lyu, Rasul Tutunov, Zhi Wang, Antoine Grosnit, \textbf{Ryan-Rhys Griffiths}, Alexandre Max Maraval, Hao Jianye, Jun Wang, Jan Peters, Haitham Bou-Ammar (2021). Code to reproduce the experiments from \cite{2020_Rivers}.\\
    
    Available at: \href{https://github.com/huawei-noah/HEBO/tree/master/HEBO}{https://github.com/huawei-noah/HEBO/tree/master/HEBO}
    
    \item High-dimensional Bayesian optimisation with variational autoencoders and deep metric learning: Antoine Grosnit, Rasul Tutunov, Alexandre Max Maraval, \textbf{Ryan-Rhys Griffiths}, Alexander I. Cowen-Rivers, Lin Yang, Lin Zhu, Wenlong Lyu, Zhitang Chen, Jun Wang, Jan Peters, Haitham Bou-Ammar. Code to reproduce the experiments from \cite{2021_Grosnit}.\\
    
    Available at: \href{https://github.com/huawei-noah/HEBO/tree/master/T-LBO}{https://github.com/huawei-noah/HEBO/tree/master/T-LBO}
        
    \item Are we forgetting about compositional optimisers in Bayesian optimisation?: Antoine Grosnit, Alexander I. Cowen-Rivers, Rasul Tutunov, \textbf{Ryan-Rhys Griffiths}, Jun Wang, Haitham Bou-Ammar. Code to reproduce the experiments from \cite{2020_Grosnit}.\\
    
    Available at: \href{https://github.com/huawei-noah/HEBO/tree/master/T-LBO}{https://github.com/huawei-noah/HEBO/tree/master/T-LBO}
    
    \item FlowMO: \textbf{Ryan-Rhys Griffiths} and Henry Moss (2020). A GPflow library for training Gaussian processes on molecular data \citep{2020_Moss}.\\
    
    Available at: \href{https://github.com/Ryan-Rhys/FlowMO}{https://github.com/Ryan-Rhys/FlowMO}
    
    \item GAUCHE: \textbf{Ryan-Rhys Griffiths}, Leo Klarner, Henry Moss, Aditya Ravuri, Sang Truong, Arian Jamasb, Austin Tripp, Bojana Rankovic, Philippe Schwaller (2022). A software library for Gaussian processes in chemistry.\\
    
    Available at \href{https://github.com/leojklarner/gauche}{https://github.com/leojklarner/gauche}
    
    \item Extracting associations and meanings of objects depicted in artworks through bi-modal deep networks: Gregory Kell, \textbf{Ryan-Rhys Griffiths} (2021). Code to reproduce the experiments from \cite{2022_Kell}.\\
    
    Available at: \href{https://github.com/gck25/fine_art_asssociations_meanings}{https://github.com/gck25/fine\_art\_asssociations\_meanings}
\end{enumerate}

\nomenclature[Z-OOD]{OOD}{Out-Of-Distribution}
\nomenclature[Z-GP]{GP}{Gaussian Process}
\nomenclature[Z-BO]{BO}{Bayesian Optimisation}
\nomenclature[Z-Mrk 335]{Mrk 335}{Markarian 335}
\nomenclature[Z-GAUCHE]{GAUCHE}{GAUssian Processes in CHEmistry}
\nomenclature[Z-ANP]{ANP}{Attentive Neural Process}
\nomenclature[Z-SOAP]{SOAP}{Smooth Overlap of Atomic Positions}
\nomenclature[Z-GNN]{GNN}{Graph Neural Network}
\nomenclature[Z-WL]{WL}{Weisfehler-Lehman}

\chapter{Background}
\chapterimage[height=160pt]{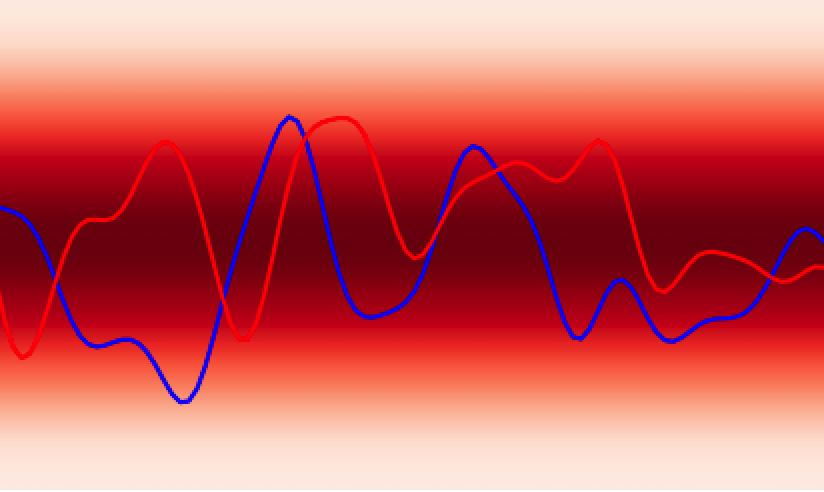}

\ifpdf
    \graphicspath{{Background/Figs/Raster/}{Background/Figs/PDF/}{Background/Figs/}}
\else
    \graphicspath{{Background/Figs/Vector/}{Background/Figs/}}
\fi

In this chapter the requisite background is provided on Gaussian processes (Chapters 3, 4, 5 and 6) and Bayesian optimisation (Chapters 4 and 6).

\section{Gaussian Processes}
\label{intro_to_gps}

In the context of machine learning, a Gaussian process (\textsc{gp}) is a Bayesian nonparametric model for functions. \textsc{gp}s are attractive models when limited data is available, a setting common to many areas of the natural sciences, with even notable deep learning experts voicing a preference for \textsc{gp}s in the small data regime \citep{2011_Bengio}. Furthermore, \textsc{gp}s possess several important properties for the applications in this thesis:

\begin{enumerate}
    \item \textbf{Bayesian optimisation:} \textsc{gp}s have few hyperparameters that need to be determined by hand which lends itself well to the repeated surrogate model hyperparameter optimisation required by Bayesian optimisation.
    \item \textbf{Astronomical time series:} For astronomical time series, where noise processes are often well understood, it is possible to incorporate this knowledge into the design of the \textsc{gp} model.
    \item \textbf{Molecules:} \textsc{gp}s maintain uncertainty estimates over molecular property values through exact Bayesian inference. Uncertainty estimates are particularly important when prioritising molecules for screening experiments.
\end{enumerate}

\noindent A Gaussian process (\textsc{gp}) may be defined as a collection of random variables, any finite subset of which have a joint Gaussian distribution \citep{2006_Rasmussen}. In the cases considered in this thesis, the random variables represent the value of the function $f(\mathbf{x})$ at location $\mathbf{x}$. A stochastic process $f$ that follows a \textsc{gp} is written as

\begin{equation}
f(\mathbf{x}) \sim \mathcal{GP}\big(m(\mathbf{x}), k(\mathbf{x}, \mathbf{x'})\big).
\end{equation}

\noindent The inputs to the \textsc{gp} may be scalars (e.g. time points in Chapter 3) or vectors (e.g. molecular representations in Chapters 4 and 5). In the current presentation we assume vector inputs $\mathbf{x} \in \mathbb{R}^d$ and we seek to perform Bayesian inference over the latent function $f$ that represents the mapping between the inputs $\{\mathbf{x_1}, \dotsc , \mathbf{x_N}\}$ and their function values $\{f(\mathbf{x_1}), \dotsc , f(\mathbf{x_N})\}$. The \textsc{gp} is characterised by a mean function,

\begin{equation}
m(\mathbf{x}) = \mathbb{E}[f(\mathbf{x})],
\end{equation}

\noindent and a covariance function

\begin{equation}
k(\mathbf{x}, \mathbf{x'}) = \mathbb{E}[(f(\mathbf{x}) - m(\mathbf{x}))(f(\mathbf{x'}) - m(\mathbf{x'}))].
\end{equation}

\noindent In the absence of prior information on trends in the data, the mean function is typically set to zero following standardisation of the outputs. Standardisation, in this case refers to the common practice of subtracting the mean and dividing by the standard deviation of the data when fitting the \textsc{gp} in order to facilitate the identification of appropriate hyperparameters \citep{2008_Murray}. The standardisation is reversed once the fitting procedure is complete in order to obtain predictions on the original scale of the data. $m(\mathbf{x}) \equiv \mathbf{0}$ will be assumed henceforth for the sake of the current presentation. The covariance function computes the pairwise covariance between two random variables (function values). In the \textsc{gp} literature, the covariance function is commonly referred to as the kernel. Informally, the kernel is responsible for determining the properties of the functions which the \textsc{gp} is capable of fitting e.g. smoothness and periodicity. The inductive bias created by the choice of kernel is an important consideration in \textsc{gp} modelling.

\begin{figure*}[h]
\centering
\subfigure[SQE small lengthscale ]{\label{fig:homo}\includegraphics[width=0.48\textwidth]{Background/Figs/sqexp_small_length.png}}
\subfigure[SQE large lengthscale]{\label{fig:het}\includegraphics[width=0.48\textwidth]{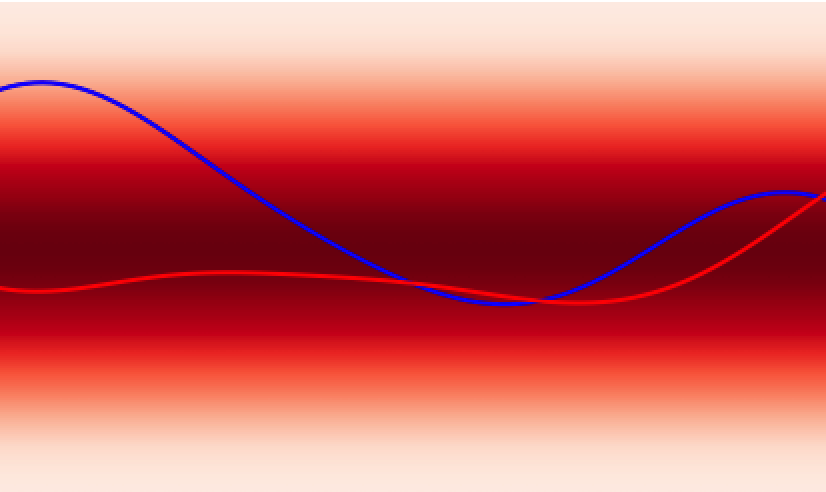}} 
\caption{\textsc{gp}s with small and large lengthscales.}
\label{fig:gp_kern}
\end{figure*}

\subsection{Kernels}

\noindent The most widely-known kernel is the squared exponential (SQE) or radial basis function (RBF) kernel,

\begin{equation}
\label{equation:sqe_}
k_{\text{SQE}}(\mathbf{x}, \mathbf{x'}) = \sigma_{f}^2 \exp\bigg(\frac{-\norm{\mathbf{x} - \mathbf{x'}}^{2}}{2\ell^2}\bigg),
\end{equation}

\noindent where $\norm{\cdot}$ is the Euclidean norm, $\sigma_{f}^2$ is the signal amplitude hyperparameter (vertical lengthscale) and $\ell$ is the (horizontal) lengthscale hyperparameter. Although \autoref{equation:sqe_} is written with a single lengthscale shared across dimensions, for multidimensional input spaces it is possible to optimise a lengthscale per dimension. We will adopt the notation of $\theta$ to represent the set of kernel hyperparameters. An illustration of \textsc{gp}s with different lengthscales is given in \autoref{fig:gp_kern}. It has been argued by \cite{2012_Stein} that the smoothness assumptions of the SQE kernel are unrealistic for many physical processes. As such, kernels such as the Matérn,

\begin{equation}
k_{\text{Matérn}}(\mathbf{x}, \mathbf{x'}) = \frac{2^{1-\nu}}{\Gamma(\nu)}\bigg(\frac{\sqrt{2\nu}-\norm{\mathbf{x} - \mathbf{x'}}}{\ell}\bigg)^\nu K_{\nu}\bigg(\frac{\sqrt{2\nu}-\norm{\mathbf{x} - \mathbf{x'}}}{\ell}\bigg),
\end{equation}

\noindent are more commonly seen in the machine learning literature. Here $K_{\nu}$ is a modified Bessel function of the second kind, $\Gamma$ is the gamma function and $\nu$ is a non-negative hyperparameter of the kernel which is typically taken to be either $\frac{3}{2}$ or $\frac{5}{2}$ \citep{2006_Rasmussen}. The lengthscale hyperparameter $\ell$ can be thought of loosely as a decay coefficient for the covariance between inputs as they become increasingly far apart in the input space; the further apart the inputs are, the less correlated they will be. The rational quadratic (RQ) kernel is defined as

\begin{equation}
k_{\text{RQ}}(\mathbf{x}, \mathbf{x'}) = \bigg(1 + \frac{\norm{\mathbf{x} - \mathbf{x'}}^{2}}{2\alpha\ell^2}\bigg)^{-\alpha},
\end{equation}

\noindent where $\alpha, \ell > 0$. The RQ kernel can be viewed as a scale mixture of SQE kernels with different characteristic lengthscales. A comparison of the functions drawn from \textsc{gp}s with SQE and Matérn $\frac{5}{2}$ kernels is given in \autoref{fig:gp_kern2}. The aforementioned kernels are defined over continuous input spaces and are used in Chapters 3 and 6. For discrete input spaces such as molecular representations it is necessary to define bespoke kernels which will be introduced in Chapters 4 and 5.

\begin{figure*}[h]
\centering
\subfigure[SQE]{\label{fig:homo}\includegraphics[width=0.48\textwidth]{Background/Figs/sqexp_small_length.png}}
\subfigure[Matérn $5/2$]{\label{fig:het}\includegraphics[width=0.48\textwidth]{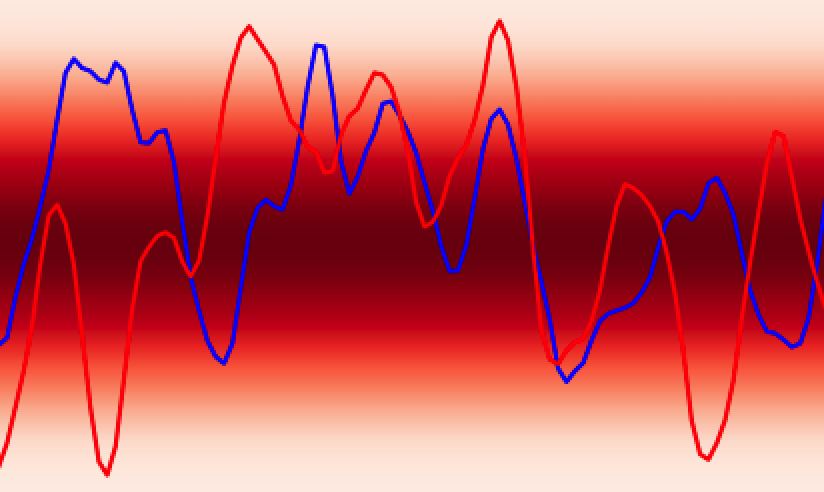}} 
\caption{A comparison of the SQE and Matérn $\frac{5}{2}$ kernels.}
\label{fig:gp_kern2}
\end{figure*}

\subsection{Predictions}

To obtain the predictive equations of \textsc{gp} regression, a mean function $m(\mathbf{x}) \equiv \mathbf{0}$ and kernel $k$ are specified and a \textsc{gp} prior $p$ is placed over $f$,

\begin{equation}
p(f(\mathbf{x})| \theta) = \mathcal{GP}\big(\mathbf{0}, K_{\theta}(X, X')\big).
\end{equation} 

\noindent The notation $K_{\theta}(X, X')$ denotes a kernel matrix with entries $[K]_{ij} = k(\mathbf{x}_i, \mathbf{x}_j)$ and the subscript notation is chosen to indicate the dependence on the set of hyperparameters $\theta$ (e.g. the signal variance $\sigma_f$ and lengthscale $\ell$ in \autoref{equation:sqe_}). We suppress the explicit dependence on $\theta$ in the subsequent notation. It is also necessary to specify a likelihood function 

\begin{equation}
p(y_i | f(\mathbf{x}_i)),
\end{equation}

\noindent which depends on $f(\mathbf{x}_i)$ only and is typically taken to be Gaussian i.e. $p(y_i | f(\mathbf{x}_i)) = \mathcal{N}(y_i | f(\mathbf{x}_i), \sigma_y^2)$. The noise level $\sigma_y^2$ is most frequently assumed to be homoscedastic, i.e. constant across the input domain. In Chapter 6, heteroscedastic (input-dependent) noise is considered by introducing a dependence $\sigma_y^2(\mathbf{x})$. The interpretation of $\mathbf{y}_i$ is a noise-corrupted observation of the latent function $f(\mathbf{x}_i)$. Once data $\{X, \mathbf{y}\}$ has been observed, where $X = \{\mathbf{x}_i\}_{i=1}^N$ and $\mathbf{y} = \{y_i\}_{i=1}^N$, the joint prior distribution over the observations $\mathbf{y}$ and the predicted function values $\mathbf{f}_*$ at test locations $X_*$ may be written

\begin{equation}
\label{eq:joint_prior}
    \begin{bmatrix} 
        \mathbf{y} \\
        \mathbf{f_*} \\
    \end{bmatrix}
    \sim
    \mathcal{N}
    \bigg(0,
    \begin{bmatrix}
        K(X, X) + \sigma_{y}^2 I & K(X, X_*)\: \\
        K(X_*, X)\phantom{+ \: \: \sigma{y}^2} & K(X_*, X_*)
    \end{bmatrix}
    \bigg),
\end{equation}

\noindent where $\mathcal{N}$ is the multivariate Gaussian probability density function and $I\sigma_{y}^2$ represents the variance of iid Gaussian noise on the observation vector $\mathbf{y}$. The joint prior in \autoref{eq:joint_prior} may be conditioned on the observations through

\begin{equation}
p(\mathbf{f_*}| \mathbf{y}) = \frac{p(\mathbf{f_*}, \mathbf{y})}{p(\mathbf{y})},
\end{equation}

\noindent which enforces that the joint prior agrees with the observations $\mathbf{y}$. The posterior predictive distribution is then

\begin{equation}
p(\mathbf{f_*}| X, \mathbf{y}, X_*) = \mathcal{N}\big(\mathbf{\bar{f}_*}, \text{cov}(\mathbf{f_*})\big),
\end{equation}

\noindent with predictive mean at test locations $X_*$,

\begin{equation}
\mathbf{\bar{f_*}} = K(X_*, X)[K(X, X) + \sigma_{y}^2 I]^{-1} \mathbf{y},
\end{equation}

\noindent and predictive uncertainty

\begin{equation}
\text{cov}(\mathbf{f_*}) = K(X_*, X_*) - K(X_*, X)[K(X, X) + \sigma_{y}^2 I]^{-1} K(X, X_*).
\end{equation}

\noindent Analysing the form of this expression one may notice that the first term $K(X_*, X_*)$ in the expression for the predictive uncertainty $\text{cov}(\mathbf{f}*)$ may be viewed as the prior uncertainty and the second term $K(X_*, X)[K(X, X) + \sigma_{y}^2 I]^{-1} K(X, X_*)$  can be thought of as a subtractive factor that accounts for the reduction in uncertainty when observing the data points $\mathbf{y}$. An illustration is given in \autoref{fig:gp_post} of the posterior predictive distribution updates following data observation.

\begin{figure*}[h]
\centering
\fbox{\subfigure{\label{fig:post1}\includegraphics[width=0.415\textwidth]{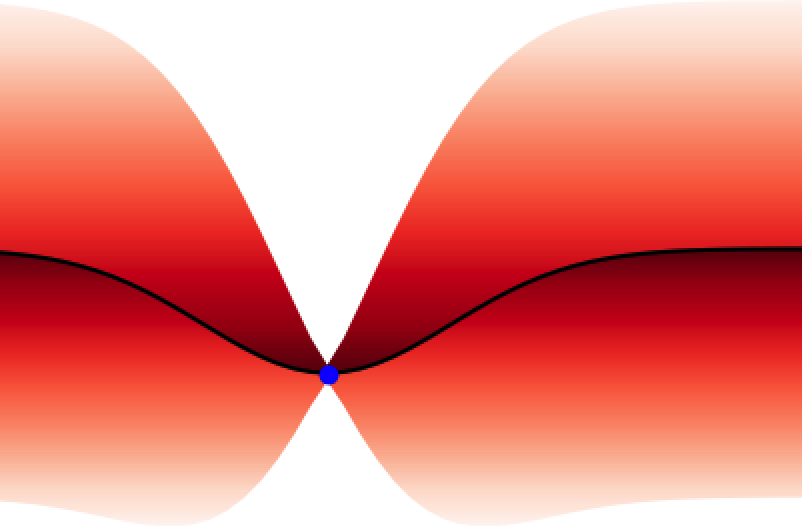}}}
\fbox{\subfigure{\label{fig:post2}\includegraphics[width=0.486\textwidth]{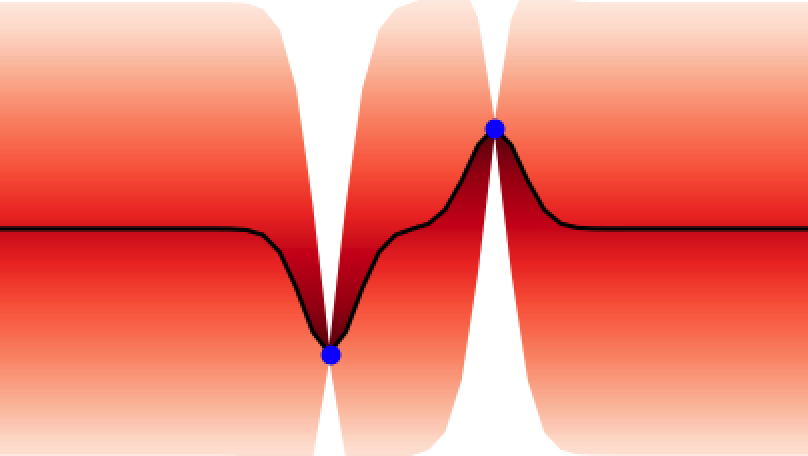}}}

\vspace{0.75cm}

\fbox{\subfigure{\label{fig:post4}\includegraphics[width=0.5\textwidth]{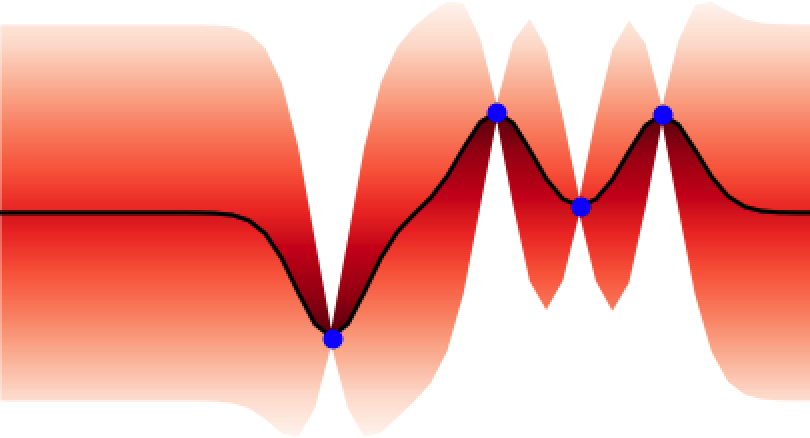}}}
\caption{An illustration of the \textsc{gp} posterior update on fitting 1, 2 and 4 data points (blue). The posterior distribution encodes the distribution over possible functions that may explain the data.}
\label{fig:gp_post}
\end{figure*}

\subsection{Training}
\label{gp_training}

An important objective function for training \textsc{gp}s is the log marginal likelihood or evidence \citep{1992_MacKay},

\begin{align}
\label{equation: log_lik_}
\log p(\mathbf{y}| X, \theta) =&  \underbrace{-\frac{1}{2} \mathbf{y}^{\top}(K_{\theta}(X, X) + \sigma_{y}^2I)^{-1} \mathbf{y}}_\text{encourages fit with data} \\ 
&\underbrace{-\frac{1}{2} \log | K_{\theta}(X, X) + \sigma_{y}^2 I |}_\text{controls model capacity} -\frac{N}{2} \log(2\pi) \nonumber.
\end{align}

\noindent $N$ is the number of observations and $\theta$ again represents the set of kernel hyperparameters to be optimised under the objective. The two terms in the expression for the log marginal likelihood embody Occam's Razor \citep{2001_Rasmussen} in their preference for selecting the simplest models that explain the data well as illustrated in \autoref{occam}. The first term in \autoref{equation: log_lik_} penalises functions that do not fit the data adequately whereas the second term acts as a regulariser, disfavouring overly complex models. The negative log marginal likelihood (NLML) is the \textsc{gp} training objective for all experiments performed in this thesis.

\begin{figure}[!htbp]
    \begin{center}
        \includegraphics[width=0.7\textwidth]{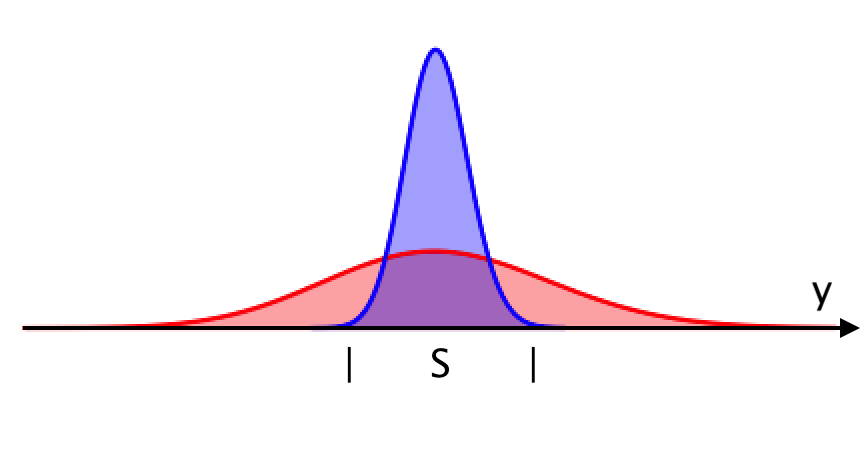}
    \end{center}
    \caption{An illustration of the Bayesian Occam’s razor effect introduced by \cite{2003_MacKay}. If models are interpreted as probability distributions over observations $y$, the x-axis may be viewed as the space of possible datasets. The simple model (blue) explains datasets inside $S$ well, but not outside. The complex model (red) explains datasets outside $S$ better, but worse inside $S$. The probability density in $S$ must be lower so as to explain the datasets outside $S$. Thus, if the dataset one wishes to model lies inside $S$, Occam's razor assigns preference to the simpler model.}
    \label{occam}
\end{figure}

\subsection{Bayesian Model Selection}

One desirable property of \textsc{gp}s, and Bayesian models in general, is the ability to carry out hierarchical modelling \citep{1992_MacKay_Hierarchical}. The three tiers of the modelling hierarchy are:

\begin{enumerate}
    \item Model Parameters
    \item Model Hyperparameters
    \item Model Structures
\end{enumerate}

\noindent In the case of the nonparametric \textsc{gp} framework, model parameters do not have the same meaning as in parametric Bayesian models and are instead obtained from the posterior distribution over functions. Model hyperparameters consist of parameters of the kernel function such as signal amplitudes and lengthscales as well as the likelihood noise. At the level of model structures, the fit achieved by different kernels can be quantitatively assessed by comparing the values of the optimised NLML objective permitting Bayesian model selection, a procedure that is used in Chapter 3. The next discussion point to be considered is an important application of \textsc{gp}s in mathematical optimisation.

\section{Bayesian Optimisation}

Bayesian optimisation (\textsc{bo})~\citep{1962_Kushner, 1964_Kushner, 1975_Mockus, 1975_Zhilinskas, 1978_Mockus} is a data-efficient methodology for solving black-box optimisation problems.

\subsection{Black-Box Optimisation}

In many problems in science and engineering we are interested in solving global optimisation problems of the form

\begin{equation}
    \label{Eq:ProbOne_}
    \mathbf{x}^{\star} = \arg\max_{\mathbf{x} \in \mathcal{X}} f(\mathbf{x}),
\end{equation}

\noindent where $f:\mathcal{X} \rightarrow\mathbb{R}$ is a function over an input domain $\mathcal{X}$ which is typically a compact subset of $\mathbb{R}^d$ (Chapter 6) but may also be non-numeric in the case of molecular representations such as graphs and strings (Chapter 4). \autoref{Eq:ProbOne_} is also a black-box optimisation problem in the sense that it possesses the following properties:

\begin{enumerate}
    \item Black-Box Objective: We do not have the analytic form of $f$ nor do we have access to its gradients. We can, however, evaluate $f$ pointwise anywhere in the input domain $\mathcal{X}$.
    \item Expensive Evaluations: Choosing an input $\mathbf{x}$ and evaluating $f(\mathbf{x})$ takes a very long time or incurs a large financial cost.
    \item Noise: The evaluation of a given $\mathbf{x}$ is a noisy process. In addition, this noise may vary across $\mathcal{X}$, making the underlying process heteroscedastic.
\end{enumerate}

A motivating example is molecular property optimisation where the input domain $\mathcal{X}$ is a set of molecular graphs $\{ \mathbf{x} : \mathbf{x} \in \mathcal{X}\}$ and the black-box function $f(\mathbf{x})$ is the property of the molecule to be optimised. $f$ maps a molecule to its property, but its analytic form is unknown and so instead $f$ must be queried through experiment by synthesising a molecule and measuring the value of its property under $f$. This is a time-consuming and financially expensive process. In addition, the measurement process using laboratory equipment is typically noisy.

\subsection{Solution Methods}

In the absence of an analytic form for the function to be optimised, strategies for solving black-box optimisation problems tend to proceed by sequentially evaluating the black-box function until the global optimum is found or the evaluation budget is exhausted. Such strategies may be represented by the abstract blueprint of sequential optimisation outlined in Algorithm~\ref{alg:sequential}.

\begin{algorithm}
\caption{Sequential Optimisation}
\begin{algorithmic}
      \State \textbf{input}: initial dataset $\mathcal{D}$ \algorithmiccomment{may be empty}
      \State \textbf{repeat}
            \State \hspace{\algorithmicindent} $\mathbf{x} \gets \text{Policy}(\mathcal{D})$ \algorithmiccomment{select the next input}
            \State \hspace{\algorithmicindent} $y \gets \text{Evaluate}(\mathbf{x})$ \algorithmiccomment{evaluate the black-box at the chosen input}
            \State \hspace{\algorithmicindent $\mathcal{D} \gets \mathcal{D} \cup \{(\mathbf{x}, y)\}$} \algorithmiccomment{update the dataset}
            
      \State \textbf{until} termination condition reached \algorithmiccomment{e.g. evaluation budget exhausted}
      \State \textbf{return} $\mathcal{D}$

\end{algorithmic}
\label{alg:sequential}
\end{algorithm}

Sequential optimisation algorithms differ in their choice of policy, or in other words, how they make use of the dataset of evaluations $\mathcal{D}$. Strategies may be non-adaptive in the sense that they ignore $\mathcal{D}$ completely, or they may be adaptive in the sense that they use the information about the black-box function stored within $\mathcal{D}$ to inform the selection of the next input $\mathbf{x}$ \citep{2022_Garnett}. Some of the most relevant solution methods for black-box optimisation include:

\noindent \textbf{Grid Search:} Perhaps the most well-known strategy for black-box optimisation problems, such as machine learning hyperparameter tuning, is grid search. Grid search is a deterministic, non-adaptive strategy where the policy consists of an exhaustive search through the input domain $\mathcal{X}$ by manually specifying a subset of inputs to query. Typically the manually-specified inputs are evenly spaced throughout the input domain and hence assume the form of a \say{grid}. Grid search suffers from the curse of dimensionality \citep{1957_Bellman} since the number of inputs to evaluate grows exponentially as a function of the dimensionality of $\textbf{x}$. Grid search is still a popular strategy in practice, however, due to its ease of implementation and the fact that it is \say{embarrassingly parallel} in so far as evaluations tend to be independent of each other.

\noindent \textbf{Random Search:} This stochastic, non-adaptive strategy consists of draws from a uniform density over the input domain $\mathcal{X}$. It has been demonstrated empirically that in high dimensions, random search can often outperform grid search due to its robustness to non-informative dimensions of the input space \citep{2012_Bergstra}. Random search is used as a baseline strategy in Chapters 4 and 6.

\noindent \textbf{Bayesian Optimisation:} A third solution method is an adaptive strategy where the policy is derived from Bayesian decision theory \citep{2005_De_Groot, 1985_Berger, 2007_Robert} and formalises the approach to decision-making under uncertainty with respect to the unknown objective function. \textsc{bo}, which is the principal subject of Chapter 6 and plays a major role in Chapter 4, has recently achieved notable and widely-publicised success as a component of AlphaGo \citep{2018_Yutian} as well as across applications including chemical reaction optimisation \citep{2021_Shields}, robotics \citep{2016_Calandra}, and machine learning hyperparameter optimisation \citep{2021_Turner, 2020_Rivers}. \textsc{bo} will be the focus from hereon in.

\begin{figure}[h]
    \begin{center}
        \includegraphics[width=1\textwidth]{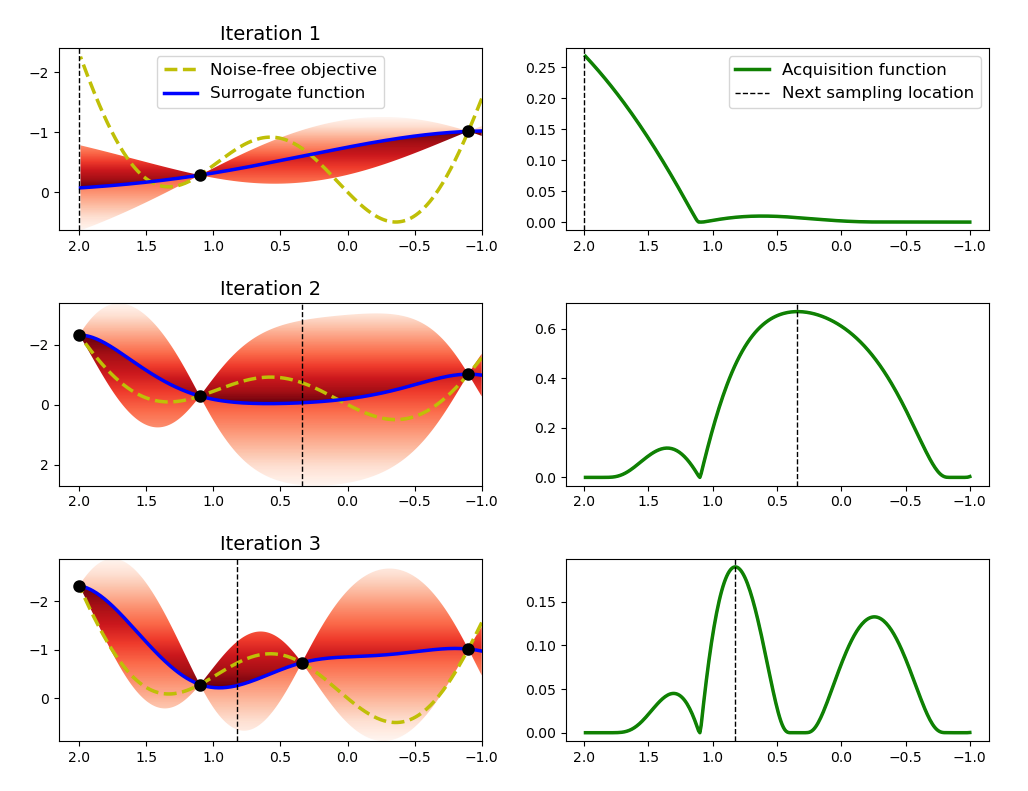}
    \end{center}
    \caption{An illustration of the Bayesian optimisation algorithm.}
    \label{bayesopt_illustration}
\end{figure}

\subsection{The Bayesian Optimisation Algorithm}

The \textsc{bo} algorithm, illustrated in Algorithm~\ref{alg:bayesian}, implements the policy from Algorithm~\ref{alg:sequential} through the use of two components:

\begin{algorithm}
\caption{Bayesian Optimisation}
\begin{algorithmic}
      \State \textbf{input}: initial dataset $\mathcal{D}$ \algorithmiccomment{may be empty}
      \State \textbf{repeat}
      
            \State choose $\mathbf{x}$ by optimising $\alpha$, the acquisition function
      
            \State \hspace{\algorithmicindent} \begin{equation*}\mathbf{x} \gets          \operatorname*{arg\,max}_{\mathbf{x} \in \mathcal{X}} \alpha(\mathbf{x}; \mathcal{D})\end{equation*}
         
            \State \hspace{\algorithmicindent} $y \gets \text{Evaluate}(\mathbf{x})$ \algorithmiccomment{evaluate the black-box at the chosen input}
            \State \hspace{\algorithmicindent $\mathcal{D} \gets \mathcal{D} \cup \{(\mathbf{x}, y)\}$} \algorithmiccomment{update the dataset and surrogate model}
            
      \State \textbf{until} termination condition reached \algorithmiccomment{e.g. evaluation budget exhausted}
      \State \textbf{return} $\mathcal{D}$

\end{algorithmic}
\label{alg:bayesian}
\end{algorithm}

\noindent \textbf{Surrogate Model:} A flexible probabilistic model that captures the prior belief about the behaviour of the black-box objective $f(\mathbf{x})$. A probabilistic model is necessary to ensure that uncertainty in the values of the black-box objective is maintained across the design space. This uncertainty measure is then used to inform the data collection policy known as the acquisition function. When a new data point is collected, the surrogate model is updated by means of re-training.

\noindent \textbf{Acquisition Function:} The acquisition function $\alpha(\mathbf{x}, \mathcal{D})$ determines the next input on a given iteration of \textsc{bo} by leveraging the uncertainty estimates of the surrogate model to trade off exploration and exploitation. It is beneficial to explore regions of the design space where the value of the objective is unknown, yet with a finite budget of function evaluations it is desirable to exploit the knowledge acquired to locate an input close to the global optimum of the function. From a computational standpoint, the acquisition function should be cheaper to evaluate relative to the black-box function. It should also be easy to optimise \citep{2018_Wilson, 2020_Grosnit, 2020_Schweidtmann}.

The pseudocode for \textsc{bo} in Algorithm~\ref{alg:bayesian} does not represent a single instantiation of an algorithm but rather a class of algorithms reflecting the broad range of choices available for both the surrogate model and the acquisition function. The set of criteria for choosing the surrogate model and the acquisition function will now be discussed.

\subsection{The Surrogate Model}

The desiderata for the surrogate model in \textsc{bo} are often related to the quality of the posterior distribution and the scalability of the model. In an idealised scenario, viewing Bayesian inference as an optimal calculus for dealing with incomplete information \citep{2010_Turner, 1961_Cox, 2003_Jaynes, 2003_MacKay}, one would obtain uncertainty estimates using full Bayesian inference over the surrogate model posterior. Full Bayesian inference is computationally demanding however and can be infeasible if the \textsc{bo} problem features a large dataset or a long horizon of function evaluations. To date, \textsc{gp}s have been the model of choice for \textsc{bo} on small datasets due to the ability to perform full Bayesian inference. \textsc{gp} surrogates have the following strengths and weaknesses from the point of view of \textsc{bo}:

\paragraph{Strengths}

\begin{enumerate}
    \item Full Bayesian inference, admits a closed-form posterior predictive distribution via exact inference. In contrast, approximate inference methods run the risk of degrading the quality of the uncertainty estimates \citep{2020_Foong}. The importance of uncertainty estimate quality in obtaining strong empirical performance is regularly emphasised in the \textsc{bo} literature \citep{2015_Shahriari, 2022_Garnett}.
    \item We can perform Bayesian model selection at the hyperparameter level meaning that we are more robust to overfitting. This is facilitated by an analytic form for the marginal likelihood.
    \item Few of the \textsc{gp}s hyperparameters needs to be determined by hand for example through hyperparameter search routines. This makes \textsc{gp}s well-suited to problems such as \textsc{bo} in which running hyperparameter search per iteration of the \textsc{bo} loop is not practically feasible \citep{2003_MacKay}. 
\end{enumerate}

\newpage

\paragraph{Weaknesses}

\begin{enumerate}
    \item Common choices of \textsc{gp} kernels are stationary kernels, meaning they cannot accurately model situations in which the complexity of the objective function varies in different regions of the input space. While non-stationary kernels, warping functions \citep{2020_Rivers, 2019_Balandat}, deep \textsc{gp}s \citep{2013_Damianou, 2021_Hebbal}, and normalising flows \citep{2020_Maronas} are potential solutions, they introduce additional complexity into the \textsc{bo} algorithm.

    \item The \textsc{gp} marginal distribution is not heavy-tailed. If outlier detection is a concern for example, one may wish to employ a heavier-tailed distribution such as the student T-process of \cite{2014_Shah} which has shown some success as a surrogate for \textsc{bo} \citep{2017_Cantin}.
    
    \item The observation model assumes homoscedastic Gaussian noise. While modifications to the standard \textsc{gp} framework exist to capture more complex noise distributions \citep{2021_Griffiths, 2021_Makarova}, they likely require more data in order to operate effectively.

    \item The most frequently cited downside of the \textsc{gp} framework for \textsc{bo} is the computational complexity of performing full Bayesian inference. Computing the inverse of the covariance matrix $[K(\mathbf{X}, \mathbf{X}) + I\sigma^2_y]^{-1}$ is $O(N^3)$ in the number of data points $N$. This covariance matrix appears in the expression for the marginal likelihood in addition to the predictive mean and covariance. A mitigating factor is that, for a fixed set of kernel hyperparameters, the Cholesky decomposition of this matrix may be computed once and stored, yielding a complexity of $O(N^2)$ for future predictions. In \textsc{bo}, however, the kernel hyperparameters are recomputed each time a new data point is collected. The $O(N^3)$ complexity cannot be avoided in this instance. Scalable surrogate model alternatives such as deep neural networks (DNNs) \citep{2015_Snoek, 2016_Springenberg, 2018_Perrone, 2021_White}, sparse \textsc{gp}s \citep{2018_Design, 2020_Griffiths}, and transformers \citep{2022_Maraval} have been trialled but face challenges in terms of the quality of the model uncertainty estimates.
    
    \item \textsc{gp}s often struggle to model functions in high-dimensional, continuous input spaces. In as little as $10$ input dimensions, the predictive capabilities of \textsc{gp}s can be impaired because the covariance function stipulates that inputs separated by more than a few lengthscales are negligibly correlated \citep{2014_Garnett}. As such, the majority of the input domain $\mathcal{X}$ may be uncorrelated with the observed data making prediction challenging. Some popular approaches in high-dimensional spaces include embedding methods such as variational autoencoders (VAEs) which seek to learn a low-dimensional embedding of the input data \citep{2018_Design, 2020_Griffiths, 2021_Grosnit, 2021_Verma, 2022_Hie, 2022_Maus}.

\end{enumerate}

In the \textsc{bo} problems considered in this thesis, however, many of the aforementioned limitations of \textsc{gp}s do not apply. The scientific datasets lie in the small data regime due to factors such as the expense of collecting laboratory measurements of synthesised molecules or the limited observational history of celestial objects and so the scalability of the surrogate model is not an issue. Similarly, the only high-dimensional input space considered is that of molecular fingerprints in which each input dimension is binary and so the problem of extrapolation in high-dimensional, continuous input spaces is avoided. The only exception is the case of the attempt to model heteroscedastic noise distributions in Chapter 6. In this case a bespoke heteroscedastic \textsc{gp} surrogate and acquisition function is devised.

\subsection{The Acquisition Function}

A sequential optimisation algorithm such as that defined in Algorithm~\ref{alg:sequential} requires a policy or acquisition function $\alpha:\mathcal{X} \to \mathbb{R}$ to provide a score for each potential observation.

\paragraph{Evaluation of Policies:}

The ideal performance metric for a \textsc{bo} scheme would quantify how close the set of queried inputs were to the global optimum of the black-box function. Regret is one such metric for quantifying optimisation performance that is commonly used in the analysis of optimisation algorithms. While there are many formulations of regret in different contexts, the central idea is to compare the values of the objective function visited during optimisation with the value of the global optimum. The larger the gap between these values is, the more regret is retrospectively incurred. The instantaneous regret is defined as $r_n(\mathbf{x}) = f^* - f(\mathbf{x}_n)$, where $f^*$ is the global optimum of the black-box and $f(\mathbf{x}_n)$ is the value of the function queried at the input $\mathbf{x}_n$ at iteration $n$. Two derived forms of regret are typically used in theoretical analysis of optimisation algorithm performance:

\begin{enumerate}
    \item \textbf{Simple Regret} - gives the instantaneous regret at the final iteration of \textsc{bo} as $r_\tau = f^* - \max f(\mathbf{x}_\tau)$, where $\mathbf{x}_\tau$ represent the set of inputs queried at the terminal iteration $\tau$. This metric has the advantage of not punishing the algorithm for explorative queries early in the search procedure.
    \item \textbf{Cumulative Regret} - is defined as $R_N = \frac{1}{N} \sum_{n=0}^N r_n$, where $r_n$ is the instantaneous regret at iteration $n$. Thus the cumulative regret is an average over all queries. The simple regret can be obtained by taking the last term, $r_N$, in the expression for the cumulative regret as the performance metric and setting $N = \tau$, where $\tau$ is the terminal iteration.
\end{enumerate}

\paragraph{Designing an Optimal Policy:}

In terms of designing an optimal policy, however, the regret metric cannot be used directly since the global optimum $f^*$ is unknown. In this instance a concept from Bayesian decision theory known as the utility $u(a, \psi, \mathcal{D})$ can be applied, where $a$ represents the action i.e. a choice of query location $\mathbf{x}$, $\psi$ represents the uncertain elements in the optimisation problem e.g. the objective function values, and $\mathcal{D}$ is the dataset of input/observation pairs collected so far. Maximising the expected utility of the data returned by the optimisation algorithm at a given iteration, however, requires consideration of the entire remainder of the optimisation query budget. Under such long time horizons the optimal policy becomes prohibitive to compute. Some attempts have been made to approximate an idealised look-ahead policy \citep{2009_Garnett, 2010_Ginsbourger, 2016_Osborne} but in practice most \textsc{bo} policies take the form of acquisition functions; myopic heuristics that attempt to trade off exploration and exploitation. When linked with the probabilistic surrogate model, this translates to greedily selecting queries which have high (for maximisation problems) predictive mean (exploitation) and high predictive variance (exploration).

\newpage

\paragraph{Classes of Acquisition Functions}

While there exist a broad range of acquisition functions \citep{2015_Shahriari, 2020_Grosnit}, a large subset of commonly-used acquisitions can be divided into three classes:

\begin{enumerate}
    \item \textbf{Optimistic Acquisition Functions} - An example of this type of acquisition function is the Upper Confidence Bound (UCB) \citep{2010_Srinivas}. In the bandits literature these methods are described by the term \say{optimism in the face of uncertainty} because they assign higher values to actions with high uncertainty.
    \item \textbf{Improvement-Based Acquisition Functions} - Are defined relative to some incumbent target, typically taken to be the best queried value found so far in the optimisation. Examples of this class include Probability of Improvement (PI) \citep{1964_Kushner} and Expected Improvement (EI) \citep{1971_Saltenis, 1978_Mockus, 1998_Jones}. The EI acquisition will be used and extended in Chapter 6.
    \item \textbf{Information-Based Acquisition Functions} - In these methods, the posterior over the unknown optimiser $\mathbf{x}_*$, induced implicitly by the posterior distribution over objective functions, is used as a means of selecting queries. Instances of this class of acquisition function include Thompson Sampling (TS) \citep{1933_Thompson}, Entropy Search (ES) \citep{2012_Hennig}, Predictive Entropy Search (PES) \citep{2014_Lobato}, General-Purpose Information-Based Bayesian Optimisation (GIBBON) \citep{moss2021gibbon}, and the Informational Approach to Global Optimization (IAGO) \citep{2009_Villemonteix}.
\end{enumerate}

Additionally, ensembles of acquisition functions known as portfolios are popular in practice and may perform better than any individual acquisition function \citep{2011_Hoffman, 2013_Hoffman, 2014_Shahriari, 2020_Rivers}.

\nomenclature[Z-SQE]{SQE}{Squared Exponential}
\nomenclature[Z-RBF]{RBF}{Radial Basis Function}
\nomenclature[Z-RQ]{RQ}{Rational Quadratic}
\nomenclature[Z-NLML]{NLML}{Negative Log Marginal Likelihood}
\nomenclature[Z-DNN]{DNN}{Deep Neural Network}
\nomenclature[Z-VAE]{VAE}{Variational AutoEncoder}
\nomenclature[Z-UCB]{UCB}{Upper Confidence Bound}
\nomenclature[Z-PI]{PI}{Probability of Improvement}
\nomenclature[Z-EI]{EI}{Expected Improvement}
\nomenclature[Z-TS]{TS}{Thompson Sampling}
\nomenclature[Z-ES]{ES}{Entropy Search}
\nomenclature[Z-PES]{PES}{Predictive Entropy Search}
\nomenclature[Z-GIBBON]{GIBBON}{General-Purpose Information-Based Bayesian Optimisation}
\nomenclature[Z-IAGO]{IAGO}{Informational Approach to Global Optimisation}

\chapter{Modelling Black Hole Signals with Gaussian Processes}
\chapterimage[height=130pt]{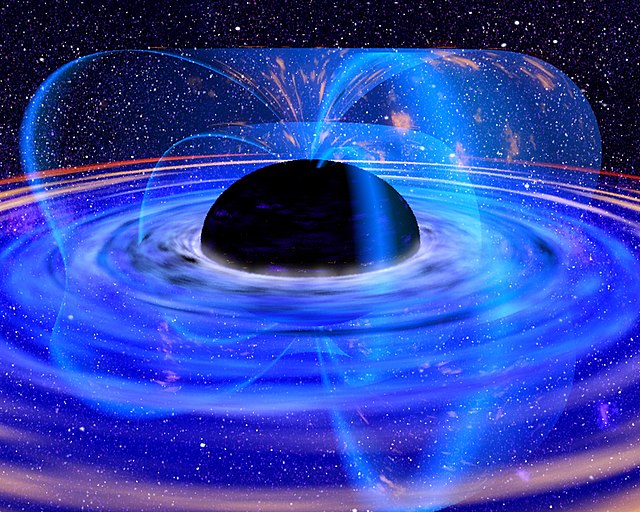}

\ifpdf
    \graphicspath{{Chapter3/Figs/Raster/}{Chapter3/Figs/PDF/}{Chapter3/Figs/}}
\else
    \graphicspath{{Chapter3/Figs/Vector/}{Chapter3/Figs/}}
\fi

\textbf{Status:} Published as Griffiths, RR., Jiang, J., Buisson, DJ., Wilkins, D., Gallo, LC., Ingram, A., Grupe, D., Kara, E., Parker, ML., Alston, W., Bourached, A. Cann, G., Young, A., Komossa, S., Modeling the Multiwavelength Variability of Mrk 335 Using Gaussian Processes. \textit{The Astrophysical Journal}, 2021.

\section{Background on High-Energy Astrophysics}

The chapter begins with a self-contained background on high-energy astrophysics to aid in contextualising the findings.

\subsection{Black Holes}

John Michell was the first to posit the existence of  black holes \citep{1784_Michell}, describing them as \say{dark stars} due to the fact that no light could escape from them. At the time, however, his work was largely ignored due to the absence of a theory of gravity describing the behaviour of light in a strong gravitational field. Following the introduction of the Einstein field equations from General Relativity \citep{1916_Einstein}, Schwarzchild was the first to calculate the radius of a black hole in the Schwarzschild metric \citep{1916_Schwarzschild}. The Schwarzchild radius is

\begin{equation}
    r_s = \frac{2GM}{c^2},
\end{equation}

\noindent where $G$ is the gravitational constant, $M$ is the mass of the object and $c$ is the speed of light. Black holes are characterised according to their mass and spin. When the mass of a black hole exceeds $10^5 M_{\odot}$, it is termed a supermassive black hole (SMBH), where $M_{\odot}$ is the solar mass unit, approximately equal to the mass of the Sun.

\subsection{Active Galactic Nuclei}

The term Active Galactic Nucleus (AGN) was coined by Viktor Ambartsumian in the early 1950s \citep{1997_Victor}. Ambartsumian argued that the nuclei of galaxies were subject to explosions which caused large amounts of mass to be expelled, and that for these explosions to occur, galactic nuclei must contain unknown bodies of huge mass. Moreover, AGN were observed to be highly luminous with unusual spectral properties, indicating that their power source could not be ordinary stars. In 1964, some insight on the nature of AGN was offered by Salpeter and Zeldovich  \citep{1964_Salpeter, 1964_Zeldovich}, who proposed accretion of gas onto a SMBH as the mechanism responsible for the power source of a powerful class of AGN known as quasars. \citet{1969_Lynden} later paid testament to the importance of the black hole accretion disc model by remarking that, 

\begin{displayquote}
\say{With different values of the black hole mass and accretion rate these discs are capable of providing an explanation for a large fraction of the incredible phenomena of high-energy astrophysics.}
\end{displayquote}

\noindent Lynden-Bell's statement is supported by the fact that AGN are one of the most persistent luminous sources of electromagnetic radiation in the universe and as such, may be leveraged to discover distant objects. Furthermore, the evolution of AGN in cosmic time may be used to inform theoretical models of the cosmos. It is estimated that one fifth of research astronomers work on AGNs \citep{1997_Peterson}. 

The observed properties of AGNs depend on the mass of the central SMBH, the extent that the nucleus is obscured by dust, the orientation of the accretion disc, the rate of gas accretion, as well as the presence or absence of outflows of ionised matter along the axis of rotation known as jets. Some subclasses of AGN include quasars, the most powerful form of AGN, blazars, which contain a jet pointed toward the Earth, and Seyfert galaxies which are characterised by broad emission lines in the optical band. It is the last of these categories of AGN that is the subject of this chapter and will be discussed next.

\subsection{Seyfert Galaxies}

In 1943, Carl Seyfert systematically studied a collection of bright AGN possessing broad emission lines in the optical band \citep{1943_Seyfert}. The eponymous Seyfert galaxies are further subdivided into Seyfert 1 (Sy1) and Seyfert 2 (Sy2) galaxies based on their emission line range, $1000-20,000 \text{kms}^{-1}$ and $300-1000 \text{kms}^{-1}$ respectively. The orientation-based unified model of is one of the most popular means of describing Seyfert galaxies and is based on the idea that classes of AGN are physically similar but are viewed at different orientations \citep{1993_Antonucci, 1995_Urry}. Some features of the model include:

\begin{itemize}
    \item In the narrow line region there is ionised, low-velocity and low-density gas extending to $100-1000$ parsecs (pc).
    \item In the broad line region there are high-density, dust-free gas clouds located at a distance of $0.01-1$ pc from the SMBH moving at Keplerian velocities.
    \item There is an antisymmetric dusty structure known as a torus located at a distance of $0.1=10$ pc from the SMBH.
    \item There is a sub-pc accretion disc located around the SMBH which may be optically thick or optically thin depending on the disc's state.
    \item There is an outflowing radio jet pointed in the general direction of the accretion disc.
\end{itemize}

\begin{figure}[!htbp]
    \begin{center}
        \includegraphics[width=0.75\textwidth]{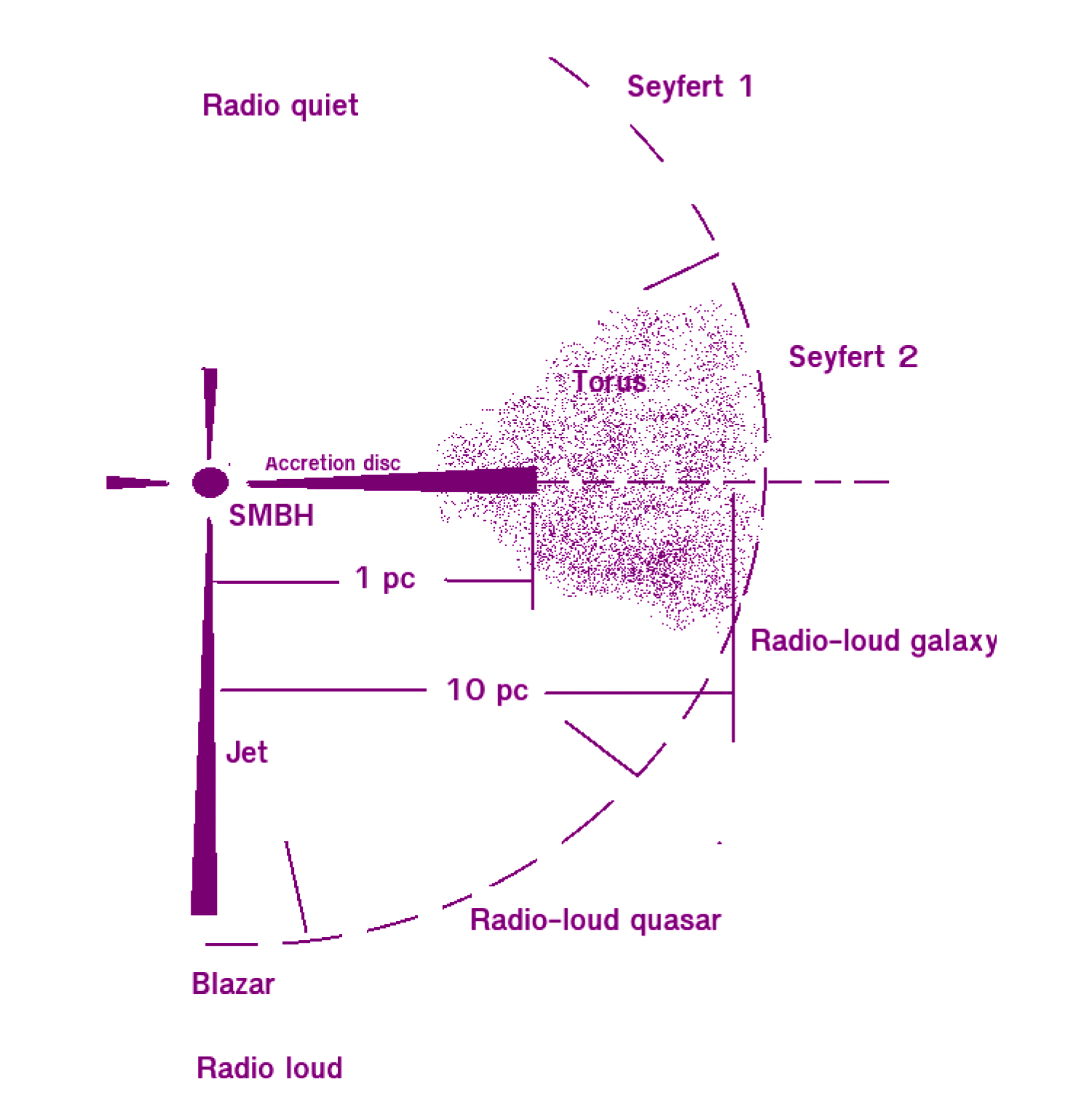}
    \end{center}
    \caption{The orientation-based unified model of AGN. Reprinted with permission from \cite{2019_Jiang}.}
    \label{smbh}
\end{figure}

A schematic for the orientation-based unified model is provided in \autoref{smbh}. In Sy2 galaxies, the narrow line region is viewable by an edge-on observer due to the fact that it is more extended relative to the broad line region. Both the broad line region and the accretion disc are obscured by the middle plane of the torus. In Sy1 galaxies, the observer is closer to the torus axis and has an unobscured line of sight towards the nuclear region of the AGN. While the orientation-based unified model can explain the spectral diversity of many AGN, it has recently been challenged. For example, a different torus shape is suggested by interferometry observations in the mid-infrared band of Sy1 galaxies which demonstrate that the majority of infrared emission originates from dust in the polar region as opposed to the disc plane \citep{2013_Honig}.

\subsection{Space Observatories}

High energy X-rays function as the primary tool for examining the innermost regions of AGN as they originate from the area closest to the central SMBH and can easily penetrate absorbing materials along the line of sight. Optical/UV emission is also useful in characterising the behaviour of AGN, for example, in verifying reprocessing models of X-ray emission by computing lags between X-ray and optical/UV lightcurves, where a lightcurve is a graph of the light intensity of a celestial object or region with respect to time.

All astronomical data in this thesis originates from the Neil Gehrels Swift observatory which was first launched by NASA in 2004 to detect and study Gamma-Ray Bursts (GRBs) using the Burst Alert Telescope (BAT) \citep{2005_Barthelmy}. While originally designed for the study of GRBs, Swift now also functions as a multiwavelength observatory containing an X-ray Telescope (XRT) \citep{2005_Burrows} and a UV/Optical Telescope (UVOT) \citep{2005_Roming}. Swift is also used to conduct long-term all sky surveys.

\subsection{Accretion Disc Models}

Theoretical models to explain accretion discs differ based on the physical processes considered. Four representative examples are the Polish doughnut (thick disc), Shakura-Sunyaev (thin disc), slim disc, and the advection-dominated accretion flow (ADAF) models \citep{2011_Abramowicz}. These theoretical models are not mutually exclusive in the sense that different aspects of real physical systems may be best described by different models.

\noindent \textbf{Polish Doughnut (Thick Disc) Model:} The \say{Polish Doughnut} introduced by Paczynski and collaborators in the 1970s and 80s \citep{1980_Jaroszynski, 1980_Paczynski, 1981_Paczynski, 1982_Paczynski} is the minimal analytic accretion disc model in so far as it only considers gravity and assumes a perfect fluid. The Polish Doughnut Model is predicated on a general method for constructing perfect fluid equilibria of matter orbiting an uncharged, rotating black hole known as a Kerr black hole \citep{1963_Kerr}.

\noindent \textbf{Thin Disc Models:} The majority of analytic accretion disc models assume a stationary and axially-symmetric state for matter undergoing accretion onto the black hole with all physical quantities depending only on two spatial coordinates, $r$, the radial distance from the disc centre, and $z$, the vertical distance from the equatorial plane of symmetry. Unlike the Polish Doughnut Model, which assumes vertically thick discs, in thin disc models, $\frac{z}{r} \ll 1$ applies at all points within the matter distribution. In 1973, Shakura and Sunyaev introduced the canonical thin disc model \citep{1973_Shakura} by specifying additional physically reasonable assumptions that allowed them to construct a set of algebraic equations from the standard set of thin disc equations. The relativistic extension of the Shakura-Sunyaev model was later proposed by \cite{1973_Novikov}.

\noindent \textbf{Slim Disc Models:} Slim discs are characterised by $\frac{z}{r} \leq 1$. Thin disc models such as the Shakura-Sunyaev and Novikov–Thorne models assume that viscous heating is balanced locally by radiative cooling i.e. the accretion process is radially efficient and as such, all viscosity-generated heat is radiated away. Although the assumption is valid if the accretion rate is small, at a luminosity $L = 0.3 L_{\text{Edd}}$ \footnote{$L_{\text{Edd}}$ is the Eddington limit, the maximum achievable luminosity of a body subject to the balance of an outward radiative force and an inward gravitational force.} the radial velocity is large and the disc is sufficiently thick to permit advection to function as a cooling mechanism. At the highest luminosities, thin disc models no longer apply as the cooling effect of advection becomes comparable to radiative cooling. The standard thin disc model equations become a two-dimensional system of ordinary differential equations with a critical point for the slim disc case. These equations were first solved by \cite{1988_Abramowicz} and extended to a fully relativistic treatment by \cite{1998_Beloborodov}.

\noindent \textbf{ADAF Models:} Advection-dominated accretion flow models, introduced first in a series of papers \citep{1995_Narayan, 1994_Narayan, 1995_Abramowicz, 1996_Abramowicz, 1998_Gammie}, assume that almost all viscously dissipated energy is not radiated but advected into the black hole and applies when the luminosity and mass accretion rate are low. As such, ADAF discs are typically far less luminous than thin discs. Fully relativistic solutions to such discs have been obtained numerically \citep{1997_Abramowicz, 1997_Beloborodov}. Further information on ADAFs is available in \cite{1997_Narayan}.

\subsection{Markarian 335}

The accretion disc of Markarian 335 (Mrk 335) is the focus of study in this chapter. Mrk 335 is a Sy1 galaxy located 324 million light-years from Earth in the constellation of Pegasus. The central SMBH of Mrk 335 is notable for the spinning rate of its corona at ca. 20\% the speed of light. Relativistic blurring of the reflection of the accretion disc has been used to infer the geometry of the corona \citep{2015_Wilkins_drive}. By using \textsc{gp}s to interpolate the unevenly-sampled lightcurves of Mrk 335 and performing a cross-correlation analysis, some insight into the structure of the accretion disc may be obtained, and subsequently used to inform future developments in accretion disc theories. Of the aforementioned disc theories, the Shakura-Sunyaev thin disc model is the most relevant for Mrk 335 as its predictions for the extent of UV emission match that from observation. The distance between the UV and X-ray emission regions however is shorter than the light travel time measured using \textsc{gp}-based inference on the observational data. The main contributions of this chapter are now introduced.

\section{Preface}

The optical and UV variability of the majority of AGN may be related to the reprocessing of rapidly-changing X-ray emission from a more compact region near the central black hole. Such a reprocessing model would be characterised by lags between X-ray and optical/UV emission due to differences in light travel time. Observationally, however, such lag features have been difficult to detect due to gaps in the lightcurves introduced through factors such as source visibility or limited telescope time. In this chapter, \textsc{gp} regression is employed to interpolate the gaps in the Swift X-ray and UV lightcurves of the narrow-line Seyfert 1 galaxy Mrk 335. In a simulation study of five commonly-employed analytic \textsc{gp} kernels, it is concluded that the Matérn $\frac{1}{2}$ and rational quadratic kernels yield the most well-specified models for the X-ray and UVW2 bands of Mrk 335. In analysing the structure functions of the \textsc{gp} lightcurves, a broken power law is obtained with a break point at 125 days in the UVW2 band. In the X-ray band, the structure function of the \textsc{gp} lightcurve is consistent with a power law in the case of the RQ kernel, whilst a broken power law with a break point at 66 days is obtained from the Matérn $\frac{1}{2}$ kernel. The subsequent cross-correlation analysis is consistent with previous studies and furthermore, shows tentative evidence for a broad X-ray-UV lag feature of up to 30 days in the lag-frequency spectrum. The significance of the lag depends on the choice of \textsc{gp} kernel. 

\section{Introduction} \label{intro}

AGN show strong and variable emission across multiple wavelengths. The UV emission from an AGN is believed to be dominated by thermal emission from an accretion disc close to the central SMBH \citep{pringle81}. The variability of optical and UV AGN \footnote{AGN with an UV and optical luminosity change of more than 1 magnitude such as changing-look AGN, are not discussed in this chapter cf. \cite{jiang21} for details.} emission is stochastic and described by random Gaussian fluctuations \citep{welsh11,gezari13,zhu16,sanchez18,smith18,xin20} with the autocorrelation functions of such fluctuations adhering to the `damped random walk' model. The X-ray emission from an AGN is often found to show faster variability relative to emission at longer wavelengths \citep{mushotzky93, gaskell03} and originates from a more compact region \citep{morgan08,chartas17}.

The relationship between the UV and X-ray emission has been well studied. For instance, correlations between the variability in two energy bands has been seen in some individual sources \citep{shemmer01,buisson17} while others do not show significant evidence for similar correlation \citep{smith07, 2018_Buisson}. In sources where correlation is found, lags that are related to the light travel time between two emission regions are frequently observed. These lags are often found to be on timescales of days and are longer than those predicted by classical disc theories \citep{1973_Shakura}. Such lag amplitudes indicate a disc of size a few times larger than expected \citep{edelson00,shappee14,troyer16, buisson17}. Alternatively, some modified models have been proposed for the underestimation of lags by the classical thin disc model, e.g. disc turbulence \citep{cai20}, additional varying FUV illumination \citep{gardner17}, a tilted or inhomogeneous inner disc \citep{dexter11,starkey17} or an extended coronal region \citep{kammoun20}. Much shorter lags, e.g. hundreds of seconds, in agreement with the Shakura-Sunyaev model \citep{1973_Shakura} have been rarely observed by comparison e.g. in NGC-4395 \citep{mchardy16}.

The Neil Gehrels \swift\ Observatory has been monitoring the X-ray sky in the past decade in tandem with simultaneous pointings in the optical and UV band. In this work, we focus on the X-ray and UVW2 ($\lambda=$212~nm) lightcurves of the narrow-line Seyfert 1 galaxy (NLS1) Mrk~335 obtained by XRT and UVOT, the soft X-ray and UV/optical telescopes on \swift.  Mrk~335 was one of the brightest X-ray sources prior to 2007, before its flux diminished by $10-50\times$ its original brightness \citep{grupe07}.  The X-ray brightness has not recovered since.  During this low X-ray flux period, the UV brightness remains relatively unchanged rendering Mrk~335 X-ray weak \citep{tripathi20}. The behavior has been explained as a possible collapse of the X-ray corona \citep{2013_Gallo_New, 2015_Gallo_New, 2014_Parker_New} and/or increased absorption in the X-ray emitting region \citep{grupe12, 2013_Longinotti, 2019_Longinotti, 2019_Parker}.

Mrk~335 has been continuously monitored since 2007 making it one of the best-studied AGN with \swift. Previous studies from the \swift\ monitoring program can be found in \citet{grupe07,grupe12, gallo18, tripathi20, 2020_Komossa}. The X-rays are constantly fluctuating and regularly display large amplitude flaring \citep{2015_Wilkins}.  The UV are significantly variable, but at a much smaller amplitude than the X-rays. \cite{gallo18} found tentative evidence for lags of $\approx20$ days based on cross-correlation analyses, suggesting a potential reprocessing mechanism of the more variable X-ray emission in the UV emitter of this source. One challenge faced by the \swift\ monitoring program is that the lightcurves are not continuously sampled and hence standard Fourier techniques cannot be applied. This uneven sampling of the lightcurves is imposed by limited telescope time.

In the context of cross-correlation analysis, methods have been developed to address the problem of unevenly-sampled lightcurves. In \citet{2000_Reynolds}, the method of \citet{1992_Press} is extended to interpolate the lightcurve gaps using a model of the covariance function, or equivalently the power spectrum, of the lightcurve. In \citet{1998_Bond, 2010_Miller, 2013_Zoghbi} a maximum likelihood approach is taken to fit models of the lightcurve power spectra which accounts for the correlation between the lightcurves. In this paper we focus on a relatively new approach to tackle unevenly-sampled lightcurves. 

Gaussian processes (\textsc{gp}s) confer a Bayesian nonparametric framework to model general time series data \citep{2013_Roberts, 2015_Tobar} and have proven effective in tasks such as periodicity detection \citep{2016_Durrande} and spectral density estimation \citep{2018_Tobar}. More broadly \textsc{gp}s have recently demonstrated modelling success across a wide range of spatial and temporal application domains including robotics \citep{2011_Deisenroth, 2020_Greeff}, Bayesian optimisation \citep{2015_Shahriari, 2020_Grosnit, 2020_Rivers} as well as areas of the natural sciences such as molecular machine learning \citep{2021_Nigam, 2020_Griffiths, 2020_flowmo, 2021_Griffiths, 2021_Hase} and genetics \citep{2020_Moss}. In the context of astrophysics there is a recent trend favouring nonparametric models such as \textsc{gp}s due to the flexiblity afforded when specifying the underlying data modelling assumptions. Applications have arisen in lightcurve modelling \citep{2021_Luger1, 2021_Luger_next}, continuous-time autoregressive moving average (CARMA) processes \citep{2021_Yu}, modelling stellar activity signals in radial velocity data \citep{2015_Rajpaul}, lightcurve detrending \citep{2016_Aigrain}, learning imbalances for variable star classification \citep{2020_Lyon}, inferring stellar rotation periods \citep{2018_Angus}, estimating the dayside temperatures of hot Jupiters \citep{2019_Pass}, exoplanet detection \citep{2017_Jones, 2017_Czekala, 2020_Gordon, 2020_Langellier}, spectral modelling \citep{2012_Gibson, 2018_Nikolov, 2020_Diamond}, as well as blazar variability studies \citep{2015_Karamanavis, 2017_Karamanavis, 2020_Covino, 2020_Yang}.

It has recently been demonstrated in lightcurve simulations by \citet{2019_Wilkins} that a \textsc{gp} framework can compute time lags associated with X-ray reverberation from the accretion disc that are longer and observed at lower frequencies than can be measured by applying standard Fourier transform techniques to the longest available continuous segments. It is for this principal reason that \textsc{gp}s are employed for the timing analysis in this chapter. Further desirable facets of \textsc{gp}s include the fact that, unlike parametric models, they do not make strong assumptions about the shape of the underlying light curve \citep{2012_Wang_light}. Additionally, Bayesian model selection may be performed at the level of the covariance function or kernel allowing the quantitative comparison of different models of the lightcurve power spectrum. Finally in the cross-correlation analysis, a weaker modelling assumption is made than in \citet{2013_Zoghbi} in treating the X-ray and UV lightcurves as being independent \citep{2019_Wilkins}.

The remainder of this chapter is outlined as follows: In Section~\ref{gp_mod} procedures used to fit \textsc{gp}s to the X-ray and UVW2 bands are described, including aspects such as identification of the flux distribution, consideration of measurement noise as well as a simulation study to determine the appropriate kernels. In Section~\ref{structure_analysis} the structure functions of the \textsc{gp}-interpolated lightcurves are compared with the observational structure functions from \cite{gallo18}. In Section \ref{lag_and_coherence} a cross-correlation analysis of the X-ray and UVW2 bands is presented using the \textsc{gp}-interpolated lightcurves. Finally, in Section~\ref{conc} concluding remarks are provided about the discrepancy between the observational and \textsc{gp}-derived structure functions as well as the implications of the cross-correlation analysis, namely that the broad lag features suggest an extended emission region of the disc in Mrk 335 during the reverberation process. All code for reproducing the analysis is available at \url{https://github.com/Ryan-Rhys/Mrk_335}.

\section{Modelling Markarian 335} 
\label{gp_mod}

\noindent This Chapter considers the Swift X-ray and UVW2 lightcurves in time bins of one day. The reader is referred to \cite{gallo18} for details of the data reduction processes. The observational measurements used in this work run from $54327-58626$ modified Julian days and comprise $509$ data points for the X-ray band and $498$ data points for the UVW2 band. The latest UVOT sensitivity calibration file (`swusenscorr20041120v006.fits') was considered so as to account for the sensitivity loss with time in the UVW2 band\footnote{The most up-to-date calibration files: \href{https://heasarc.gsfc.nasa.gov/docs/heasarc/caldb/swift}{https://heasarc.gsfc.nasa.gov/docs/heasarc/caldb/swift}. Only UVW2 data collected by UVOT is considered because the UVW2 filter was most frequently used in the archival observations.}.

\subsection{Identifying the Flux Distribution}

In order to assess the applicability of \textsc{gp}s in modelling the flux distribution of the X-ray and UVW2 bands of Mrk 335, a series of graphical distribution tests were performed to determine the sample distribution. The histograms of the log count rates for the X-ray, and flux for the UV bands, of Mrk 335 are shown in \autoref{Histograms}. The histograms show that the distribution of the UVW2 flux is approximately Gaussian-distributed whereas the X-ray count rate distribution appears to be log-Gaussian distributed in line with the general observation of \cite{2005_Uttley} that fluxes from accreting black holes tend to follow log-Gaussian distributions. Further graphical distribution tests based on probability-probability (PP) plots and empirical cumulative distribution functions (ECDFs) are provided in Appendix~\ref{dist_tests}. 

Furthermore, following \cite{2019_Wilkins} a Kolmogorov-Smirnov test for goodness-of-fit was performed, where the null hypothesis is that the sample was drawn from a Gaussian distribution. For the UVW2 flux values a p-value of $0.164$ was obtained. For the raw X-ray count rates a p-value of $1.017e^{-20}$ was obtained, and a p-value of $0.028$ for the log-transformed X-ray count rates. As such, the null hypothesis that either UVW2 flux or log-transformed X-ray count rates are drawn from a Gaussian distribution cannot be rejected at the $1\%$ level of significance. The null hypothesis may however be rejected in the case of the raw X-ray count rates, providing evidence that the raw X-ray count rates should be log-transformed in order to be well-modelled by a Gaussian distribution. As such, the raw X-ray count rates were log-transformed and the UVW2 flux values were left unchanged.

\begin{figure*}[ht]
\centering
\subfigure[X-Ray Log Count Rates]{\label{fig:4}\includegraphics[width=0.49\textwidth]{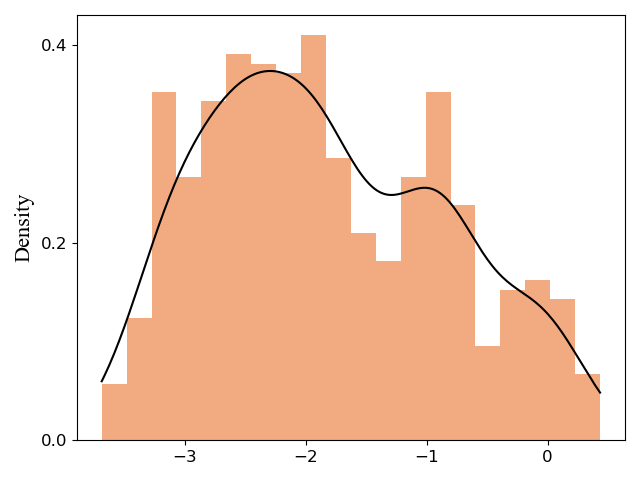}}
\subfigure[UVW2 Flux]{\label{fig:3}\includegraphics[width=0.49\textwidth]{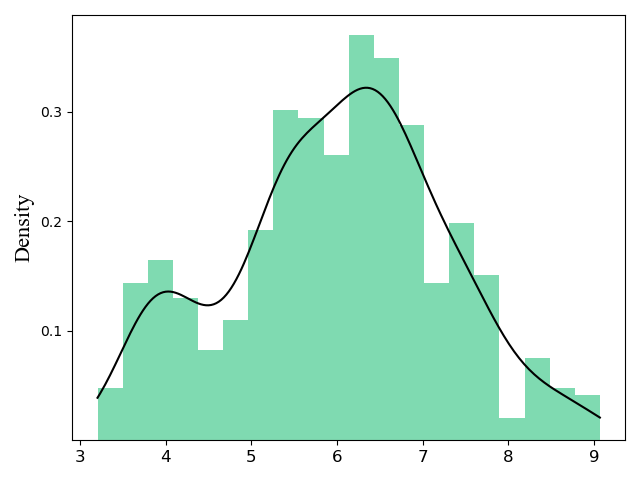}}
\caption{Histograms of the observed Swift X-ray log count rate and UVW2 flux overlaid with Gaussian kernel density estimates. The raw UVW2 flux values have been scaled by $1e^{14}$.}
\label{Histograms}
\end{figure*}

\subsection{Noise Considerations}
\label{noise}

As noted by \cite{2019_Wilkins} fitting a \textsc{gp} to the logarithm of the count rate is appropriate only in the limit of a large signal-to-noise ratio. In the case of Mrk 335, the Poisson (shot) noise intrinsic to the photon detectors used to obtain the flux measurements is over an order of magnitude smaller than the flux measurement itself. As such the choice of the log-Gaussian process would appear to be justified. 

\subsection{Lightcurve Simulations}
\label{sim_section}

\begin{figure*}[h]
    \centering
    \includegraphics[width=0.75\textwidth]{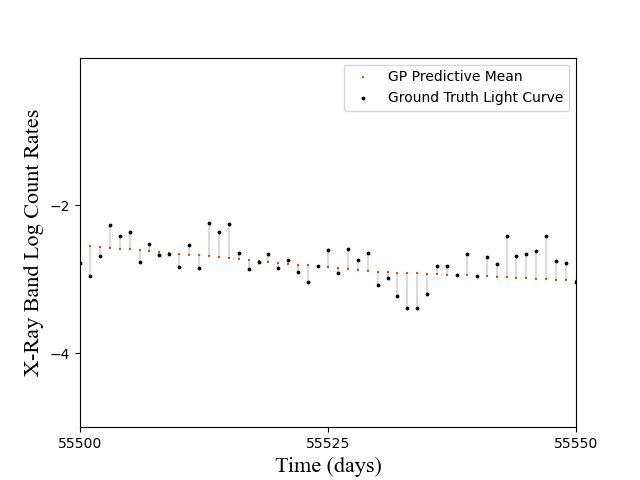}
    \caption{Residual plot. The normalised RSS metric is the sum of squared residuals divided by the total number of discretised points (4390) comprising the simulated lightcurve. A residual in this case represents the difference between the Gaussian process predictive mean and the ground truth value of the simulated lightcurve.}
    \label{rss_figure}
\end{figure*}

A simulation study was undertaken to quantitatively assess the abilities of different kernels to interpolate gapped simulated lightcurves. Observational power spectral densities (PSDs) of AGN are well-described by (broken) power laws \citep{2004_Mchardy}. As such, the simulations employed a power law PSD with index fit to the observational data. The goals with the study are twofold: Firstly, although one cannot be sure of the true PSD for the observational data, it is hoped that the simulations may afford a good proxy for identifying performant kernels based on the fact that AGN typically exhibit power law-like PSDs and secondly, it is desirable to asses the correlation between a kernel's ability to reconstruct the full simulated lightcurve and its marginal likelihood value for the gapped data on which it is trained. If there is a correlation, the marginal likelihood may be used as a metric for identifying the appropriate kernel on the observational data.

One thousand simulated light curves with gaps were generated for the \src \: X-ray and UV bands using the method of \cite{1987_Davies}, first applied in astrophysics by \cite{1995_Timmer}. For each lightcurve there is access to the ground truth functional form of the lightcurve before the introduction of gaps. Computationally, the ground truth lightcurve was evaluated on a fine, discrete grid of $4390$ time points whereas the gapped lightcurves were evaluated on a coarser, unevenly-spaced grid of $498$ time points for the UV simulations and $509$ time points for the X-ray simulations in line with the number of observational data points. How well each \textsc{gp} kernel performs in recovering the ground truth lightcurve was then quantified by measuring the normalised residual sum of squared errors,

\begin{equation}
    \text{RSS} = \frac{1}{N} \sum_{i=1}^{N} (f(t_i) - y_i)^2,
\end{equation}

\noindent where $f(t_i)$ is the \textsc{gp} prediction at grid point $t_i$ and $y_i$ is the true simulated count rate value. The RSS values were averaged over the one thousand simulated lightcurves. An illustration of the RSS metric is provided in \autoref{rss_figure}. In addition, the averaged negative log marginal likelihood (NLML) values are computed for each kernel. Kernel hyperparameters were selected via optimisation of the NLML using the SciPy optimiser of GPflow \citep{GPflow}. The jitter level was fixed at 0.001, a small positive number to ensure numerical stability. The output values (flux or the logarithm of the count rate) were standardised according to their empirical mean and standard deviation. A constant mean function set to the empirical mean of the data following standardisation was employed.

The results of the simulation study are reported in \autoref{sim_study}. The NLML values show correlation with RSS, thus providing evidence that NLML is an appropriate metric for determining the \textsc{gp} kernel for the real observational data (for which the ground truth lightcurve is of course not available). A paired t-test was conducted to determine whether the RSS results were significant in terms of identifying the best kernel. For the X-ray simulations, a t-statistic of $9$ was obtained corresponding to a two-sided p-value of $5^{-20}$. For the UVW2 simulations, a t-statistic of $-22$ was obtained corresponding to a two-sided p-value of $9^{-85}$. As such, the null hypothesis that the performance discrepancy between kernels on the RSS metric is due to chance variation across $1000$ simulations, may be rejected at the $1\%$ level of significance. Further rationalisation for why the top two performing kernels in the simulation study are the Matérn $\frac{1}{2}$ and RQ kernels is offered in Appendix~\ref{kern_rat} (plotted by Douglas Buisson).

\begin{table}[h]
\caption{Performance comparison of kernels based on the NLML on the simulated gapped X-ray and UV lightcurves and normalised residual sum of squared errors (RSS) on the ground truth simulated lightcurves. The mean NLML and RSS across 1000 simulations are reported with the standard error. UVW2 RSS values have an exponent of $-30$.}
\label{sim_study}
\begin{center}
\begin{tabular}{l|ll}
\toprule
Kernel & NLML & RSS \\ \midrule
\multicolumn{1}{c|}{\underline{\textbf{X-Ray}}} & & \\[5 pt]
Matérn$\frac{1}{2}$ & $\textbf{180.2} \pm \: \textbf{3.8}$ & $0.121 \pm \: 0.002$  \\
Matérn$\frac{3}{2}$ & $420.7 \pm \: 3.3$ & $0.309 \pm \: 0.003$ \\
Matérn$\frac{5}{2}$ & $523.5 \pm \: 2.9$ & $0.374 \pm \: 0.003$ \\
Rational Quadratic & $\textbf{184.2} \pm \: \textbf{3.6}$ & $\textbf{0.117} \pm \: \textbf{0.002}$ \\
Squared Exponential & $632.1 \pm \: 1.5$ & $0.554 \pm \: 0.004$ \\ \midrule
 \multicolumn{1}{c|}{\underline{\textbf{UVW2}}} & & \\[5 pt]
Matérn$\frac{1}{2}$ & $\textbf{-399.0} \pm \: \textbf{5.2}$ & $\textbf{2.9} \pm \: \textbf{0.08}$ \\
Matérn$\frac{3}{2}$ & $-298.3 \pm \: 6.0$ & $7.9 \pm \: 0.25$ \\
Matérn$\frac{5}{2}$ & $-219.6 \pm \: 6.5$ & $17.0 \pm \: 0.41$ \\
Rational Quadratic & $-349.2 \pm \: 5.4$ & $3.4 \pm \: 0.09$ \\
Squared Exponential & $-65.0 \pm \: 7.4$ & $32.8 \pm \: 0.55$ \\ \bottomrule
\end{tabular}
\end{center}
\end{table}

\subsection{Modelling Markarian 335 with Gaussian Processes}

The fits to the observational data for the UVW2 and X-ray bands are shown in \autoref{GP_uv_fits} and \autoref{GP_xray_fits} respectively. In an analogous fashion to the simulation experiments five stationary kernels were evaluated: Matérn $\frac{1}{2}$, Matérn $\frac{3}{2}$, Matérn $\frac{5}{2}$, rational quadratic and squared exponential. The two kernels, rational quadratic and Matérn $\frac{1}{2}$, which performed best in the simulation study in their abilities to model power law-like PSDs are displayed. These kernels also have the most favourable values under the NLML metric for the observational data. A constant mean function set to the empirical mean of the data following standardisation was again used. All kernel hyperparameters were optimised under the marginal likelihood save for the noise level which was fixed to a constant value in the standardised space. This constant noise value is computed by dividing the mean output value in the standardised space by the mean signal-to-noise ratio in the original space.

\begin{figure*}[]
\centering
\subfigure[UV Band | Matérn $\frac{1}{2}$ | Mean.]{\label{fig:4pp1}\includegraphics[width=0.49\textwidth]{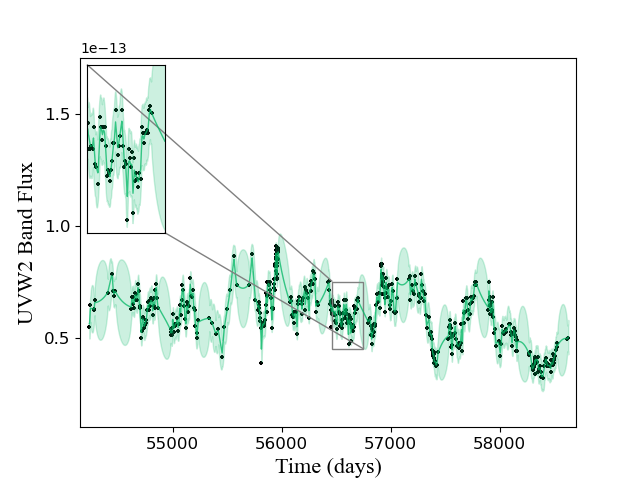}}
\subfigure[UV Band | Matérn $\frac{1}{2}$ | Sample.]{\label{fig:4pp2}\includegraphics[width=0.49\textwidth]{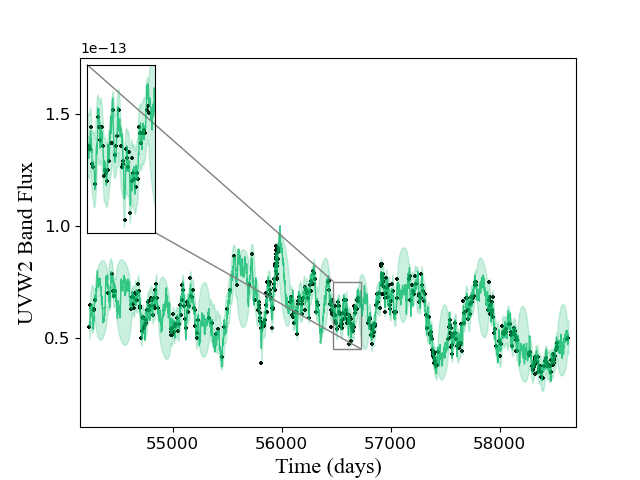}}
\subfigure[UV Band | Rational Quadratic | Mean.]{\label{fig:4pp3}\includegraphics[width=0.49\textwidth]{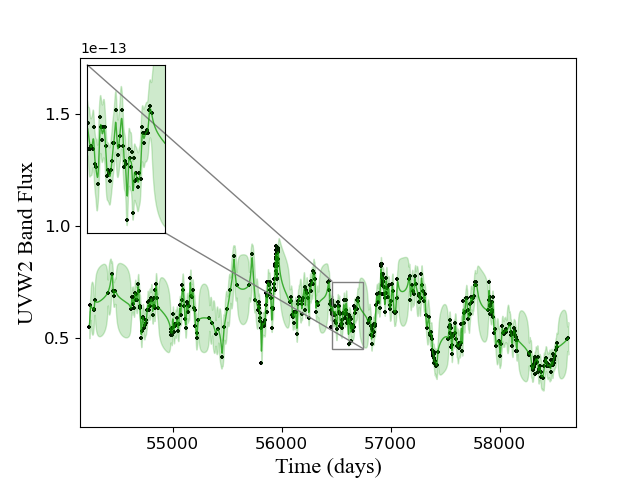}}
\subfigure[UV Band | Rational Quadratic | Sample]{\label{fig:4pp4}\includegraphics[width=0.49\textwidth]{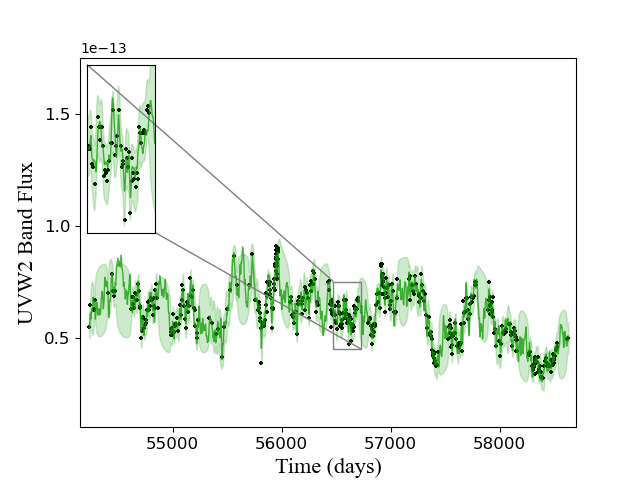}}  
\caption{\textsc{gp} lightcurves for the UVW2 band. The shaded regions denote the \textsc{gp} 95\% confidence interval. Both the \textsc{gp} mean and a sample from the \textsc{gp} posterior are shown in separate plots. The insets are included to highlight the variability of the fit.}
\label{GP_uv_fits}
\end{figure*}

\begin{figure*}[]
\centering
\subfigure[X-ray Band | Matérn $\frac{1}{2}$ | Mean.]{\label{fig:4pp}\includegraphics[width=0.49\textwidth]{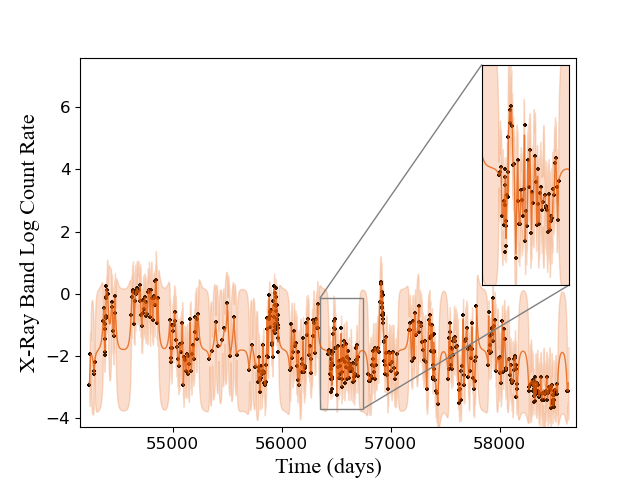}}
\subfigure[X-ray Band | Matérn $\frac{1}{2}$ | Sample.]{\label{fig:4pp}\includegraphics[width=0.49\textwidth]{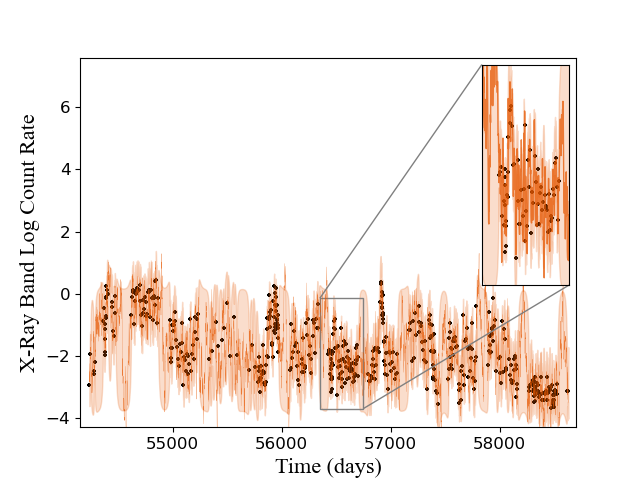}}
\subfigure[X-ray Band | Rational Quadratic | Mean.]{\label{fig:4pp}\includegraphics[width=0.49\textwidth]{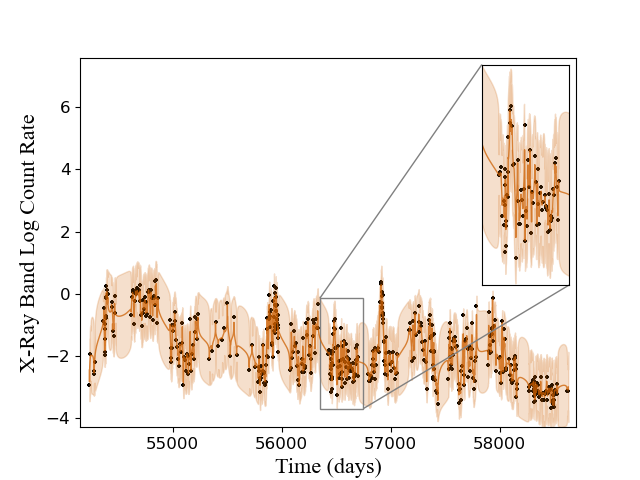}}
\subfigure[X-ray Band | Rational Quadratic | Sample]{\label{fig:4pp}\includegraphics[width=0.49\textwidth]{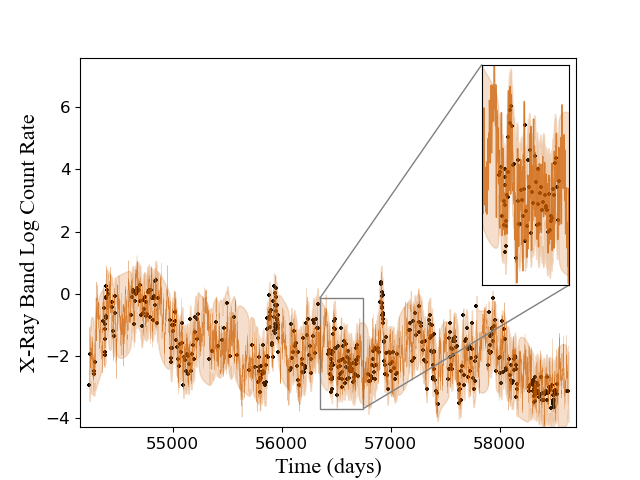}}  
\caption{\textsc{gp} lightcurves for the X-ray band. The shaded regions denote the \textsc{gp} 95\% confidence interval. Both the \textsc{gp} mean and a sample from the \textsc{gp} posterior are shown in separate plots. The insets are included to highlight the variability of the fit.}
\label{GP_xray_fits}
\end{figure*}

\section{Structure Function Analysis}
\label{structure_analysis}

Ideally one would like to examine the PSD of the \textsc{gp} fits to the observational data. The PSD characterises the distribution of power over frequencies of a given emission band and properties of the PSD can be linked to underlying physical processes in the accretion disc. Computation of the PSD, while possible, can be complicated by the uneven sampling of the observational data, leading previous studies to instead perform a structure function analysis on the \src \: data \citep{gallo18}. While it is possible to extract the PSD from the learned kernel \citep{2019_Wilkins}, in this work a structure function analysis of the \textsc{gp} lightcurves was performed in order to compare directly against the results of \cite{gallo18}. The method described in \cite{1985_Simonetti, 1992_Hughes, 1996_di_Clemente, 2001_Collier, gallo18} was followed. The binned structure function is defined as:

\begin{equation}
    \text{SF}(\tau) = \frac{1}{N(\tau)} \sum_{i} \: [f(t_i) - f(t_i + \tau)]^2,
\end{equation}

\noindent where $\tau = t_j - t_i$ is the distance between pairs of points $i$ and $j$ such that $t_j > t_i$. The structure function is binned according to $\tau$ where the centres of each bin are given by $\tau_i = (i - \frac{1}{2})\delta$. In this instance, $\delta$ is the structure function resolution. The same $\delta$ as in \cite{gallo18} was used, namely 5.3 days for the structure function computation over both the X-ray and UVW2 bands. $f(t_i)$ gives the count rate value at time point $t_i$ and $N(\tau)$ is the number of structure function pairs in each bin $i$ with centre $\tau_i$. Accounting for measurement noise by subtracting twice the mean noise variance from each structure function bin, as performed in \cite{gallo18} was found to have negligible effect on the \textsc{gp} structure functions and so was ignored. As in \cite{gallo18}, the structure function values were normalised by the global lightcurve variance. 

The \textsc{gp} structure functions for the interpolated lightcurves are shown in \autoref{GP Structure Functions}. The $1\sigma$ \textsc{gp} error bars were obtained by computing the structure function over 50 samples from the \textsc{gp} posterior. Each sample gives rise to highly similar structure functions and so the errors are not visible on the plot. The structure functions computed from the observational data, 509 and 498 data points for the X-ray and UV bands of \src \: respectively, are included for reference. In contrast to the \textsc{gp} structure function errorbars, in the case of the observational data the error bars are computed as $\frac{\sigma_i}{\sqrt(\frac{N_i}{2})}$ where $\sigma_i$ is the noise standard deviation in bin $i$ and $N_i$ is the number of pairs in bin $i$.\\ 

The \textsc{gp} structure functions are compared against the observational structure functions in \autoref{GP Structure Functions}. In addition, the broken power law fits to the \textsc{gp} structure functions are plotted, the parameters of which are given in \autoref{params}. In the UVW2 band, both \textsc{gp} kernels yield structure functions possessing a consistent break point at ca. 125 days. In the X-ray band the Matérn $\frac{1}{2}$ kernel yields a break point at 66 days whereas the rational quadratic kernel fit yields an unbroken power law. Given the discrepancy between \textsc{gp} kernels, definite evidence for a break in the X-ray power law is not found.

Of particular interest is whether the dip in the X-ray structure function is an expected feature of the latent lightcurve or a measurement artefact arising from uneven sampling. In order to assess the potential for the dip to be a sampling artefact, simulations were performed using the Timmer and König algorithm from \autoref{sim_section}. In this case structure functions of gapped lightcurves were computed and compared against structure functions derived from the ground truth lightcurves with no gaps. One representative simulation is depicted in \autoref{sf_sims}. In this instance a similar dip to that found in the observational data is observed in the X-ray band simulation. This highlights the possiblity that the dip seen in the observational X-ray structure function is a sampling artefact arising from gaps in the lightcurve.

\begin{figure*}[]
\centering
\subfigure[Matérn $\frac{1}{2}$ UVW2]{\label{fig:4pp}\includegraphics[width=0.49\textwidth]{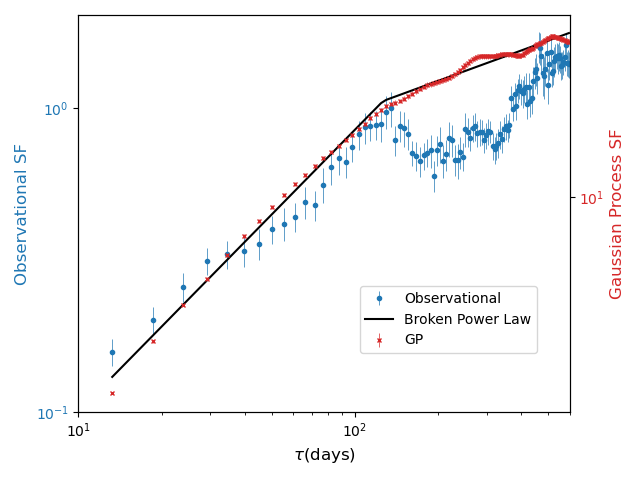}}
\subfigure[Rational Quadratic UVW2]{\label{fig:4pp}\includegraphics[width=0.49\textwidth]{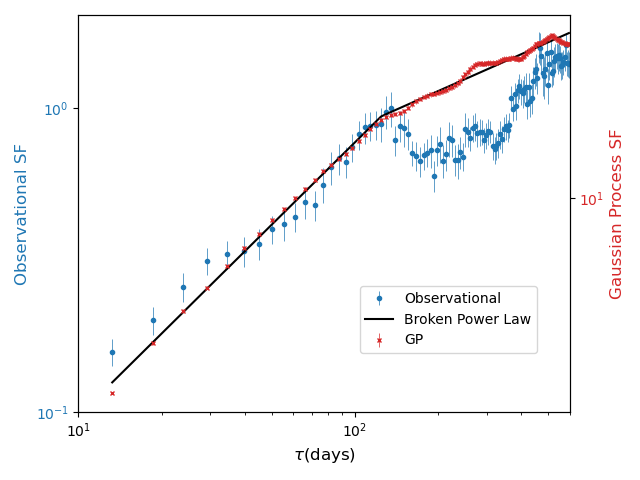}}  
\subfigure[Matérn $\frac{1}{2}$ X-ray]{\label{fig:4pp}\includegraphics[width=0.49\textwidth]{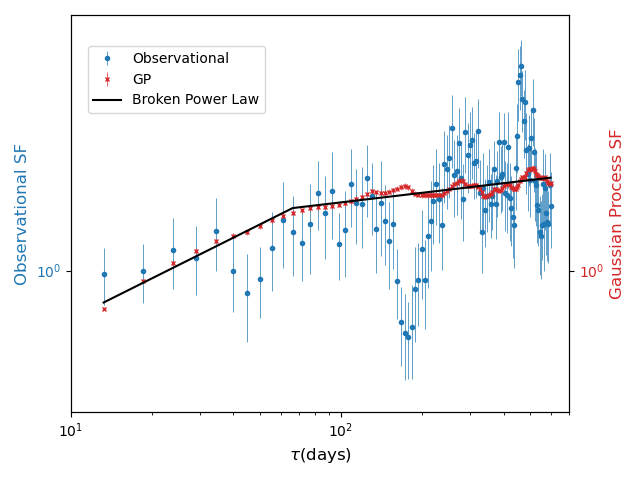}}
\subfigure[Rational Quadratic X-ray]{\label{fig:4pp}\includegraphics[width=0.49\textwidth]{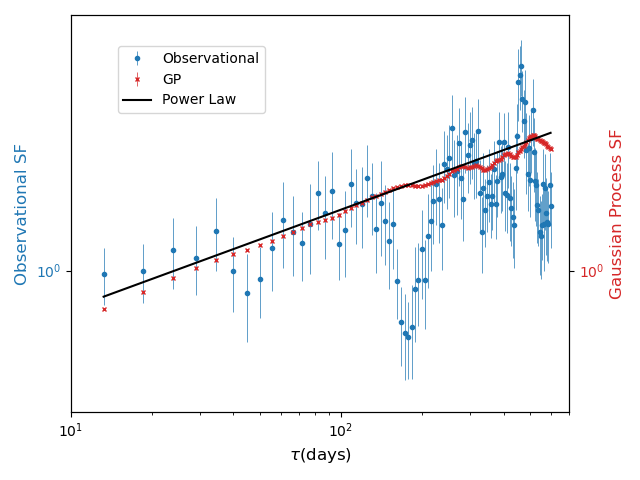}}
\caption{Comparison of observational and \textsc{gp} structure functions. The \textsc{gp} structure functions are consistent with those calculated from the observational data in the non-noise dominated regions. The dip at ca. $200$ days in the observational X-ray structure function is potentially a sampling artefact as demonstrated by simulation in \autoref{sf_sims}.}
\label{GP Structure Functions}
\end{figure*}

\begin{table}[]
\caption{Parameters for the broken power law fits to the \textsc{gp} structure functions. $\alpha_1$ and $\alpha_2$ are the indices for the power law before and after the break point $\tau_{char}$. The break point $\tau_{char}$ is reported in days. Errors were computed using 200 bootstrap samples of the data points corresponding to the \textsc{gp} structure functions. The X-ray rational quadratic structure function was fit using a power law and as such only has a single index as a parameter. The Astropy library \citep{astropy_1, astropy_2} was used to compute the (broken) power law fits using the simplex algorithm and least squares statistic for optimisation with the \textsc{gp} structure function uncertainties used as weights in the fitting.}
\begin{center}
\begin{tabular}{lllll}
\toprule
Waveband & Kernel & $\alpha_1$ & $\alpha_2$ & $\tau_{char}$ \\ \midrule
UVW2 & Matérn $\frac{1}{2}$ & $-0.72 \pm \: 0.03$ & $-0.26 \pm \: 0.01$ & $127 \pm \: 8$ \\
UVW2 & Rational Quadratic & $-0.62 \pm \: 0.01$ & $-0.28 \pm \: 0.01$ & $125 \pm \: 5$ \\
X-ray & Matérn $\frac{1}{2}$ & $-0.29 \pm \: 0.03$ & $-0.07 \pm \: 0.004$ & $66 \pm \: 8$ \\
X-ray & Rational Quadratic & $-0.21 \pm \: 0.002$ & N/A & N/A \\ \bottomrule
\end{tabular}
\end{center}
\label{params}
\end{table}

\begin{figure*}[]
\centering
\subfigure[Observational]{\label{fig:sf_sim}\includegraphics[width=0.49\textwidth]{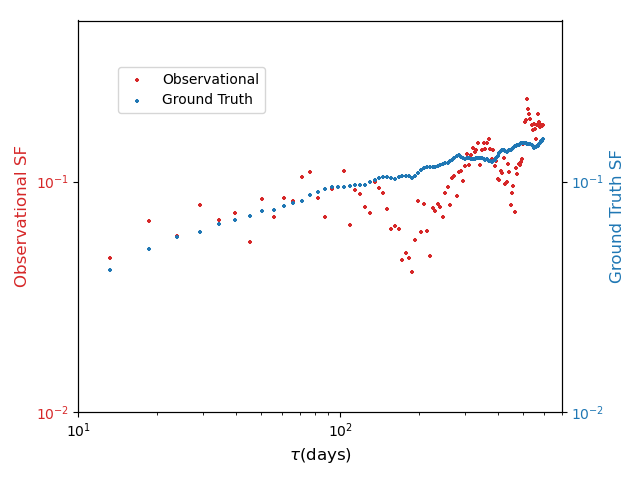}}
\subfigure[\textsc{gp} | Matérn $\frac{1}{2}$]{\label{fig:sf_sim2}\includegraphics[width=0.49\textwidth]{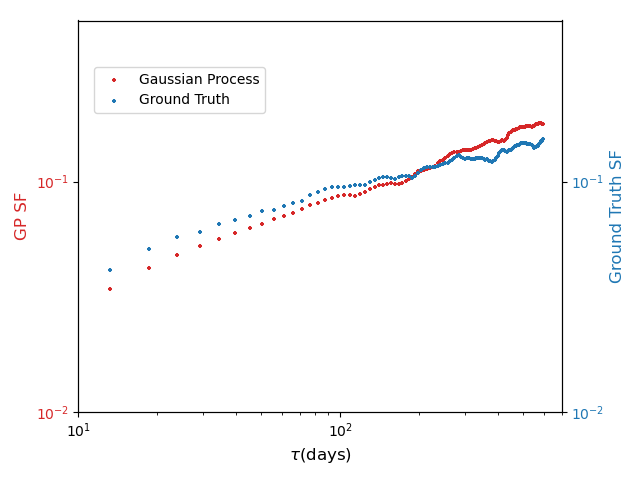}}
\caption{Structure function simulations. Pseudo-observational lightcurves are produced by introducing gaps into the simulated lightcurves. The structure function for the gapped lightcurve is shown in red in (a) whereas the structure function of the \textsc{gp} interpolation is shown in red in (b). Both structure functions are compared against the ground truth structure function obtained from the full simulated ground truth lightcurve (no gaps). The dips at $\tau = 200 \: \text{days}$ and $\tau = 400 \: \text{days}$ in the structure function derived from the gapped observational simulation in 3.7(a) are artefacts of the uneven sampling.}
\label{sf_sims}
\end{figure*}

\section{Lag and Coherence}
\label{lag_and_coherence}

\begin{figure*}
    \centering
    \includegraphics{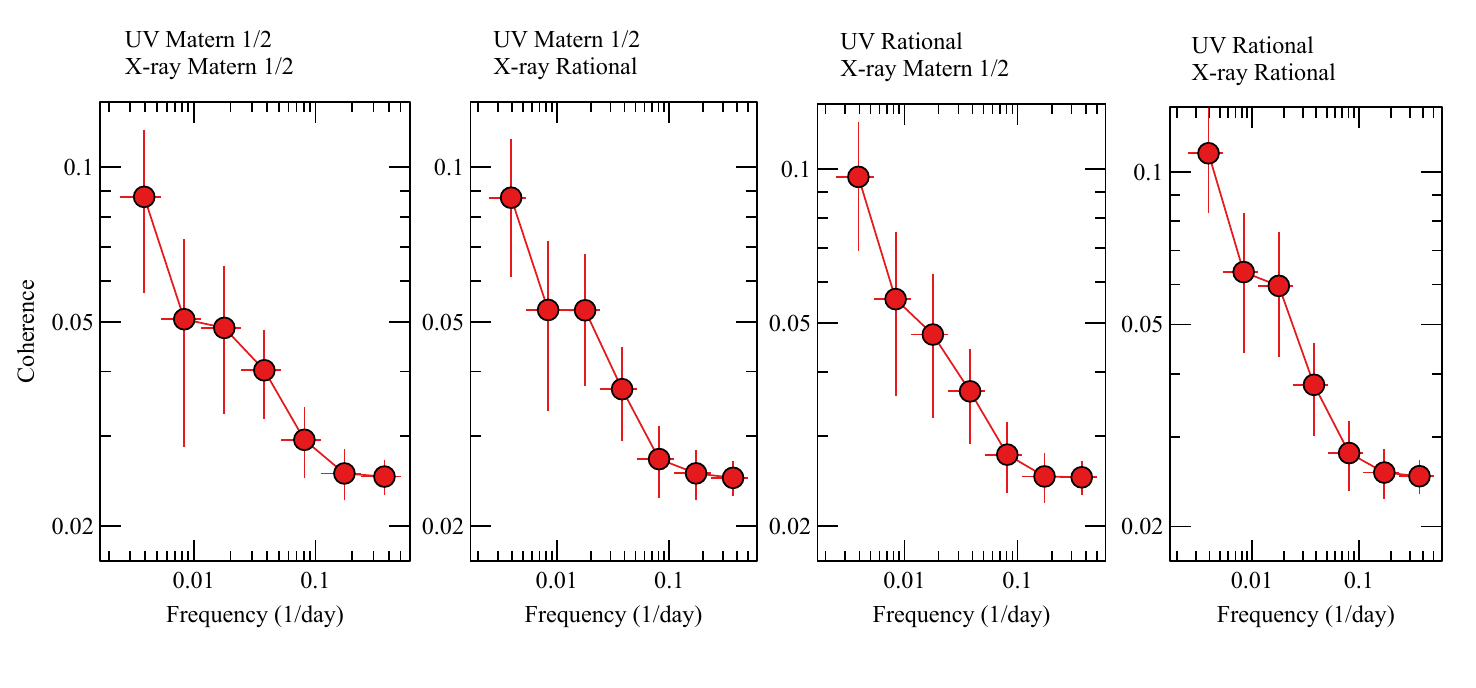}
    \includegraphics{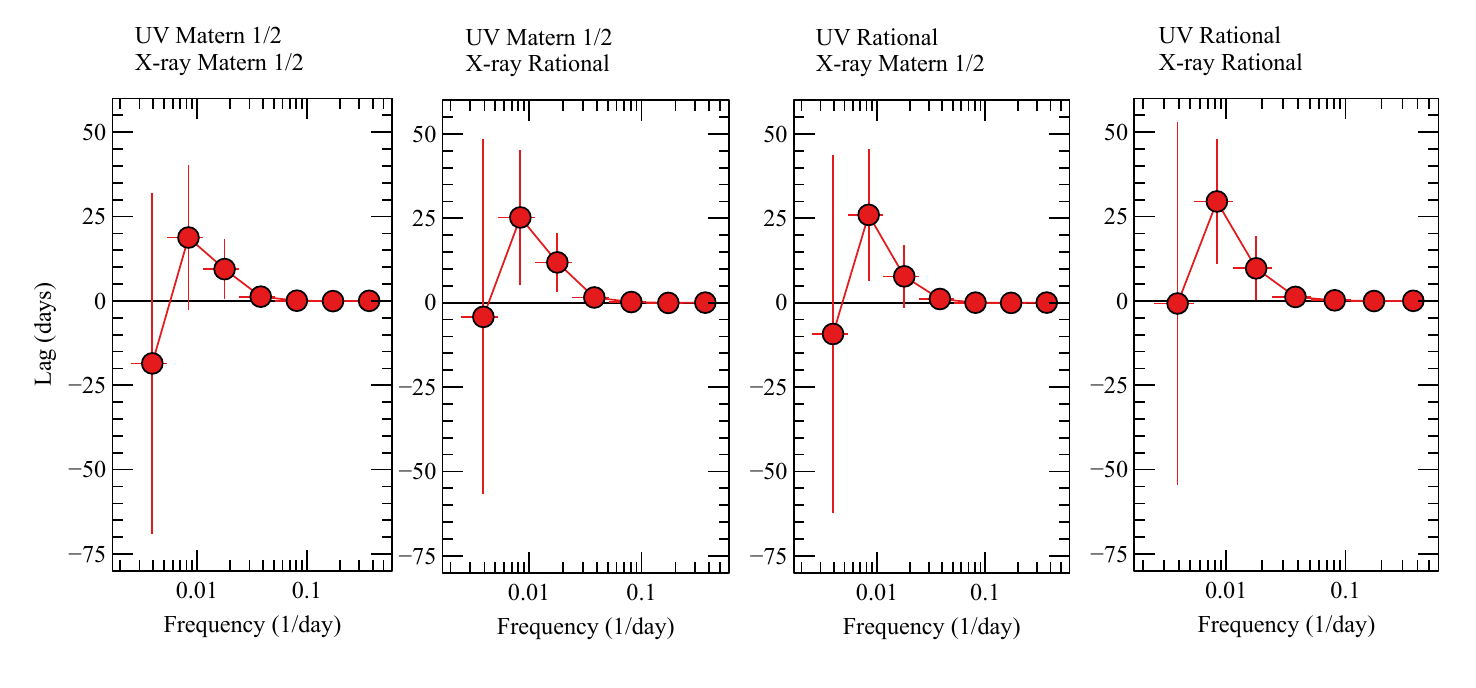}
    \caption{The coherence and lag spectra for Mrk~335, calculated by using 1000 pairs of \textsc{gp} lightcurve samples fit to the observed lightcurves. The error bars are the standard errors of the corresponding measurements for the 1000 samples. Different panels are for different kernels. Positive lags imply that the X-ray band leads the UVW2 band. Spectra plotted by Jiachen Jiang.}
    \label{pic_coh_lag}
\end{figure*}

In this section, the coherence between the UVW2 and X-ray emission from Mrk 335 is computed in search of evidence of lag features in the Fourier frequency domain. The coherence and lag spectra were estimated from one thousand pairs of UVW2 and X-ray \textsc{gp} lightcurve samples drawn from the \textsc{gp} posterior for each kernel. The lags are defined as the phase lags divided by the corresponding Fourier frequency. {A similar approach has been used in other disciplines \citep[e.g.][]{fabian09, kara13}.} Both Matérn $\frac{1}{2}$ and rational quadratic kernels are considered. The results are shown in \autoref{pic_coh_lag}. These spectra were plotted by Jiachen Jiang. Positive lags imply that the X-ray variability leads the UVW2 variability. The error bars in the figure are the standard errors of the corresponding measurements for the one thousand samples.

The coherence between the UVW2 and X-ray emission decreases with frequency, suggesting more coherent variability at lower frequency. Positive lag features are shown at the low frequencies in the range $f=$ 0.005--0.025 d$^{-1}$. The absolute value of the lag at $f=0.0039\pm0.0014$\,d$^{-1}$ is estimated to be $19 \pm 22$ days for the Matern $\frac{1}{2}$ kernel applied to both lightcurves and $29 \pm 19$ days for the rational quadratic kernel, however both measurements are consistent with zero lag in the $2\sigma$ uncertainty range.

Tentative evidence of a shorter time lag at a higher frequency of $f=0.018\pm0.006$\,d$^{-1}$ is also found. The longer lag feature at a lower frequency would correspond to a more extended emission region while the shorter lag feature at a higher frequency would correspond to a more compact region. This could be explained by the presence of an extended UV emission region on the disc where reverberation happens.

Given that the lags are consistent with zero lag within $2\sigma$ uncertainty ranges, it is concluded that only tentative evidence for a broad lag feature is found by applying \textsc{gp}s to the UVW2 and X-ray lightcurves of \src. Previous attempts to identify lags between two wavelengths of \src\ based on cross-correlation analysis in the time domain suggests similar results \citep[e.g.][]{gallo18}.

\section{Conclusions}
\label{conc}

Following the interpolation of the unevenly-sampled lightcurves of \src\ using \textsc{gp}s, tentative evidence for broad lag features is found in the Fourier frequency domain. The magnitude of the lags is consistent with previous cross-correlation analyses. In addition, the broad lag features {might} suggest an extended emission region e.g. of the disc in \src \: during the reverberation processes. {If the corona is compact within 5 $R_{\rm g}$ in Mrk~335 \citep{2015_Wilkins}, our data suggest a possibly wide range of UVW2 emission radii.}

The structure functions computed from the \textsc{gp}-interpolated lightcurves are consistent with those derived from the observational data and furthermore, illicit potential insights into the properties of the latent lightcurves. In particular, it is shown through a simulation study that it is possible that dips in the X-ray structure function may be produced by sampling artefacts arising from gaps in the lightcurve. In contrast, the \textsc{gp} structure functions show no dips. While this is not proof that the dip in the observational X-ray structure function is due to a sampling artefact, it does allude to the possibility. The UVW2 \textsc{gp} structure functions do not exhibit strong dependence on the choice of kernel with both Matérn $\frac{1}{2}$ and rational quadratic showing up a broken power law with breaks at 139 and 155 days respectively. The X-ray structure functions however do show up differences between kernels with the rational quadratic kernel predicting a power law and the Matérn 1/2 kernel predicting a broken power law.

From the \textsc{gp} modelling perspective, the ability to carry out Bayesian model selection affords a quantitative means of comparing analytic kernels under the marginal likelihood. It may be possible to incorporate further flexibility into the fitting procedure by making use of more sophisticated methods of kernel design \citep{2014_Duvenaud} to allow the assessment of fits of sums and products of analytic kernels or by leveraging advances in transforming \textsc{gp} priors via Deep \textsc{gp}s \citep{2013_Damianou} or normalising flows \citep{2020_Maronas}. Such approaches could be validated using simulation studies. Additionally, modelling the cross-correlation using multioutput \textsc{gp}s \citep{2020_de_Wolff} may be an interesting avenue for comparison against the approach taken here. Lastly, Bayesian spectral density estimation \citep{2018_Tobar} may afford further flexibility through nonparametric modelling of the PSD in addition to nonparametric modelling of the lightcurve in the time domain. These improvements in Bayesian modelling machinery may help to minimise model misspecification and as such, enable more robust inferences to be made about the functional forms of the latent lightcurves.

\nomenclature[Z-SMBH]{SMBH}{Supermassive Black Hole}
\nomenclature[Z-AGN]{AGN}{Active Galactic Nucleus}
\nomenclature[Z-Sy1]{Sy1}{Seyfert 1}
\nomenclature[Z-Sy2]{Sy2}{Seyfert 2}
\nomenclature[Z-GRB]{GRB}{Gamma-Ray Burst}
\nomenclature[Z-BAT]{BAT}{Burst Alert Telescope}
\nomenclature[Z-XRT]{XRT}{X-Ray Telescope}
\nomenclature[Z-UVOT]{UVOT}{UV/Optical Telescope}
\nomenclature[Z-ADAF]{ADAF}{Advection-Dominated Accretion Flows}
\nomenclature[Z-NLS1]{NLS1}{Narrow-Line Seyfert 1}
\nomenclature[Z-ECDF]{ECDF}{Empirical Cumulative Distribution Function}
\nomenclature[Z-PSD]{PSD}{Power Spectral Density}
\nomenclature[Z-RSS]{RSS}{Residual Sum of Squares} %
\chapter{Modelling Molecules with Gaussian Processes}
\chapterimage[height=130pt]{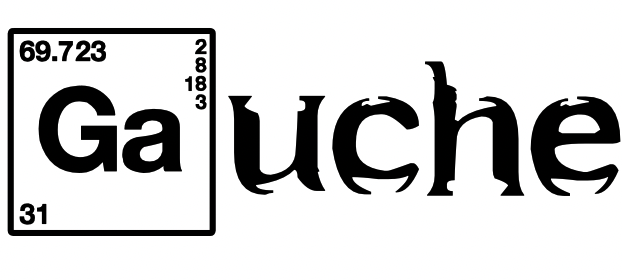}

\ifpdf
    \graphicspath{{Chapter4/Figs/Raster/}{Chapter4/Figs/PDF/}{Chapter4/Figs/}}
\else
    \graphicspath{{Chapter4/Figs/Vector/}{Chapter4/Figs/}}
\fi

\textbf{Status:} Accepted as Griffiths, RR., Klarner, L., Moss, HB., Ravuri, A., Truong, S., Rankovic, B., Du, Y., Jamasb, A., Schwartz, J., Tripp, A., Kell, G. Bourached, A., Chan, A., Moss, J., Chengzhi, G, Lee AA, Schwaller P., Tang, J.  GAUCHE: A library for Gaussian processes in chemistry. \textit{ICML Workshop on AI4Science}, 2022.

\section{Background on Molecular Machine Learning}
\label{background_gauche}

The chapter begins with a self-contained background on molecular machine learning required to contextualise the findings of this chapter.

\begin{figure}[h]
    \begin{center}
        \includegraphics[width=0.75\textwidth]{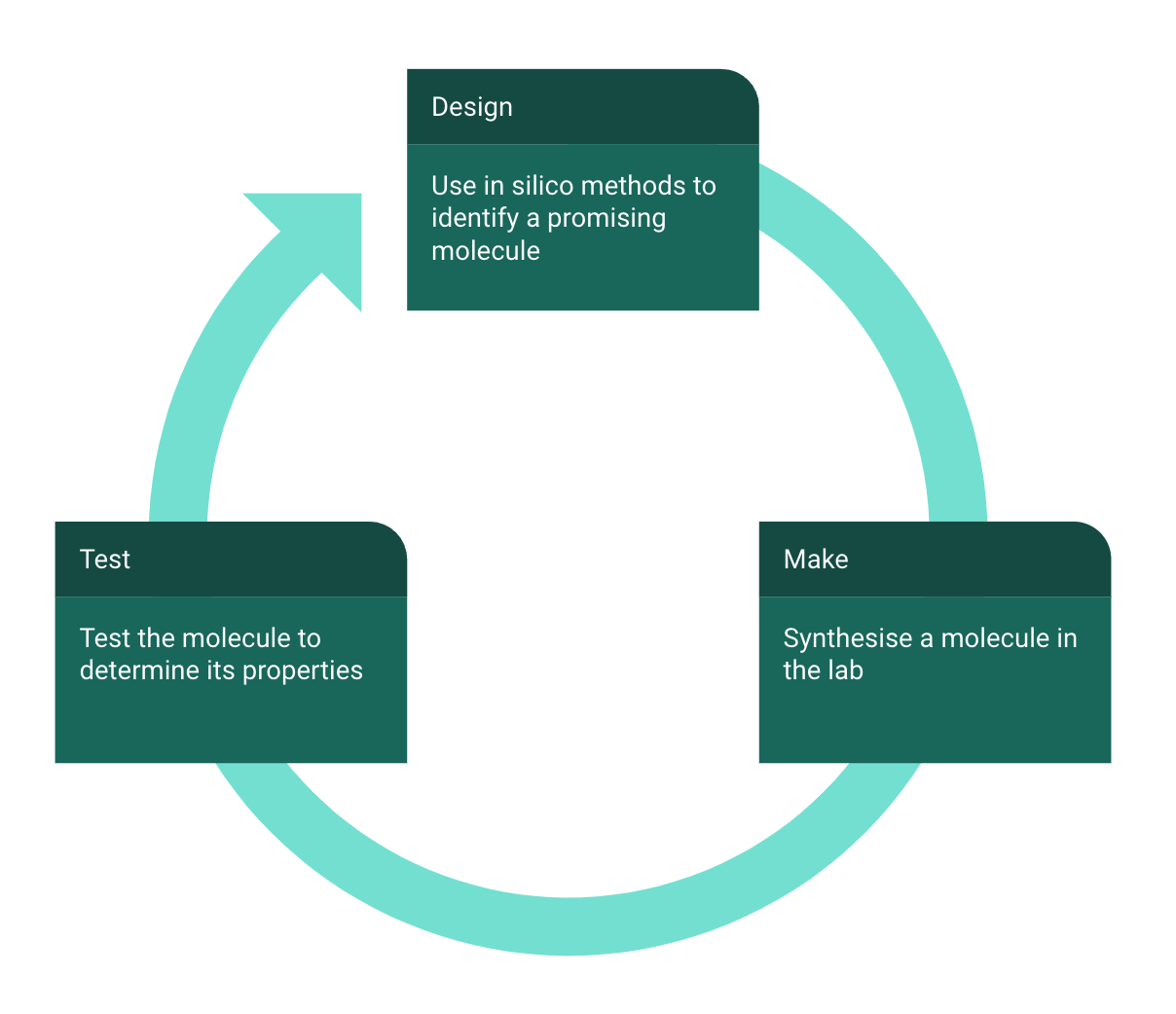}
    \end{center}
    \caption{The design-make-test cycle of molecular discovery.}
    \label{dmt}
\end{figure}

Although application domains of machine learning in the molecular sciences are constantly expanding \citep{2020_Struble}, an important subset of applications can be taxonimised according to the role they play in enhancing the design-make-test cycle \citep{2012_Plowright} of molecular discovery campaigns, illustrated in \autoref{dmt}. Molecule generation \citep{2018_Design, 2020_Griffiths, 2017_Grammar, 2020_Jin, 2022_Du, 2022_Gao} is concerned with designing novel molecules using generative models such as variational autoencoders (VAEs) \citep{2014_Kingma} and generative adversarial networks (GANs) \citep{2014_Goodfellow}. Chemical reaction prediction \citep{2019_Schwaller}, reaction planning \citep{2018_Coley_new}, and synthesis design \citep{2022_Schwaller}, illustrated in \autoref{reaction}, are focussed on improving the throughput of the \say{make} stage of the design-make-test cycle by using machine learning to suggest synthetic pathways to target molecules.

The molecular machine learning tasks considered in this thesis, are molecular property prediction and chemical reaction optimisation. Molecular property prediction is concerned with the \say{design} phase of the design-make-test cycle in so far as it allows molecules to be prioritised for laboratory synthesis. Chemical reaction optimisation, on the other hand, is concerned with the \say{make} phase as it is a means of improving the yield of chemical reactions. Both applications will be described in detail next.

\begin{figure}[h]
    \begin{center}
        \includegraphics[width=0.75\textwidth]{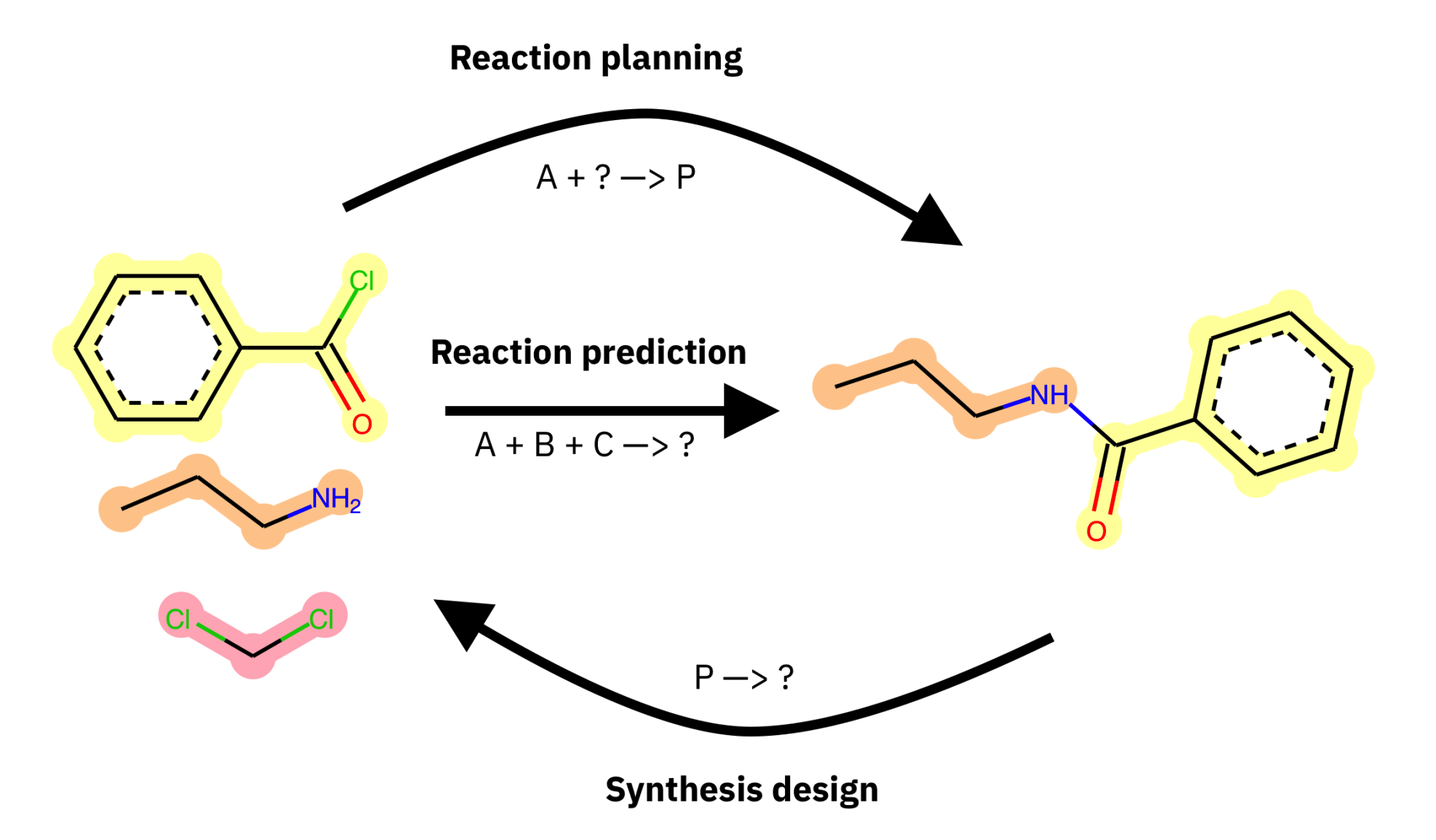}
    \end{center}
    \caption{A, B and C are starting materials and P is the product of the chemical reaction. Reaction planning involves finding a set of reagents to transform a starting material into a product. Reaction prediction involves predicting the product given a set of reactants. Synthesis design involves working backwards from the product towards a set of reactants and reagents. Machine learning-based solutions to all of these tasks would yield a blueprint for a chemist to follow in synthesising a novel molecule.}
    \label{reaction}
\end{figure}

\subsection{Molecular Property Prediction}

The laboratory synthesis of a novel molecule is a highly time-intensive process. As such, there has been a great deal of interest in computational approaches to prioritising molecules drawn from vast molecular databases in a process known as high-throughput virtual screening \citep{2015_Knapp}. In this fashion, theoretical techniques can be used to winnow the database down to a few promising candidates for experimental chemists to follow up on. Before the advent of machine learning, the dominant approach was to use first principles quantum chemical calculations to compute molecular properties. Below, one such first principles method is reviewed, density functional theory (DFT), which is compared against machine learning model performance in Chapter 5, before discussing machine learning approaches to molecular property prediction. 

\subsubsection{Density Functional Theory}

DFT is a method of modelling the electronic structure of many-body systems \citep{brazdova_atomistic_2013}, and has been applied across problems in physics, chemistry, biology, and materials science \citep{becke_perspective_2014}. DFT is an \emph{ab initio}, or first principles computational method because physical constants are the only inputs to calculations based on the postulates of quantum mechanics \citep{leach_molecular_2001}. Since the inception of DFT in 1964-1965, Kohn-Sham DFT (KS-DFT) has been one of the most frequently applied electronic structure methods \citep{becke_perspective_2014}. 

KS-DFT makes use of the Hohenberg-Kohn theorems \citep{hohenberg_inhomogeneous_1964}, a trial electron density, and a self-consistency scheme. KS-DFT executes a computational loop by starting with a trial density, solving the Kohn-Sham equations, and obtaining the single electron wavefunctions for the trial density; in the next step, an electron density may be computed. If the computed density is consistent i.e. within a tolerance threshold of the trial density, the theoretical ground state density has been identified. If the densities are not consistent, however, the computed density is taken as the new trial density, and the iterative loop continues to be executed until the tolerance threshold is met. The accuracy of DFT calculations, with exchange and correlation functionals, can be very high, yet may exhibit significant fluctuations with the choice of functional, pseudopotential, basis sets and cutoff energy \citep{howard_assessing_2015}. Furthermore, these quantities are not always trivial to optimise.

\subsubsection{Time-Dependent Density Functional Theory}

Time-Dependent Density Functional Theory (TD-DFT) is a time-dependent analogue of DFT based on the Runge-Gross (RG) theorem in place of the Hohenberg-Kohn theorems \citep{runge_density-functional_1984}. The RG theorem states that a unique delineation exists between the time-dependent electron density and the time-dependent external potential. As such, a computational, time-dependent Kohn-Sham system may be implemented \citep{van_leeuwen_causality_1998} in a similar fashion to KS-DFT. When TD-DFT has been used together with a linear response theory \citep{ullrich_time-dependent_2012}, it has enjoyed success in the calculation of electromagnetic spectra of medium and large molecules \citep{casida_progress_2012, burke_time-dependent_2005}. A relevant application of TD-DFT in this thesis is the computation of the $\pi-\pi{^*}/\emph{n}-\pi{^*}$ electronic transitions wavelengths for photoswitch molecules in Chapter 5.

\subsubsection{Machine Learning Approaches}

In contrast to first principles methods such as DFT, machine learning approaches seek to carry out data-driven prediction. A key issue in data-driven molecular property prediction is how best to featurise molecules. This problem is commonly referred to as choosing a molecular representation. While a great many base representations of molecules exist \citep{2022_Wigh}, some of the most popular featurisations include graph-based, string-based and fingerprint representations. The field of molecular representation learning is concerned with learning representations on top of these base representations e.g. \citep{2015_Duvenaud} typically via deep learning. Below, a brief review is provided of the molecular representations used in this thesis.

\paragraph{Graphs:} Molecules may be represented as an undirected, labeled graph $\mathcal{G}=(\mathcal{V}, \mathcal{E})$ where vertices, $\mathcal{V}=\{v_1, \ldots, v_N\}$, represent the atoms of an $N$-atom molecule and edges, $\mathcal{E}\subset\mathcal{V}\times\mathcal{V}$, represent covalent bonds between these atoms. Additional information may be incorporated in the form of vertex and edge labels $\mathcal{L}:\mathcal{V}\times\mathcal{E}\to\Sigma_V\times\Sigma_E$. Common label spaces including attributes such as atom types (i.e. hydrogen, carbon) as vertex labels and bond orders (i.e. single, double) as edge labels.

\paragraph{Strings:} The Simplified Molecular-Input Line-Entry System (SMILES) is a text-based representation of molecules \citep{1987_Anderson, 1988_Weininger}, examples of which are given in \autoref{smiles_figure}. Self-Referencing Embedded Strings (SELFIES) \citep{2020_Krenn} is an alternative string representation to SMILES such that a bijective mapping exists between a SELFIES string and a molecule.

\begin{figure}[h]
    \centering
    \includegraphics[width=0.39\textwidth]{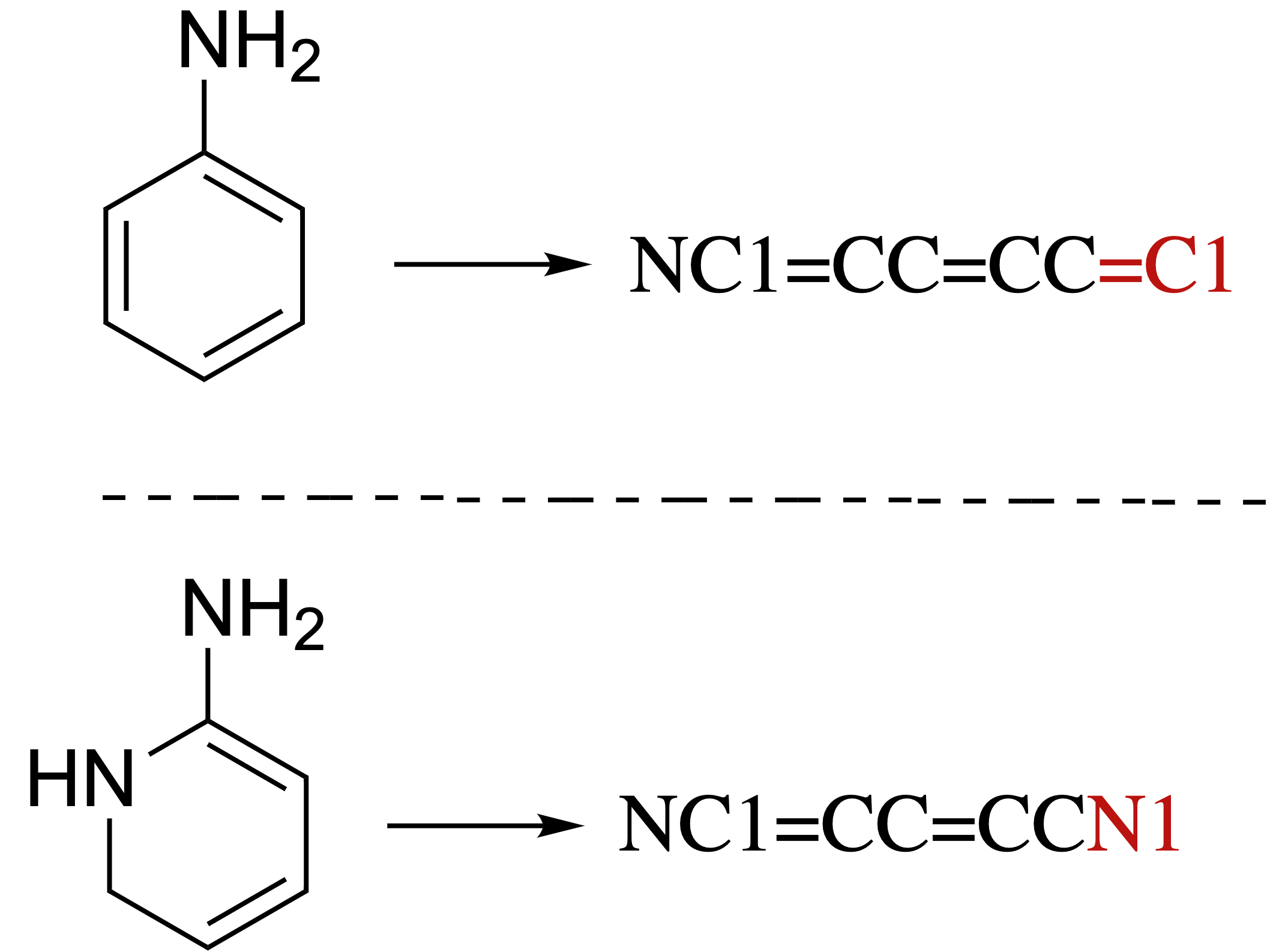}
    \caption{SMILES strings for structurally similar molecules. Similarity is encoded in the string through common contiguous subsequences (black). Local differences are highlighted in red. Note the molecules are chosen solely for the purposes of illustrating the SMILES syntax.}
    \label{smiles_figure}
\end{figure}

\paragraph{Fingerprints:} Molecular fingerprints were first introduced for chemical database substructure searching \citep{1993_Christie}, but were later repurposed for similarity searching \citep{1990_Johnson}, clustering \citep{1997_McGregor} and classification \citep{2017_Breiman}. Extended Connectivity FingerPrints (ECFP) \citep{2010_Rogers} were introduced as part of the Pipeline project \citep{2006_Hassan} with the explicit goal of capturing features relevant for molecular property prediction \citep{2004_Xia}. ECFP fingerprints operate by assigning initial numeric identifiers to each atom in a molecule. These identifiers are subsequently updated in an iterative fashion based on the identifiers of their neighbours. The number of iterations corresponds to half the \textit{diameter} of the fingerprint and the naming convention reflects this. For example, ECFP6 fingerprints have a diameter of 6, meaning that 3 iterations of atom identifier reassignment are performed. Each level of iteration appends substructural features of increasing non-locality to an array and the array is then hashed to a bit vector reflecting the presence of absence of those substructures in the molecule.

For property prediction applications a radius of 3 or 4 is recommended. A radius of 3 is used for all experiments in the thesis. Fragment descriptors are also used, which are count vectors, each component of which indicates the number of a certain functional group present in a molecule. For example row 1 of the count vector could be an integer representing the number of aliphatic hydroxl groups present in the molecule. Both fingerprint and fragment features computed using RDKit are made use of \citep{rdkit}, as well as the concatenation of the fingerprint and fragment feature vectors, a representation termed fragprints which has shown strong empirical performance. Example representations $\mathbf{x_f}$, for fingerprints and $\mathbf{x_{fr}}$, for fragments might be

\begin{align*}
 \mathbf{x_{f}} &= \begin{bmatrix}
       1 \\
       0 \\
       \vdots \\
       1
     \end{bmatrix}, \:\:\:
  \mathbf{x_{fr}} = \begin{bmatrix}
       3 \\
       0 \\
       \vdots \\
       2
     \end{bmatrix}.
\end{align*}

\subsubsection{Bayesian Optimisation and Active Learning for Molecules}

\textsc{bo} and active learning hold particular promise for accelerating high-throughput virtual screening efforts for molecules \citep{2017_Knapp, 2020_Pyzer}. The idea in this instance is that data-efficient molecular property optimisation can be performed in a \textsc{bo}/active learning loop featuring laboratory synthesis of the suggested molecules.

\subsection{Chemical Reaction Optimisation}

The \say{yield} of a chemical reaction is an important consideration for large-scale production of a desired molecule. The percentage yield, of a chemical reaction may be defined as

\begin{equation}
     \text{Percentage Yield} = \frac{\text{Actual Yield}}{\text{Theoretical Yield}}.
\end{equation}

\noindent The Theoretical Yield assumes a flawless chemical reaction in which all starting materials are converted to the desired product. In practice, chemical reactions are not perfectly efficient due to factors such as reverse reactions, in which the reactants and products exist in a state of chemical equilibrium, as well as competing chemical reactions that form unwanted side products. Factors such as the temperature of the reaction, the concentration of reactant species, the choice of solvent as well as the presence of reagents, molecular species that enhance the reaction without contributing atoms to the product, are all determinants of the efficiency of the reaction and hence the Actual Yield. Optimising such reaction parameters has recently been tackled by machine learning approaches such as \textsc{bo} \citep{2021_Shields}. Common reaction representations are detailed next. 

\subsubsection{Chemical Reaction Representations}
A chemical reaction comprises reactants and reagents that transform into one or more products together with reaction parameters such as temperature and concentration. The reactants and reagents are instances of molecular species which play different roles in the reaction. By means of illustration, the high-throughput experiments by \cite{ahneman2018predicting} on Buchwald-Hartwig reactions feature a reaction design space consisting of 15 aryl and heteroaryl halides, 4 Buchwald ligands, 3 bases, and 23 isoxazole additives. Below, various means of featurising the reactant and reagent components of a chemical reaction are introduced.

\paragraph{Concatenated molecular representations:} If the number of reactant and reagent categories is constant, the molecular representations discussed above can be used to encode reactants and reagents. The vectors for the individual reaction components may then be concatenated to build the reaction representation \citep{ahneman2018predicting, sandfort2020structure}. An additional concatenated representation, is the one-hot-encoding (OHE) of the reaction categories where bits indicate the presence or absence of a particular reactant/reagent. In the Buchwald-Hartwig example above, the OHE would describe which of the aryl halides, Buchwald ligands, bases, and additives are used in the reaction, resulting in a 44-dimensional bit vector \citep{chuang2018comment}. 

\paragraph{Differential reaction fingerprints:} Inspired by the hand-engineered difference reaction fingerprints by \citet{schneider2015development}, \citet{probst2022reaction} recently introduced the differential reaction fingerprint (DRFP). This reaction fingerprint is constructed by taking the symmetric difference of the sets containing the molecular substructures on both sides of the reaction arrow. The size of the reaction bit vector generated by DRFP is independent of the number of reaction components. 

\paragraph{Data-driven reaction fingerprints:} \citet{schwaller2021mapping} described data-driven reaction fingerprints using Transformer models such as BERT \citep{devlin2018bert}, trained in a supervised or an unsupervised fashion on reaction SMILES. The Transformer models can then be fine-tuned on the task of interest to learn more specific reaction representations \citep{schwaller2021prediction}. These representations are designated using the acronym RXNFP. As with the DRFP, the size of data-driven reaction fingerprints is also independent of the number of reaction components.

In the next section, the focus of this chapter is introduced.

\section{Preface}

This chapter introduces GAUCHE, a library for GAUssian processes in CHEmistry. \textsc{gp}s have long been a cornerstone of probabilistic machine learning, affording particular advantages for uncertainty quantification (UQ) and \textsc{bo}. Extending \textsc{gp}s to chemical representations however is nontrivial, necessitating kernels defined over structured inputs such as graphs, strings and bit vectors. By defining such kernels in GAUCHE, the door is opened to powerful tools for UQ and \textsc{bo} in chemistry. Motivated by scenarios frequently encountered in experimental chemistry, applications for GAUCHE are showcased in molecule discovery and chemical reaction optimisation.

\section{Introduction}
\label{intro}

Early-stage scientific discovery is typically characterised by the small data regime due to the limited availability of high-quality experimental data \citep{2018_Zhang}. Much of the novelty of discovery relies on the fact that there is a lot of knowledge to gain in the small data regime. By contrast, in the big data regime, discovery offers diminishing returns as much of the knowledge about the space of interest has already been acquired. As such, machine learning methodologies that facilitate search in small data regimes such as \textsc{bo} \citep{2018_Bombarelli, 2020_Griffiths, 2021_Shields} and active learning (AL) \citep{2019_Zhang, 2021_Jablonka} have great potential to expedite the rate at which performant molecules, molecular materials, chemical reactions and proteins are discovered.

To date in molecular machine learning, BNNs have been the surrogate of choice to produce the uncertainty estimates that underpin \textsc{bo} and AL \citep{2019_Ryu, 2019_Zhang, 2020_Hwang, 2020_Scalia}. For small datasets, however, DNNs are often not the model of choice. Notably, certain deep learning experts have voiced a preference for \textsc{gp}s in the small data regime \cite{2011_Bengio}. Furthermore, for \textsc{bo}, \textsc{gp}s possess particularly advantageous properties; first, they admit exact as opposed to approximate Bayesian inference and second, few of their parameters need to be determined by hand. In the words of Sir David MacKay \cite{2003_MacKay}, 

\begin{displayquote}
"Gaussian processes are useful tools for automated tasks where fine tuning for each problem is not possible. We do not appear to sacrifice any performance for this simplicity.''
\end{displayquote}

The iterative model refitting required in \textsc{bo} makes it a prime example of such an automated task. Although BNN surrogates have been trialled for \textsc{bo} \citep{2015_Snoek, 2016_Springenberg}, \textsc{gp}s remain the model of choice as evidenced by the results of the recent NeurIPS Black-Box Optimisation Competition \cite{2021_Turner}.

Training \textsc{gp}s on molecular inputs is non-trivial however. Canonical applications of \textsc{gp}s assume continuous input spaces of low and fixed dimensionality. The most popular molecular input representations are SMILES/SELFIES strings \citep{1987_Anderson, 1988_Weininger, 2020_Krenn}, fingerprints \citep{2010_Rogers, probst2018probabilistic, capecchi2020one} and graphs \citep{2015_Duvenaud,2016_Kearnes}. Each of these input representations poses problems for \textsc{gp}s. SMILES strings have variable length, fingerprints are high-dimensional and sparse bit vectors, while graphs are also a form of non-continuous input. To construct a \textsc{gp} framework over molecules, GAUCHE provides GPU-based implementations of kernels that operate on molecular inputs, including string, fingerprint and graph kernels. Furthermore, GAUCHE includes support for protein and chemical reaction representations and interfaces with the GPyTorch \citep{2018_Gardner} and BoTorch \citep{2019_Balandat} libraries to facilitate usage for advanced probabilistic modelling and \textsc{bo}. The detailed contributions of this chapter may be summarised as:

\newpage

\begin{enumerate}
\setlength\itemsep{0.3em}
    \item The introduction of a \textsc{gp} framework for molecules and chemical reactions.
    \item The provision of an open-source, GPU-enabled library building on GPyTorch \citep{2018_Gardner}, BoTorch \citep{2019_Balandat}, and RDKit \cite{rdkit}.
    \item The use of black box graph kernels, from GraKel, \citep{2020_GraKel}, is extended to \textsc{gp} regression via a GPyTorch interface, along with a limited set of graph kernels implemented in native GPyTorch to enable optimisation of the graph kernel hyperparameters under the marginal likelihood.
    \item Benchmark experiments are conducted, evaluating the utility of the \textsc{gp} framework on regression, UQ and \textsc{bo} tasks.
\end{enumerate}

GAUCHE is made available at \href{https://github.com/leojklarner/gauche}{https://github.com/leojklarner/gauche} and includes tutorials to guide users through the tasks considered in this paper. 

\section{Related Work}
General-purpose \textsc{gp} and \textsc{bo} libraries do not specifically cater for molecular representations. Likewise, general-purpose molecular machine learning libraries do not specifically consider \textsc{gp}s and \textsc{bo}. Here, existing libraries are reviewed, highlighting the niche GAUCHE fills in bridging the \textsc{gp} and molecular machine learning communities. To date, there has been little work on Gaussian processes applied to discrete molecular representations, some notable exceptions being \citep{2020_Gardiner_gp, 2022_Gosnell, 2023_Jablonka}.

GAUCHE is a PyTorch extension of FlowMO \citep{2020_flowmo}, which introduces a molecular \textsc{gp} library in the GPflow framework. It is upon FlowMO which GAUCHE builds, extending the scope of the library to a broader class of molecular representations (graphs), problem settings (\textsc{bo}), and applications (reaction optimisation).

\paragraph{Gaussian Process Libraries:} \textsc{gp} libraries include GPy (Python) \citep{2014_gpy}, GPflow (TensorFlow) \citep{2017_Matthews, 2020_Wilk}, and GPyTorch (PyTorch) \citep{2018_Gardner}, while examples of recent Bayesian optimisation libraries include BoTorch (PyTorch) \citep{2019_Balandat}, Dragonfly (Python) \citep{2019_dragonfly}, and HEBO (PyTorch) \citep{2020_Rivers}. The aforementioned libraries do not explicitly support molecular representations. Extension to cover molecular representations, however, is nontrivial, requiring implementations of bespoke \textsc{gp} kernels for bit vector, string and graph inputs together with modifications to Bayesian optimisation schemes to consider acquisition function evaluations over a discrete set of heldout molecules, a setting commonly encountered in virtual screening campaigns \citep{2020_Pyzer, 2022_Graff}.

\paragraph{Molecular Machine Learning Libraries:} Molecular machine learning libraries include DeepChem \citep{2019_Ramsundar}, DGL-LifeSci \citep{2020_Li} and TorchDrug \citep{2022_Zhu}. DeepChem features a broad range of model implementations and tasks, while DGL-LifeSci focuses on graph neural networks. TorchDrug caters for applications including property prediction, representation learning, retrosynthesis, biomedical knowledge graph reasoning and molecule generation.

\textsc{gp} implementations are not included, however, in the aforementioned libraries. In terms of atomistic systems, DScribe \citep{2020_Himanen} features, amongst other methods, the Smooth Overlap of Atomic Positions (SOAP) representation \citep{2013_Bartok} which is typically used in conjunction with a \textsc{gp} model to learn atomistic properties. Automatic Selection And Prediction (ASAP) \citep{2020_Cheng} also principally focusses on atomistic properties as well as dimensionality reduction and visualisation techniques for materials and molecules. Lastly, the Graphein library focusses on graph representations of proteins \citep{2020_Jamasb}.

\paragraph{Graph Kernel Libraries:} Graph kernel libraries include GraKel \citep{2020_GraKel}, graphkit-learn \citep{2021_Linlin}, graphkernels \citep{2018_Sugiyama}, graph-kernels \citep{2015_Sugiyama}, pykernels (\href{https://github.com/gmum/pykernels}{https://github.com/gmum/pykernels}) and ChemoKernel \citep{2012_Gauzere}. The aforementioned libraries focus on CPU implementations in Python. Extending graph kernel computation to GPUs has been noted as an important direction for future research \citep{2018_Ghosh}. In our work, the GraKel library is built upon by interfacing it with GPyTorch, facilitating \textsc{gp} regression with GPU computation. Furthermore, this enables the graph kernel hyperparameters to be learned through the marginal likelihood objective as opposed to being pre-specified and fixed upfront. 

\paragraph{Molecular Bayesian Optimisation:} \textsc{bo} over molecular space can be split into two classes of methods. In the first class, molecules are encoded into the latent space of a VAE \citep{2018_Bombarelli}. \textsc{bo} is then performed over the continuous latent space and queried molecules are decoded back to the original space. Much work on VAE-BO has focussed on improving the synergy between the surrogate model and the VAE \citep{2018_Griffiths, 2020_Griffiths, 2020_Tripp, 2021_Deshwal, 2021_Grosnit, 2021_Verma, 2022_Maus, 2022_Stanton}. One of the defining characteristics of VAE-BO is that it enables the generation of new molecular structures.\\

In the second class of methods, \textsc{bo} is performed directly over the original discrete space of molecules \citep{2022_Tom} In this setting it is not possible to generate new structures and so a candidate set of queryable molecules is defined. The inability to generate new structures however, is not necessarily a bottleneck to molecule discovery in many cases, as the principle concern is how best to explore existing candidate sets. These candidate sets are also known as molecular libraries in the virtual screening literature \citep{2015_Pyzer}. 

To date, there has been little work on \textsc{bo} directly over discrete molecular spaces. In \citet{2020_Moss}, the authors use a string kernel \textsc{gp} trained on SMILES to perform \textsc{bo} to select from a candidate set of molecules. In \citet{2020_Korovina}, an optimal transport kernel \textsc{gp} is used for \textsc{bo} over molecular graphs. In \citet{2021_Hase} a surrogate based on the Nadarya-Watson estimator is defined such that the kernel density estimates are inferred using BNNs. The model is then trained on molecular descriptors. Lastly, in \citet{2017_Knapp} and \citet{2021_Moss} a BNN and a sparse \textsc{gp} respectively are trained on fingerprint representations of molecules. In the case of the sparse \textsc{gp} the authors select an ArcCosine kernel. It is a long term aim of the GAUCHE Project to compare the efficacy of VAE-BO against vanilla \textsc{bo} on real-world molecule discovery tasks.

\paragraph{Chemical Reaction Optimisation:} Chemical reactions describe how reactant molecules transform into product molecules. Reagents (catalysts, solvents, and additives) and reaction conditions heavily impact the outcome of chemical reactions. Typically the objective is to maximise the reaction yield (the amount of product compared to the theoretical maximum) \citep{ahneman2018predicting}, to maximise the enantiomeric excess in asymmetric synthesis, where the reactions could result in different enantiomers \citep{zahrt2019prediction}, or to minimise the E-factor, which is the ratio between waste materials and the desired product \citep{schweidtmann2018machine}. 

A range of studies have evaluated the optimisation of chemical reactions in single and  multi-objective settings \citep{schweidtmann2018machine,muller2022automated}. \citet{felton2021summit} and \citet{hase2021olympus} benchmarked reaction optimisation algorithms in low-dimensional settings including reaction conditions, such as time, temperature, and concentrations. \citet{2021_Shields} suggested \textsc{bo} as a general tool for chemical reaction optimisation and benchmarked their approach against human experts. \citet{haywood2021kernel} compared the yield prediction performance of different kernels and \citet{pomberger2022}, the impact of various molecular representations. 

In all reaction optimisation studies above, the representations of the different categories of reactants and reagents are concatenated to generate the reaction input vector, which could lead to limitations if another type of reagent is suddenly considered. Moreover, most studies concluded that simple one-hot encodings (OHE) perform at least on par with more elaborate molecular representations in the low-data regime \citep{2021_Shields, pomberger2022, hickman2022}. In GAUCHE, reaction fingerprint kernels are introduced, based on existing reaction fingerprints \citep{schwaller2021mapping, probst2022reaction} and work independently of the number of reactant and reagent categories. The molecular kernels, constituting the backbone of the GAUCHE library, are described next.

\section{Molecular Kernels}
\label{mol_gauss}

Here, examples are given of the classes of GAUCHE kernel designed to operate on the molecular representations introduced in \autoref{background_gauche}.

\subsection{Fingerprint Kernels}

\paragraph{Scalar Product Kernel:} The simplest kernel to operate on fingerprints is the scalar product or linear kernel defined for vectors $\mathbf{x}, \mathbf{x'} \in \mathbb{R}^d$ as

\begin{equation}
    k_{\text{Scalar Product}}(\mathbf{x}, \mathbf{x'}) \coloneqq \sigma_{f}^2 \cdot \langle\mathbf{x}, \mathbf{x'}\rangle,
\end{equation}

\noindent where $\sigma_{f}$ is a scalar signal variance hyperparameter and $\langle\cdot, \cdot\rangle$ is the Euclidean inner product.

\paragraph{Tanimoto Kernel:} Introduced as a general similarity metric for binary attributes \citep{gower1971general}, the Tanimoto kernel was first used in chemoinformatics in conjunction with non-\textsc{gp}-based kernel methods \citep{2005_Ralaivola}. It is defined for binary vectors $\mathbf{x}, \mathbf{x'} \in \{0, 1\}^d$ for $d \geq 1$ as

\begin{equation}
\label{equation: tanimoto}
    k_{\text{Tanimoto}}(\mathbf{x}, \mathbf{x'}) \coloneqq \sigma_{f}^2 \cdot \frac{\langle\mathbf{x}, \mathbf{x'}\rangle}{\norm{\mathbf{x}}^2 + \norm{\mathbf{x'}}^2 - \langle\mathbf{x}, \mathbf{x'}\rangle},
\end{equation}

\noindent where $||\cdot||$ is the Euclidean norm.

\subsection{String Kernels}
String kernels \citep{lodhi2002text, cancedda2003word} measure the similarity between strings by examining the degree to which their sub-strings differ. In GAUCHE, the SMILES string kernel \citep{cao2012silico} is implemented, which calculates an inner product between the occurrences of sub-strings, considering all contiguous sub-strings made from at most $n$ characters ($n=5$ was chosen in all experiments). Therefore, for the sub-string count featurisation $\phi : \mathcal{S} \rightarrow \mathbb{R}^p$ (also known as a bag-of-characters representation \citep{jurasfky2000introduction}), where $p$ is the number of unique n-grams from the alphabet, the SMILES string kernel between two strings $\mathcal{S}$ and $\mathcal{S}'$ is given by

\begin{equation}
    k_{\textrm{String}}(\mathcal{S},\mathcal{S}')\coloneqq \sigma^2\cdot \langle \phi(\mathcal{S}), \phi(\mathcal{S}') \rangle.
\end{equation}

Although more complicated string kernels do exist in the literature, for example those that allow non-contiguous matches \citep{2020_Moss}, it was found that the significant extra computational overhead of these methods did not provide improved performance over the more simple SMILES string kernel in the context of molecular data. Note that although named the SMILES string kernel, this kernel can also be applied to any other string representation of molecules e.g. SELFIES.

\subsection{Graph Kernels}

\paragraph{Graph Kernels:} 
Graph kernel methods $\phi_\lambda:\mathcal{G}\to\mathcal{H}$, map elements from a graph domain $\mathcal{G}$ to a reproducing kernel Hilbert space (RKHS) $\mathcal{H}$, in which an inner product between a pair of graphs $g,g'\in\mathcal{G}$ is derived as a measure of similarity,

\begin{equation}
    k_{\text{Graph}}(g, g') \coloneqq \sigma^2 \cdot \langle\phi_\lambda(g),\phi_\lambda(g')\rangle_\mathcal{H},
\end{equation}

\noindent where $\lambda$ denotes kernel-specific hyperparameters and $\sigma^2$ is a scale factor.
Depending on how $\phi_\lambda$ is defined \citep{Nikolentzos_2021}, the kernel considers different substructural motifs and is characterised by different hyperparameters.

Frequently-employed approaches include the random walk kernel \citep{2010_Viswanathan}, given by a geometric series over the count of matching random walks of increasing length with coefficient $\lambda$,
and the Weisfeiler-Lehman (WL) kernel \citep{shervashidze2011weisfeiler}, given by the inner products of label count vectors over $\lambda$ iterations of the Weisfeiler-Lehman algorithm.

\paragraph{Graph Embedding:}
Pretrained graph neural networks (GNNs)~\cite{hu2019} may also be used to embed molecular graphs in a vector space. Since the GNN is trained on a large amount of data, the representation it produces has the potential to be a more expressive method to encode a molecule (Note: this assumes access to a large pool of in-domain data). Given a vector representation from a pretrained GNN model, any \textsc{gp} kernel for continuous input spaces may be applied, such as the RBF kernel.

\section{Experiments}

GAUCHE (available at \href{https://github.com/leojklarner/gauche}{https://github.com/leojklarner/gauche}) is evaluated on regression, UQ and \textsc{bo} tasks. The principle goal in conducting regression and UQ benchmarks is to gauge whether performance on these tasks may be used as a proxy for \textsc{bo} performance. \textsc{bo} is a powerful tool for automated scientific discovery but one would prefer to avoid model misspecification in the surrogate when deploying a scheme in the real world. The following datasets were chosen:

\paragraph{The Photoswitch Dataset:} The labels, $y$ are the experimentally-determined values of the \textit{E} isomer $\pi-\pi^*$ transition wavelength (nm) for 392 photoswitch molecules.

\paragraph{ESOL:} \citep{2004_Delaney}: The labels $y$ are the experimentally-determined logarithmic aqueous solubility values (mols/litre) for 1128 organic small molecules.

\paragraph{FreeSolv:} \citep{2014_Mobley}: The labels $y$ are the experimentally-determined hydration free energies (kcal/mol) for 642 molecules.

\paragraph{Lipophilicity:} The labels $y$ are the experimentally-determined octanol/water distribution coefficient (log D at pH 7.4) of 4200 compounds curated from the ChEMBL database \citep{2012_Gaulton, 2014_Bento}.

\paragraph{Buchwald-Hartwig reactions:} \citep{ahneman2018predicting}: The labels $y$ are the experimentally-determined yields for 3955 Pd-catalysed Buchwald–Hartwig C–N cross-couplings. 

\paragraph{Suzuki-Miyaura reactions:} \citep{perera2018platform}: The labels $y$ are the experimentally-determined yields for 5760 Pd-catalysed Suzuki-Miyaura C-C cross-couplings.

\subsection{Regression}

The regression results for molecular property prediction are reported in Table ~\ref{table: regression} and for reaction yield prediction in Table~\ref{table: reaction} of \autoref{app_b}. The datasets are split in a train/test ratio of 80/20 (note that validation sets are not required for the \textsc{gp} models since training uses the marginal likelihood objective). Errorbars represent the standard error across 20 random initialisations. All \textsc{gp} models are trained using the L-BFGS-B optimiser \citep{1989_Liu}. If not mentioned, default settings in the GPyTorch and BoTorch libraries apply. For the SELFIES representation, some molecules could not be featurised and corresponding entries are left blank. The results of Table~\ref{table: reaction} indicate that the best choice of representation (and hence the choice of kernel) is task-dependent.

\begin{table*}[h]
\caption{Molecular property prediction regression benchmark. RMSE values for 80/20 train/test split across 20 random trials. All WL kernel entries computed by Aditya Ravuri.}
\adjustbox{width=\textwidth}{%
\centering
\begin{tabular}{l l | c c c c}
    \toprule
    \multicolumn{2}{c|}{{\bf GP Model}} & \multicolumn{4}{c}{{\bf Dataset}}  \\
    Kernel & Representation & Photoswitch & ESOL & FreeSolv & Lipophilicity \\
    \hline
    
    Tanimoto & fragprints & ${\bf 20.9} \pm {\bf 0.7}$ & $0.71 \pm 0.01$ & $1.31 \pm 0.06 $ & $\textbf{0.67} \pm \textbf{0.01}$ \\
    & fingerprints & $23.4 \pm 0.8$ & $1.01 \pm 0.01$ & $1.93 \pm 0.09$ & $0.76 \pm 0.01$ \\ 
    & fragments & $26.3 \pm 0.8$ & $0.91 \pm 0.01$ & $1.49 \pm 0.05$ & $0.80 \pm 0.01$ \\ 
    \hdashline
    
    Scalar Product & fragprints & $22.5 \pm 0.7$ & $0.88 \pm 0.01$ & $\textbf{1.27} \pm \textbf{0.02} $ & $0.77 \pm 0.01$ \\
    & fingerprints & $24.8 \pm 0.8$ & $1.17 \pm 0.01$ & $1.93 \pm 0.07$  & $0.84 \pm 0.01$ \\
    & fragments & $36.6 \pm 1.0$ & $1.15 \pm 0.01$ &  $1.63 \pm 0.03$ & $0.97. \pm 0.01$ \\
    \hdashline
    
    String & SELFIES & $24.9 \pm 0.6$ & - & - & - \\
    & SMILES & $24.8 \pm 0.7$ & $\textbf{0.66} \pm \textbf{0.01}$  & $1.31 \pm 0.01$ & $\textbf{0.68} \pm \textbf{0.01}$  \\
    
    \hdashline
    WL Kernel (GraKel) & graph & $22.4 \pm 1.4$ & $1.04 \pm 0.02$ & $1.47 \pm 0.06$ & $0.74 \pm 0.05$ \\

    \bottomrule
\end{tabular}}
\label{table: regression}
\end{table*}

\subsection{Uncertainty Quantification (UQ)}

To quantify the quality of the uncertainty estimates three metrics were used, the negative log predictive density (NLPD), the mean standardised log loss (MSLL) and the quantile coverage error (QCE). The NLPD results are provided in Table~\ref{table: nlpd} and the MSLL and QCE results in Table~\ref{table: MSLL} and Table~\ref{table: QCE} respectively. One trend to note is that uncertainty estimate quality is roughly correlated with regression performance. Numerical errors were encountered with the WL kernel on the large lipophilicity dataset which invalidated the results and so the corresponding entry is left blank. The native random walk kernel was discontinued (for the time being) due to poor performance.

\begin{table*}[h]
\caption{UQ benchmark. NLPD values for 80/20 train/test split across 20 random trials.}
\adjustbox{width=\textwidth}{%
\centering
\begin{tabular}{l l | c c c c}
    \toprule
    \multicolumn{2}{c|}{{\bf GP Model}} & \multicolumn{4}{c}{{\bf Dataset}}  \\
    Kernel & Representation & Photoswitch & ESOL & FreeSolv & Lipophilicity \\
    \hline

    Tanimoto & fragprints & $\textbf{0.22} \pm \textbf{0.03}$ & $0.33 \pm 0.01$ & $0.28 \pm 0.02$ & $\textbf{0.71} \pm \textbf{0.01}$ \\
    & fingerprints & $0.33 \pm 0.03$ & $0.71 \pm 0.01$ & $0.58 \pm 0.03$ & $0.85 \pm 0.01$ \\ 
    & fragments & $0.50 \pm 0.04$ & $0.57 \pm 0.01$ & $0.44 \pm 0.03$ & $0.94 \pm 0.02$\\ 
    \hdashline
    
    Scalar Product & fragprints & $\textbf{0.23} \pm \textbf{0.03}$ & $0.53 \pm 0.01$ & $0.25 \pm 0.02$ & $0.92 \pm 0.01$ \\
    & fingerprints & $0.33 \pm 0.03$ & $0.84 \pm 0.01$ & $0.64 \pm 0.03$ & $1.03 \pm 0.01$\\
    & fragments & $0.80 \pm 0.03$ & $0.82 \pm 0.01$ & $0.54 \pm 0.02$ & $0.88 \pm 0.10$  \\
    \hdashline
    
    String & SELFIES & $0.37 \pm 0.04$ & - & - & - \\
    & SMILES & $0.30 \pm 0.04$ & $\textbf{0.29} \pm \textbf{0.03}$  & $\textbf{0.16} \pm \textbf{0.02}$  & $\textbf{0.72} \pm \textbf{0.01}$ \\
    \hdashline
    
    WL Kernel (GraKel) & graph & $0.39 \pm 0.11$ & $0.76 \pm 0.001$ & $0.47 \pm 0.02$ & - \\
    
    \bottomrule
    
\end{tabular}}
\label{table: nlpd}
\end{table*}

\begin{table}[h]
\caption{UQ Benchmark. MSLL Values ($\downarrow$) for 80/20 Train/Test Split.}
\adjustbox{width=\textwidth}{%
\centering
\begin{tabular}{l l | c c c c}
    \toprule
    \multicolumn{2}{c|}{{\bf GP Model}} & \multicolumn{4}{c}{{\bf Dataset}}  \\
    Kernel & Representation & Photoswitch & ESOL & FreeSolv & Lipophilicity \\
    \hline
    
    Tanimoto & fragprints & $\textbf{0.06} \pm \textbf{0.01}$ & $0.17 \pm 0.04$ & $0.16 \pm 0.02$ & $\textbf{0.50} \pm \textbf{0.006}$ \\
    & fingerprints & $0.16 \pm 0.01$ & $0.55 \pm 0.01$ & $0.42 \pm 0.02$ & $0.63 \pm 0.004$ \\ 
    & fragments & $0.27 \pm 0.01$ & $0.34 \pm 0.04$ & $0.24 \pm 0.02$& $0.72 \pm 0.003$\\ 
    \hdashline
    
    Scalar Product & fragprints & $0.03 \pm 0.01$ & $0.32 \pm 0.004$ & $0.06 \pm 0.01$ & $0.67 \pm 0.003$ \\
    & fingerprints & $0.11 \pm 0.01$ & $0.64 \pm 0.006$ & $0.41 \pm 0.02$ & $0.79 \pm 0.003$\\
    & fragments & $0.56 \pm 0.01$ & $0.58 \pm 0.005$ & $0.29 \pm 0.01$ & $0.94 \pm 0.003$  \\
    \hdashline
    
    String & SELFIES & $0.13 \pm 0.01$ & - & - & - \\
    & SMILES & $\textbf{0.08} \pm \textbf{0.02}$ & $\textbf{0.03} \pm \textbf{0.005}$  & $\textbf{0.03} \pm \textbf{0.02}$  & $\textbf{0.52} \pm \textbf{0.002}$ \\
    \hdashline
    
    WL Kernel (GraKel) & graph & $0.14 \pm 0.03$ & $0.54 \pm 0.01$ & $0.26 \pm 0.01$ & - \\

    \bottomrule
    
\end{tabular}}
\label{table: MSLL}
\end{table}

\begin{table}[h]
\caption{UQ benchmark. QCE values ($\downarrow$) for 80/20 train/test split across 20 random trials.}
\adjustbox{width=\textwidth}{%
\centering
\begin{tabular}{l l | c c c c}
    \toprule
    \multicolumn{2}{c|}{{\bf GP Model}} & \multicolumn{4}{c}{{\bf Dataset}}  \\
    Kernel & Representation & Photoswitch & ESOL & FreeSolv & Lipophilicity \\
    \hline
    
    Tanimoto & fragprints & $\textbf{0.019} \pm \textbf{0.003}$ & $0.023 \pm 0.002$ & $0.023 \pm 0.002$ & $0.006 \pm 0.002$ \\
    & fingerprints & $0.023 \pm 0.003$ & $0.022 \pm 0.002$ & $0.018 \pm 0.003$ & $0.006 \pm 0.001$ \\ 
    & fragments & $0.025 \pm 0.005$ & $0.012 \pm 0.002$ & $0.014 \pm 0.002$& $0.009 \pm 0.002$\\ 
    \hdashline
    
    Scalar Product & fragprints & $0.033 \pm 0.006$ & $0.010 \pm 0.002$ & $0.017 \pm 0.003$ & $0.010 \pm 0.001$ \\
    & fingerprints & $0.036 \pm 0.006$ & $0.014 \pm 0.002$& $0.016 \pm 0.002$ & $0.009 \pm 0.001$ \\
    & fragments & $0.027 \pm 0.004$ & $0.012 \pm 0.003$ & $0.021 \pm 0.003$ & $0.010 \pm 0.001$ \\
    \hdashline
    
    String & SELFIES & $0.031 \pm 0.006$ & -  & - & - \\
    & SMILES & $0.024 \pm 0.003$ & $0.016 \pm 0.002$  & $0.019 \pm 0.003$  & $0.005 \pm 0.001$ \\
    \hdashline
    
    WL Kernel (GraKel) & graph & $0.025 \pm 0.007$ & $0.011 \pm 0.004$ & $0.019 \pm 0.009$ & $0.066 \pm 0.014$ \\
    \bottomrule
\end{tabular}}
\label{table: QCE}
\end{table}

\subsection{Chemical Reaction Yield Prediction}
\label{app_b}

Further regression and UQ experiments are presented in Table~\ref{table: reaction}. The differential reaction fingerprint in conjunction with the Tanimoto kernel is the best-performing reaction representation.

\begin{table*}[ht]
\caption{Chemical reaction regression benchmark. 80/20 train/test split across 20 random trials. Experiments performed by Bojana Rankovic. Kernel code written by Ryan-Rhys Griffiths.}
\adjustbox{width=\textwidth}{%
\centering
\begin{tabular}{l l | c c c c}
    \toprule
    \multicolumn{2}{c|}{\bf GP Model} & \multicolumn{4}{c}{\bf Buchwald-Hartwig}  \\
    Kernel & Representation & RMSE $\downarrow$ & $R^{2}$ score $\uparrow$ & MSLL $\downarrow$ & QCE $\downarrow$ \\ \hline
    
    Tanimoto & OHE & $7.94 \pm 0.05$ & $0.91 \pm 0.001$ & $-0.06 \pm 0.002$& $0.011 \pm 0.001$\\
    & DRFP & $\textbf{6.48} \pm \textbf{0.45}$ & $\textbf{0.94} \pm \textbf{0.015}$ & $\textbf{-0.15} \pm \textbf{0.07} $& $0.027 \pm 0.002$ \\ 
    \hdashline
    
    Scalar Product & OHE & $15.23 \pm 0.052$ & $0.69 \pm 0.002$ & $0.57 \pm 0.002$ & $0.008 \pm 0.001$\\ 
    & DRFP & $14.63 \pm 0.050$ & $0.71 \pm 0.002$ &$0.55 \pm 0.002$ & $0.010 \pm 0.001$ \\
    \hdashline
    
    RBF & RXNFP & $10.79 \pm 0.049$ &$0.84 \pm 0.001 $ & $0.37 \pm 0.005$ & $0.024 \pm 0.001$ \\
    \hline
    
    \multicolumn{2}{c|}{} & \multicolumn{4}{c}{\bf Suzuki-Miyaura}  \\ \hline
    
    Tanimoto & OHE & $11.18 \pm 0.036$ & $0.83 \pm 0.001$ & $0.23 \pm 0.001 $ & $0.007 \pm 0.001 $  \\
    & DRFP & $11.46 \pm 0.038 $ & $0.83 \pm 0.001 $ &$0.25 \pm 0.006$ &$0.019 \pm 0.000$  \\
    \hdashline
    
    Scalar Product & OHE & $19.91 \pm 0.042$ & $0.47 \pm 0.003$ & $0.82 \pm 0.001 $  & $0.012 \pm 0.001$ \\
    & DRFP & $19.66 \pm 0.042$ & $0.52 \pm 0.003$ & $0.81 \pm 0.001$  & $0.014 \pm 0.001$  \\
    \hdashline
    
    RBF & RXNFP & $13.83 \pm 0.048 $ & $0.75 \pm 0.002$ & $0.50 \pm 0.001$ & $0.007 \pm 0.001$ \\
    \bottomrule
\end{tabular}}
\label{table: reaction}
\end{table*}

\subsection{Bayesian Optimisation}

Two of the best-performing kernels were taken forward, the Tanimoto-fragprint kernel and the bag of SMILES kernel to undertake \textsc{bo} over the photoswitch and ESOL datasets. Random search is used as a baseline. \textsc{bo} is run for 20 iterations of sequential candidate selection (EI acquisition) where candidates are drawn from 95\% of the dataset. The results are provided in \autoref{bayesopt}. The models are initialised with 5\% of the dataset. In the case of the photoswitch dataset this corresponds to just 19 molecules. In this ultra-low data setting, common to many areas of synthetic chemistry, both models outperform random search, highlighting the real-world use-case for such models in supporting human chemists prioritise candidates for synthesis. Furthermore, one may observe that \textsc{bo} performance is tightly coupled to regression and UQ performance. In the case of the photoswitch dataset, the better-performing Tanimoto model on regression and UQ also achieves relatively better BO performance. Additionally, results are reported on the Buchwald-Hartwig reaction dataset.

\begin{figure*}[]
\centering
\subfigure[Photoswitch]{\label{fig:4}\includegraphics[width=0.3\textwidth]{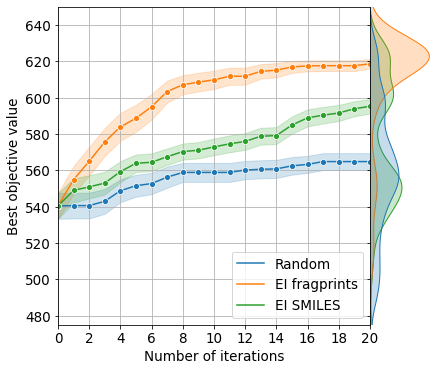}}
\subfigure[ESOL]{\label{fig:3}\includegraphics[width=0.3\textwidth]{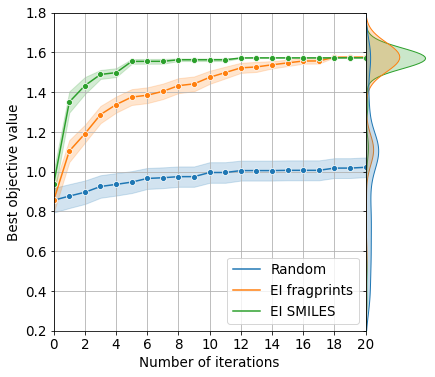}}
\subfigure[Buchwald-Hartwig]{\label{fig:5}\includegraphics[width=0.3\textwidth]{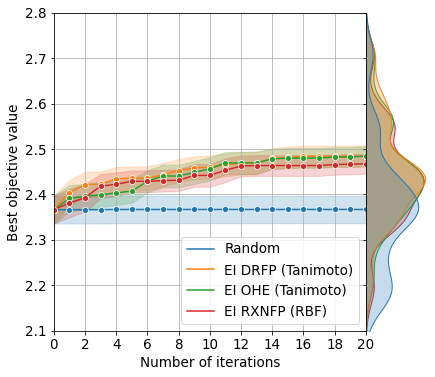}}
\caption{\textsc{bo} performance. Standard error confidence interval from 50 random initialisations, 20 for Buchwald-Hartwig reactions. Marginal density plots for the trace shown on the right axis. Data for Buchwald-Hartwig plot produced by Bojana Rankovic.}
\label{bayesopt}
\end{figure*}

\section{Conclusions}

This chapter introduces GAUCHE, a library for GAUssian Processes in CHEmistry with the aim of providing tools for UQ and \textsc{bo} that may hopefully be deployed for screening in laboratory settings. Future work, will seek to:

\begin{enumerate}
\setlength\itemsep{0.3em}
    \item Expand the range of \textsc{gp} kernels considered, most notably to include \textit{deep kernels} based on GNN embeddings.
    \item Perform more extensive benchmarking for UQ and active learning against models such as BNNs.
    \item Exploit the benefits of the Autodiff framework to facilitate the learning of graph kernel hyperparameters through the \textsc{gp} marginal likelihood.
    \item Broaden the application domains considered by GAUCHE to include examples in protein engineering.
    \item Investigate more sophisticated \textsc{gp}-based optimization and active learning loops in chemistry applications \citep{eyke2020iterative}, featuring ideas from batch \citep{gonzalez2016batch}, multi-task \citep{swersky2013multi}, multi-fidelity \citep{moss2020mumbo}, multi-objective \citep{daulton2020differentiable}, controllable experimental noise \citep{moss2020bosh}, or quantile \citep{torossian2020bayesian} optimisation.
\end{enumerate}

\nomenclature[Z-GAN]{GAN}{Generative Adversarial Networks}
\nomenclature[Z-DFT]{DFT}{Density Functional Theory}
\nomenclature[Z-KS-DFT]{KS-DFT}{Kohn-Sham Density Functional Theory}
\nomenclature[Z-TD-DFT]{TD-DFT}{Time-Dependent Density Functional Theory}
\nomenclature[Z-SMILES]{SMILES}{Simplified Molecular-Input Line-Entry System}
\nomenclature[Z-SELFIES]{SELFIES}{Self-Referencing Embedded Strings}
\nomenclature[Z-ECFP]{ECFP}{Extended Connectivity Fingerprints}
\nomenclature[Z-OHE]{OHE}{One-Hot Encoding}
\nomenclature[Z-DRFP]{DRFP}{Differential Reaction Fingerprint}
\nomenclature[Z-AL]{AL}{Active Learning}
\nomenclature[Z-BNN]{BNN}{Bayesian Neural Networks}
\nomenclature[Z-ASAP]{ASAP}{Automatic Selection and Prediction}
\nomenclature[Z-RKHS]{RKHS}{Reproducing Kernel Hilbert Space}
\nomenclature[Z-UQ]{UQ}{Uncertainty Quantification}
\nomenclature[Z-RMSE]{RMSE}{Root Mean Square Error}
\nomenclature[Z-NLPD]{NLPD}{Negative Log Predictive Density}
\nomenclature[Z-MSLL]{MSLL}{Mean Standardised Log Loss}
\nomenclature[Z-QCE]{QCE}{Quantile Coverage Error}

\chapter{Molecular Discovery with Gaussian Processes}  %
\chapterimage[height=100pt]{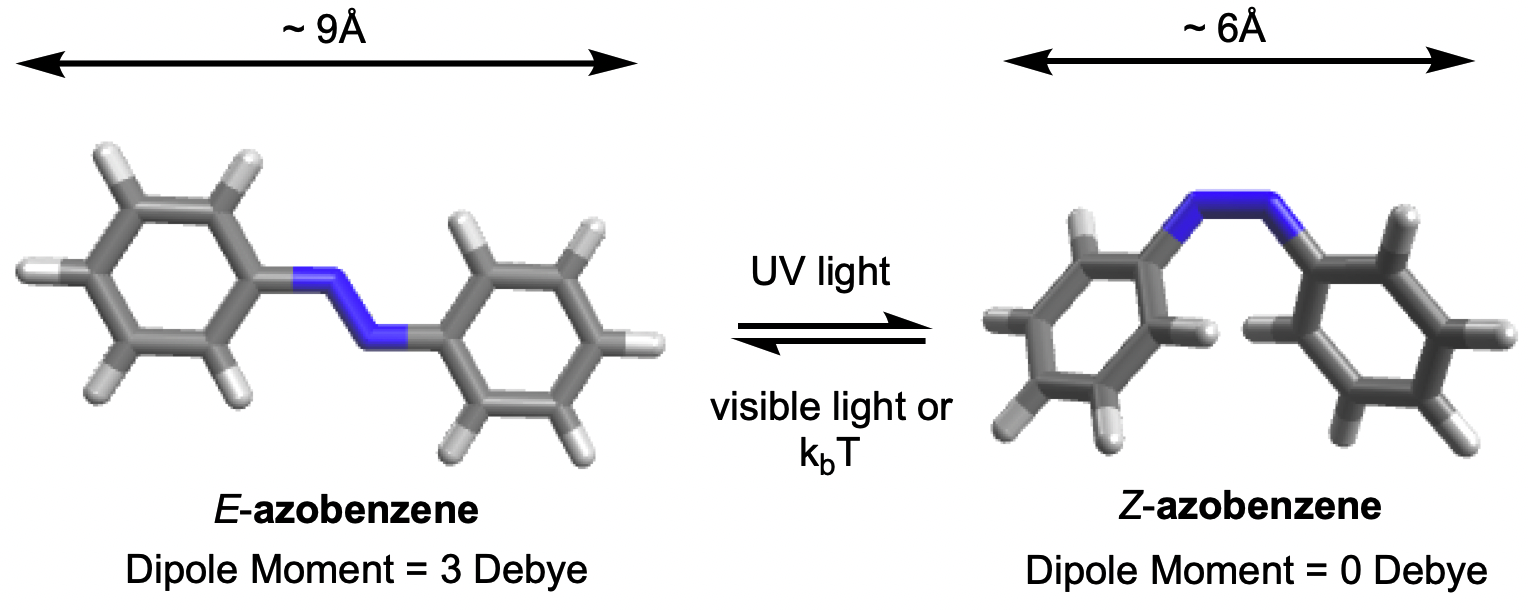}

\ifpdf
    \graphicspath{{Chapter1/Figs/Raster/}{Chapter1/Figs/PDF/}{Chapter1/Figs/}}
\else
    \graphicspath{{Chapter1/Figs/Vector/}{Chapter1/Figs/}}
\fi

\textbf{Status:} Accepted as Griffiths, RR., Greenfield, JL., Thawani, AR., Jamasb, AR., Moss HB., Bourached A., Jones, P., McCorkindale W., Aldrick AA., Fuchter MJ., Lee, AA. Data-Driven Discovery of Molecular Photoswitches with Multioutput Gaussian
Processes, \textit{Chemical Science} 2022.

\section{Preface}

This chapter is focussed on leveraging the predictive capabilities of \textsc{gp}s for molecular discovery. The discovery campaign is focussed on photoswitches, a particular class of molecule defined by their ability to convert between two or more isomeric forms in response to light. Photoswitches may be employed for information transfer and photopharmacological applications. Key  photoswitch properties in these domains include separation of the electronic absorption bands of the isomers as well as red-shifting of the absorption bands. The former property is useful for addressing a specific isomer and achieving high photostationary states (PSS), while the latter limits material damage from UV exposure and serves to increase the penetration depth for photopharmacological applications. The ability to engineer these properties, however, is challenging. As such, a predictive model is highly desirable for identifying novel and performant molecules.

In this chapter, a data-driven discovery pipeline for molecular photoswitches is presented, underpinned by dataset curation and multitask learning with \textsc{gp}s. In the prediction of electronic transition wavelengths, it is demonstrated that a multioutput Gaussian process (\textsc{mogp}) trained using labels from four photoswitch transition wavelengths yields the strongest predictive performance relative to single-task models as well as operationally outperforming time-dependent density functional theory (TD-DFT). The proposed approach is validated experimentally, by screening a library of commercially-available photoswitch molecules. Through this screen, several motifs are identified that displayed separated electronic absorption bands of their isomers and exhibit red-shifted absorptions. The curated dataset and all models are made available at \url{https://github.com/Ryan-Rhys/The-Photoswitch-Dataset}.

\section{Introduction}
\label{intro}

Photoswitch molecules are capable of reversible structural isomerisation upon irradiation with light as depicted in \autoref{fig:photo}, a characteristic behaviour that has led to a broad range of molecular \citep{Eisenreich2018,Dorel2019,Neilson2013}, supramolecular \citep{Corra2022, Han2016, Lee2022}, and materials applications \citep{Wang2021, Garcia-Amoros2012a, Hou2019, GouletHanssens2020}. Efficient light addressability is key to many of these applications for which photophysical properties of the photoswitch are the core determinant.

\begin{figure*}[ht!]
\centering
{\includegraphics[width=0.68\textwidth]{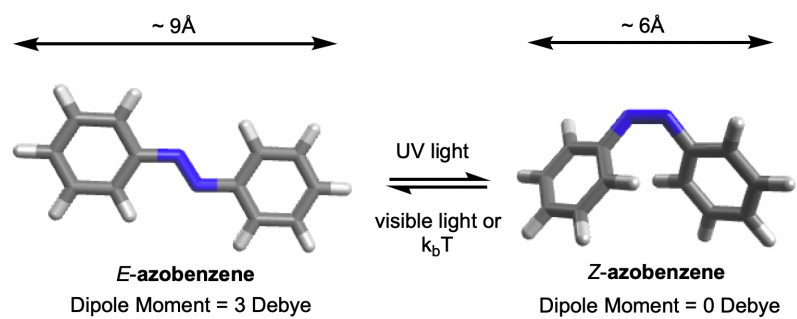}\label{fig:photo_mech}}
\caption{Azobenzene, an example of a photoswitch that undergoes a reversible structural change upon irradiation with light. }
\label{fig:photo}
\end{figure*}

Properties which govern the utility of a photoswitch include quantum yields of photoswitching, the steady-state distribution of a particular isomer at a given irradiation wavelength (known as the photostationary state - PSS) as well as the thermal half-life of the metastable isomer. The desired thermal half-life depends on the application. Information transfer applications benefit from short thermal half-life photoswitches \citep{Garcia-Amoros2012a} whilst, in contrast, photoswitches used in energy storage are serviced by long thermal half-lives \citep{Dong2018}. In contrast, the attainment of separated isomeric electronic absorption bands and a high PSS are uniformly favourable properties for photoswitches as they dictate the light addressability of the isomeric forms. Minimal spectral overlap for a set irradiation wavelength is made possible by modulating the $\pi-\pi^*$ and $n-\pi^*$ bands of the \emph{E} and \emph{Z} isomers. Low spectral overlap maximises the composition of a given isomer at a set PSS. Inducing red-shifted absorption spectra away from the UV region is also desirable given that the use of high wavelength light decreases photo-induced material degradation and simultaneously improves tissue penetration depth.

To date, laboratory synthesis or quantum chemical calculations such as TD-DFT have been the choice approaches for measuring ground truth and computing predicted estimates of photoswitch properties. Both approaches are cost-intensive in terms of synthesis or compute time, although it should be noted that high-throughput DFT approaches have potential to mitigate the wall-clock time to some extent in the future \citep{2017_Lopez, 2018_Wilbraham, 2020_Choudhary}. In light of this, human intuition remains the guide for candidate selection in many photoswitch chemistry laboratories. Advances in molecular machine learning, however, have taken great strides in recent years. In particular, machine learning property prediction has the potential to cut the attrition rate in the discovery of novel and impactful molecules by virtue of its short inference time. A rapid, accessible, and accurate machine learning prediction of a photoswitch's properties prior to synthesis would allow promising structures to be prioritised, facilitating photoswitch discovery as well as revealing new structure-property relationships.

Recently, work by Lopez and co-workers \citep{2021_Mukadum} employed machine learning to accelerate a quantum chemistry screening workflow for photoswitches. The screening library in this case is generated from 29 known azoarene photoswitches and their derivatives yielding a virtual library of 255,991 photoswitches in total. The authors observed that screening using active search tripled the discovery rate of photoswitches compared to random search according to a binary labelling system which assigns a positive label to a molecule possessing a $\lambda_{\text{max}} > 450 \text{nm}$ and a negative label otherwise. The approach highlights the potential for AL and \textsc{bo} methodology to accelerate DFT-based screening. Nonetheless, to the best of our knowledge, open questions remain in terms of the utility of machine learning-based predictive models for experimental photoswitch properties, in addition to experimental validation of machine learning approaches.

In this chapter an experimentally-validated framework for molecular photoswitch discovery is presented based on the curation of a large dataset of experimental photophysical data, and multitask learning with \textsc{mogp}s. This framework was designed with the goals of: (i) performing faster prediction relative to TD-DFT and directly training on experimental data; (ii) obtaining improved accuracy relative to human experts; (iii) operationalising model predictions in the context of laboratory synthesis. To achieve these goals, a dataset of the electronic absorption properties of 405 photoswitches in their \emph{E} and \emph{Z} isomeric forms was curated originally by Aditya Raymond Thawani, a full description of the dataset and collated properties is provided in \autoref{sec:data_description}. 

Following an extensive benchmark study, an appropriate machine learning model and molecular representation was identified for prediction, as detailed in \autoref{sec:model_choice}. A key feature of this model is that it is performant in the small data regime as photoswitch properties (data labels) obtained via laboratory measurement are expensive to collect in both financial cost and time. The chosen model uses a \textsc{mogp} approach due to its ability to operate in the multitask learning setting, amalgamating information obtained from molecules with multiple labels. In \autoref{sec:dft} it is shown that the \textsc{mogp} model trained on the curated dataset obtains comparable predictive accuracy to TD-DFT (at the CAM-B3LYP level of theory) and only suffers slight degradations in accuracy relative to TD-DFT methods with data-driven linear corrections whilst maintaining inference time on the order of seconds. A further benchmark against a cohort of human experts is included in \autoref{sec:human}. In \autoref{sec:valid} the approach is used to screen a set of commercially-available azoarene photoswitches, and in the process, identify several motifs displaying separated electronic absorption bands of their isomers as well as red-shifted absorptions, thus making them suitable for information transfer
and photopharmacological applications.

\section{Dataset Curation}
\label{sec:data_description}

Experimentally-determined properties of azobenzene-derived photoswitch molecules reported in the literature were curated initially by Aditya Raymond Thawani. Azobenzene derivatives in possession of diverse substitution patterns and functional groups were included to cover as large a fraction of chemical space as possible. Azoheteroarenes and cyclic azobenzenes were also included. The dataset includes properties for 405 photoswitches denoted using the SMILES syntax. A full list of references for the data sources is provided in Appendix~\ref{exp_source}. 

The following properties from the literature were collated, where available. (i) The rate of thermal isomerisation (units = $s^{-1}$), a solution-based measure of the thermal stability of the metastable isomer. For cyclic azophotoswitches, this corresponds to the \textit{E} isomer, whereas for non-cyclic azophotoswitches the rate is for the \textit{E} isomer. (ii) The PSS of each isomer at the set wavelength of photoirradiation. Such values are obtained through continuous, solution-based irradiation of a photoswitch until the point at which the steady-state distribution of the \emph{E} and \emph{Z} isomers is observed. The PSS values reported in the literature all correspond to solution-phase measurements. (iii) The irradiation wavelength (nm)  corresponds to the wavelength of light employed to irradiate samples, such that a PSS is attained, from \emph{E}-\emph{Z} or \emph{Z}-\emph{E}. (iv) Experimental transition wavelengths (nm) correspond to the wavelength at which the $\pi-\pi{^*}$/$\emph{n}-\pi{^*}$ electronic transition attains a maximum for the given isomer. This data was curated from solution-phase measurements. (v) DFT-computed transition wavelengths (nm), obtained using solvent continuum TD-DFT methods, correspond to the predicted $\pi-\pi{^*}$/$\emph{n}-\pi{^*}$ electronic transition maximum for a given isomer. (vi) The extinction coefficient (M$^{-1}$cm$^{-1}$), corresponds to the extent to which a molecule absorbs light, conditioned on the solvent. (vii) The theoretically-computed Wiberg Index \citep{1968_Wiberg} (through the analysis of the SCF density calculated at the PBE0/6-31G** level of theory), a measure of the bond order of the N=N bond in an azo-based photoswitch, provides an indication of the ‘strength’ of the azo bond.

Following the curation of the Photoswitch dataset, the goal is to use a machine learning model to predict the four experimentally-determined transition wavelengths. These four properties were chosen as they are core determinants of quantitative, bidirectional photoswitching \citep{2019_Crespi}. The wavelength properties include, the $\pi-\pi{^*}$ transition wavelength of the \emph{E} isomer (labels for 392 molecules), the \emph{n}$-\pi{^*}$ transition wavelength of the \emph{E} isomer (labels for 141 molecules), the $\pi-\pi{^*}$ transition wavelength of the \emph{Z} isomer (labels for 93 molecules), and the \emph{n}$-\pi{^*}$ transition wavelength of the \emph{Z} isomer (labels for 123 molecules). While other photophysical or thermal properties, such as the thermal half-life of the metastable state, could also be investigated using machine learning approaches, there are fewer reported measurements of thermal half-lives which significantly reduces the amount of data that may be used to train a model.

\section{Machine Learning Prediction Pipeline}\label{sec:model_choice}

There are three constituents to the prediction pipeline: A dataset, a model and a representation. The effects of the choice of dataset are examined in Appendix~\ref{sec:big_data}, where performance is compared between models trained on the curated dataset against those trained on a large out-of-domain dataset of 6,142 photoswitches \citep{2019_Beard}. In terms of the choice of model, a broad range of models are evaluated including Gaussian processes (\textsc{gp}), random forest (\textsc{rf}), Bayesian neural networks (BNNs), graph convolutional networks (GCNs), message-passing neural networks (MPNNs), graph attention networks (GATs), LSTMs with augmented SMILES, attentive neural processes (\textsc{anp}), as well as multioutput Gaussian processes (\textsc{mogp}), which aggregate information across prediction tasks to perform multitask learning \citep{1997_Caruana}.

Full model benchmark results, as well as all hyperparameter settings, are provided in Appendix~\ref{benchmark_ml}, where Wilcoxon signed rank tests \citep{1945_Wilcoxon} determine that there is weak evidence to support that multitask learning affords improvements over the single task setting in the case where auxiliary task labels (i.e. not the label being predicted) are available for test molecules. All subsequent experiments in this chapter assume that the \textsc{mogp} is not provided with auxiliary task labels for test molecules. All experiments may be reproduced via the scripts provided at \url{https://github.com/Ryan-Rhys/The-Photoswitch-Dataset}.
The \textsc{mogp} was chosen to take forward to the comparison against TD-DFT and experimental screening due to its predictive performance in the multitask setting as well as its ability to represent uncertainty estimates. Some use-cases for the \textsc{gp} uncertainty estimates with confidence-error curves are illustrated in Appendix~\ref{conf_error}.

\begin{figure}[!htbp]
    \begin{center}
        \includegraphics[width=0.98\textwidth]{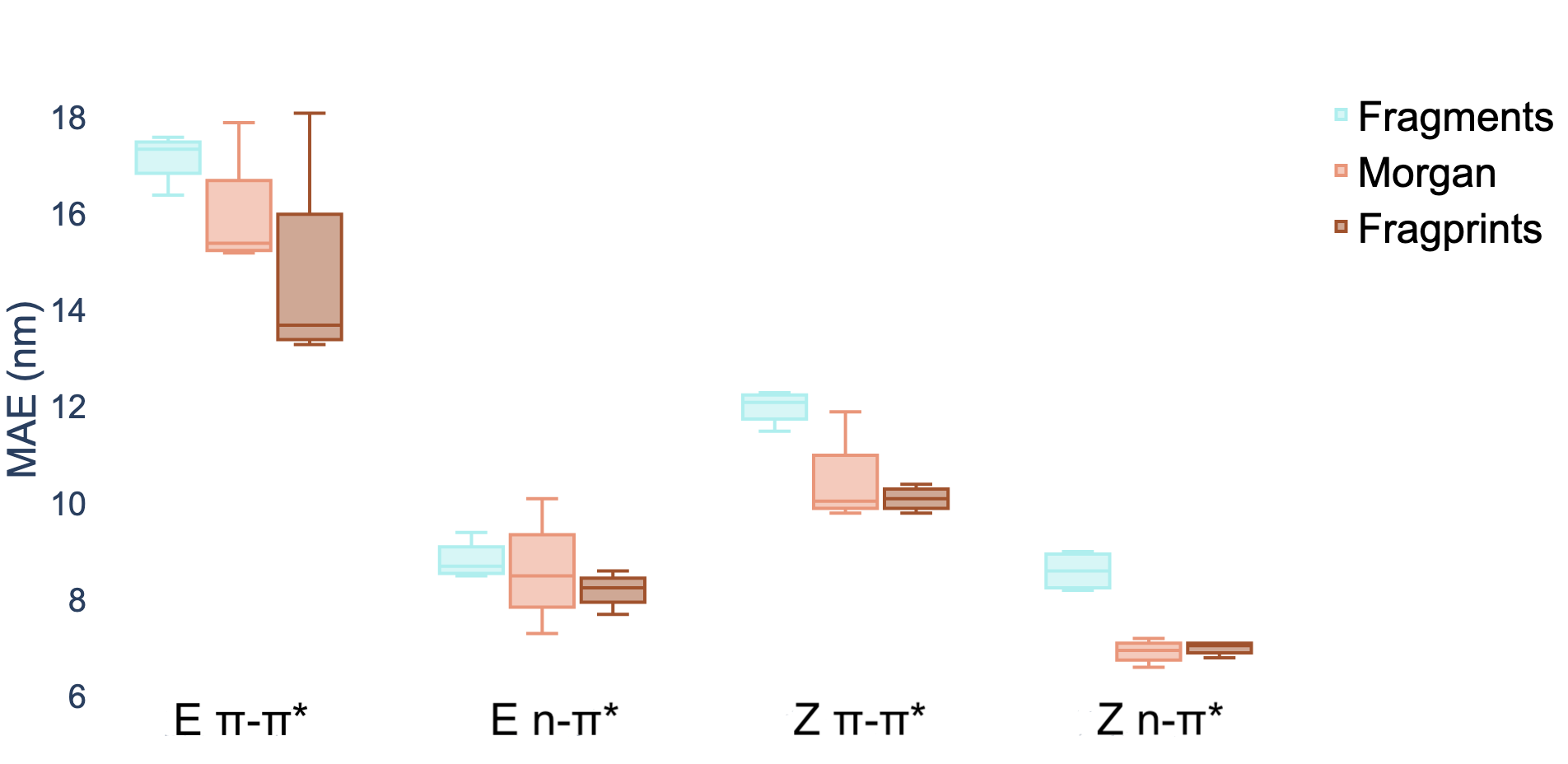}
    \end{center}
    \caption{Marginal boxplot showing the performance of representations aggregated over different models (\textsc{rf}, \textsc{gp}, \textsc{mogp} and \textsc{anp}). Performance is evaluated on 20 random train/test splits of the Photoswitch Dataset in a ratio of 80/20 using the mean absolute error (MAE) as the performance metric. An individual box is computed using the mean values of the MAE for the four models for the representation indicated by the associated colour and shows the range in addition to the upper and lower quartiles of the error distribution. The plot indicates that fragprints are the best representation on the \emph{E} isomer $\pi - \pi^*$ prediction task and RDKit fragments alone are disfavoured across all tasks.}
    \label{boxplot}
\end{figure}
 
In terms of the choice of representation, three commonly-used descriptors are evaluated: RDKit fragment features \citep{rdkit}, ECFP fingerprints \citep{2010_Rogers} as well as a hybrid 'fragprints' representation formed by concatenating the Morgan fingerprint and fragment feature vectors. The performance of the RDKit fragment, ECFP fingerprint, and fragprint representations on the wavelength prediction tasks is visualised in \autoref{boxplot} where aggregation is performed over the \textsc{rf}, \textsc{gp}, \textsc{mogp} and \textsc{anp} models. This analysis motivated the use of the fragprints representation in conjunction with the \textsc{mogp} to take forward to the TD-DFT comparison and experimental screening. The \textsc{mogp} with Tanimoto kernel employed for prediction will now be described.

\subsection{Multioutput Gaussian Processes (MOGPs)}

A \textsc{mogp} generalises the idea of the \textsc{gp} to multiple outputs and a common use case is multitask learning. In multitask learning, tasks are learned in parallel using a shared representation; the idea being that learning for one task may benefit from the training signals of related tasks. In the context of photoswitches, the tasks constitute the prediction of the four transition wavelengths. We wish to perform Bayesian inference over a stochastic function $f: \mathbb{R}^D \to \mathbb{R}^P$ where $P$ is the number of tasks and we possess observations $\{(\mathbf{x_{11}}, y_{11}), \dotsc , (\mathbf{x_{1N}}, y_{1N}), \dotsc , (\mathbf{x_{P1}}, y_{P1}), \dotsc , (\mathbf{x_{PN}}, y_{PN})\}$. We do not necessarily have property values for all tasks for a given molecule. 

To construct a \textsc{mogp} we compute a new kernel function $k(\mathbf{x}, \mathbf{x'}) \cdot B[i, j]$ where $B$ is a positive semi-definite $P \times P$ matrix, where the $(i, j)^{\text{th}}$ entry of the matrix $B$ multiplies the covariance of the $i$-th function at $\mathbf{x}$ and the $j$-th function at $\mathbf{x'}$. Such a \textsc{mogp} is termed the intrinsic coregionalisation model (ICM) \citep{2007_Williams}. Inference proceeds in the same manner as for vanilla \textsc{gp}s, substituting the new expression for the kernel into the equations for the predictive mean and variance. Positive semi-definiteness of $B$ may be guaranteed through parametrising the Cholesky decomposition $LL^{\top}$, where $L$ is a lower triangular matrix and the parameters may be learned alongside the kernel hyperparameters through maximising the marginal likelihood in \autoref{equation: log_lik_} substituting the appropriate kernel. In all our experiments we use bit/count vectors to represent molecules and hence we choose the Tanimoto kernel defined in \autoref{equation: tanimoto}.

While it has been widely cited that \textsc{gp}s scale poorly to large datasets due to the $O(N^3)$ cost of training, where $N$ is the number of datapoints \citep{2006_Rasmussen}, recent advances have seen \textsc{gp}s scale to millions of data points using multi GPU parallelisation \citep{2019_Pleiss}. Nonetheless, on CPU hardware, scaling \textsc{gp}s to datasets on the order of $10,000$ data points can prove challenging. For the applications considered in this chapter, however, we are unlikely to be fortunate enough to encounter datasets of relevant experimental measurements on the order of tens of thousands of data points and so CPU hardware is sufficient for this purpose.

\section{TD-DFT Performance Comparison}\label{sec:dft}

The \textsc{mogp}, Tanimoto kernel and fragprints combination are compared against two widely-utilised levels of TD-DFT: CAM-B3LYP \citep{2004_Yanai} and PBE0 \citep{1996_Perdew, 1999_Adamo}. While the CAM-B3LYP level of theory offers highly accurate predictions, its computational cost is high relative to that of machine learning methods. To obtain the predictions for a single photoswitch molecule one is required to perform a ground state energy minimisation followed by a TD-DFT calculation \citep{2015_Belostotskii}. In the case of photoswitches these calculations need to be performed for both molecular isomers and possibly multiple conformations which further increases the wall-clock time. When screening multiple molecules is desirable, this cost, in addition to the expertise required to perform the calculations may be prohibitive, and so in practice it is easier to screen candidates based on human chemical intuition. In contrast, inference in a data-driven model is on the order of seconds but may yield poor results if the training set is out-of-domain relative to the prediction task. 

In \autoref{tab_merge1} a performance comparison is presented against 99 molecules and 114 molecules for CAM-B3LYP and PBE0 respectively, both using the 6-31G** basis set taken from the results of a benchmark quantum chemistry study \citep{2011_Jacquemin}, to which the reader is referred for all information pertaining to the details of the calculations. \footnote[1]{The TD-DFT CPU runtime in \autoref{tab_merge1} estimates are taken from \citep{2015_Belostotskii} and hence represent a ballpark figure that is liable to decrease with advances in high performance computing.} An additional $15$ molecules are included in the test set for PBE0. These molecules are not featured in the study by \citet{2011_Jacquemin}, but are included from the other literature sources present in the Photoswitch Dataset which use the same basis set. It should also be noted that the data presented in \citet{2011_Jacquemin} contains measurements for the same molecules under different solvents. In this chapter, solvent effects are absorbed into the noise. Specifically, the solvent is not treated as part of the molecular representation. As such, for duplicated molecules a single solvent measurement is chosen at random. We report the mean absolute error (MAE) and the mean signed error (MSE), presented in Appendix~\ref{spearman_section}, to assess systematic deviations in predictive performance for the TD-DFT methods. For the \textsc{mogp} model, leave-one-out validation is performed, testing on a single molecule and training on the others as well as the experimentally-determined property values for molecules acquired from the Photoswitch Dataset. The prediction errors are then averaged and the standard error is reported.

\begin{table*}[h]
\caption{\textsc{mogp} against TD-DFT performance comparison on the PBE0 benchmark consisting of 114 molecules, and the CAM-B3LYP benchmark consisting of 99 molecules. Best metric values for each benchmark are highlighted in bold.}
\resizebox{0.98\textwidth}{!}{
\centering
\begin{tabular}{l l | c  c | c}
    \toprule
    \multicolumn{2}{c|}{{\bf Method}} & \multicolumn{2}{c|}{{\bf Accuracy Metric (nm)}} & \multicolumn{1}{c}{{\bf CPU Runtime ($\downarrow$)}}  \\
     &  & MAE ($\downarrow$) & MSE &  \\
    \hline
    \multicolumn{2}{c|}{{\bf \underline{PBE0 Benchmark}}} & 
    & 
    \\
    \textsc{mogp} & & $15.5 \pm 1.3$ & $\textbf{0.0} \pm \textbf{2.0}$ & $\textbf{<}$ \textbf{1 minute} \\
    PBE0 & uncorrected & $26.0 \pm 1.8$ & $-19.1 \pm 2.5$ &  \\
    & linear correction & $\textbf{12.4} \pm \textbf{1.3}$ & $-1.2 \pm 1.8$ & ca. 228 days \\ 
    \hline
    \multicolumn{2}{c|}{{\bf \underline{CAM-B3LYP Benchmark}}} & 
    & 
    \\
    
    \textsc{mogp} & & $15.3 \pm 1.4$ & $-0.2 \pm 2.1$ & $\textbf{<}$ \textbf{1 minute} \\
    
    CAM-B3LYP & uncorrected & $16.5 \pm 1.6$ & $6.7 \pm 2.2$ &  \\
    & linear correction & $\textbf{10.7} \pm \textbf{1.2}$ & $\textbf{0.0} \pm \textbf{1.6}$ & ca. 396 days \\
    \bottomrule
\end{tabular}}
\label{tab_merge1}
\end{table*}

The \textsc{mogp} model outperforms PBE0 by a large margin and provides comparable performance to CAM-B3LYP in terms of accuracy. The MSE values for the TD-DFT methods, however, indicate that there is systematic deviation in the TD-DFT predictions. This motivates the addition of a data-driven correction to the TD-DFT predictions. As such, a Lasso model, with an $L_1$ multiplier of $0.1$, is trained on the prediction errors of the TD-DFT methods and this correction is applied when evaluating the TD-DFT methods on the heldout set in leave-one-out validation. Lasso is chosen because it outperforms linear regression empirically in fitting the errors, likely due to inducing sparsity in the high-dimensional fragprint feature vectors. The Spearman rank-order correlation coefficients of all methods as well as the error distributions are provided in Appendix~\ref{spearman_section}. There, it is observed that an improvement is obtained in the correlation between TD-DFT predictions on applying the linear correction. Furthermore, the error distribution becomes more symmetric on applying the correction. 

\section{Human Performance Benchmark}\label{sec:human}

\begin{figure*}[!htbp]
\centering
\subfigure{\includegraphics[width=0.41\textwidth]{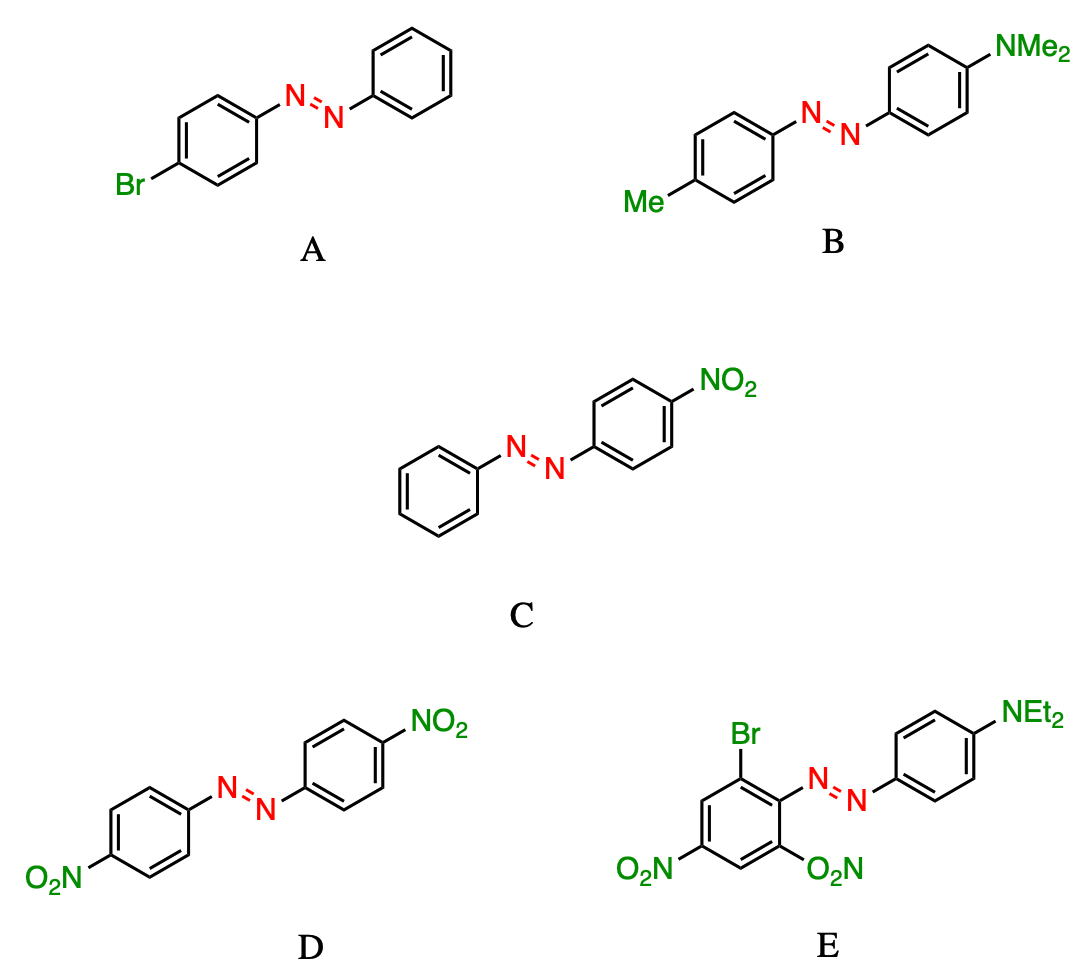}}  
\subfigure{\label{fig:4ppb}\includegraphics[width=0.55\textwidth]{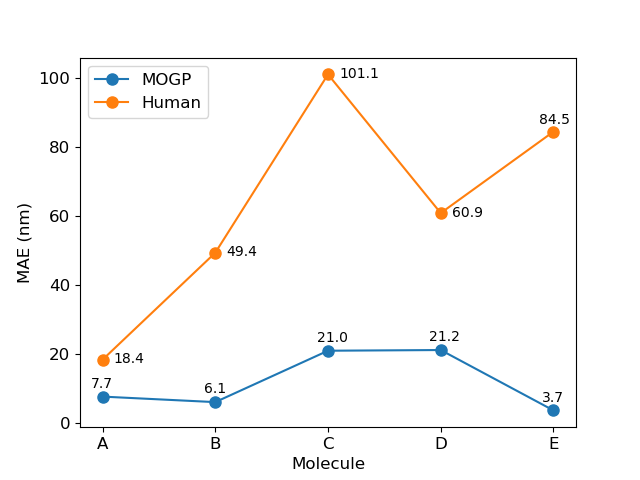}}
\caption{A performance comparison between human experts (orange) and the \textsc{mogp}-fragprints model (blue). MAEs are computed on a per molecule basis across all human participants.}
\label{human}
\end{figure*}

In practice, candidate screening is undertaken based on the opinion of a human chemist due to the speed at which predictions may be obtained. While inference in a data-driven model is comparable to the human approach in terms of speed, the aim in this section is to compare the predictive accuracy of the two approaches. To achieve this, a panel of 14 photoswitch chemists were assembled, comprising Postdoctoral Research Assistants and PhD students in photoswitch chemistry with a median research experience of 5 years. The assigned task was to predict the \emph{E} isomer $\pi-\pi{^*}$ transition wavelength for five molecules taken from the dataset. The study was designed by Aditya Raymond Thawani who was also responsible for recruiting human participants. All model predictions and plots were generated by Ryan-Rhys Griffiths.

All participants had prior knowledge of UV-vis spectroscopy. It should be noted that one of the limitations of this study is that the human chemists were not provided with the full dataset of 405 photoswitch molecules in advance of making their predictions. As such, the goal in constructing the study was to enable a comparison of the benefits of dataset curation, together with a machine learning model to internalise the information contained in the dataset, against the experience acquired over a photoswitch chemist's research career. Analysing the MAE across all humans per molecule \autoref{human}, it is observed that the human chemists perform worse than the \textsc{mogp} model in all instances. In going from molecule A to E, the number of point changes on the molecule increases steadily, thus, increasing the difficulty of prediction. Noticeably, the human performance is approximately five-fold worse on molecule E (three point changes) relative to molecule A (one point change). This highlights the fact that in instances of multiple functional group modifications, human experts are unable to reliably predict the impact on the \emph{E} isomer $\pi-\pi^*$ transition wavelength. The full results breakdown is provided in Appendix~\ref{sec:human_app}.

\section{Screening for Novel Photoswitches using the MOGP}
\label{sec:valid}

Having determined that the \textsc{mogp} approach does not suffer substantial degradation in accuracy relative to TD-DFT, the model was subsequently used to perform experimental screening. Diazo-containing compounds supplied by Molport and Mcule were identified. There were 7,265 commercially-available diazo molecules as of November 2020, when experiments were planned. The full list is made available at \href{https://github.com/Ryan-Rhys/The-Photoswitch-Dataset/tree/master/dataset}{https://github.com/Ryan-Rhys/The-Photoswitch-Dataset/tree/master/dataset}. The \textsc{mogp} was then used to score the list. A subset of 11 molecules were chosen to screen which satisfied the criteria detailed in the following section. The goal of the screening was to discover a novel azophotoswitch motif satisfying the performance criteria.

\subsection{Screening Criteria}

To demonstrate the utility of the machine learning prediction pipeline, commercially-available photoswitches were screened based on a set of performance criteria. The experimental properties of the screened photoswitches were subsequently measured and compared against the predictions made by the \textsc{mogp} model. The criteria were selected to demonstrate that beneficial properties for materials and photopharmacological applications, which are difficult to engineer, could be obtained using the \textsc{mogp} model. The criteria are:

\begin{enumerate}
    \item A $\pi-\pi^*$ maximum between 450-600 nm for the \textit{E} isomer.
    \item A separation in excess of 40 nm between the $\pi-\pi^*$ of the \textit{E} isomer and the $\pi-\pi^*$ of the \textit{Z} isomer.
\end{enumerate}

The first criterion was imposed to limit UV-included material damage and enhance tissue penetration depths. The second criterion was chosen to provide complete bidirectional photoswitching as the specified degree of separation between the $\pi-\pi^*$ bands of the isomers facilitates a given isomer to be selectively addressed using light emitting diodes (LEDs), commonly used for their low power consumption and ability to express broad emission profiles relative to laser diodes.

 \begin{figure*}[!htbp]
    \begin{center}
        \includegraphics[width=0.8\textwidth]{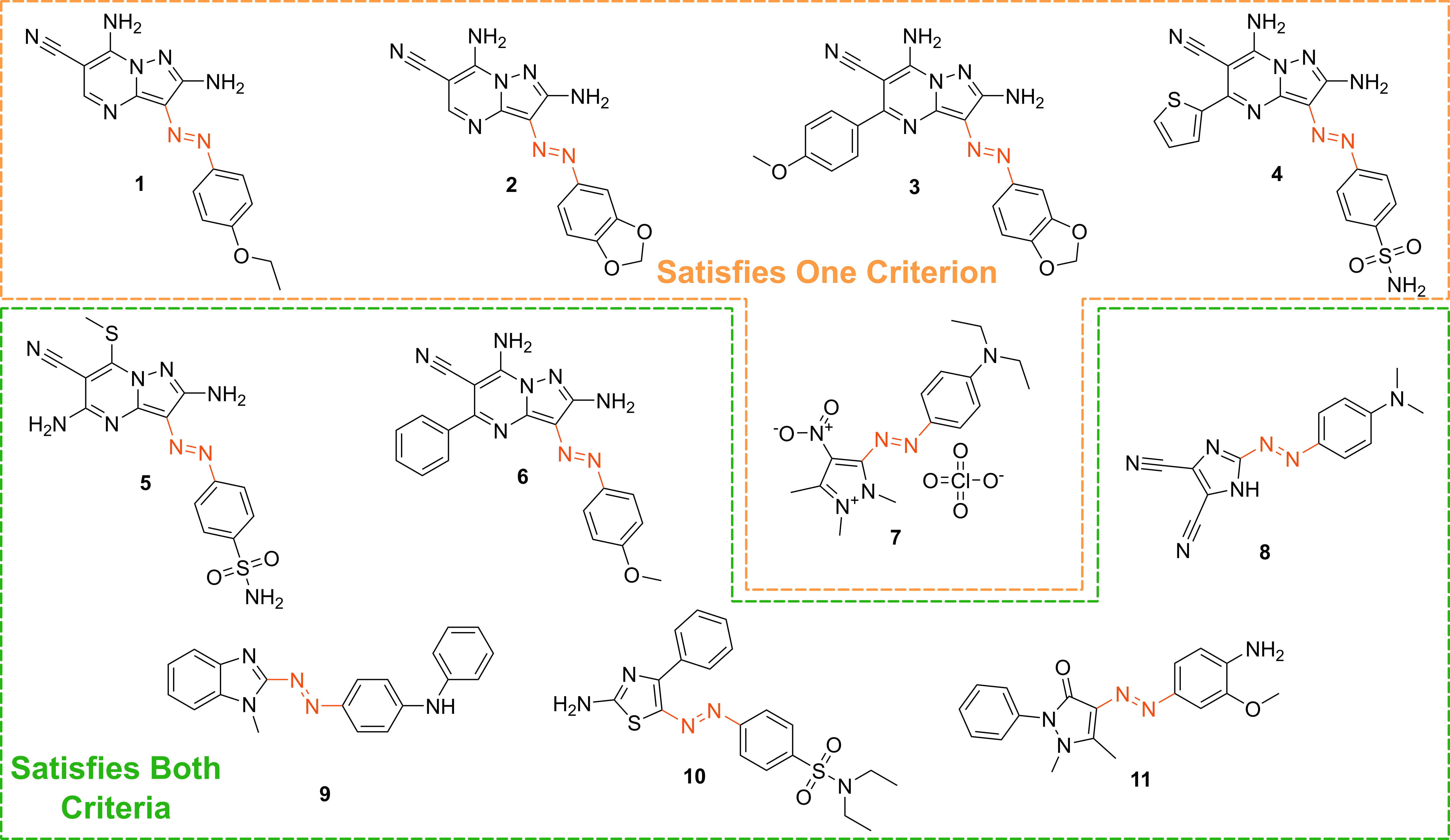}
    \end{center}
    \caption{The chemical structures of the 11 commercially-available azo-based photoswitches that were predicted to meet the criteria. Figure produced by Jake Greenfield.}
    \label{structures2}
\end{figure*}

\subsection{Lead Candidates}

Based on the stated selection criteria, 11 commercially-available molecules were identified via the predictions of the \textsc{mogp} model. The molecular structures are shown in \autoref{structures2}. Solutions of the 11 photoswitches were prepared in the dark to a concentration of 25 $\mu$M in DMSO. The UV-vis spectra of the photoswitches were recorded using a photodiode array spectrometer where the photoswitches were in their thermodynamically stable \emph{E} isomeric form. Samples were continuously irradiated with wavelengths of light at an angle of 90$^{\circ}$ relative to the measurement path. UV-vis spectra were recorded during irradiation until no further change in the UV-vis trace was observed, indicating attainment of the PSS. The \emph{in situ} irradiation procedure was implemented such that compounds displaying short thermal half-lives could be reliably measured. Through repetition of the measurement process with one or more distinct irradiation wavelengths, the PSS could be quantified and subsequently used to predict the UV-vis spectrum of the pure \emph{Z} isomer using the method detailed by \citet{Fischer1967}. With spectra of the \emph{E} and \emph{Z} isomers in hand, the experimental wavelength of the $\pi-\pi^*$ band of each isomer was determined and compared with that predicted by the \textsc{mogp}. Full experimental details are made available in Appendix~\ref{exp_app}. 

Model predictions are compared against the experimentally-determined values in \autoref{tab_preds}. The \textsc{mogp} MAE on the \emph{E} isomer $\pi-\pi^*$ wavelength prediction task was $22.7$ nm and $21.6$ nm on the \emph{Z} isomer $\pi-\pi^*$ wavelength prediction task, comparable for the \emph{E} isomer $\pi-\pi^*$ and slightly higher for the \emph{Z} isomer $\pi-\pi^*$ relative to the benchmark study in Appendix~\ref{benchmark_ml}, reflecting the challenge of achieving strong generalisation performance when extrapolating to large regions of chemical space. The first criterion is a requirement on the absolute rather than the relative value of the $\pi-\pi^*$ transition wavelengths and so the experimental values may be subject to shifts depending on the solvent.

Molecules can display solvatochromism in so far as the dielectric of the solvent, as well as hydrogen-bonding interactions, can influence the electronic transitions giving rise to hypsochromic or bathochromic shifts in the absorption spectra. This can manifest as changes in the position, intensity and shape of the UV-vis absorption spectrum. As such, the 450 nm criterion could be considered a rough guide and candidates that are just short of the threshold may fulfill the criterion in a different solvent. Nonetheless, given that the \textsc{mogp} model is trained on just a few hundred data points and is required to extrapolate to several thousand structures, the accuracy is promising with the advent of further experimental data. In terms of satisfying the pre-specified criteria, 7 of the 11 molecules possessed an \emph{E} isomer $\pi-\pi^*$ wavelength greater than 450 nm, 10 of the 11 molecules possessed a separation between the \emph{E} and \emph{Z} isomer $\pi-\pi^*$ wavelengths of greater than 40 nm, and 6 of the 11 molecules satisfied both criteria. Compound 7 did not photoswitch under irradiation.

\definecolor{caribbeangreen}{rgb}{0.0, 0.8, 0.6}
\definecolor{carrotorange}{rgb}{0.93, 0.57, 0.13}
\definecolor{cinnabar}{rgb}{0.89, 0.26, 0.2}

\begin{table}[h]
\caption{\textsc{mogp} predictions compared against experimental values (nm). A traffic light system indicates whether the molecules satisfied the criteria. Both criteria are indicated by (\color{caribbeangreen}{green}\color{black}{) and one criterion is indicated by}(\color{carrotorange}{orange}\color{black}{). All molecules satisfied at least one criterion. The model MAE was 22.7 nm for the \emph{E} isomer $\pi - \pi^*$} and 21.6 nm for the \emph{Z} isomer $\pi - \pi^*$. Experimental measurements were taken by Jake Greenfield.}
\resizebox{0.983\textwidth}{!}{
\centering
\begin{tabular}{c|cc|cccc}
\toprule
& \multicolumn{2}{c|}{{ \bf \underline{Model}}} & \multicolumn{4}{c}{{ \bf \underline{Experimental}}} \\
Switch & \begin{tabular}[c]{@{}c@{}}\emph{E} $\pi - \pi^*$ \end{tabular} & \begin{tabular}[c]{@{}c@{}}\emph{Z} $\pi - \pi^*$. \end{tabular} & \begin{tabular}[c]{@{}c@{}}\emph{E} $\pi - \pi^*$ \end{tabular} & \begin{tabular}[c]{@{}c@{}}\emph{Z} $\pi - \pi^*$ \end{tabular} & \emph{Z} PSS (\%) & \begin{tabular}[c]{@{}c@{}}ca. t$\frac{1}{2}$  (s)\end{tabular}  \\
\midrule
\color{carrotorange}
\textbf{1}  & \color{carrotorange}456 & \color{carrotorange}368 & \color{carrotorange}446 &  \color{carrotorange}355 &  \color{carrotorange}90 (405 nm) & \color{carrotorange}<5 \\ 
\color{carrotorange}
\textbf{2}  & \color{carrotorange}459 & \color{carrotorange}377 & \color{carrotorange}441 &  \color{carrotorange}356 &  \color{carrotorange}96 (405 nm) &  \color{carrotorange}<1 \\
\color{carrotorange}
\textbf{3}  & \color{carrotorange}457 & \color{carrotorange}377 & \color{carrotorange}399 &  \color{carrotorange}331 &  \color{carrotorange}66 (405 nm) & \color{carrotorange}<10 \\
\color{carrotorange}
\textbf{4}  & \color{carrotorange}463 & \color{carrotorange}373 & \color{carrotorange}445 &  \color{carrotorange}357 &  \color{carrotorange}94 (405 nm) &  \color{carrotorange}<1 \\
\color{caribbeangreen}
\textbf{5}  & \color{caribbeangreen}471 &\color{caribbeangreen} 381 & \color{caribbeangreen}450 &  \color{caribbeangreen}370 &  \color{caribbeangreen}68 (450 nm) &  \color{caribbeangreen}<1 \\
\color{caribbeangreen}
\textbf{6}  & \color{caribbeangreen}460 & \color{caribbeangreen}368 & \color{caribbeangreen}451 &  \color{caribbeangreen}360 &  \color{caribbeangreen}92 (405 nm) & \color{caribbeangreen}<30 \\
\color{carrotorange}
\textbf{7}  & \color{carrotorange}467 & \color{carrotorange}369 & \color{carrotorange}534 &  \color{carrotorange}\textit{n/a} & \color{carrotorange}\textit{n/a} & \color{carrotorange}\textit{n/a} \\
\color{caribbeangreen}
\textbf{8}  & \color{caribbeangreen}450 & \color{caribbeangreen}359 & \color{caribbeangreen}465 &  \color{caribbeangreen}376 &  \color{caribbeangreen}87 (405 nm) & \color{caribbeangreen}<10 \\
\color{caribbeangreen}
\textbf{9}  & \color{caribbeangreen}453 & \color{caribbeangreen}369 & \color{caribbeangreen}468 &  \color{caribbeangreen}399 &  \color{caribbeangreen}60 (450 nm) & \color{caribbeangreen}<10 \\
\color{caribbeangreen}
\textbf{10} & \color{caribbeangreen}453 & \color{caribbeangreen}363 & \color{caribbeangreen}471 &  \color{caribbeangreen}398 &  \color{caribbeangreen}15 (450 nm) &  \color{caribbeangreen}<1 \\
\color{caribbeangreen}
\textbf{11} & \color{caribbeangreen}453 & \color{caribbeangreen}360 & \color{caribbeangreen}452 &  \color{caribbeangreen}379 & \color{caribbeangreen}88 (405 nm)  &  \color{caribbeangreen}<1 \\
\bottomrule
\end{tabular}}
\label{tab_preds}
\end{table}

The correlation between the ML-predicted electronic absorption bands and the experimental measurements provided in \autoref{tab_preds} highlights the utility of the \textsc{mogp} model in identifying photoswitches with red-shifted and separated $\pi-\pi^*$ transitions. However, it should be noted that several photoswitches exhibit low PSS compositions of the metastable isomer at the irradiation wavelengths employed. Low PSS values of the \emph{Z} isomer may be attributed to overlap of broad electronic transitions for the isomeric forms. It is envisage that the composition of the \emph{Z} isomer at the PSS may be enhanced by expanding the curated dataset to consider the full-width-at-half-maximum (FWHM) of the electronic absorption bands. Moreover, the thermal half-lives of the photoswitches in \autoref{tab_preds} are short (less than 1 minute). This rapid thermal relaxation is to be expected for the push-pull type photoswitches the \textsc{mogp} predicted. Despite showing some potential applications for information transfer, it is envisaged that consideration of the thermal half-life properties would be beneficial for future work. Prediction of the thermal half-lives would enable further selectivity in the choice of photoswitch for a given application. It is anticipated that machine learning-based prediction, using the \textsc{mogp} model or otherwise, will be of use for synthetic photoswitch chemists who aim to design photoswitches with red-shifted absorption bands.

 \begin{figure*}[!htbp]
    \begin{center}
        \includegraphics[width=\textwidth]{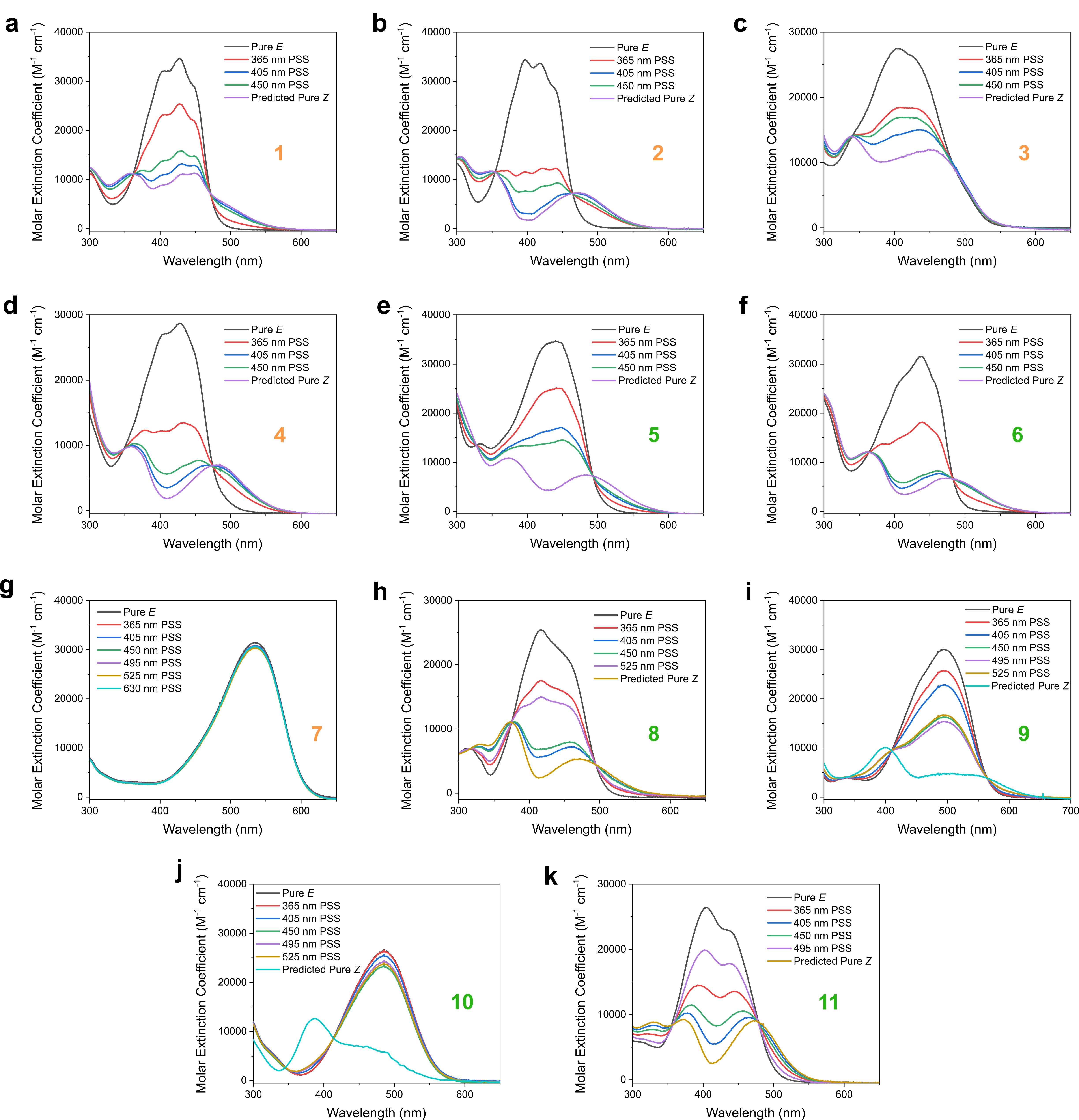}
    \end{center}
    \caption{The experimental UV-vis absorption spectrum of photoswitches \textbf{1}-\textbf{11} measured at 25 $\mu$M in DMSO and shown as the molar extinction coefficient (M$^{-1}$ cm$^{-1}$). Distinct irradiation wavelengths were chosen to predict the "pure" \emph{Z} spectra by applying the procedure detailed by \citet{Fischer1967} The chemical structures of these photoswitches are shown in Figure \ref{structures2}. Spectra generated by Jake Greenfield.}
    \label{UV_vis}
\end{figure*}

\section{Conclusions}\label{sec:conc}

This chapter introduces a data-driven prediction pipeline underpinned by dataset curation and multioutput Gaussian processes. It is demonstrated that a \textsc{mogp} model trained on a small curated azophotoswitch dataset can achieve comparable predictive accuracy to TD-DFT, and only slightly degraded performance relative to TD-DFT with a data-driven linear correction, in near-instantaneous time. The methodology is applied to discover several motifs that displayed separated electronic absorption bands of their isomers and which exhibit red-shifted absorption. The discovered motifs are hence suited for information transfer materials and photopharmacological applications. Sources of future work include the curation of a dataset of the thermal reversion barriers to improve the predictive capabilities of machine learning models as well as investigating how synthetic chemists may use model uncertainty estimates in the decision process to screen molecules e.g. via active learning \citep{2021_Mukadum} and Bayesian optimisation. The confidence-error curves in Appendix~\ref{conf_error} show initial promise in this direction and indeed understanding how best to tailor calibrated Bayesian models to molecular representations \citep{2020_flowmo, 2022_Gauche} is an avenue worthy of pursuit. The curated dataset and all code to train models is released under an MIT licence at \url{https://github.com/Ryan-Rhys/The-Photoswitch-Dataset}.

\nomenclature[Z-RF]{RF}{Random Forest}
\nomenclature[Z-GCN]{GCN}{Graph Convolutional Network}
\nomenclature[Z-GAT]{GAT}{Graph Attention Network}
\nomenclature[Z-MPNN]{MPNN}{Message-Passing Neural Network}
\nomenclature[Z-MOGP]{MOGP}{Multioutput Gaussian Process}
\nomenclature[Z-ICM]{ICM}{Intrinsic Coregionalisation Model}
\nomenclature[Z-MAE]{MAE}{Mean Absolute Error}
\nomenclature[Z-MSE]{MSE}{Mean Signed Error} %

\chapter{Modelling Experimental Noise with Gaussian Processes}
\chapterimage[height=130pt]{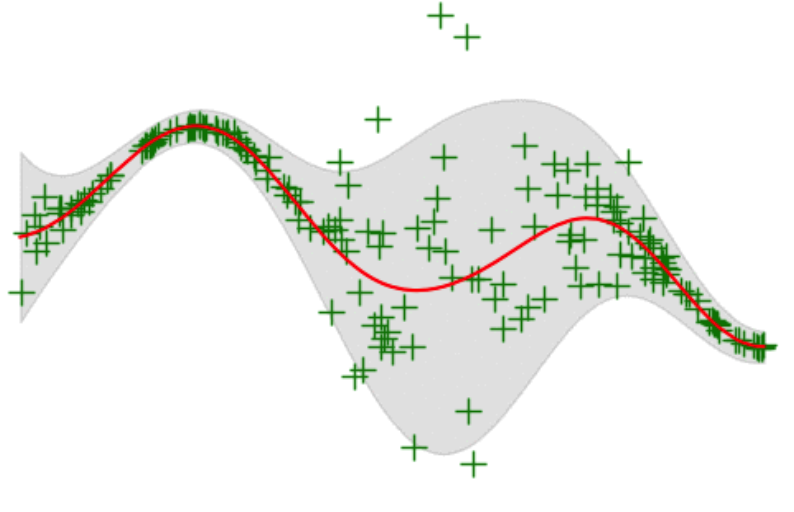}

\ifpdf
    \graphicspath{{Chapter2/Figs/Raster/}{Chapter2/Figs/PDF/}{Chapter2/Figs/}}
\else
    \graphicspath{{Chapter2/Figs/Vector/}{Chapter2/Figs/}}
\fi

\textbf{Status:} Published as Griffiths, RR., Aldrick AA., Garcia-Ortegon, M., Lalchand, V., Lee, AA. Achieving Robustness to Aleatoric Uncertainty with Heteroscedastic Bayesian Optimisation, \textit{Machine Learning: Science and Technology}, 2021.

\section{Preface}

    Bayesian optimisation (\textsc{bo}) is a sample-efficient search methodology that holds great promise for accelerating drug and materials discovery programs. A frequently-overlooked modelling consideration in \textsc{bo} strategies however, is the representation of heteroscedastic aleatoric uncertainty. In many practical applications, it is desirable to identify inputs with low aleatoric noise, an example of which might be a material composition which displays robust properties in response to a noisy fabrication process. In this chapter, a heteroscedastic \textsc{bo} scheme is proposed that is capable of representing and minimising aleatoric noise across the input space. The scheme employs a heteroscedastic Gaussian process (\textsc{gp}) surrogate model in conjunction with two straightforward adaptations of existing acquisition functions. First, the augmented expected improvement (AEI) heuristic is extended to the heteroscedastic setting, and second, the aleatoric noise-penalised expected improvement (ANPEI) heuristic is introduced. Both methodologies are capable of penalising aleatoric noise in the suggestions. In particular, the ANPEI acquisition yields improved performance relative to homoscedastic \textsc{bo} and random search on toy problems as well as on two real-world scientific datasets. All code is made available at: \url{https://github.com/Ryan-Rhys/Heteroscedastic-BO}
    
\section{Introduction}

\textsc{bo} is proving to be a highly effective search methodology in areas such as drug discovery \citep{2018_Design, 2020_Griffiths, 2020_Hoffman}, materials discovery \citep{2021_Hase, hase2021olympus, 2020_Terayama}, chemical reaction optimisation \citep{2020_Felton, felton2021summit, 2020_Yehia}, robotics \citep{2016_Calandra}, sensor placement \citep{2019_Grant}, tissue engineering \citep{2018_Olofsson} and genetics \citep{2020_Moss}. Heteroscedastic aleatoric noise, however, is rarely accounted for in these settings despite being an important consideration for real-world applications. Aleatoric uncertainty refers to uncertainty inherent in the observations (measurement noise) \citep{2017_Kendall}. In contrast, epistemic uncertainty corresponds to model uncertainty and may be explained away given sufficient data. Heteroscedastic aleatoric noise refers to aleatoric noise which varies across the input domain and is a prevalent feature of many scientific datasets; perhaps suprisingly not only experimental datasets, but also datasets where properties are predicted computationally. One such source of heteroscedasticity in the computational case might be situations in which the accuracy of first-principles calculations deteriorate as a function of the chemical complexity of the molecule being studied \citep{2018_Griffiths}. 

\begin{figure*}[!ht]
\centering
\subfigure[Density plot of computational errors]{\label{fig:hist_DFT}\includegraphics[width=0.48\textwidth]{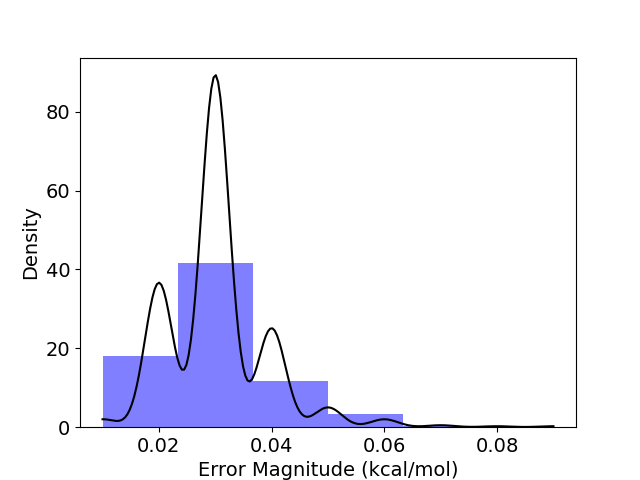}}
\subfigure[Density plot of experimental errors]{\label{fig:hist_exp}\includegraphics[width=0.48\textwidth]{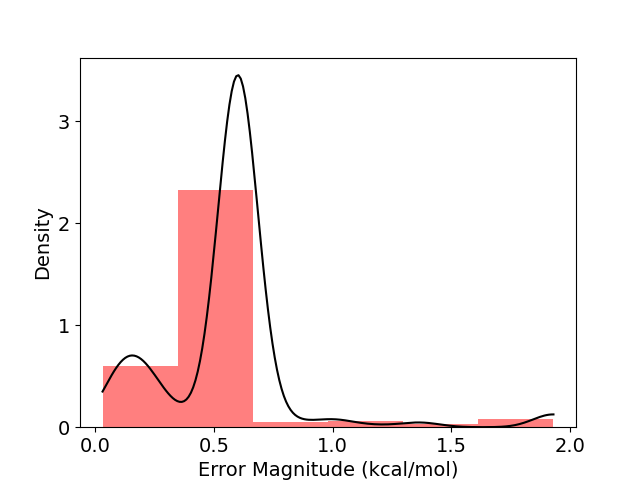}}
\caption{(a) The density histogram of computational errors (kcal/mol) for the FreeSolv hydration energy dataset \citep{2017_Duarte}. The computational errors in the hydration free energy arise from systematic errors in the force field used in alchemical free energy calculations based on classical molecular dynamics (MD) simulations. (b) A similar density histogram for the experimental errors where the source of uncertainty stems from the instrumentation used to obtain the measurement. The histograms are overlaid with kernel density estimates.}
\label{fig:hist}
\end{figure*}

\begin{figure*}[ht!]
\centering
\subfigure[Homoscedastic \textsc{gp} Fit ]{\label{fig:homo}\includegraphics[width=0.48\textwidth]{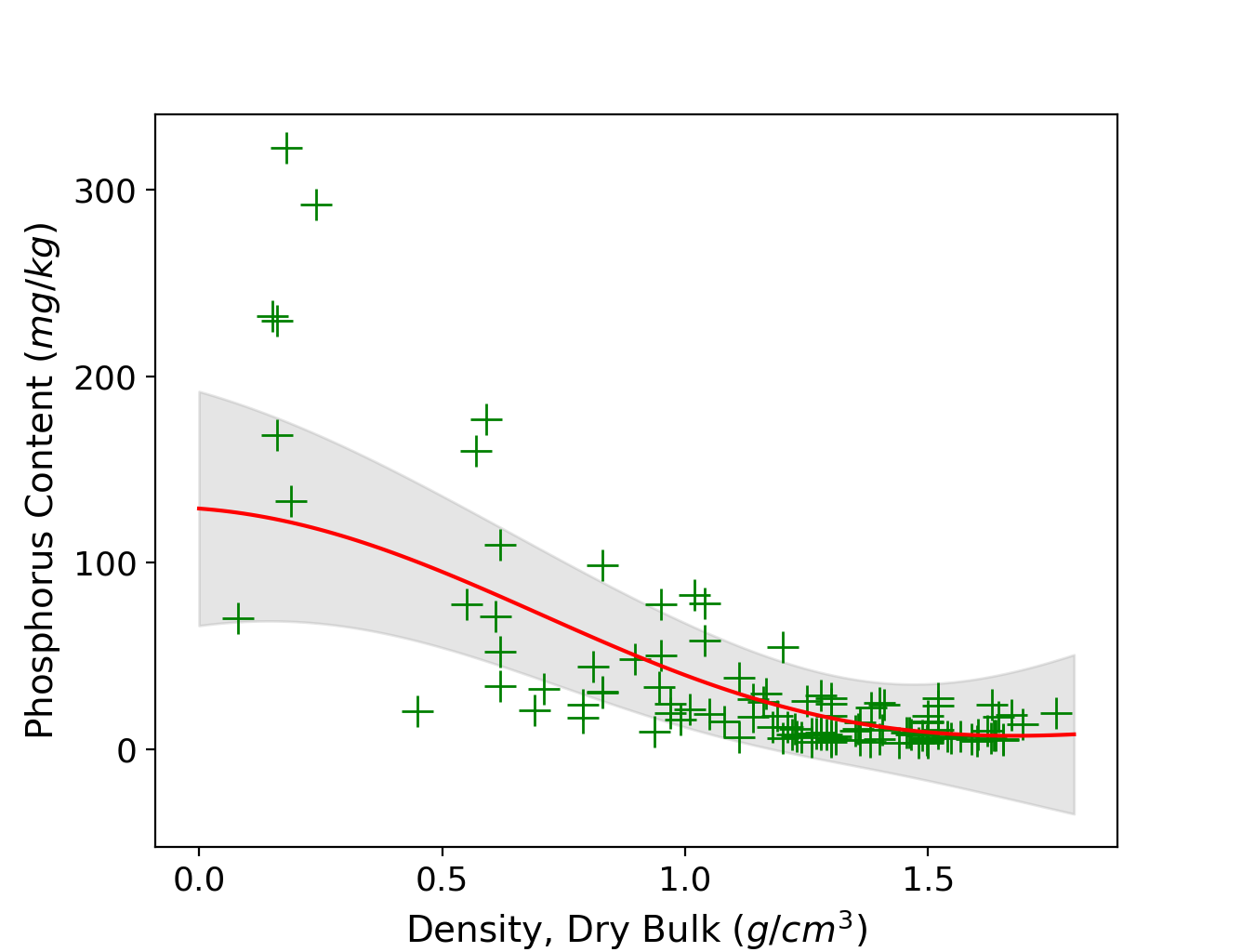}}
\subfigure[Heteroscedastic \textsc{gp} Fit]{\label{fig:het}\includegraphics[width=0.48\textwidth]{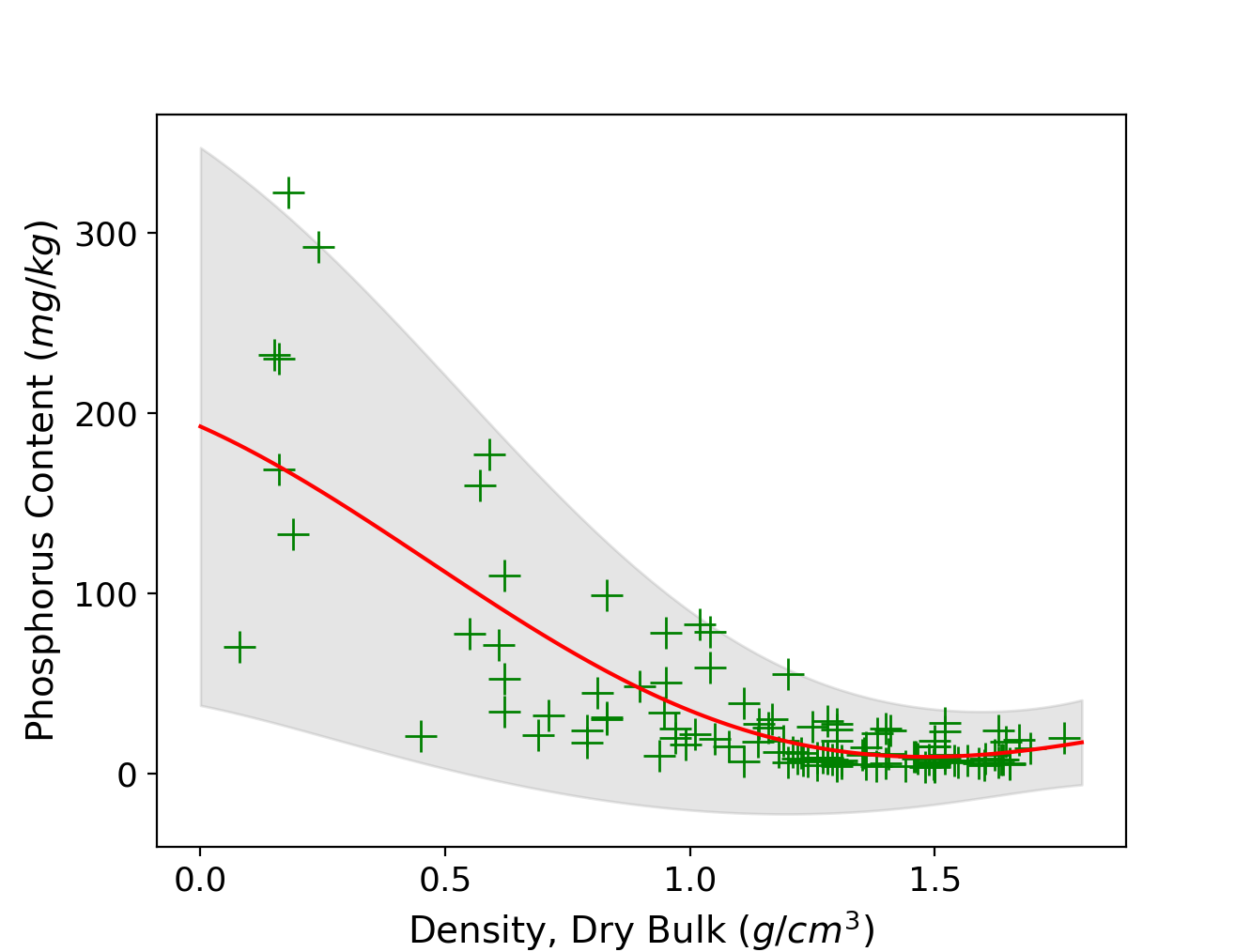}} 
\caption{Comparison of homoscedastic and heteroscedastic \textsc{gp} fits to the soil phosphorus fraction dataset \citep{houGlobalDatasetPlant2018}.}
\label{fig:soil}
\end{figure*}

In \autoref{fig:hist}, real-world sources of heteroscedasticity are illustrated using the FreeSolv dataset of \cite{2017_Duarte}. The consequences of misrepresenting heteroscedastic noise as being homoscedastic, i.e. constant across the input domain, are illustrated using a second dataset \citep{houGlobalDatasetPlant2018} in \autoref{fig:soil}. The homoscedastic model can underestimate noise in certain regions of the input space which in turn could induce a \textsc{bo} scheme to suggest values possessing large aleatoric noise. In an application such as high-throughput virtual screening \citep{2015_Knapp}, the cost of misrepresenting noise during the screening process could lead to a substantial loss of time in material fabrication \citep{2017_Knapp}. In this chapter, a heteroscedastic \textsc{bo} algorithm is introduced which is capable of both representing and minimising aleatoric noise in its suggestions. The chapter contributions are:

\newpage

\begin{enumerate}[label=(\arabic*)]
    \item The introduction of a novel combination of surrogate model and acquisition function designed to minimise heteroscedastic aleatoric uncertainty.
    \item A demonstration of the scheme's ability to outperform naive schemes based on homoscedastic \textsc{bo} and random search on toy problems as well as two real-world scientific datasets.
    \item The provision of an open-source implementation.
    
\end{enumerate}

The chapter is structured as follows: \autoref{rel_het_work} introduces related work on heteroscedastic \textsc{bo}. \autoref{method_section} provides background on the heteroscedastic \textsc{gp} surrogate model used in this chapter and introduces the novel HAEI and ANPEI acquisitions functions. \autoref{experiments} considers experiments on synthetic and scientific datasets possessing heteroscedastic noise where the goal is to be robust to, i.e. minimise, aleatoric noise in the suggestions. \autoref{first_ablation} presents an ablation study on noiseless tasks as well as tasks with homoscedastic and heteroscedastic noise in order to determine whether there is a detrimental effect to using a heteroscedastic surrogate when the noise properties of the problem are a priori unknown. \autoref{het_conc} concludes with some limitations of the approach presented as well as fruitful sources for future work.

\section{Related Work}
\label{rel_het_work}

The most similar work to our own is that of \citet{2017_Calandra} where experiments are reported on a heteroscedastic Branin-Hoo toy function using the variational heteroscedastic Gaussian process (\textsc{gp}) approach of \citet{2011_Lazaro}. This work defines and optimises a robustness index, making a compelling case for penalisation of aleatoric noise in real-world \textsc{bo} problems. A modification to expected improvement (EI), expected risk improvement is introduced in \citet{2013_Kuindersma} and is applied to problems in robotics where robustness to aleatoric noise is desirable. In this framework, however, the relative weights of performance and robustness cannot be tuned \citep{2017_Calandra}. \citet{2014_Assael, 2014_Ariizumi} implement heteroscedastic \textsc{bo} but do not introduce an acquisition function that penalises aleatoric noise. \citet{2015_Sui, 2016_Berkenkamp} consider the related problem of safe \textsc{bo} through implementing constraints in parameter space. In this instance, the goal of the algorithm is to enforce a performance threshold for each evaluation of the black-box function. Recently, the winners of the 2020 NeurIPS Black-Box Optimisation Competition applied non-linear output transformations in their solution to tackle heteroscedasticity. The authors however are not interested in explicitly penalising aleatoric noise in this case. In terms of acquisition functions, \cite{2009_Frazier, 2019_Letham} propose principled approaches to handling aleatoric noise in the homoscedastic setting that could be extended to the heteroscedastic setting. Our primary focus in this chapter, however, is to highlight that heteroscedasticity in the surrogate model is beneficial and so an examination of a subset of acquisition functions is sufficient for this purpose.

\section{Heteroscedastic Bayesian Optimisation}
\label{method_section}

The goal is to perform \textsc{bo} whilst minimising input-dependent aleatoric noise. In order to represent input-dependent aleatoric noise, a heteroscedastic surrogate model is required. 

\subsection{The Most Likely Heteroscedastic Gaussian Process}

The most likely heteroscedastic Gaussian process (\textsc{mlhgp}) approach of \citet{2007_Kersting} is adopted, and for consistency, the same notation as the source work is used in the presentation. There is a dataset $\mathbf{\mathcal{D}} = \{(\mathbf{x}_{i}, t_{i})\}^{n}_{i = 1}$ in which the target values $t_{i}$ have been generated according to $t_{i} = f(\mathbf{x}_{i}) + \epsilon_{i}$. Independent Gaussian noise terms $\epsilon_{i} \sim \mathcal{N}(0, \sigma_{i}^2)$ are assumed, with variances given by $\sigma_{i}^2 = r(\mathbf{x}_{i})$. In the heteroscedastic setting $r$ is typically a non-constant function over the input domain $\mathbf{x}$. In order to perform \textsc{bo}, the predictive distribution $P(\mathbf{t}^{*} \mid \mathbf{x}^{*}_{1}, \ldots, \mathbf{x}^{*}_{q})$ needs to be modelled at the query points $\mathbf{x}^{*}_{1}, \ldots, \mathbf{x}^{*}_{q}$. Placing a \textsc{gp} prior on $f$ and taking $r(\textbf{x})$ as the assumed noise function, the predictive distribution is multivariate Gaussian $\mathcal{N}(\mathbf{\mu}^{*}, \Sigma^{*})$ with mean

\begin{equation}
    \mathbf{\mu}^{*} = E[\mathbf{t^{*}}] = K^{*}(K + R)^{-1} \mathbf{t},
\end{equation}

\noindent and covariance matrix

\begin{equation}
    \Sigma^{*} = \text{var}[\mathbf{t^{*}}] = K^{**} + R^{*} - K^{*}(K + R)^{-1}K^{*T},
\end{equation}

\noindent where $K \in \mathbb{R}^{n \times n}$, $K_{ij} = k(\mathbf{x}_{i},\mathbf{x}_{j})$, 
$K^{*} \in \mathbb{R}^{q \times n}$, $K_{ij}^{*} = k(\mathbf{x}_{i}^{*},\mathbf{x}_{j})$, $K^{**} \in \mathbb{R}^{q \times q}$, $K^{**}_{ij} = k(\mathbf{x}^{*}_{i}, \mathbf{x}^{*}_{j})$, $\mathbf{t} = (t_{1}, t_{2}, \ldots, t_{n})^{T}$, $R = \text{diag}(\mathbf{r}) \text{ with } \mathbf{r} = (r(\mathbf{x}_{1}),r(\mathbf{x}_{2}), \ldots, r(\mathbf{x}_{n}))^{T}$, and $R^{*} = \text{diag}(\mathbf{r}^{*}) \text{ with }\mathbf{r}^{*} = (r(\mathbf{x}^{*}_{1}), r(\mathbf{x}^{*}_{2}),\ldots, r(\mathbf{x}^{*}_{q}))^{T}$.\\

\noindent The \textsc{mlhgp} algorithm executes the following steps:

\begin{enumerate}
    \item Estimate a homoscedastic \textsc{gp}, $G_1$ on the dataset $\mathbf{\mathcal{D}} = \{(\mathbf{x}_{i}, t_{i})\}^{n}_{i = 1}$
    \item Given $G_1$, estimate the empirical noise levels for the training data using $z_i = \log(\text{var}[t_i, G_1(\textbf{x}_i, \mathcal{D})])$, where $\text{var}[t_i, G_1(\textbf{x}_i, \mathcal{D})] \approx \frac{1}{s} \: \sum_j^s \: 0.5 \: (t_i - t_i^j)^2$ and $t_i^j$ is a sample from the predictive distribution induced by the \textsc{gp} at $\mathbf{x}_i$, forming a new dataset 
    $\mathbf{\mathcal{D}'} = \{(\mathbf{x}_{i}, z_{i})\}^{n}_{i = 1}$.
    \item Estimate a second \textsc{gp}, $G_2$ on $\mathbf{\mathcal{D}'}$.
    \item Estimate a combined \textsc{gp}, $G_3$ on $\mathbf{\mathcal{D}}$ using $G_2$ to predict the logarithmic noise levels $r_i$.
    \item If not converged, set $G_3$ to $G_1$ and repeat.
\end{enumerate}

\noindent In essence, the defining characteristic of the \textsc{mlhgp} approach is that $G_1$ learns the latent function and $G_2$ learns the noise function. 

\subsection{Bayesian Optimisation with Aleatoric Noise Penalisation}

The heteroscedastic \textsc{bo} problem may be framed as

\begin{equation}
    \boldsymbol{x}^{*} = \operatorname*{arg\,min}_{\boldsymbol{x} \in \mathcal{X}} h(\boldsymbol{x}),
\end{equation}

\noindent where the black-box objective $h$, to be minimised has the form

\begin{equation}
    \label{noise_equation}
    h(\boldsymbol{x}) = \alpha f(\boldsymbol{x}) + (1 - \alpha)g(\boldsymbol{x}),
\end{equation}

\noindent where $f(\boldsymbol{x})$ is the black-box function of the principal objective i.e. the objective corresponding to classical \textsc{bo} where noise is not optimised, and $g(\boldsymbol{x})$ is the latent heteroscedastic noise function which governs the magnitude of the noise at a given input location $\boldsymbol{x}$. $\alpha$ is a parameter chosen, for the purposes of evaluation, by a domain expert that trades off the weight of the principal objective relative to the noise objective. It is worth noting that $\alpha$ is a parameter that affects only the evaluation of an algorithm and not the execution. The evaluation criteria, however, will dictate the optimal hyperparameters of the acquisition function.

\subsection{Heteroscedastic Acquisition Functions}

Extensions of the EI acquisition criterion \citep{1998_Jones} are investigated. The EI acquisition may be written in terms of the targets $t$ and the incumbent best objective function value, $\eta$, found so far as

\begin{equation}
    \text{EI}(\boldsymbol{x}) = \mathbb{E}\big[\,(\eta - t)_{+}\big] = \int_{-\infty}^{\infty}(\eta - t)_{+}\,p(t\,|\,\boldsymbol{x})\,dt,
\end{equation}

\noindent where $p(t\,|\,\boldsymbol{x})$ is the posterior predictive marginal density of the objective function evaluated at $\boldsymbol{x}$ and \hspace{3mm}$(\eta - t)_{+} \equiv \text{max}\,(0,\, \eta - t)$ is the improvement over the incumbent best objective function value $\eta$. Evaluations of the objective are noisy in all problems considered and so EI with plug-in \citep{2013_Picheny} is used, the plug-in value being the \textsc{gp} predictive mean \citep{2008_Vasquez}.

Two extensions to the EI criterion are proposed. The first is an extension of the augmented expected improvement (AEI) criterion

\begin{equation}
    \text{AEI}(\boldsymbol{x}) = \text{EI}(\boldsymbol{x}) \Bigg( 1 - \frac{\sigma_n}{\hphantom{e}\vphantom{\bigg|}\sqrt{\text{var}[t] + \sigma_n^{2}}\hphantom{e}} \Bigg),
\end{equation}

\noindent of \citet{2006_Huang}, where $\sigma_n$ is the fixed aleatoric noise level. AEI is introduced as a heuristic for the optimisation of noisy functions. EI is recovered in the case that $\sigma_n^{2} = 0$ and in the case that $\sigma_n^2 > 0$, AEI operates as a rescaling of the EI acquisition function, penalising test locations where the \textsc{gp} predictive variance is small relative to the fixed noise level $\sigma_n^2$. AEI is extended to the heteroscedastic setting by exchanging the fixed aleatoric noise level with the input-dependent one:

\begin{equation}
\label{eq:12}
    \text{HAEI}(\boldsymbol{x}) = \text{EI}(\boldsymbol{x}) \Bigg( 1 - \frac{\gamma\sqrt{r(\boldsymbol{x})}}{\hphantom{e}\vphantom{\big|}\sqrt{\text{var}[t] + \gamma^2r(\boldsymbol{x})}\hphantom{e}} \Bigg), 
\end{equation}

\noindent where $r(\boldsymbol{x})$ is the predicted aleatoric uncertainty at input $\boldsymbol{x}$ under the \textsc{mlhgp} and $\text{var}[t]$ is the predictive variance of the \textsc{mlhgp} at input $\boldsymbol{x}$. In this instance, $\gamma$ is defined to be a positive penalty parameter for regions with high aleatoric noise. \\

\begin{proposition}[Limit of Large Epistemic Uncertainty]
\label{prop:prop1}

The HAEI acquisition function reduces to EI when the ratio of epistemic uncertainty to aleatoric uncertainty is much greater than $\gamma^2$. \end{proposition}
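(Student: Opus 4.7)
The plan is to show the claim by direct algebraic manipulation of the bracketed penalty factor in \autoref{eq:12}, isolating the ratio of epistemic to aleatoric uncertainty and demonstrating that the penalty vanishes in the stated limit. Because HAEI factorises as EI multiplied by a correction term, it suffices to prove that the correction term tends to $1$ when the ratio condition holds; the $\text{EI}(\boldsymbol{x})$ factor itself is unaffected.

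First I would decompose the predictive variance into its epistemic and aleatoric contributions. Writing $\text{var}[t] = \sigma_{\text{epi}}^2(\boldsymbol{x}) + r(\boldsymbol{x})$, where $\sigma_{\text{epi}}^2(\boldsymbol{x})$ denotes the epistemic component of the \textsc{mlhgp} posterior predictive variance (arising from $K^{**} - K^{*}(K+R)^{-1}K^{*T}$ in the predictive covariance), the penalty term in HAEI becomes
\begin{equation*}
\frac{\gamma\sqrt{r(\boldsymbol{x})}}{\sqrt{\sigma_{\text{epi}}^2(\boldsymbol{x}) + r(\boldsymbol{x}) + \gamma^2 r(\boldsymbol{x})}}.
\end{equation*}
Dividing numerator and denominator by $\gamma\sqrt{r(\boldsymbol{x})}$ (assuming $r(\boldsymbol{x}) > 0$, which follows from the log-parametrisation of the noise process in the \textsc{mlhgp}) yields
\begin{equation*}
\left( \frac{\sigma_{\text{epi}}^2(\boldsymbol{x})}{\gamma^2 r(\boldsymbol{x})} + \frac{1}{\gamma^2} + 1 \right)^{-1/2}.
\end{equation*}
This form makes the dependence on the ratio $\sigma_{\text{epi}}^2(\boldsymbol{x})/r(\boldsymbol{x})$ explicit.

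Next I would take the limit. By hypothesis $\sigma_{\text{epi}}^2(\boldsymbol{x})/r(\boldsymbol{x}) \gg \gamma^2$, which is equivalent to $\sigma_{\text{epi}}^2(\boldsymbol{x})/(\gamma^2 r(\boldsymbol{x})) \gg 1$. Hence the first summand under the square root diverges while the other two remain bounded, so the entire penalty term tends to zero. The bracketed factor $\bigl(1 - \text{penalty}\bigr)$ therefore tends to $1$, and we conclude $\text{HAEI}(\boldsymbol{x}) \to \text{EI}(\boldsymbol{x})$ as claimed.

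The only real subtlety is a matter of convention: the statement is ambiguous as to whether $\text{var}[t]$ in \autoref{eq:12} already includes the aleatoric term $r(\boldsymbol{x})$ (as is standard for predictive target variance in \textsc{gp}s) or refers only to the epistemic component (by analogy with the $s^2$ used in the original AEI of \citet{2006_Huang}). I would briefly remark that both conventions yield the same limiting behaviour, since the $r(\boldsymbol{x})/(\gamma^2 r(\boldsymbol{x})) = 1/\gamma^2$ term is bounded and thus negligible compared with the diverging $\sigma_{\text{epi}}^2/(\gamma^2 r)$ term under the hypothesis. This robustness to notational choice is worth noting explicitly so the proposition is unambiguous regardless of how the reader interprets $\text{var}[t]$.
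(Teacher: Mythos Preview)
Your argument is correct and follows the same core idea as the paper: factor $\sqrt{r(\boldsymbol{x})}$ out of the penalty term so that the correction depends only on the uncertainty ratio, then let that ratio diverge. The paper's version is more terse: it simply sets $k = \text{var}[t]/r(\boldsymbol{x})$, calls this the epistemic-to-aleatoric ratio, divides numerator and denominator by $\sqrt{r(\boldsymbol{x})}$ to obtain $\text{HAEI}(\boldsymbol{x}) = \text{EI}(\boldsymbol{x})\bigl(1 - \gamma/\sqrt{k+\gamma^2}\bigr)$, and takes $k\to\infty$.

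The one substantive difference is that you first decompose $\text{var}[t] = \sigma_{\text{epi}}^2 + r(\boldsymbol{x})$ and then argue robustness to whether $\text{var}[t]$ includes the aleatoric contribution. The paper does not make this decomposition and implicitly identifies $\text{var}[t]$ itself with the epistemic component when defining $k$. Your extra step buys a cleaner reconciliation with the predictive-variance formula given earlier in the chapter (which does include $R^*$), at the cost of slightly more algebra; the paper's version is shorter but leaves the notational tension you identified unresolved. Either way the limiting conclusion is identical.
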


\begin{proof}

Let $k = \frac{\text{var}[t]}{r(\boldsymbol{x})}$ denote the ratio of epistemic to aleatoric uncertainty at an arbitrary input location $\boldsymbol{x}$. Dividing the numerator and the denominator of the second term in the second factor of \autoref{eq:12} by $\sqrt{r(\boldsymbol{x})}$ yields

\begin{equation}
\label{eq:13}
    \text{HAEI}(\boldsymbol{x}) = \text{EI}(\boldsymbol{x})\Bigg( 1 - \frac{\gamma}{\hphantom{e}\vphantom{\big|}\sqrt{k + \gamma^2}\hphantom{e}} \Bigg).
\end{equation}

\noindent Taking the limit as $k$ tends to infinity and assuming finite $\gamma$

\begin{equation}
    \lim_{k \to \infty}\text{EI}(\boldsymbol{x})\Bigg( 1 - \frac{\gamma}{\hphantom{e}\vphantom{\big|}\sqrt{k + \gamma^2}\hphantom{e}} \Bigg) = \text{EI}(\boldsymbol{x}),
\end{equation}

\noindent recovers the EI acquisition. 

\end{proof}

\begin{proposition}[Limit of Large Aleatoric Uncertainty]\label{prop2}

The HAEI acquisition function goes to zero as the ratio of epistemic uncertainty to aleatoric uncertainty goes to zero. \end{proposition}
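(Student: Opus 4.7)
The plan is to reuse the reformulation introduced in the proof of Proposition 1, namely equation (13), which recasts $\text{HAEI}(\boldsymbol{x})$ as $\text{EI}(\boldsymbol{x})\bigl(1 - \gamma/\sqrt{k+\gamma^2}\bigr)$ with $k = \text{var}[t]/r(\boldsymbol{x})$. The statement to be proved is then simply the behaviour of this product as $k \to 0^{+}$, so the bulk of the work is a single one-line limit computation combined with a boundedness check on the leading EI factor.

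First I would take the limit of the bracketed factor directly: as $k \to 0^{+}$, $\sqrt{k+\gamma^2} \to |\gamma| = \gamma$ (recalling from the definition after equation (12) that $\gamma > 0$), so
\begin{equation}
1 - \frac{\gamma}{\sqrt{k+\gamma^2}} \;\longrightarrow\; 1 - \frac{\gamma}{\gamma} \;=\; 0.
\end{equation}
Next I would argue that $\text{EI}(\boldsymbol{x})$ stays bounded along this limit, so that the product tends to zero. This is where a small amount of care is required: since $\text{EI}(\boldsymbol{x}) = \mathbb{E}\bigl[(\eta - t)_{+}\bigr]$ with $t$ Gaussian under the posterior predictive, EI admits the standard closed form $(\eta - \mu)\Phi(z) + \sigma\varphi(z)$ with $z = (\eta - \mu)/\sigma$, and in particular $0 \le \text{EI}(\boldsymbol{x}) \le |\eta - \mu| + \sigma/\sqrt{2\pi}$, which remains finite for finite predictive mean and variance. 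Combining this uniform bound with the vanishing bracketed factor gives $\text{HAEI}(\boldsymbol{x}) \to 0$, as claimed.

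The main obstacle, such as it is, lies in being precise about the regime in which the limit is being taken. Unlike Proposition 1, where sending $k \to \infty$ with $\gamma$ fixed leaves the EI factor untouched, here $k \to 0$ corresponds to aleatoric noise swamping epistemic uncertainty and one must be explicit about whether $\text{var}[t]$ in the EI formula refers to the epistemic component only (as in the rewriting used for Proposition 1) or to the total predictive variance of the observation. Under the former convention, which is the one consistent with the derivation of equation (13), the argument above is immediate. Under the latter, one would need to note that even as the aleatoric contribution grows, EI remains bounded by the maximum attainable improvement over $\eta$ so the conclusion is unchanged.
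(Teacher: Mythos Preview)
Your proposal is correct and follows essentially the same route as the paper: both rely on the reformulation in equation (13) and take the limit of the bracketed factor as $k \to 0$. The paper's own proof is in fact just the single displayed limit $\lim_{k \to 0}\text{EI}(\boldsymbol{x})\bigl(1 - \gamma/\sqrt{k+\gamma^2}\bigr) = 0$, so your additional care about boundedness of the EI factor and the meaning of $\text{var}[t]$ goes beyond what the paper provides.
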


\begin{proof}

Taking the limit as $k$ tends to zero in \autoref{eq:13} yields 

\begin{equation}
    \lim_{k \to 0}\text{EI}(\boldsymbol{x})\Bigg( 1 - \frac{\gamma}{\hphantom{e}\vphantom{\big|}\sqrt{k + \gamma^2}\hphantom{e}} \Bigg) = 0.
\end{equation}

\end{proof}

\begin{remark} In the limit of large aleatoric uncertainty there is an approximation that is linear in $k$ for the HAEI scaling factor.
\end{remark}

Letting $S(k) = 1 - \frac{\gamma}{\sqrt{k + \gamma^2}}$ such that $\text{HAEI} = \text{EI}(\boldsymbol{x})S(k)$, consider the MacLaurin expansion of $S(k)$,

\begin{equation}
    S(k) = S(0) + S'(0)k + \frac{S''(0)}{2!}k^2 + \frac{S'''(0)}{3!}k^3 + \dotsc ,
\end{equation}

\noindent Dropping terms of $O(k^2)$ and higher we obtain

\begin{equation}
    S(k) \approx \frac{k}{2\gamma^2}.
\end{equation}

\noindent This approximation may be used when $k$ is small relative to $\gamma$ and could provide guidance in setting the $\gamma$ parameter if prior knowledge about $k$ and the desired trade-off is available between the principal and noise objectives. In \autoref{scalig} some insight is provided into the effect that different values of $\gamma$ will have on the scaling factor $S(k)$. \\

\begin{figure*}[t]
\centering
    \includegraphics[width=.7\textwidth]{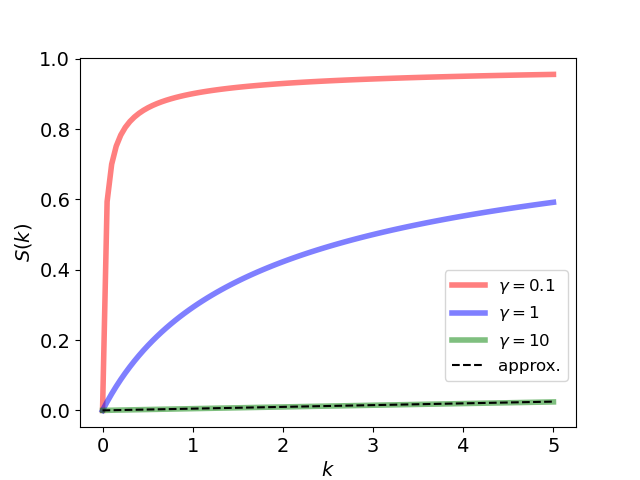}
    \caption{The HAEI scaling factor $S(k)$, now written as a function of $k$ for different values of $\gamma$. When $k$, the ratio of epistemic to aleatoric uncertainty is small, the scaling factor goes to zero to reflect the penalty for regions of high aleatoric uncertainty. The $\gamma$ parameter controls the decay rate of this penalty. Also shown is the linear approximation to the scaling factor for $\gamma = 10$.}
    \label{scalig}
\end{figure*}

In addition to HAEI, a simple modification to EI is proposed that explicitly penalises regions of the input space with large aleatoric noise. This acquisition function is termed aleatoric noise-penalised expected improvement (ANPEI) and denoted

\begin{equation}\label{eq:anpei_equation}
    \text{ANPEI} = \beta\text{EI}(\boldsymbol{x}) - (1 - \beta)\sqrt{r(\boldsymbol{x})},
\end{equation}

\noindent where $\beta$ is a scalarisation constant. In the multiobjective optimisation setting a particular value of $\beta$ will correspond to a point on the Pareto frontier. The advantages of both HAEI and ANPEI acquisition functions, in conjunction with the \textsc{mlhgp} surrogate model, are showcased in \autoref{experiments}.

\section{Experiments on Robustness to Aleatoric Uncertainty}
\label{experiments}

The experiments below are designed to test the performance of the heteroscedastic \textsc{bo} schemes against homoscedastic \textsc{bo} schemes as well as random search when the task section dictates that it is desirable to minimise (be robust to) aleatoric noise.

\begin{figure*}
\centering
\subfigure[Latent Function]{\label{fig:sin_1_first}\includegraphics[width=0.32\textwidth]{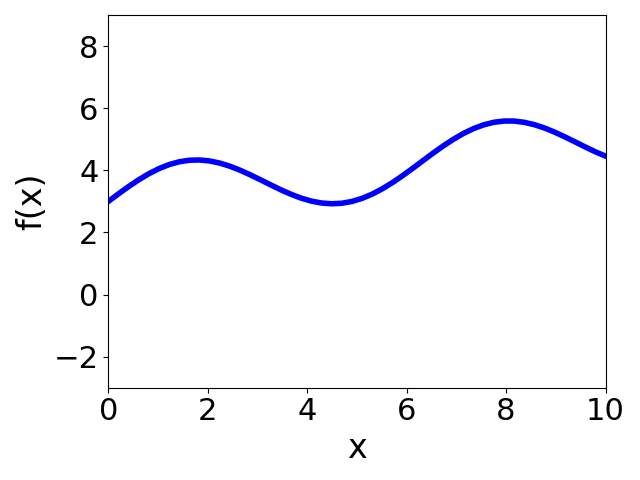}}
\subfigure[Noise Function ]{\label{fig:sin_2_first}\includegraphics[width=0.32\textwidth]{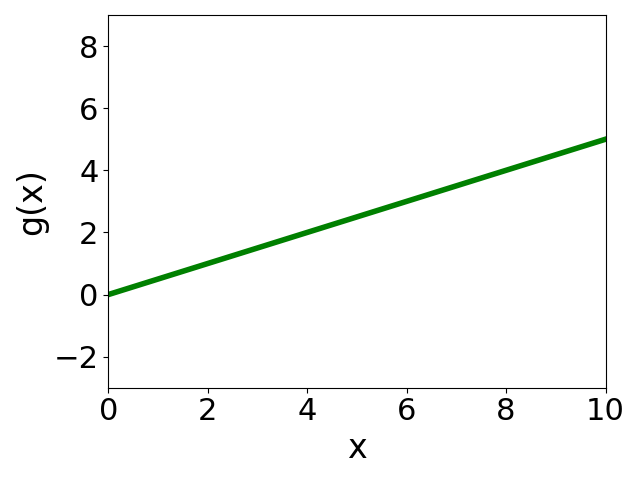}}
\subfigure[Objective Function]{\label{fig:sin_3_first}\includegraphics[width=0.32\textwidth]{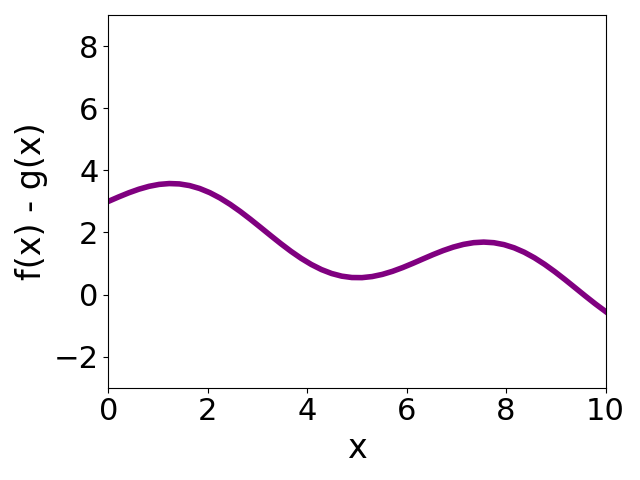}} 
\caption{Illustrative Toy Problem. The latent function in a) is corrupted with heteroscedastic Gaussian noise according to the function in b) where $g(x)$ is a constant multiplier of a sample from a standard Gaussian. The combined objective is given in c) and is obtained by subtracting the noise function from the latent function.}
\label{fig:sin_first}
\end{figure*}

\begin{figure*}[t]
\centering
    \includegraphics[width=.5\textwidth]{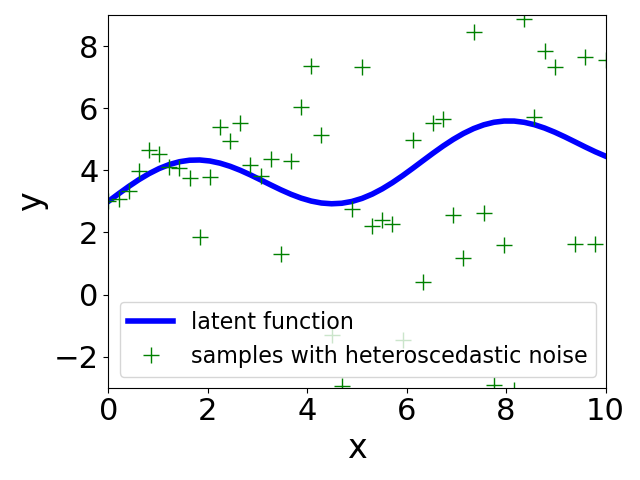}
    \caption{Noisy samples $y_i = f(x_i) + g(x_i)\epsilon$ from the heteroscedastic sin wave function.}
    \label{sin_wave_samples}
\end{figure*}

\subsection{Implementation}

Experiments were run using a custom NumPy \citep{2020_numpy} implementation of \textsc{gp} regression and \textsc{mlhgp} regression. All code to reproduce the experiments is available at \url{https://github.com/Ryan-Rhys/Heteroscedastic-BO}. The squared exponential kernel was chosen as the covariance function for both the homoscedastic \textsc{gp} as well as $G_1$ and $G_2$ of the \textsc{mlhgp}. Across all datasets, the lengthscales, $\ell$, of the homoscedastic \textsc{gp} were initialised to 1.0 for each input dimension. The signal amplitude $\sigma_{f}^{2}$ was initialised to a value of 1.0. The lengthscale, $\ell$, of $G_2$ of the \textsc{mlhgp} was initialised to 1.0 and the initial noise level of $G_2$ was set to 1.0. The EM-like procedure required to train the \textsc{mlhgp} was run for 10 iterations and the sample size required to construct the variance estimator producing the auxiliary dataset was 100. All standard error confidence bands are computed using 50 independent random seed initialisations. Hyperparameter values, including the noise level of the homoscedastic \textsc{gp}, were obtained by optimising the marginal likelihood using the SciPy implementation of the L-BFGS-B optimiser \citep{1997_Zhu}, taking the best of 20 random restarts. The objective function is

\begin{equation}
    \label{example1}
    h(x) = \alpha f(x) - (1 - \alpha)g(x)
\end{equation}

\noindent for the one-dimensional sin wave experiment which is a maximisation problem and as such has a subtractive penalty for regions of large noise. For the remaining experiments, which are minimisation problems, the objective is

\begin{equation}
    \label{example2}
    h(\boldsymbol{x}) = \alpha f(\boldsymbol{x}) + (1 - \alpha)g(\boldsymbol{x}).
\end{equation}

\noindent The sin wave and Branin-Hoo tasks are initialised with 25 and 100 data points respectively drawn uniformly at random within the bounds of the design space. The soil and FreeSolv experiments are initialised with 36 and 129 data points respectively drawn uniformly at random from the datasets. The $\alpha$ parameter is set to 0.5 for all experiments while $\beta$ is set to 0.5, $\frac{1}{11}$, 0.5 and 0.5 for the sin, Branin-Hoo, soil and FreeSolv experiments. The $\gamma$ parameter is set to 1, 500, 1 and 1 for the sin, Branin-Hoo, soil and FreeSolv experiments. Five acquisition functions were run in all experiments: random search, homoscedastic EI, AEI, HAEI and ANPEI. Homoscedastic EI is included as a baseline to demonstrate the difference consideration of aleatoric noise yields in the optimisation of the objective. AEI is included to demonstrate the difference consideration of heteroscedastic aleatoric noise yields and random search is included as a baseline as it is known to be highly competitive with \textsc{bo} in noisy settings.

\subsection{Heteroscedastic Sin Wave Function}

The objective function has the form 

\begin{equation}
    \label{example3}
    h(x) = f(x) - g(x),
\end{equation}

\noindent where $f(x) = \sin(x) + 0.2(x) + 3$ and $g(x) = 0.5(x)$. In this instance $\alpha$ from \autoref{example1} has a setting of $0.5$ but is omitted explicitly as the objectives have equal weight. Over the course of the experiment samples 

\begin{align*}
    y_i = f(x_i) + g(x_i)\epsilon, \: \: \: \: \: \:\epsilon \sim \mathcal{N}(0, 1),
\end{align*}

\noindent are observed. The problem setup is depicted in \autoref{fig:sin_first} and \autoref{sin_wave_samples}. The \textsc{bo} problem is constructed such that the first maximum in Figure~\autoref{fig:sin_1_first} is to be preferred as samples from this region of the input space will have low aleatoric noise. The black-box objective in Figure~\autoref{fig:sin_3_first} illustrates this trade-off. In \autoref{fig:sin_bayesopt} the performance of all surrogate model/acquisition function combinations is compared. The low aleatoric noise-seeking behaviour of HAEI and ANPEI on $g(x)$ is observed as well as their ability to optimise the composite objective $h(x)$.

\begin{figure*}
\centering
\subfigure[Best Objective Value Found so Far]{\label{fig:bo_1}\includegraphics[width=0.486\textwidth]{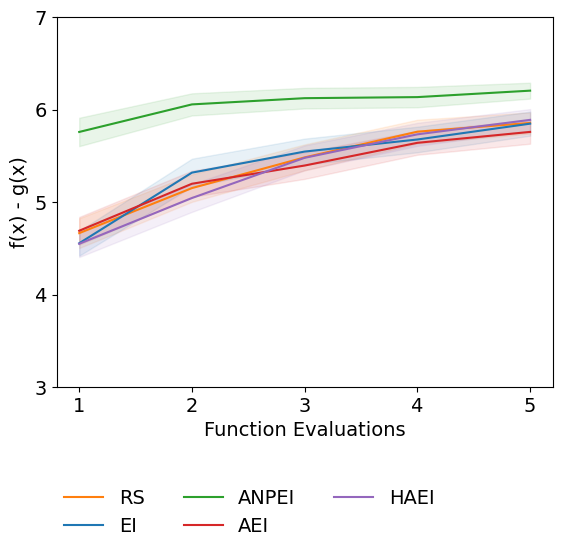}}
\subfigure[Lowest Aleatoric Noise Found so Far ]{\label{fig:bo_2}\includegraphics[width=0.501\textwidth]{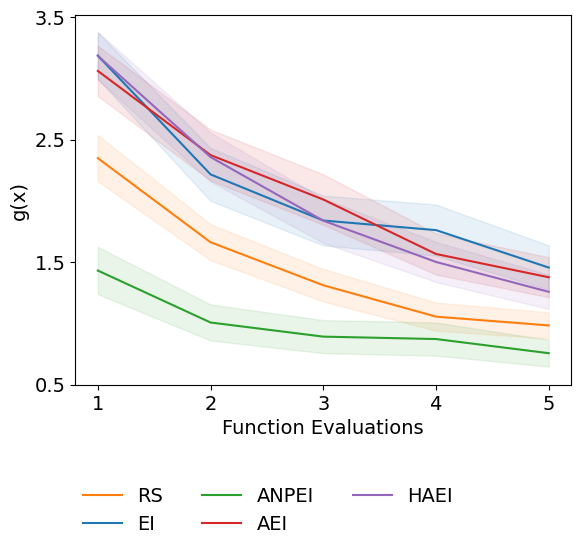}}
\caption{Comparison of heteroscedastic and homoscedastic \textsc{bo} on the sin wave problem. (a) shows the optimisation of $h(x) = f(x) - g(x)$ (higher is better) whereas (b) shows the values $g(x)$ obtained over the course of the optimisation of $h(x)$. This latter plot demonstrates the propensity of ANPEI to seek low aleatoric noise solutions.}
\label{fig:sin_bayesopt}
\end{figure*}

\subsection{Heteroscedastic Branin-Hoo Function}\label{het_bran_section}

In the second experiment the objective

\begin{align*}
    h(\boldsymbol{x}) = f(\boldsymbol{x}) + g(\boldsymbol{x})
\end{align*}

\noindent is considered with an additive penalty because the task is a minimisation problem and an $\alpha$ setting of $0.5$ for equal-weight objectives.

\begin{align}
\label{branin_eq}
f(\boldsymbol{x})=\frac{1}{51.95}\left[\left(\bar{x}_{2}-\frac{5.1 \bar{x}_{1}^{2}}{4 \pi^{2}}+\frac{5 \bar{x}_{1}}{\pi}-6\right)^{2}+\left(10-\frac{10}{8 \pi}\right) \cos \left(\bar{x}_{1}\right)-44.81\right]
\end{align}

\noindent with $\bar{x}_{1} = 15x_1 - 5$, $\bar{x}_{2} = 15x_2$ and $\boldsymbol{x} = (x_1, x_2)$ is the standardised Branin-Hoo function introduced in \cite{2013_Picheny}. The noise function $g(\boldsymbol{x})$ is in this instance

\begin{align}
\label{branin_noise_eq}
    g(\boldsymbol{x}) = 15 - 8x_{1} + 8x_{2}^2.
\end{align}

\noindent Samples are again generated according to

\begin{align*}
    y_i = f(\boldsymbol{x}_i) + g(\boldsymbol{x}_i)\epsilon, \: \: \: \: \: \:\epsilon \sim \mathcal{N}(0, 1).
\end{align*}

The problem setup is shown in \autoref{fig:branin} and the performance of all surrogate model/acquisition function pairs is depicted in \autoref{fig:branin_bayesopt}. The gulf in performance between the heteroscedastic and homoscedastic surrogate models is more pronounced in this case because the noise function is more severe relative to the sin wave problem.

\begin{figure*}
\centering
\subfigure[Latent Function]{\label{fig:branin_1}\includegraphics[width=0.32\textwidth]{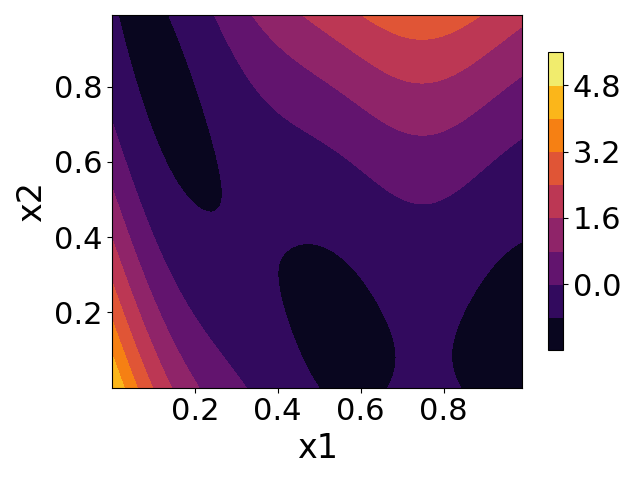}}
\subfigure[Non-linear Noise Function]{\label{fig:branin_2}\includegraphics[width=0.32\textwidth]{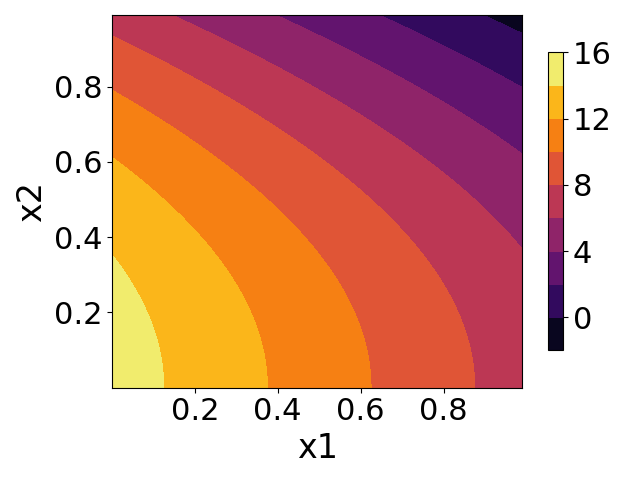}}
\subfigure[Objective Function]{\label{fig:branin_3}\includegraphics[width=0.32\textwidth]{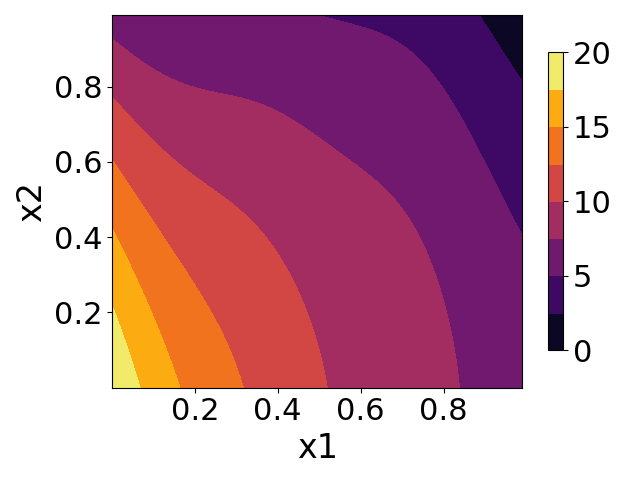}} 
\caption{Branin-Hoo Optimisation Problem. The latent function in a) is corrupted by heteroscedastic Gaussian noise function according to the function in b) The combined objective function is given in c) and is obtained by summing the functions in a) and b). The sum is required to penalise regions of large aleatoric noise because the objective is being minimised.}
\label{fig:branin}
\end{figure*}

\begin{figure*}
\centering
\subfigure[Best Objective Value Found so Far]{\label{fig:bo_1_branin}\includegraphics[width=0.49\textwidth]{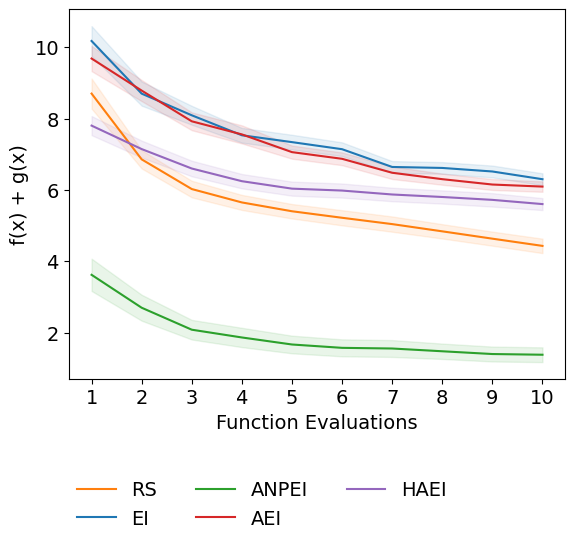}}
\subfigure[Lowest Aleatoric Noise Found so Far ]{\label{fig:bo_2_branin}\includegraphics[width=0.49\textwidth]{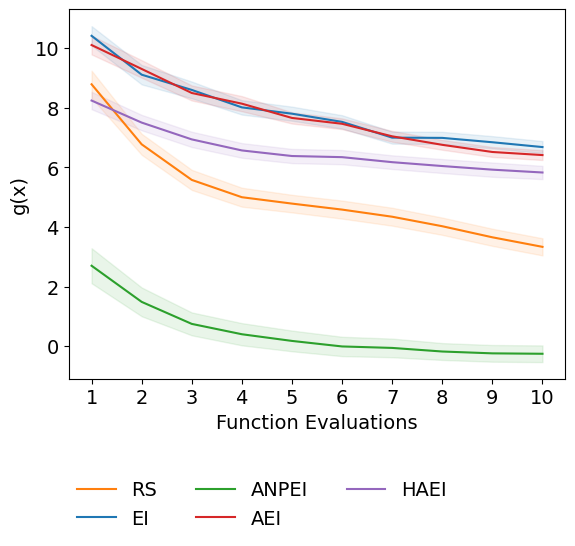}}
\caption{Comparison of heteroscedastic and homoscedastic \textsc{bo} on the Branin-Hoo problem. (a) shows the optimisation of $h(\boldsymbol{x}) = f(\boldsymbol{x}) + g(\boldsymbol{x})$ (lower is better) whereas (b) shows the values $g(\boldsymbol{x})$ obtained over the course of the optimisation of $h(\boldsymbol{x})$.}
\label{fig:branin_bayesopt}
\end{figure*}

\subsection{Soil Phosphorus Fraction Optimisation}

In this experiment the optimisation of the phosphorus fraction of soil is considered. Soil phosphorus is an essential nutrient for plant growth and is widely used as a fertiliser in agriculture. While the amount of arable land worldwide is declining, global population is expanding concomitantly with food demand. As such, understanding the availability of plant nutrients that increase crop yield is a topic worthy of attention. To this end, \citep{houGlobalDatasetPlant2018} have curated a dataset on soil phosphorus, relating phosphorus content to variables such as soil particle size, total nitrogen, organic carbon, and bulk density. The relationship between bulk soil density and the phosphorus fraction was chosen for study, the goal being to minimise the phosphorus content of soil subject to heteroscedastic noise. In lieu of performing a formal test for heteroscedasticity, evidence is provided that there is heteroscedasticity in the dataset by comparing the fits of a homoscedastic \textsc{gp} and the \textsc{mlhgp} in \autoref{fig:soil} and a predictive performance comparison based on negative log predictive density values is provided in Appendix~\ref{soil_het_demo}. 

In this problem, there is no access to a continuous-valued black-box function or a ground truth noise function. As such, the surrogate models were initialised with a subset of the data and the query locations selected by \textsc{bo} were mapped to the closest datapoints in the heldout data. The following kernel smoothing procedure was used to generate pseudo ground-truth noise values:

\begin{enumerate}[label=(\arabic*)]
    \item Fit a homoscedastic \textsc{gp} to the full dataset.
    \item At each point $x_i$, compute the corresponding squared error $s_i^2 = (y_i - \mu(x_i))^2$.
    \item Estimate variances by computing a moving average of the squared errors, where the relative weight of each $s_i^2$ is assigned with a Gaussian kernel.
\end{enumerate}

\noindent The performances of heteroscedastic and homoscedastic \textsc{bo} are compared in \autoref{fig:soil_bayesopt}. Given that regions of low phosphorus fraction coincide with regions of small aleatoric noise, an $\alpha$ value of $\frac{1}{6}$ was applied to the composite objective $h(x)$ to admit a finer granularity for distinguishing between degrees of low aleatoric noise in the solutions.

\begin{figure*}
\centering
\subfigure[Best Objective Value Found so Far]{\label{fig:bo_1_soil}\includegraphics[width=0.50\textwidth]{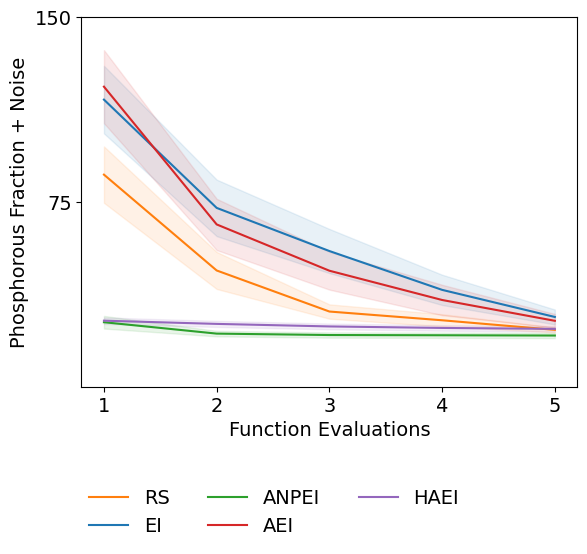}}
\subfigure[Lowest Aleatoric Noise Found so Far ]{\label{fig:bo_2_soil}\includegraphics[width=0.488\textwidth]{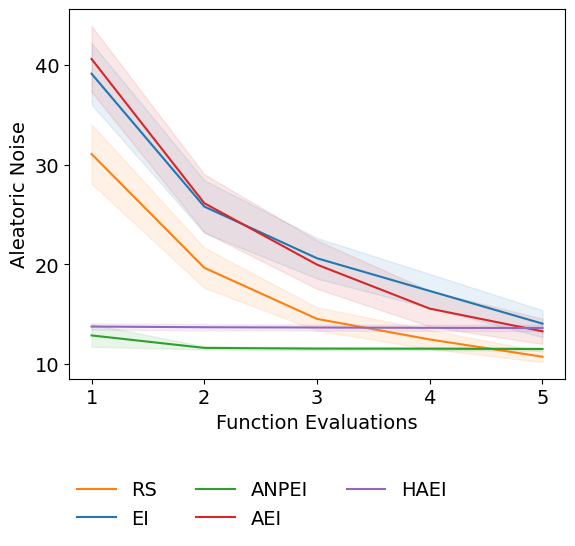}}
\caption{Comparison of heteroscedastic and homoscedastic \textsc{bo} on the soil phosphorus fraction optimisation problem. (a) shows the optimisation of $h(x) = f(x) + g(x)$ (lower is better) where $x$ is the dry bulk density of the soil. (b) shows the values $g(x)$ obtained over the course of the optimisation of $h(x)$.}
\label{fig:soil_bayesopt}
\end{figure*}

\subsection{Molecular Hydration Free Energy Optimisation}

A retrospective virtual screening experiment was performed with the aim of identifying molecules with favourable hydration free energy, a property important in determining the binding affinity of a drug candidate. Experiments were performed with an initialisation of 129 out of the 642 molecules in the FreeSolv dataset \citep{2014_Mobley, 2017_Duarte} over 10 iterations of data collection. Unlike the soil phosphorus fraction dataset, ground truth measurement error (aleatoric noise $g(\mathbf{x})$) values are available for the FreeSolv dataset. The remaining 513 molecules were reserved as a heldout set where at each iteration of data collection one of the heldout molecules was selected. Chemical fragments computed using RDKit \citep{rdkit} were used as the molecular representation based on the fact that these global features, unlike local Morgan fingerprints, act as good predictors of the hydration free energy. The fragment features were projected down to 14 components using principal component analysis, retaining more than 90\% of the variance on average across random trials. The results are shown in \autoref{fig:freesolv_bayesopt}. Compared to previous experiments, the noise is smaller in this instance relative to the magnitude of the hydration free energy (signal-to-noise ratio of approximately 10) and as such, the heteroscedastic modelling problem is more difficult, leading to only very marginal gains in obtaining low noise solutions. While ANPEI obtains the lowest objective function value over the \textsc{bo} trace, the results are unlikely to be statistically significant according to the standard error bands.

\begin{figure*}
\centering
\subfigure[Best Objective Value Found so Far]{\label{fig:bo_freesolv}\includegraphics[width=0.498\textwidth]{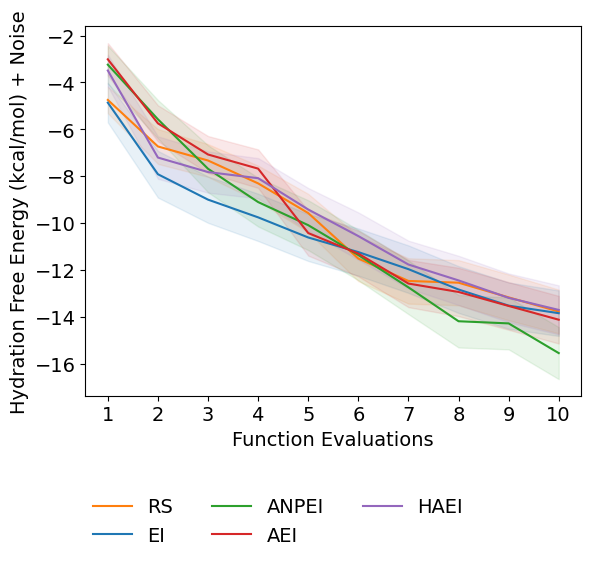}}
\subfigure[Lowest Aleatoric Noise Found so Far ]{\label{fig:bo_2_freesolv}\includegraphics[width=0.49\textwidth]{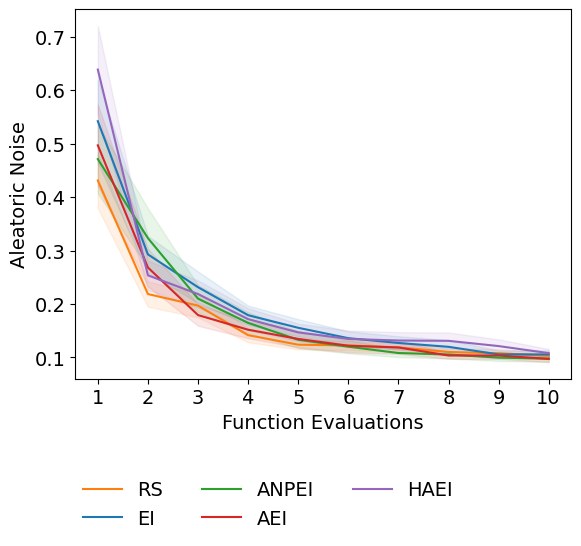}}
\caption{Comparison of heteroscedastic and homoscedastic \textsc{bo} on the FreeSolv hydration free energy optimisation problem. (a) shows the optimisation of $h(\boldsymbol{x}) = f(\boldsymbol{x}) + g(\boldsymbol{x})$ (lower is better) where $\boldsymbol{x}$ is the fragment set of molecular descriptors, $f(\boldsymbol{x})$ is the hydration free energy and $g(\boldsymbol{x})$ is the aleatoric noise. (b) shows the values $g(\boldsymbol{x})$ obtained over the course of the optimisation of $h(\boldsymbol{x})$.}
\label{fig:freesolv_bayesopt}
\end{figure*}

\subsection{Heteroscedastic Acquisition Function Hyperparameters}

The $\beta$ hyperparameter of ANPEI in \autoref{eq:anpei_equation} and the $\gamma$ hyperparameter of HAEI in \autoref{eq:13} are designed to modulate the avoidance of aleatoric noise in the acquisitions. In \autoref{fig:weight_comparison} we offer some intuition as to the effect of various settings of $\beta$ and $\gamma$ by examining the heteroscedastic Branin-Hoo function introduced in \autoref{het_bran_section}. The results demonstrate that the performance of the algorithms is strongly dependent on the setting of the $\beta$ hyperparameter for ANPEI whereas $\gamma$ is less influential on the performance of HAEI. It is worth noting that in Figure~\autoref{fig:bo_2_branin_hyper} if too large a value of $\gamma$ is chosen the principal objective $f(\mathbf{x})$ may be compromised through overly aggressive avoidance of aleatoric noise. In practice choosing the value of $\beta$ in line with the value of the evaluation criterion parameter $\alpha$ in \autoref{example2} is likely to be a sensible approach i.e. if the noise objective is more important relative to the principal objective by a factor of 10 then the value of $\beta$ should be $\frac{1}{11}$.

\begin{figure*}
\centering
\subfigure[ANPEI]{\label{fig:bo_branin_hyper}\includegraphics[width=0.49\textwidth]{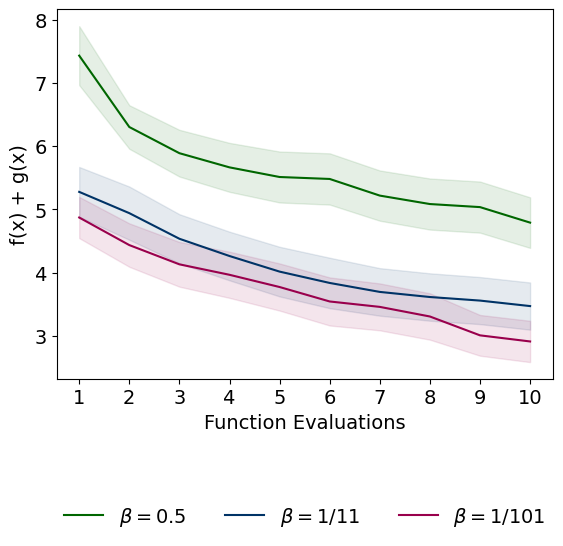}}
\subfigure[HAEI]{\label{fig:bo_2_branin_hyper}\includegraphics[width=0.49\textwidth]{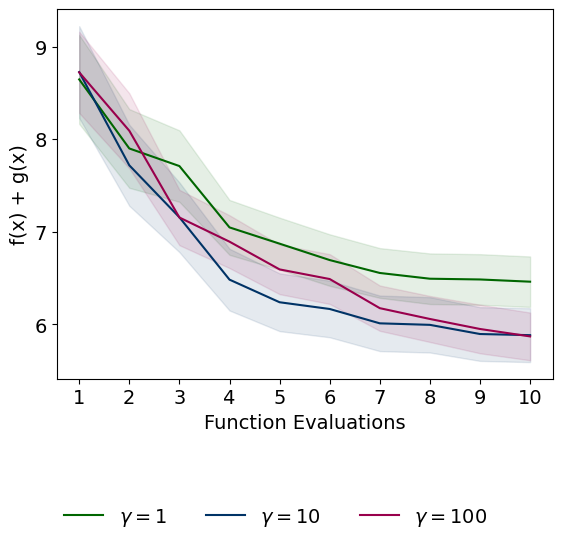}}
\caption{Performance of ANPEI and HAEI plotted for different values of the $\beta$ and $\gamma$ hyperparameters respectively. \textit{Smaller} values of $\beta$ encourage avoidance of regions of high aleatoric noise whilst \textit{larger} values of $\gamma$ encourage avoidance of regions of high aleatoric noise.}
\label{fig:weight_comparison}
\end{figure*}

\subsection{Summary of Robustness Experiments}

The experiments of this section provide strong evidence that modelling heteroscedasticity in \textsc{bo} is a useful approach for problems in which there is a strong degree of aleatoric noise present. The ANPEI acquisition tends to outperform HAEI on the majority of the tasks where there is a small degree of aleatoric noise whilst the acquisitions are more evenly matched when the extent of the aleatoric noise is high. The outstanding questions for these methods however, is how well they perform on tasks where heteroscedastic noise is not present. Such a situation may easily arise for real-world problems where the noise properties of the tasks are a priori unknown and as such, it is important to ascertain whether there is a deleterious effect on performance in noiseless and homoscedastic noise settings.

\section{Ablation Study}\label{first_ablation}

In this section an ablation study is performed where components of the ablation constitute different noise properties. The noiseless case is examined as a base task before adding first a homoscedastic noise component, and second, a heteroscedastic noise component. Additionally, The effect of the size of the initialisation grid on performance is examined in the heteroscedastic noise tasks.

\subsection{Noise Properties}

The ablation study makes use of three synthetic optimisation functions: The Branin-Hoo function, the Hosaki function and the Goldstein-Price function. The form of the Branin-Hoo function is the same standardised Branin-Hoo function introduced in \autoref{branin_eq} with heteroscedastic noise function given in \autoref{branin_noise_eq}. The Hosaki function, defined on the domain $x_1, x_2 \in [0, 5]$, is

\begin{equation}
\text{Hosaki}(x_1, x_2) = \Big(1 - 8x_1 + 7{x_1}^2 - \frac{7}{3} {x_1}^3 + \frac{1}{4} {x_1}^4\Big) {x_2}^2 \exp(-x_2).
\end{equation}

\noindent To facilitate the \textsc{gp} fit, the Hosaki function is subsequently standardised by its mean (0.817) and standard deviation (0.573). The noise function is

\begin{equation}\label{eq:hos_noise}
g_{\text{Hosaki}}(x_1, x_2) = 50 \cdot \frac{1}{(x_1 - 3.5)^2 + 2.5}\cdot \frac{1}{(x_2 - 2)^2 + 2.5}.
\end{equation}

\noindent The logarithmic Goldstein-Price function \cite{2013_Picheny} is

\begin{align*}
    \text{G-P}(x_1, x_2) = \frac{1}{2.427} \Bigg[\log \Big([1 + {(\bar{x}_{1} + \bar{x}_{2} + 1)}^2 (19 - 14\bar{x}_{1} + 3{\bar{x}_{1}}^2 - 14\bar{x}_{2} + 6\bar{x}_{1} \bar{x}_{2} + 3{\bar{x}_{2}}^2)] \\ [30 + {(2\bar{x}_{1} - 3\bar{x}_{2})}^2 (18 - 32\bar{x}_{1} + 12{\bar{x}_{1}}^2 + 48\bar{x}_{2} - 36\bar{x}_{1} \bar{x}_{2} + 27{\bar{x}_{2}}^2)]\Big) - 8.693\Bigg],
\end{align*}

\noindent where $\bar{x}_1 = 4x_1 - 2 $ and $\bar{x}_2 = 4x_2 - 2 $. The Goldstein-Price noise function is

\begin{equation}\label{eq_g-p_noise}
   g_{\text{G-P}}(x_1, x_2) = \frac{3}{2} \cdot \frac{1}{(x_1 - 0.5)^2 + 0.2} \cdot \frac{1}{(x_2 - 0.3)^2 + 0.3}.
\end{equation}

\noindent For clarity, only the results of the Hosaki function are presented in this chapter with the Branin-Hoo and Goldstein-Price results presented in Appendix~\ref{more_ablation}. The Hosaki function is visualised in \autoref{fig:hos_diagram}. The value of $\beta$ for ANPEI is set to 0.5 and the value of $\gamma$ is set to 500 for all Hosaki function experiments.

\begin{figure*}
\centering
\subfigure[Latent Function]{\label{fig:hos_1}\includegraphics[width=0.32\textwidth]{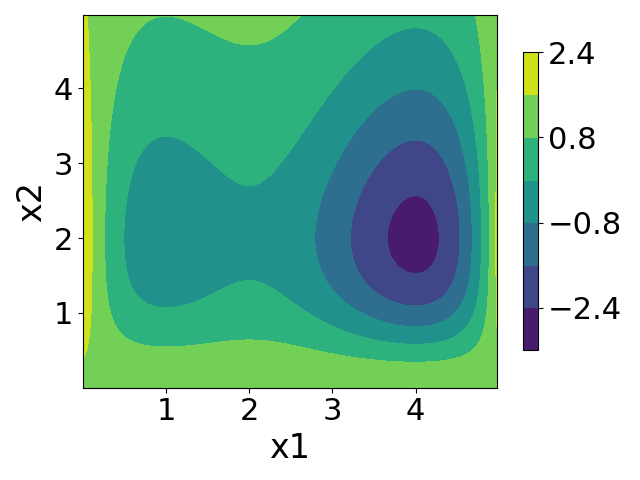}}
\subfigure[Noise Function ]{\label{fig:hos_2}\includegraphics[width=0.32\textwidth]{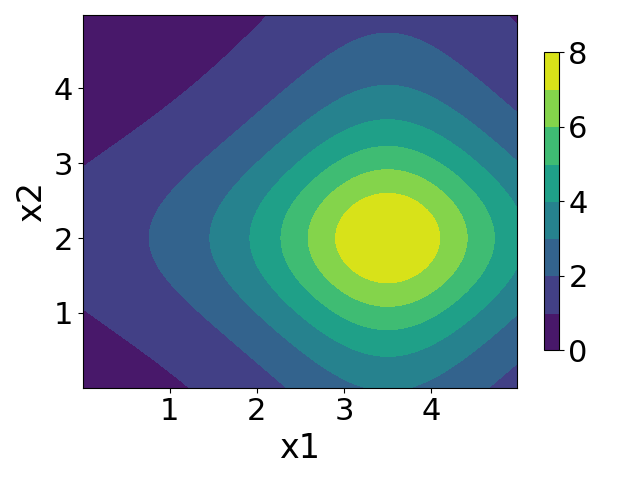}}
\subfigure[Objective Function]{\label{fig:hos_3}\includegraphics[width=0.32\textwidth]{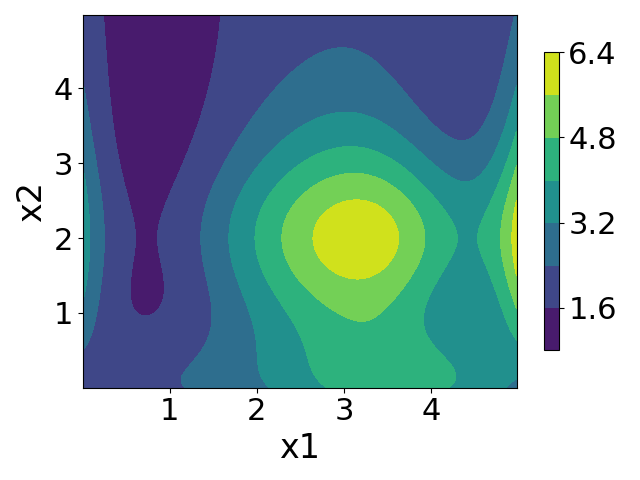}} 
\caption{(a) The latent Hosaki Function $f(\mathbf{x})$ together with (b) its heteroscedastic noise function $g(\mathbf{x})$ and (c) the objective function $f(\mathbf{x}) + g(\mathbf{x})$.}
\label{fig:hos_diagram}
\end{figure*}

\subsubsection{Noiseless Case}

In this case, the synthetic functions do not possess any observation noise and the optimisation function corresponds to the situation in Figure~\autoref{fig:hos_1}. 9 points sampled uniformly at random are used for initialisation and the results are displayed in \autoref{fig: noiseless_hos}. As expected, all \textsc{bo} methods outperform random search in the noiseless case. In this example it is unclear as to whether heteroscedastic \textsc{bo} methods are detrimental as HAEI performs best whereas ANPEI performs worst.

\begin{figure*}[t]
\centering
    \includegraphics[width=.7\textwidth]{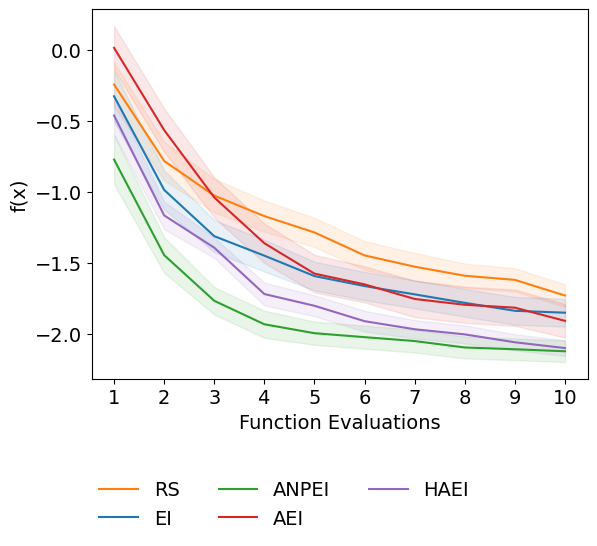}
    \caption{Hosaki function noiseless case. All \textsc{bo} methods outperform random search. HAEI performs best and ANPEI performs worst.}
    \label{fig: noiseless_hos}
\end{figure*}

\subsubsection{Homoscedastic Noise Case}

In this case the functions are subject to homoscedastic noise of the form $25\epsilon$, where epsilon is noise sampled from a standard Gaussian $\mathcal{N}(0, 1)$. The \textsc{gp} surrogates are again initialised with 9 points. The results are displayed in \autoref{homo_hos}. The \textsc{bo} methods perform worse in the homoscedastic noise case relative to the noiseless case although the rank order of the methods mirrors that of the noiseless case.

\begin{figure*}[t]
\centering
    \includegraphics[width=.7\textwidth]{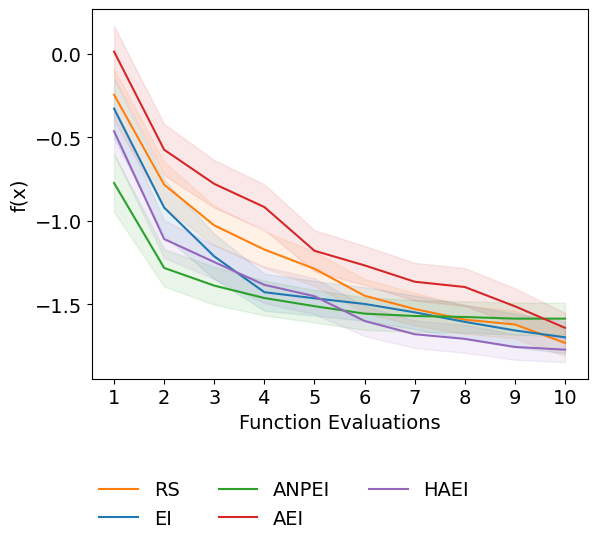}
    \caption{Hosaki function homoscedastic noise case. All \textsc{bo} methods outperform random search with HAEI the best and ANPEI the worst.}
    \label{homo_hos}
\end{figure*}

\subsubsection{Heteroscedastic Noise}

In the heteroscedastic noise case the Hosaki function is subject to the noise function given in \autoref{eq:hos_noise} and is visualised in \autoref{fig:hos_diagram}. 144 points were used to initialise the \textsc{gp} surrogates. The results are shown in \autoref{fig:hosaki_hetero}. In this instance, given that the extent of heteroscedastic noise is very strong (relative to the homoscedastic noise case), random search is highly competitive with the \textsc{bo} methods. ANPEI however, is the best-performing algorithm. The large number of initialisation points chosen for this experiment reflects one limitation of the heteroscedastic surrogate approach; for the \textsc{mlhgp} to effectively learn a decomposition of the function into signal and noise components it needs access to more samples. As such, this merits an investigation into the effect of the number of samples on the performance of the heteroscedastic acquisitions.

\begin{figure*}
\centering
\subfigure[Best Objective Value Found so Far]{\label{fig:bo_hosa}\includegraphics[width=0.498\textwidth]{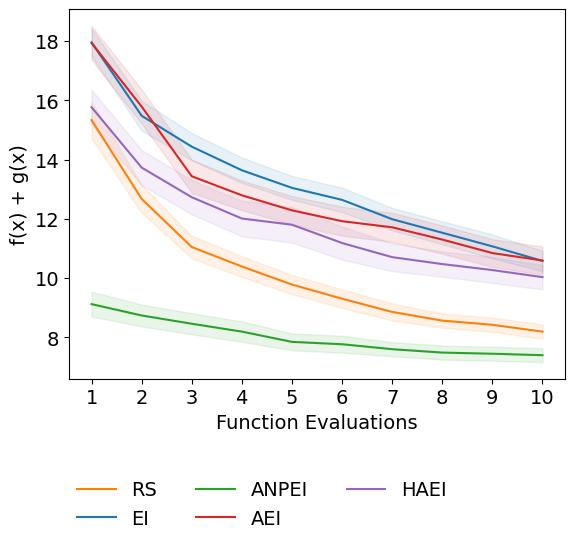}}
\subfigure[Lowest Aleatoric Noise Found so Far ]{\label{fig:bo_2_hosa}\includegraphics[width=0.482\textwidth]{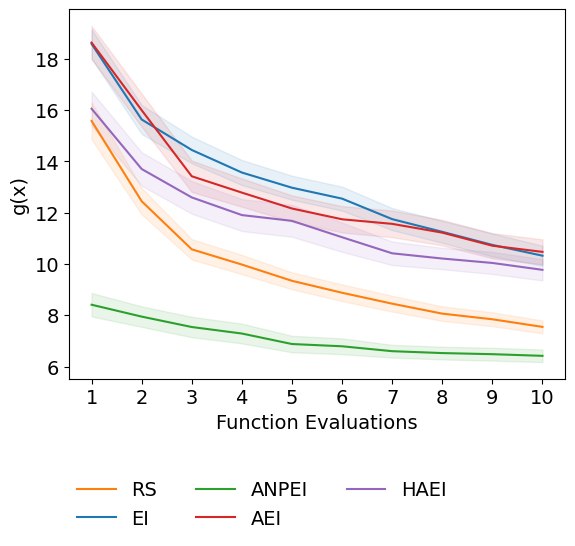}}
\caption{Comparison of heteroscedastic and homoscedastic \textsc{bo} on the heteroscedastic 2D Hosaki function. (a) shows the optimisation of $h(\boldsymbol{x}) = f(\boldsymbol{x}) + g(\boldsymbol{x})$ (lower is better) where $g(\boldsymbol{x})$ is the aleatoric noise. (b) shows the values $g(\boldsymbol{x})$ obtained over the course of the optimisation of $h(\boldsymbol{x})$.}
\label{fig:hosaki_hetero}
\end{figure*}

\subsection{Initialisation Set Size}

The effect of the size of the initialisation set on the heteroscedastic Branin-Hoo task is shown in \autoref{fig:init_grid}. The value of $\beta$ used for ANPEI is $\frac{1}{11}$ and the value of $\gamma$ used for HAEI is 500. The performance of the heteroscedastic acquisitions ANPEI and HAEI is observed to improve as the size of the initialisation set increases. In contrast, the homoscedastic methods EI and AEI do not improve on obtaining access to more samples as they are unable to model the heteroscedastic noise component of the task.

\begin{figure*}
\centering
\subfigure[9 Points]{\label{fig:init_grid_1}\includegraphics[width=0.315\textwidth]{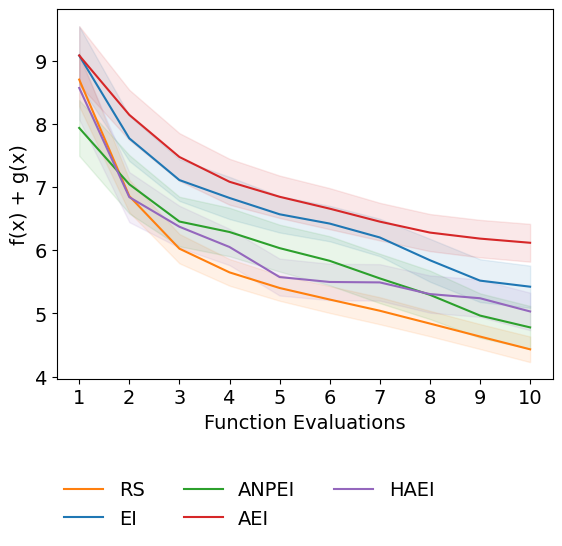}}
\subfigure[49 Points]{\label{fig:init_grid_2}\includegraphics[width=0.32\textwidth]{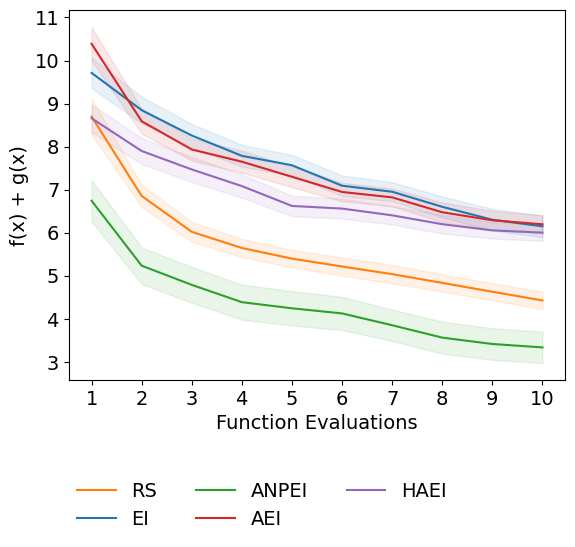}}
\subfigure[100 Points]{\label{fig:init_grid_3}\includegraphics[width=0.32\textwidth]{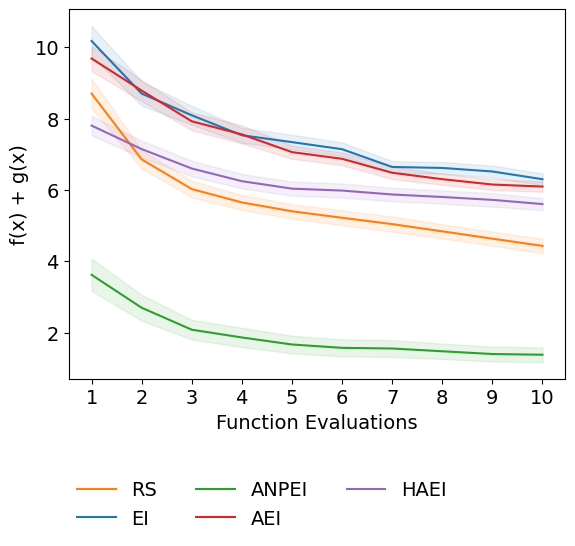}} 
\caption{The effect of the initialisation set size on the heteroscedastic Branin-Hoo function. The performance of heteroscedastic acquisitions ANPEI and HAEI increases as they are given access to more samples. An excess of samples do not help the homoscedastic \textsc{bo} methods as they are unable to model the heteroscedastic noise component.}
\label{fig:init_grid}
\end{figure*}

\subsection{Summary of Ablation Experiments}

Synthesising the results from the additional ablation experiments in Appendix~\ref{more_ablation} some trends may be observed:

\begin{enumerate}
    \item All \textsc{bo} methods outperform random search in the noiseless case and homoscedastic noise cases on aggregate, across the three synthetic functions.
    \item On aggregate, there is no significant difference between \textsc{bo} methods in the noiseless or homoscedastic noise cases (HAEI marginally outperforms ANPEI on 2/3 of the noiseless tasks and 2/3 of the homoscedastic noise tasks).
    \item The heteroscedastic acquisitions ANPEI and HAEI perform competitively on the noiseless and homoscedastic noise tasks, most likely because the \textsc{mlhgp} is capable of effecting nonstationary behaviour by "fantasising" heteroscedastic noise. As such, the \textsc{mlhgp} surrogate may be achieving enhanced flexibility relative to the homoscedastic \textsc{gp} in this setting.
    \item The heteroscedastic acquisitions tend to outperform other \textsc{bo} approaches on the heteroscedastic noise tasks although crucially this depends on the size of the initialisation set. In order to detect heteroscedastic noise, the \textsc{mlhgp} surrogate needs access to more samples relative to the noiseless and homoscedastic cases.
    \item ANPEI outperforms HAEI.
\end{enumerate}

In summary, the experiments would appear to show that there is no significant downside to employing a heteroscedastic surrogate and acquisition function on noiseless tasks or tasks with homoscedastic noise save for the increased training time for the model.

\section{Conclusions}
\label{het_conc}

This chapter has presented an approach for performing \textsc{bo} with the explicit goal of minimising aleatoric noise in the suggestions. It is posited that such an approach can prove useful for the natural sciences in the search for molecules and materials that are robust to experimental measurement noise. The synthetic function ablation study highlights no particular downside to the use of the \textsc{mlhgp} in conjunction with ANPEI or HAEI in cases where the noise structure of the problem is a priori unknown i.e. the black-box optimisation problem is either noiseless or homoscedastic. Nonetheless, it is anticipated that this type of approach may be particularly relevant for the experimental natural sciences where noiseless objectives or those with homoscedastic noise are highly uncommon. In terms of concrete recommendations on when to apply the algorithm, the best performance is foreseen in situations where the user has access to a moderately-sized initialisation set in order to provide the \textsc{mlhgp} with enough samples to distinguish heteroscedastic noise from intrinsic function variability. There are a number of possible extensions to the current approach which may facilitate its application to high-dimensional datasets and act as promising avenues for future work:

\begin{enumerate}[label=(\arabic*)]
    \item \textbf{Surrogate Model:} One disadvantage of the \textsc{mlhgp} model is the lack of convergence guarantees for the EM-like procedure required for fitting. Various other forms of heteroscedastic \textsc{gp} exist \citep{2005_Le, 2018_Binois, 2017_Almosallam, 2011_Munoz, 2017_Wangb, 2012_Wang, 2020_Zhang} and have demonstrated success in modelling applications \citep{2018_Rodrigues, 2018_Tabor, 2020_Rogers, 2019_Wang}. Of particular interest for real-world problems are scalable heteroscedastic \textsc{gp}s \citep{2019_Wang_gp, 2020_Liu} which could circumvent the computationally-intensive bottleneck of fitting multiple exact \textsc{gp}s as a subroutine of the \textsc{mlhgp} \textsc{bo} procedure.
    \item \textbf{Advances in Surrogate Model Machinery}: Advances in areas such as efficient sampling of \textsc{gp}s \citep{2020_Wilson} are liable to yield improvements to sampled-based acquisition functions such as Thompson sampling \citep{1933_Thompson}, while fully Bayesian approaches to hyperparameter estimation for sparse \textsc{gp}s \citep{2019_Lalchand} are liable to yield improvements in model fitting procedures.
    \item \textbf{Scalable BO:} Scalable \textsc{bo} can also be enabled via dimensionality reduction techniques \citep{2020_Moriconi, 2019_Candelieri, 2021_Grosnit}. Such approaches, when combined with efficient libraries \citep{2019_Balandat, 2019_dragonfly} could facilitate heteroscedastic \textsc{bo} in high-dimensional settings.
    \item \textbf{Acquisition Function Optimisation:} Recent developments in acquisition function optimisation including Monte Carlo reformulations \citep{2018_Wilson, 2020_Grosnit}, compositional optimisers \citep{2020_Tutunov, 2020_Grosnit}, and tight relaxations \citep{2020_Schweidtmann} of common acquisition functions have the potential to yield gains in empirical performance.
    \item \textbf{Data Transformation:} Input-warping \citep{2020_Wiebe} and output transformations \citep{2020_Rivers} have recently shown success when working with heteroscedastic datasets.
    \item \textbf{Exploration in the Noise Objective:} Incorporating exploration in the noise objective in the multi-objective setting as in \citet{2013_Kuindersma}.
\end{enumerate}

\noindent Lastly, a further use-case of the machinery developed in this paper is obtained by turning the noise minimisation problem into a noise maximisation problem. As an example, in materials discovery, we may derive benefit from being antifragile \citep{2012_taleb} towards (i.e. derive benefit from) high aleatoric noise. In an application such as the search for performant perovskite solar cells, we are faced with an extremely large compositional space, with millions of potential candidates possessing high aleatoric noise for identical reproductions \citep{2019_Zhou}. In this instance it may be desirable to guide search towards a candidate possessing a high photoluminescence quantum efficiency with high aleatoric noise. If the cost of repeating material syntheses is small relative to the cost of the search, the large aleatoric noise will present opportunities to synthesise materials possessing efficiencies far in excess of their mean values.

\nomenclature[Z-MD]{MD}{Molecular Dynamics}
\nomenclature[Z-Het-GP]{Het-GP}{Heteroscedastic Gaussian Process}
\nomenclature[Z-AEI]{AEI}{Augmented Expected Improvement}
\nomenclature[Z-ANPEI]{ANPEI}{Aleatoric Noise-Penalised Expected Improvement}
\nomenclature[Z-MCMC]{MCMC}{Markov Chain Monte Carlo}
\nomenclature[Z-ARD]{ARD}{Automatic Relevance Determination}
\nomenclature[Z-SGP]{SGP}{Sparse Gaussian Process}
\nomenclature[Z-RMSLE]{RMSLE}{Root Mean Square Log Error} %
\chapter{Conclusion}

\ifpdf
    \graphicspath{{Conclusion/Figs/Raster/}{Conclusion/Figs/PDF/}{Chapter3/Figs/}}
\else
    \graphicspath{{Conclusion/Figs/Vector/}{Conclusion/Figs/}}
\fi

\section{Summary of Contributions}

The goals of this thesis were first, to examine new use-cases for the existing \textsc{gp} framework in modelling scientific data and second, to extend current \textsc{gp}
methodology and software implementations to tackle a broader range of scientific modelling problems. Below, the chapter-by-chapter contributions are summarised with particular attention given to derivative works that have used or built on ideas and results introduced in the papers authored as part of the thesis.

\begin{itemize}
    \item In Chapter 3, \textsc{gp}s are used to infer the latent lightcurves of the Seyfert galaxy Mrk 335. Bayesian model selection is used to quantitatively compare choices of the \textsc{gp} kernel and the efficacy of the \textsc{gp} model is assessed via simulation. \textsc{gp} modelling of the observational data from Mrk 335 together with cross-correlation analysis provides weak evidence for a lag feature at high frequency with potential implications for the development of future accretion disk theories. Bayesian model selection over kernels is also employed in \cite{2022_Covino}, where the authors use \textsc{gp}s to detect periodicities in the quasar SDSS J025214.67-002813.7. The authors suggest that the rational quadratic kernel may outperform the squared exponential in their application due to its ability to pick up meaningful correlations for very long lags. 
    
    In \cite{2022_Lewin} the authors again use Bayesian model selection over kernels for X-ray reverberation mapping of the Seyfert galaxy Ark-564. The authors observe, similarly to \cite{2021_Mrk}, that as a component of time lag modelling, the rational quadratic and Matérn kernels outperform the squared exponential kernel in the marginal likelihood metric. Through their analysis, the authors constrain the fundamental black hole parameters and make a case for the use of \textsc{gp} models in the development of future theoretical reverberation models. 
    
    Lastly, in \cite{2022_Cackett} the authors conduct a frequently-resolved lag analysis of the AGN lightcurves across the full UV/optical range, obtaining results that are consistent with \cite{2021_Mrk} for the Seyfert galaxy NGC 5548. This work highlights that \textsc{gp} approaches are already being used as a point of comparison for alternative statistical inference methods.
    
    \item In Chapter 4, the software library GAUCHE  is introduced which extends the \textsc{gp} framework to operate on molecular and chemical reaction representations (\href{https://leojklarner.github.io/gauche/}{https://leojklarner.github.io/gauche/}). Specifically, the library provides support for graph, string and bit vector representations of molecules and reactions. GAUCHE subsumes an earlier TensorFlow version of the library, FlowMO by the same authors, \citep{2020_flowmo}, available at \href{https://github.com/Ryan-Rhys/FlowMO}{https://github.com/Ryan-Rhys/FlowMO}, which contains the first open-source implementation of the \textsc{gp}-Tanimoto model combination used in Chapter 5 as well as a wrapper around the GraKel library \citep{2020_GraKel} which, for the first time, makes a wide range of graph kernels available to be used in conjunction with \textsc{gp}s in a modern machine learning framework supporting GPU acceleration.
    
    \cite{2021_Broccard} performs an extensive analysis of the introduced \textsc{gp}-Tanimoto model in the context of \textsc{gp} regression and \textsc{bo}. The author proves the positive-definiteness of the generalised Tanimoto kernel and conducts experiments on the photoswitch dataset introduced in Chapter 5, showing that the \textsc{gp}-Tanimoto model outperforms popular kernels such as the squared exponential and Matérn kernels, likely due to the fact that it possesses only a single hyperparameter (the signal amplitude) and hence optimisation is more stable.
    
    In \cite{2021_Deshwal}, the authors leverage molecular kernels and data processing functionality in \cite{2020_flowmo} to enhance molecule generation architectures featuring deep generative models.
    
    In \cite{2022_Rankovic}, the authors use GAUCHE to perform \textsc{bo} for reaction screening, paying particular attention to optimisation over the space of reaction additives.
    
    \item In Chapter 5, the tools made available in GAUCHE, namely the \textsc{gp}-Tanimoto model used as a component of a \textsc{mogp}, are put to use in discovering novel photoswitch molecules. The model is trained on a curated dataset of photoswitch molecules and is subsequently used to screen a set of candidates satisfying a pre-specified set of performance criteria related to photoswitch use in light-emitting diodes (LEDs). 
    
    In \cite{2021_Mukadum}, the authors introduce an approach for active learning to prioritise molecules for comparatively expensive DFT calculations. The authors make use of similar visualisation techniques for photoswitch space and arrive at similar conclusions regarding the appropriateness of different molecular representations for wavelength prediction.
    
    \item In Chapter 6, the \textsc{bo} framework is extended to incorporate penalisation of experimental measurement noise (heteroscedastic noise). The experiments show that the methodology requires a larger initialisation set relative to standard \textsc{bo} in order to fully enable the desired noise penalisation.
    
    In \cite{2021_Makarova}, the authors introduce a complementary model to \cite{2021_Griffiths} which operates by repeating measurements at the same input location in order to obtain noise estimates. The model of \cite{2021_Makarova} is likely to be useful in scenarios where the cost of repeating noisy measurements is cheap relative to querying a new input location whereas the model introduced in \cite{2021_Griffiths} is likely to be useful if repeat measurements are as expensive as measurements at a different location.
\end{itemize}

\section{Future Work}

There is a long history of \textsc{gp}s being used to model scientific data with the first recorded instance being astronomer T.N. Thiele using \textsc{gp}s for time series analysis in 1880 \citep{1981_Lauritzen}. However, unlike deep learning, some of the more recent advances in \textsc{gp} and \textsc{bo} machinery \citep{2022_Garnett} have yet to be ported to the natural sciences. As an example, although the strengths of \textsc{gp} modelling for astrophysical time lag analysis were realised as early as 1992, there was no knowledge of an automated mechanism for learning the kernel hyperparameters through marginal likelihood optimisation and so it was necessary to specify hyperparameters by hand, significantly complicating the fitting procedure \citep{1992_Rybicki}. Similarly, there is anecdotal evidence that cheminformaticians have been known to abandon \textsc{gp} models due to Cholesky decomposition errors when training \textsc{gp}s on molecular representations with standard kernels defined over continuous input spaces. Although the range of potential applications for \textsc{gp}s in the sciences is vast, ranging from genetics  \citep{2011_Kalaitzis, 2021_Bintayyash} to protein modelling \citep{2022_Hie}, some avenues of future work relevant for astrophysics and chemistry, the applications considered in this thesis, are given below together with suggestions for research into adaptations of \textsc{gp} and \textsc{bo} machinery that may be particularly relevant for scientists.

\subsection{GPs in Astrophysics}

Unlike other areas of the natural sciences, there is an abundance of astrophysical data consisting of one-dimensional time series where the noise process is well understood. Much of the knowledge of the physical process can hence be incorporated into the \textsc{gp} model in order to improve the resulting fit, for example, by specifying known observation noise prior to optimisation of the kernel hyperparameters. As such, performing inference over latent functions for which only irregularly-sampled observations are available, is a task very well-suited to \textsc{gp} models. In the future, given the level of structure present in the data, it may be appropriate to use more sophisticated \textsc{gp} models to identity and capture new properites of the data. Example applications include modelling non-stationarity with deep \textsc{gp}s \citep{2013_Damianou} or transformed \textsc{gp}s \citep{2020_Maronas}, as well as the use of spectral mixture kernels \citep{2013_Wilson} to detect periodicities.

\subsection{GPs in Chemistry}

There is growing excitement in the chemistry community about machine learning approaches for predicting molecular properties. The 1981 Nobel Laureate in Chemistry, Roald Hoffmann, even went so far as to speculate the following about the future of molecular machine learning and quantum chemistry:

\begin{displayquote}
In view of the progress of machine learning and neural networks, it is likely that these two tools will compete efficiently - in quality, in cost - with the best quantum chemistry tools in the near future. Then the community of number-oriented quantum chemists will face a dramatic problem. Will their function be relegated to providing reliable training data sets for the production of improved neural networks? Or will they follow the destiny of super-market cashiers these days and that of taxi drivers tomorrow?
\end{displayquote}

The following Twitter counter-commentary by Max Welling, however, sheds light on some of the limitations and opportunities for machine learning approaches in molecular property prediction:

\begin{displayquote}
Interesting statement by Roald Hoffmann. While this is what happened to CV and NLP, it may not happen like that to chemistry. ML will become a very powerful tool for the computational chemist, but data is expensive and the microscopic equations are known! So inductive bias will remain key! \footnote{Abbreviations used in the original tweet due to the Twitter character limit have been expanded for clarity.}
\end{displayquote}

\noindent In particular, Welling's point about the expense of real-world data remains a valid concern. From a modelling standpoint, the ability to fit small data is a point in favour of the use of \textsc{gp}s in place of deep learning architectures. In terms of generalising beyond the domain of the training data, however, inductive bias does remain a key concern for both deep learning and \textsc{gp} models. While \textsc{gp}s have lagged behind deep learning in terms of incorporating inductive biases, there has been recent progress in learning invariances through the marginal likelihood \citep{2018_Wilk, 2021_Verma} as well as considering causal mechanisms \citep{2020_Aglietti, 2021_Aglietti}. While it remains to be seen how successful deep learning or \textsc{gp} approaches will be at encoding the inductive biases present in the microscopic equations of chemistry, the incorporation of inductive biases and causal mechanisms into machine learning models will no doubt be useful across many scientific applications \citep{2022_Kalinin}.

\subsection{GPs in Scientific Experiments}

Perhaps one of the most challenging obstacles to the adoption of machine learning in the laboratory is convincing an experimentalist to use it. In particular, when focussing on new areas of chemical or materials space for example, it can be challenging to specify an appropriate \textsc{bo} scheme upfront. The ability to do so would entail some knowledge of the underlying black-box function and noise processes for the properties of interest. While some experimental groups have been very successful in applying \textsc{bo} methodology for laboratory experiments \citep{2022_Jorayev}, this has often been paired with an understanding of the design space. As such, for unexplored domains it will become important to develop high-fidelity offline simulators to benchmark \textsc{bo} schemes and/or to develop means for counterfactual evaluation.

\begin{spacing}{0.9}

\bibliographystyle{apalike}
\cleardoublepage
\bibliography{References/references} %

\begin{thebibliography}{}

\bibitem[{Abramowicz} et~al., 1995]{1995_Abramowicz}
{Abramowicz}, M.~A., {Chen}, X., {Kato}, S., {Lasota}, J.-P., and {Regev}, O.
  (1995).
\newblock {Thermal equilibria of accretion disks}.
\newblock {\em Astrophysical Journal Letters}, 438:L37.

\bibitem[{Abramowicz} et~al., 1996]{1996_Abramowicz}
{Abramowicz}, M.~A., {Chen}, X.~M., {Granath}, M., and {Lasota}, J.~P. (1996).
\newblock {Advection-dominated accretion flows around Kerr black holes}.
\newblock {\em The Astrophysical Journal}, 471:762.

\bibitem[{Abramowicz} et~al., 1988]{1988_Abramowicz}
{Abramowicz}, M.~A., {Czerny}, B., {Lasota}, J.~P., and {Szuszkiewicz}, E.
  (1988).
\newblock {Slim Accretion Disks}.
\newblock {\em The Astrophysical Journal}, 332:646.

\bibitem[Abramowicz and Fragile, 2013]{2011_Abramowicz}
Abramowicz, M.~A. and Fragile, P.~C. (2013).
\newblock {Foundations of black hole accretion disk theory}.
\newblock {\em Living Rev. Rel.}, 16:1.

\bibitem[{Abramowicz} et~al., 1997]{1997_Abramowicz}
{Abramowicz}, M.~A., {Lanza}, A., and {Percival}, M.~J. (1997).
\newblock {Accretion Disks around Kerr Black Holes: Vertical Equilibrium
  Revisited}.
\newblock {\em The Astrophysical Journal}, 479(1):179--183.

\bibitem[Adamo and Barone, 1999]{1999_Adamo}
Adamo, C. and Barone, V. (1999).
\newblock Toward reliable density functional methods without adjustable
  parameters: The {PBE0} model.
\newblock {\em The Journal of Chemical Physics}, 110(13):6158--6170.

\bibitem[Aglietti et~al., 2021]{2021_Aglietti}
Aglietti, V., Dhir, N., Gonz{\'a}lez, J., and Damoulas, T. (2021).
\newblock Dynamic causal {B}ayesian optimization.
\newblock {\em Advances in Neural Information Processing Systems},
  34:10549--10560.

\bibitem[Aglietti et~al., 2020]{2020_Aglietti}
Aglietti, V., Lu, X., Paleyes, A., and Gonz{\'a}lez, J. (2020).
\newblock Causal {B}ayesian optimization.
\newblock In {\em International Conference on Artificial Intelligence and
  Statistics}, pages 3155--3164. PMLR.

\bibitem[Ahneman et~al., 2018]{ahneman2018predicting}
Ahneman, D.~T., Estrada, J.~G., Lin, S., Dreher, S.~D., and Doyle, A.~G.
  (2018).
\newblock Predicting reaction performance in {C}--{N} cross-coupling using
  machine learning.
\newblock {\em Science}, 360:186--190.

\bibitem[Aigrain et~al., 2016]{2016_Aigrain}
Aigrain, S., Parviainen, H., and Pope, B. (2016).
\newblock K2sc: flexible systematics correction and detrending of k2 light
  curves using {G}aussian process regression.
\newblock {\em Monthly Notices of the Royal Astronomical Society},
  459(3):2408--2419.

\bibitem[Almosallam, 2017]{2017_Almosallam}
Almosallam, I. (2017).
\newblock {\em Heteroscedastic {G}aussian processes for uncertain and
  incomplete data}.
\newblock PhD thesis, University of Oxford.

\bibitem[Anderson et~al., 1987]{1987_Anderson}
Anderson, E., Veith, G.~D., and Weininger, D. (1987).
\newblock {SMILES}, a line notation and computerized interpreter for chemical
  structures.
\newblock {\em Environmental Research Laboratory}.

\bibitem[Angus et~al., 2018]{2018_Angus}
Angus, R., Morton, T., Aigrain, S., Foreman-Mackey, D., and Rajpaul, V. (2018).
\newblock Inferring probabilistic stellar rotation periods using {G}aussian
  processes.
\newblock {\em Monthly Notices of the Royal Astronomical Society},
  474(2):2094--2108.

\bibitem[{Antonucci}, 1993]{1993_Antonucci}
{Antonucci}, R. (1993).
\newblock {Unified models for active galactic nuclei and quasars.}
\newblock {\em Annual Review of Astronomy and Astrophysics}, 31:473--521.

\bibitem[Ariizumi et~al., 2014]{2014_Ariizumi}
Ariizumi, R., Tesch, M., Choset, H., and Matsuno, F. (2014).
\newblock Expensive multiobjective optimization for robotics with consideration
  of heteroscedastic noise.
\newblock In {\em IEEE/RSJ International Conference on Intelligent Robots and
  Systems}, pages 2230--2235.

\bibitem[Assael et~al., 2014]{2014_Assael}
Assael, J.-A.~M., Wang, Z., Shahriari, B., and de~Freitas, N. (2014).
\newblock Heteroscedastic treed {B}ayesian optimisation.
\newblock {\em arXiv preprint arXiv:1410.7172}.

\bibitem[{Astropy Collaboration} et~al., 2018]{astropy_2}
{Astropy Collaboration}, {Price-Whelan}, A.~M., {Sip{\H{o}}cz}, B.~M.,
  {G{\"u}nther}, H.~M., {Lim}, P.~L., {Crawford}, S.~M., {Conseil}, S.,
  {Shupe}, D.~L., {Craig}, M.~W., {Dencheva}, N., {Ginsburg}, A., {Vand
  erPlas}, J.~T., {Bradley}, L.~D., {P{\'e}rez-Su{\'a}rez}, D., {de Val-Borro},
  M., {Aldcroft}, T.~L., {Cruz}, K.~L., {Robitaille}, T.~P., {Tollerud}, E.~J.,
  {Ardelean}, C., {Babej}, T., {Bach}, Y.~P., {Bachetti}, M., {Bakanov}, A.~V.,
  {Bamford}, S.~P., {Barentsen}, G., {Barmby}, P., {Baumbach}, A., {Berry},
  K.~L., {Biscani}, F., {Boquien}, M., {Bostroem}, K.~A., {Bouma}, L.~G.,
  {Brammer}, G.~B., {Bray}, E.~M., {Breytenbach}, H., {Buddelmeijer}, H.,
  {Burke}, D.~J., {Calderone}, G., {Cano Rodr{\'\i}guez}, J.~L., {Cara}, M.,
  {Cardoso}, J.~V.~M., {Cheedella}, S., {Copin}, Y., {Corrales}, L.,
  {Crichton}, D., {D'Avella}, D., {Deil}, C., {Depagne}, {\'E}., {Dietrich},
  J.~P., {Donath}, A., {Droettboom}, M., {Earl}, N., {Erben}, T., {Fabbro}, S.,
  {Ferreira}, L.~A., {Finethy}, T., {Fox}, R.~T., {Garrison}, L.~H., {Gibbons},
  S.~L.~J., {Goldstein}, D.~A., {Gommers}, R., {Greco}, J.~P., {Greenfield},
  P., {Groener}, A.~M., {Grollier}, F., {Hagen}, A., {Hirst}, P., {Homeier},
  D., {Horton}, A.~J., {Hosseinzadeh}, G., {Hu}, L., {Hunkeler}, J.~S.,
  {Ivezi{\'c}}, {\v{Z}}., {Jain}, A., {Jenness}, T., {Kanarek}, G., {Kendrew},
  S., {Kern}, N.~S., {Kerzendorf}, W.~E., {Khvalko}, A., {King}, J., {Kirkby},
  D., {Kulkarni}, A.~M., {Kumar}, A., {Lee}, A., {Lenz}, D., {Littlefair},
  S.~P., {Ma}, Z., {Macleod}, D.~M., {Mastropietro}, M., {McCully}, C.,
  {Montagnac}, S., {Morris}, B.~M., {Mueller}, M., {Mumford}, S.~J., {Muna},
  D., {Murphy}, N.~A., {Nelson}, S., {Nguyen}, G.~H., {Ninan}, J.~P.,
  {N{\"o}the}, M., {Ogaz}, S., {Oh}, S., {Parejko}, J.~K., {Parley}, N.,
  {Pascual}, S., {Patil}, R., {Patil}, A.~A., {Plunkett}, A.~L., {Prochaska},
  J.~X., {Rastogi}, T., {Reddy Janga}, V., {Sabater}, J., {Sakurikar}, P.,
  {Seifert}, M., {Sherbert}, L.~E., {Sherwood-Taylor}, H., {Shih}, A.~Y.,
  {Sick}, J., {Silbiger}, M.~T., {Singanamalla}, S., {Singer}, L.~P., {Sladen},
  P.~H., {Sooley}, K.~A., {Sornarajah}, S., {Streicher}, O., {Teuben}, P.,
  {Thomas}, S.~W., {Tremblay}, G.~R., {Turner}, J.~E.~H., {Terr{\'o}n}, V.,
  {van Kerkwijk}, M.~H., {de la Vega}, A., {Watkins}, L.~L., {Weaver}, B.~A.,
  {Whitmore}, J.~B., {Woillez}, J., {Zabalza}, V., and {Astropy Contributors}
  (2018).
\newblock The astropy project: Building an open-science project and status of
  the v2.0 core package.
\newblock {\em The Astronomical Journal}, 156(3):123.

\bibitem[{Astropy Collaboration} et~al., 2013]{astropy_1}
{Astropy Collaboration}, {Robitaille}, T.~P., {Tollerud}, E.~J., {Greenfield},
  P., {Droettboom}, M., {Bray}, E., {Aldcroft}, T., {Davis}, M., {Ginsburg},
  A., {Price-Whelan}, A.~M., {Kerzendorf}, W.~E., {Conley}, A., {Crighton}, N.,
  {Barbary}, K., {Muna}, D., {Ferguson}, H., {Grollier}, F., {Parikh}, M.~M.,
  {Nair}, P.~H., {Unther}, H.~M., {Deil}, C., {Woillez}, J., {Conseil}, S.,
  {Kramer}, R., {Turner}, J.~E.~H., {Singer}, L., {Fox}, R., {Weaver}, B.~A.,
  {Zabalza}, V., {Edwards}, Z.~I., {Azalee Bostroem}, K., {Burke}, D.~J.,
  {Casey}, A.~R., {Crawford}, S.~M., {Dencheva}, N., {Ely}, J., {Jenness}, T.,
  {Labrie}, K., {Lim}, P.~L., {Pierfederici}, F., {Pontzen}, A., {Ptak}, A.,
  {Refsdal}, B., {Servillat}, M., and {Streicher}, O. (2013).
\newblock Astropy: A community {P}ython package for astronomy.
\newblock {\em Astronomy and Astrophysics}, 558:A33.

\bibitem[Aziz et~al., 2021]{2021_Aziz}
Aziz, A., Kosasih, E.~E., Griffiths, R.-R., and Brintrup, A. (2021).
\newblock Data considerations in graph representation learning for supply chain
  networks.
\newblock {\em arXiv preprint arXiv:2107.10609}.

\bibitem[Balandat et~al., 2020]{2019_Balandat}
Balandat, M., Karrer, B., Jiang, D., Daulton, S., Letham, B., Wilson, A.~G.,
  and Bakshy, E. (2020).
\newblock Bo{T}orch: A framework for efficient {M}onte-{C}arlo {B}ayesian
  optimization.
\newblock In {\em Advances in Neural Information Processing Systems},
  volume~33, pages 21524--21538.

\bibitem[{Barthelmy} et~al., 2005]{2005_Barthelmy}
{Barthelmy}, S.~D., {Barbier}, L.~M., {Cummings}, J.~R., {Fenimore}, E.~E.,
  {Gehrels}, N., {Hullinger}, D., {Krimm}, H.~A., {Markwardt}, C.~B., {Palmer},
  D.~M., {Parsons}, A., {Sato}, G., {Suzuki}, M., {Takahashi}, T., {Tashiro},
  M., and {Tueller}, J. (2005).
\newblock {The Burst Alert Telescope (BAT) on the SWIFT Midex mission}.
\newblock {\em Space Sci. Rev.}, 120(3-4):143--164.

\bibitem[Bart{\'o}k et~al., 2013]{2013_Bartok}
Bart{\'o}k, A.~P., Kondor, R., and Cs{\'a}nyi, G. (2013).
\newblock On representing chemical environments.
\newblock {\em Physical Review B}, 87(18):184115.

\bibitem[Beard et~al., 2019]{2019_Beard}
Beard, E.~J., Sivaraman, G., V{\'a}zquez-Mayagoitia, {\'A}., Vishwanath, V.,
  and Cole, J.~M. (2019).
\newblock Comparative dataset of experimental and computational attributes of
  {UV}/vis absorption spectra.
\newblock {\em Scientific Data}, 6(1):1--11.

\bibitem[Becke, 2014]{becke_perspective_2014}
Becke, A.~D. (2014).
\newblock Perspective: Fifty years of density-functional theory in chemical
  physics.
\newblock {\em The Journal of Chemical Physics}, 140(18):18A301.

\bibitem[Bellman, 1957]{1957_Bellman}
Bellman, R. (1957).
\newblock {\em Dynamic Programming}.
\newblock Princeton University Press, Princeton, NJ, USA, 1 edition.

\bibitem[Beloborodov, 1998]{1998_Beloborodov}
Beloborodov, A.~M. (1998).
\newblock {Super-Eddington accretion discs around Kerr black holes}.
\newblock {\em Monthly Notices of the Royal Astronomical Society},
  297(3):739--746.

\bibitem[{Beloborodov} et~al., 1997]{1997_Beloborodov}
{Beloborodov}, A.~M., {Abramowicz}, M.~A., and {Novikov}, I.~D. (1997).
\newblock {Inertia of heat in advective accretion disks around Kerr black
  holes}.
\newblock {\em The Astrophysical Journal}, 491(1):267--269.

\bibitem[Belostotskii, 2015]{2015_Belostotskii}
Belostotskii, A.~M. (2015).
\newblock {\em Conformational Concept for Synthetic Chemist's Use}.
\newblock World Scientific.

\bibitem[Bengio, 2011]{2011_Bengio}
Bengio, Y. (2011).
\newblock {\em What are some Advantages of Using Gaussian Process Models vs
  Neural Networks?}
\newblock
  \href{https://www.quora.com/What-are-some-advantages-of-using-Gaussian-Process-Models-vs-Neural-Networks}{https://www.quora.com/What-are-some-advantages-of-using-Gaussian-Process-Models-vs-Neural-Networks}.

\bibitem[Bento et~al., 2014]{2014_Bento}
Bento, A.~P., Gaulton, A., Hersey, A., Bellis, L.~J., Chambers, J., Davies, M.,
  Kr{\"u}ger, F.~A., Light, Y., Mak, L., McGlinchey, S., et~al. (2014).
\newblock The chembl bioactivity database: {A}n update.
\newblock {\em Nucleic acids research}, 42(D1):D1083--D1090.

\bibitem[Berger, 1985]{1985_Berger}
Berger, J.~O. (1985).
\newblock {\em Statistical decision theory and Bayesian analysis}.
\newblock Springer Science \& Business Media.

\bibitem[Bergstra and Bengio, 2012]{2012_Bergstra}
Bergstra, J. and Bengio, Y. (2012).
\newblock Random search for hyper-parameter optimization.
\newblock {\em Journal of Machine Learning Research}, 13:281--305.

\bibitem[Berkenkamp et~al., 2021]{2016_Berkenkamp}
Berkenkamp, F., Krause, A., and Schoellig, A.~P. (2021).
\newblock Bayesian optimization with safety constraints: {S}afe and automatic
  parameter tuning in robotics.
\newblock {\em Machine Learning}, pages 1--35.

\bibitem[Binois et~al., 2018]{2018_Binois}
Binois, M., Gramacy, R.~B., and Ludkovski, M. (2018).
\newblock Practical heteroscedastic {G}aussian process modeling for large
  simulation experiments.
\newblock {\em Journal of Computational and Graphical Statistics},
  27(4):808--821.

\bibitem[BinTayyash et~al., 2021]{2021_Bintayyash}
BinTayyash, N., Georgaka, S., John, S., Ahmed, S., Boukouvalas, A., Hensman,
  J., and Rattray, M. (2021).
\newblock Non-parametric modelling of temporal and spatial counts data from
  rna-seq experiments.
\newblock {\em Bioinformatics}, 37(21):3788--3795.

\bibitem[Bond et~al., 1998]{1998_Bond}
Bond, J., Jaffe, A.~H., and Knox, L. (1998).
\newblock Estimating the power spectrum of the cosmic microwave background.
\newblock {\em Physical Review D}, 57(4):2117.

\bibitem[Bourached et~al., 2021a]{2021_Bourached_art}
Bourached, A., Cann, G.~H., Griffths, R.-R., and Stork, D.~G. (2021a).
\newblock Recovery of underdrawings and ghost-paintings via style transfer by
  deep convolutional neural networks: A digital tool for art scholars.
\newblock {\em Electronic Imaging}, 33:1--10.

\bibitem[Bourached et~al., 2021b]{bourached2021hierarchical}
Bourached, A., Gray, R., Guan, X., Griffiths, R.-R., Jha, A., and Nachev, P.
  (2021b).
\newblock Hierarchical graph-convolutional variational autoencoding for
  generative modelling of human motion.
\newblock {\em arXiv preprint arXiv:2111.12602}.

\bibitem[Bourached et~al., 2022]{2022_Bourached}
Bourached, A., Griffiths, R.-R., Gray, R., Jha, A., and Nachev, P. (2022).
\newblock Generative model-enhanced human motion prediction.
\newblock {\em Applied AI Letters}, 3(2):e63.

\bibitem[Breiman et~al., 2017]{2017_Breiman}
Breiman, L., Friedman, J.~H., Olshen, R.~A., and Stone, C.~J. (2017).
\newblock {\em Classification and regression trees}.
\newblock Routledge.

\bibitem[Bridgeman and Peters, 1970]{data13}
Bridgeman, I. and Peters, A. (1970).
\newblock Synthesis and electronic spectra of some 4--aminoazobenzenes.
\newblock {\em Journal of the Society of Dyers and Colourists},
  86(12):519--524.

\bibitem[Broccard, 2021]{2021_Broccard}
Broccard, A. (2021).
\newblock Gaussian process regression on molecules: {S}ome performance
  assessments and comparisons.
\newblock Master's thesis, University of Bern.

\bibitem[Brázdová and Bowler, 2013]{brazdova_atomistic_2013}
Brázdová, V. and Bowler, D.~R. (2013).
\newblock {\em Atomistic computer simulations: {A} practical guide}.
\newblock Wiley-VCH, Weinheim.
\newblock OCLC: ocn835961914.

\bibitem[Buisson et~al., 2018]{2018_Buisson}
Buisson, D., Lohfink, A., Alston, W., Cackett, E., Chiang, C., Dauser, T.,
  De~Marco, B., Fabian, A., Gallo, L., Garcia, J., et~al. (2018).
\newblock Is there a {UV}/{X}-ray connection in {IRAS} 13224- 3809?
\newblock {\em Monthly Notices of the Royal Astronomical Society},
  475(2):2306--2313.

\bibitem[{Buisson} et~al., 2017]{buisson17}
{Buisson}, D.~J.~K., {Lohfink}, A.~M., {Alston}, W.~N., and {Fabian}, A.~C.
  (2017).
\newblock {Ultraviolet and X-ray variability of active galactic nuclei with
  Swift}.
\newblock {\em Monthly Notices of the Royal Astronomical Society},
  464(3):3194--3218.

\bibitem[Burke et~al., 2005]{burke_time-dependent_2005}
Burke, K., Werschnik, J., and Gross, E. (2005).
\newblock Time-dependent density functional theory: Past, present, and future.
\newblock {\em The Journal of Chemical Physics}, 123(6):062206.

\bibitem[{Burrows} et~al., 2005]{2005_Burrows}
{Burrows}, D.~N., {Hill}, J.~E., {Nousek}, J.~A., {Kennea}, J.~A., {Wells}, A.,
  {Osborne}, J.~P., {Abbey}, A.~F., {Beardmore}, A., {Mukerjee}, K., {Short},
  A.~D.~T., {Chincarini}, G., {Campana}, S., {Citterio}, O., {Moretti}, A.,
  {Pagani}, C., {Tagliaferri}, G., {Giommi}, P., {Capalbi}, M., {Tamburelli},
  F., {Angelini}, L., {Cusumano}, G., {Br{\"a}uninger}, H.~W., {Burkert}, W.,
  and {Hartner}, G.~D. (2005).
\newblock {The Swift X-Ray Telescope}.
\newblock {\em Space Sci. Rev.}, 120(3-4):165--195.

\bibitem[Cackett et~al., 2022]{2022_Cackett}
Cackett, E.~M., Zoghbi, A., and Ulrich, O. (2022).
\newblock Frequency-resolved lags in {UV}/optical continuum reverberation
  mapping.
\newblock {\em The Astrophysical Journal}, 925(1):29.

\bibitem[{Cai} et~al., 2020]{cai20}
{Cai}, Z.-Y., {Wang}, J.-X., and {Sun}, M. (2020).
\newblock {EUCLIA. II. On the puzzling large UV to X-ray lags in Seyfert
  galaxies}.
\newblock {\em The Astrophysical Journal}, 892(1):63.

\bibitem[Calandra, 2017]{2017_Calandra}
Calandra, R. (2017).
\newblock {\em Bayesian modeling for optimization and control in robotics}.
\newblock PhD thesis, Technische Universit{\"a}t Darmstadt.

\bibitem[Calandra et~al., 2016]{2016_Calandra}
Calandra, R., Seyfarth, A., Peters, J., and Deisenroth, M.~P. (2016).
\newblock Bayesian optimization for learning gaits under uncertainty.
\newblock {\em Annals of Mathematics and Artificial Intelligence},
  76(1-2):5--23.

\bibitem[Cancedda et~al., 2003]{cancedda2003word}
Cancedda, N., Gaussier, E., Goutte, C., and Renders, J.~M. (2003).
\newblock Word sequence kernels.
\newblock {\em Journal of Machine Learning Research}, 3:1059--1082.

\bibitem[Candelieri and Perego, 2019]{2019_Candelieri}
Candelieri, A. and Perego, R. (2019).
\newblock Dimensionality reduction methods to scale {B}ayesian optimization up.
\newblock {\em Numerical Computations: Theory and Algorithms NUMTA 2019}, page
  167.

\bibitem[Cann et~al., 2021]{2021_Cann}
Cann, G.~H., Bourached, A., Griffths, R.-R., and Stork, D.~G. (2021).
\newblock Resolution enhancement in the recovery of underdrawings via style
  transfer by generative adversarial deep neural networks.
\newblock {\em Electronic Imaging}, 2021(14):17--1.

\bibitem[Cao et~al., 2012]{cao2012silico}
Cao, D.-S., Zhao, J.-C., Yang, Y.-N., Zhao, C.-X., Yan, J., Liu, S., Hu, Q.-N.,
  Xu, Q.-S., and Liang, Y.-Z. (2012).
\newblock In silico toxicity prediction by support vector machine and smiles
  representation-based string kernel.
\newblock {\em SAR and QSAR in Environmental Research}, 23(1-2):141--153.

\bibitem[Capecchi et~al., 2020]{capecchi2020one}
Capecchi, A., Probst, D., and Reymond, J.-L. (2020).
\newblock One molecular fingerprint to rule them all: {D}rugs, biomolecules,
  and the metabolome.
\newblock {\em Journal of Cheminformatics}, 12.

\bibitem[Caruana, 1997]{1997_Caruana}
Caruana, R. (1997).
\newblock Multitask learning.
\newblock {\em Machine learning}, 28(1):41--75.

\bibitem[Casida and Huix-Rotllant, 2012]{casida_progress_2012}
Casida, M.~E. and Huix-Rotllant, M. (2012).
\newblock Progress in time-dependent density-functional theory.
\newblock {\em Annual review of physical chemistry}, 63:287--323.

\bibitem[{Chartas} et~al., 2017]{chartas17}
{Chartas}, G., {Krawczynski}, H., {Zalesky}, L., {Kochanek}, C.~S., {Dai}, X.,
  {Morgan}, C.~W., and {Mosquera}, A. (2017).
\newblock {Measuring the innermost stable circular orbits of supermassive black
  holes}.
\newblock {\em The Astrophysical Journal}, 837(1):26.

\bibitem[Chen et~al., 2018]{2018_Yutian}
Chen, Y., Huang, A., Wang, Z., Antonoglou, I., Schrittwieser, J., Silver, D.,
  and de~Freitas, N. (2018).
\newblock Bayesian optimization in {A}lpha{G}o.
\newblock {\em arXiv preprint arXiv:1812.06855}.

\bibitem[Cheng et~al., 2020]{2020_Cheng}
Cheng, B., Griffiths, R.-R., Wengert, S., Kunkel, C., Stenczel, T., Zhu, B.,
  Deringer, V.~L., Bernstein, N., Margraf, J.~T., Reuter, K., et~al. (2020).
\newblock Mapping materials and molecules.
\newblock {\em Accounts of Chemical Research}, 53(9):1981--1991.

\bibitem[Chithrananda et~al., 2020]{2020_Chithrananda}
Chithrananda, S., Grand, G., and Ramsundar, B. (2020).
\newblock Chem{BERT}a: {L}arge-scale self-supervised pretraining for molecular
  property prediction.
\newblock {\em arXiv preprint arXiv:2010.09885}.

\bibitem[Choudhary et~al., 2020]{2020_Choudhary}
Choudhary, K., Garrity, K.~F., Sharma, V., Biacchi, A.~J., Walker, A. R.~H.,
  and Tavazza, F. (2020).
\newblock High-throughput density functional perturbation theory and machine
  learning predictions of infrared, piezoelectric, and dielectric responses.
\newblock {\em NPJ Computational Materials}, 6(1):1--13.

\bibitem[Christensen et~al., 2020]{2020_Faber}
Christensen, A.~S., Bratholm, L.~A., Faber, F.~A., and Anatole~von Lilienfeld,
  O. (2020).
\newblock {FCHL} revisited: {F}aster and more accurate quantum machine
  learning.
\newblock {\em The Journal of Chemical Physics}, 152(4):044107.

\bibitem[Christie et~al., 1993]{1993_Christie}
Christie, B.~D., Leland, B.~A., and Nourse, J.~G. (1993).
\newblock Structure searching in chemical databases by direct lookup methods.
\newblock {\em Journal of Chemical Information and Computer Sciences}.

\bibitem[Chuang and Keiser, 2018]{chuang2018comment}
Chuang, K.~V. and Keiser, M.~J. (2018).
\newblock Comment on “predicting reaction performance in c--n cross-coupling
  using machine learning”.
\newblock {\em Science}, 362.

\bibitem[Coley et~al., 2018]{2018_Coley_new}
Coley, C.~W., Green, W.~H., and Jensen, K.~F. (2018).
\newblock Machine learning in computer-aided synthesis planning.
\newblock {\em Accounts of Chemical Research}, 51(5):1281--1289.

\bibitem[Collier and Peterson, 2001]{2001_Collier}
Collier, S. and Peterson, B.~M. (2001).
\newblock Characteristic ultraviolet/optical timescales in active galactic
  nuclei.
\newblock {\em The Astrophysical Journal}, 555(2):775.

\bibitem[Corra et~al., 2022]{Corra2022}
Corra, S., Baki{\'{c}}, M.~T., Groppi, J., Baroncini, M., Silvi, S., Penocchio,
  E., Esposito, M., and Credi, A. (2022).
\newblock {Kinetic and energetic insights into the dissipative non-equilibrium
  operation of an autonomous light-powered supramolecular pump}.
\newblock {\em Nat. Nanotechnol.}, 17:746--751.

\bibitem[Covino et~al., 2020]{2020_Covino}
Covino, S., Landoni, M., Sandrinelli, A., and Treves, A. (2020).
\newblock Looking at blazar light-curve periodicities with {G}aussian
  processes.
\newblock {\em The Astrophysical Journal}, 895(2):122.

\bibitem[Covino et~al., 2022]{2022_Covino}
Covino, S., Tobar, F., and Treves, A. (2022).
\newblock Detecting the periodicity of highly irregularly sampled light curves
  with {G}aussian processes: the case of {SDSS} {J025214}. 67- 002813.7.
\newblock {\em Monthly Notices of the Royal Astronomical Society},
  513(2):2841--2849.

\bibitem[Cowen-Rivers et~al., 2022]{2020_Rivers}
Cowen-Rivers, A.~I., Lyu, W., Tutunov, R., Wang, Z., Grosnit, A., Griffiths,
  R.-R., Maraval, A.~M., Jianye, H., Wang, J., Peters, J., and Bou-Ammar, H.
  (2022).
\newblock {HEBO}: {P}ushing the limits of sample-efficient hyper-parameter
  optimisation.
\newblock {\em Journal of Artificial Intelligence Research}, 74:1269--1349.

\bibitem[Cox, 1961]{1961_Cox}
Cox, R.~T. (1961).
\newblock {\em The algebra of probable inference}.
\newblock Johns Hopkins Press Baltimore.

\bibitem[Crespi et~al., 2019]{2019_Crespi}
Crespi, S., Simeth, N.~A., and K{\"o}nig, B. (2019).
\newblock Heteroaryl azo dyes as molecular photoswitches.
\newblock {\em Nature Reviews Chemistry}, 3(3):133--146.

\bibitem[Czekala et~al., 2017]{2017_Czekala}
Czekala, I., Mandel, K.~S., Andrews, S.~M., Dittmann, J.~A., Ghosh, S.~K.,
  Montet, B.~T., and Newton, E.~R. (2017).
\newblock Disentangling time-series spectra with {G}aussian processes:
  Applications to radial velocity analysis.
\newblock {\em The Astrophysical Journal}, 840(1):49.

\bibitem[Damianou and Lawrence, 2013]{2013_Damianou}
Damianou, A. and Lawrence, N.~D. (2013).
\newblock Deep {G}aussian processes.
\newblock In {\em Artificial Intelligence and Statistics}, pages 207--215.
  PMLR.

\bibitem[Daulton et~al., 2020]{daulton2020differentiable}
Daulton, S., Balandat, M., and Bakshy, E. (2020).
\newblock Differentiable expected hypervolume improvement for parallel
  multi-objective bayesian optimization.
\newblock {\em Advances in Neural Information Processing Systems},
  33:9851--9864.

\bibitem[Davies and Harte, 1987]{1987_Davies}
Davies, R.~B. and Harte, D. (1987).
\newblock Tests for {H}urst effect.
\newblock {\em Biometrika}, 74(1):95--101.

\bibitem[De~G.~Matthews et~al., 2017]{GPflow}
De~G.~Matthews, A.~G., Van Der~Wilk, M., Nickson, T., Fujii, K., Boukouvalas,
  A., Le{\'o}n-Villagr{\'a}, P., Ghahramani, Z., and Hensman, J. (2017).
\newblock {GP}flow: A {G}aussian process library using {T}ensor{F}low.
\newblock {\em The Journal of Machine Learning Research}, 18(1):1299--1304.

\bibitem[{de Wolff} et~al., 2021]{2020_de_Wolff}
{de Wolff}, T., Cuevas, A., and Tobar, F. (2021).
\newblock {MOGPTK}: The multi-output {G}aussian process toolkit.
\newblock {\em Neurocomputing}, 424:49--53.

\bibitem[DeGroot, 1970]{2005_De_Groot}
DeGroot, M.~H. (1970).
\newblock {\em Optimal statistical decisions}, volume~82.
\newblock John Wiley \& Sons.

\bibitem[Deisenroth and Rasmussen, 2011]{2011_Deisenroth}
Deisenroth, M. and Rasmussen, C.~E. (2011).
\newblock {PILCO}: {A} model-based and data-efficient approach to policy
  search.
\newblock In {\em Proceedings of the 28th International Conference on machine
  learning (ICML-11)}, pages 465--472. Citeseer.

\bibitem[Delaney, 2004]{2004_Delaney}
Delaney, J.~S. (2004).
\newblock {ESOL}: {E}stimating aqueous solubility directly from molecular
  structure.
\newblock {\em Journal of Chemical Information and Computer Sciences},
  44(3):1000--1005.

\bibitem[Deshwal and Doppa, 2021]{2021_Deshwal}
Deshwal, A. and Doppa, J. (2021).
\newblock Combining latent space and structured kernels for bayesian
  optimization over combinatorial spaces.
\newblock {\em Advances in Neural Information Processing Systems},
  34:8185--8200.

\bibitem[Devi et~al., 2018]{data5}
Devi, S., Saraswat, M., Grewal, S., and Venkataramani, S. (2018).
\newblock Evaluation of substituent effect in z-isomer stability of arylazo-1
  h-3, 5-dimethylpyrazoles: Interplay of steric, electronic effects and
  hydrogen bonding.
\newblock {\em The Journal of Organic Chemistry}, 83(8):4307--4322.

\bibitem[{Dexter} and {Fragile}, 2011]{dexter11}
{Dexter}, J. and {Fragile}, P.~C. (2011).
\newblock {Observational signatures of tilted black hole accretion disks from
  simulations}.
\newblock {\em The Astrophysical Journal}, 730(1):36.

\bibitem[di~Clemente et~al., 1996]{1996_di_Clemente}
di~Clemente, A., Giallongo, E., Natali, G., Trevese, D., and Vagnetti, F.
  (1996).
\newblock The variability of quasars. ii. frequency dependence.
\newblock {\em The Astrophysical Journal}, 463:466.

\bibitem[Diamond-Lowe et~al., 2020]{2020_Diamond}
Diamond-Lowe, H., Berta-Thompson, Z., Charbonneau, D., Dittmann, J., and
  Kempton, E. M.-R. (2020).
\newblock Simultaneous optical transmission spectroscopy of a terrestrial,
  habitable-zone exoplanet with two ground-based multiobject spectrographs.
\newblock {\em The Astronomical Journal}, 160(1):27.

\bibitem[Din{\c{c}}alp et~al., 2010]{data15}
Din{\c{c}}alp, H., Yavuz, S., Hakl{\i}, {\"O}., Zafer, C., {\"O}zsoy, C.,
  Durucasu, {\.I}., and {\.I}{\c{c}}li, S. (2010).
\newblock Optical and photovoltaic properties of salicylaldimine-based azo
  ligands.
\newblock {\em Journal of photochemistry and Photobiology A: Chemistry},
  210(1):8--16.

\bibitem[Dong et~al., 2018]{Dong2018}
Dong, L., Feng, Y., Wang, L., and Feng, W. (2018).
\newblock {Azobenzene-based solar thermal fuels: design, properties, and
  applications}.
\newblock {\em Chem. Soc. Rev.}, 47(19):7339--7368.

\bibitem[Dorel and Feringa, 2019]{Dorel2019}
Dorel, R. and Feringa, B.~L. (2019).
\newblock {Photoswitchable catalysis based on the isomerisation of double
  bonds}.
\newblock {\em Chem. Commun.}, 55(46):6477--6486.

\bibitem[Du et~al., 2022]{2022_Du}
Du, Y., Fu, T., Sun, J., and Liu, S. (2022).
\newblock Mol{G}en{S}urvey: {A} systematic survey in machine learning models
  for molecule design.
\newblock {\em arXiv preprint arXiv:2203.14500}.

\bibitem[Duarte Ramos~Matos et~al., 2017]{2017_Duarte}
Duarte Ramos~Matos, G., Kyu, D.~Y., Loeffler, H.~H., Chodera, J.~D., Shirts,
  M.~R., and Mobley, D.~L. (2017).
\newblock Approaches for calculating solvation free energies and enthalpies
  demonstrated with an update of the {F}ree{S}olv database.
\newblock {\em Journal of Chemical \& Engineering Data}, 62(5):1559--1569.

\bibitem[Durrande et~al., 2016]{2016_Durrande}
Durrande, N., Hensman, J., Rattray, M., and Lawrence, N.~D. (2016).
\newblock Detecting periodicities with {G}aussian processes.
\newblock {\em PeerJ Computer Science}, 2:e50.

\bibitem[Duvenaud, 2014]{2014_Duvenaud}
Duvenaud, D. (2014).
\newblock {\em Automatic Model Construction with {G}aussian Processes}.
\newblock PhD thesis, {Computational and Biological Learning Laboratory,
  University of Cambridge}.

\bibitem[Duvenaud et~al., 2015]{2015_Duvenaud}
Duvenaud, D.~K., Maclaurin, D., Iparraguirre, J., Bombarell, R., Hirzel, T.,
  Aspuru-Guzik, A., and Adams, R.~P. (2015).
\newblock Convolutional networks on graphs for learning molecular fingerprints.
\newblock In {\em Advances in {N}eural {I}nformation {P}rocessing {S}ystems},
  pages 2224--2232.

\bibitem[{Edelson} et~al., 2000]{edelson00}
{Edelson}, R., {Koratkar}, A., {Nandra}, K., {Goad}, M., {Peterson}, B.~M.,
  {Collier}, S., {Krolik}, J., {Malkan}, M., {Maoz}, D., {O'Brien}, P.,
  {Shull}, J.~M., {Vaughan}, S., and {Warwick}, R. (2000).
\newblock {Intensive HST, RXTE, and ASCA monitoring of NGC 3516: Evidence
  against thermal reprocessing}.
\newblock {\em The Astrophysical Journal}, 534(1):180--188.

\bibitem[Einstein, 1916]{1916_Einstein}
Einstein, A. (1916).
\newblock Die grundlage der allgemeinen relativitatstheorie.
\newblock {\em Annalen der Physik}, 354(7):769--822.

\bibitem[Eisenreich et~al., 2018]{Eisenreich2018}
Eisenreich, F., Kathan, M., Dallmann, A., Ihrig, S.~P., Schwaar, T., Schmidt,
  B.~M., and Hecht, S. (2018).
\newblock {A photoswitchable catalyst system for remote-controlled
  (co)polymerization in situ}.
\newblock {\em Nat. Catal.}, 1(7):516--522.

\bibitem[Eyke et~al., 2020]{eyke2020iterative}
Eyke, N.~S., Green, W.~H., and Jensen, K.~F. (2020).
\newblock Iterative experimental design based on active machine learning
  reduces the experimental burden associated with reaction screening.
\newblock {\em Reaction Chemistry \& Engineering}.

\bibitem[Faber et~al., 2017]{2017_Faber}
Faber, F.~A., Hutchison, L., Huang, B., Gilmer, J., Schoenholz, S.~S., Dahl,
  G.~E., Vinyals, O., Kearnes, S., Riley, P.~F., and Von~Lilienfeld, O.~A.
  (2017).
\newblock Prediction errors of molecular machine learning models lower than
  hybrid {DFT} error.
\newblock {\em Journal of Chemical Theory and Computation}, 13(11):5255--5264.

\bibitem[{Fabian} et~al., 2009]{fabian09}
{Fabian}, A.~C., {Zoghbi}, A., {Ross}, R.~R., {Uttley}, P., {Gallo}, L.~C.,
  {Brandt}, W.~N., {Blustin}, A.~J., {Boller}, T., {Caballero-Garcia}, M.~D.,
  {Larsson}, J., {Miller}, J.~M., {Miniutti}, G., {Ponti}, G., {Reis}, R.~C.,
  {Reynolds}, C.~S., {Tanaka}, Y., and {Young}, A.~J. (2009).
\newblock {Broad line emission from iron K- and L-shell transitions in the
  active galaxy 1H0707-495}.
\newblock {\em Nature}, 459:540--542.

\bibitem[Faustino et~al., 2009]{data17}
Faustino, H., Brannigan, C., Reis, L., Santos, P., and Almeida, P. (2009).
\newblock Novel azobenzothiazole dyes from 2-nitrosobenzothiazoles.
\newblock {\em Dyes and Pigments}, 83(1):88--94.

\bibitem[Felton et~al., 2020]{2020_Felton}
Felton, K., Wigh, D., and Lapkin, A. (2020).
\newblock Multi-task {B}ayesian optimization of chemical reactions.
\newblock {\em ChemRxiv}.

\bibitem[Felton et~al., 2021]{felton2021summit}
Felton, K.~C., Rittig, J.~G., and Lapkin, A.~A. (2021).
\newblock Summit: benchmarking machine learning methods for reaction
  optimisation.
\newblock {\em Chemistry-Methods}, 1(2):116--122.

\bibitem[Fischer, 1967]{Fischer1967}
Fischer, E. (1967).
\newblock {Calculation of photostationary states in systems A
  $\rightleftharpoons$ B when only A is known}.
\newblock {\em J. Phys. Chem.}, 71(11):3704--3706.

\bibitem[Foong et~al., 2020]{2020_Foong}
Foong, A., Burt, D., Li, Y., and Turner, R. (2020).
\newblock On the expressiveness of approximate inference in {B}ayesian neural
  networks.
\newblock {\em Advances in Neural Information Processing Systems},
  33:15897--15908.

\bibitem[Frazier et~al., 2009]{2009_Frazier}
Frazier, P., Powell, W., and Dayanik, S. (2009).
\newblock The knowledge-gradient policy for correlated normal beliefs.
\newblock {\em INFORMS {J}ournal on Computing}, 21(4):599--613.

\bibitem[Frieder et~al., 2023]{2023_Frieder}
Frieder, S., Pinchetti, L., Griffiths, R.-R., Salvatori, T., Lukasiewicz, T.,
  Petersen, P.~C., Chevalier, A., and Berner, J. (2023).
\newblock Mathematical capabilities of {ChatGPT}.
\newblock {\em arXiv preprint arXiv:2301.13867}.

\bibitem[{Gallo} et~al., 2018]{gallo18}
{Gallo}, L.~C., {Blue}, D.~M., {Grupe}, D., {Komossa}, S., and {Wilkins}, D.~R.
  (2018).
\newblock {Eleven years of monitoring the Seyfert 1 Mrk 335 with Swift:
  Characterizing the X-ray and UV/optical variability}.
\newblock {\em Monthly Notices of the Royal Astronomical Society},
  478(2):2557--2568.

\bibitem[{Gallo} et~al., 2013]{2013_Gallo_New}
{Gallo}, L.~C., {Fabian}, A.~C., {Grupe}, D., {Bonson}, K., {Komossa}, S.,
  {Longinotti}, A.~L., {Miniutti}, G., {Walton}, D.~J., {Zoghbi}, A., and
  {Mathur}, S. (2013).
\newblock {A blurred reflection interpretation for the intermediate flux state
  in Mrk 335}.
\newblock {\em Monthly Notices of the Royal Astronomical Society},
  428(2):1191--1200.

\bibitem[{Gallo} et~al., 2015]{2015_Gallo_New}
{Gallo}, L.~C., {Wilkins}, D.~R., {Bonson}, K., {Chiang}, C.~Y., {Grupe}, D.,
  {Parker}, M.~L., {Zoghbi}, A., {Fabian}, A.~C., {Komossa}, S., and
  {Longinotti}, A.~L. (2015).
\newblock {Suzaku observations of Mrk 335: Confronting partial covering and
  relativistic reflection}.
\newblock {\em Monthly Notices of the Royal Astronomical Society},
  446(1):633--650.

\bibitem[{Gammie} and {Popham}, 1998]{1998_Gammie}
{Gammie}, C.~F. and {Popham}, R. (1998).
\newblock {Advection-dominated accretion flows in the Kerr metric. I. Basic
  equations}.
\newblock {\em The Astrophysical Journal}, 498(1):313--326.

\bibitem[Gao et~al., 2022]{2022_Gao}
Gao, W., Fu, T., Sun, J., and Coley, C.~W. (2022).
\newblock Sample efficiency matters: A benchmark for practical molecular
  optimization.
\newblock {\em arXiv preprint arXiv:2206.12411}.

\bibitem[Garcia-Amor{\'{o}}s et~al., 2012]{Garcia-Amoros2012a}
Garcia-Amor{\'{o}}s, J., D{\'{i}}az-Lobo, M., Nonell, S., and Velasco, D.
  (2012).
\newblock {Fastest thermal isomerization of an azobenzene for nanosecond
  photoswitching applications under physiological conditions}.
\newblock {\em Angew. Chem. Int. Ed.}, 51(51):12820--12823.

\bibitem[Gardiner et~al., 2020]{2020_Gardiner_gp}
Gardiner, L.-J., Carrieri, A.~P., Wilshaw, J., Checkley, S., Pyzer-Knapp,
  E.~O., and Krishna, R. (2020).
\newblock Using human in vitro transcriptome analysis to build trustworthy
  machine learning models for prediction of animal drug toxicity.
\newblock {\em Scientific Reports}, 10(1):9522.

\bibitem[{Gardner} and {Done}, 2017]{gardner17}
{Gardner}, E. and {Done}, C. (2017).
\newblock {The origin of the UV/optical lags in NGC 5548}.
\newblock {\em Monthly Notices of the Royal Astronomical Society},
  470(3):3591--3605.

\bibitem[Gardner et~al., 2018]{2018_Gardner}
Gardner, J., Pleiss, G., Weinberger, K.~Q., Bindel, D., and Wilson, A.~G.
  (2018).
\newblock {GP}y{T}orch: Blackbox matrix-matrix {G}aussian process inference
  with {GPU} acceleration.
\newblock In {\em Advances in Neural Information Processing Systems}.

\bibitem[Garnett, 2022]{2022_Garnett}
Garnett, R. (2022).
\newblock {\em {Bayesian Optimization}}.
\newblock Cambridge University Press.
\newblock in preparation.

\bibitem[Garnett et~al., 2014]{2014_Garnett}
Garnett, R., Osborne, M.~A., and Hennig, P. (2014).
\newblock Active learning of linear embeddings for {G}aussian processes.
\newblock In {\em Proceedings of the Thirtieth Conference on Uncertainty in
  Artificial Intelligence}, UAI'14, page 230–239, Arlington, Virginia, USA.
  AUAI Press.

\bibitem[Garnett et~al., 2010]{2009_Garnett}
Garnett, R., Osborne, M.~A., and Roberts, S.~J. (2010).
\newblock {B}ayesian optimization for sensor set selection.
\newblock In {\em Proceedings of the 9th ACM/IEEE International Conference on
  Information Processing in Sensor Networks}, pages 209--219. ACM.

\bibitem[{Gaskell} and {Klimek}, 2003]{gaskell03}
{Gaskell}, C.~M. and {Klimek}, E.~S. (2003).
\newblock {Variability of Active Galactic Nuclei from Optical to X-ray
  Regions}.
\newblock {\em Astronomical and Astrophysical Transactions}, 22(4-5):661--680.

\bibitem[Gaulton et~al., 2012]{2012_Gaulton}
Gaulton, A., Bellis, L.~J., Bento, A.~P., Chambers, J., Davies, M., Hersey, A.,
  Light, Y., McGlinchey, S., Michalovich, D., Al-Lazikani, B., et~al. (2012).
\newblock Ch{EMBL}: a large-scale bioactivity database for drug discovery.
\newblock {\em Nucleic Acids Research}, 40(D1):D1100--D1107.

\bibitem[Ga{\"u}z{\'e}re et~al., 2012]{2012_Gauzere}
Ga{\"u}z{\'e}re, B., Brun, L., Villemin, D., and Brun, M. (2012).
\newblock Graph kernels based on relevant patterns and cycle information for
  chemoinformatics.
\newblock In {\em Proceedings of the 21st International Conference on Pattern
  Recognition (ICPR2012)}, pages 1775--1778. IEEE.

\bibitem[{Gezari} et~al., 2013]{gezari13}
{Gezari}, S., {Martin}, D.~C., {Forster}, K., {Neill}, J.~D., {Huber}, M.,
  {Heckman}, T., {Bianchi}, L., {Morrissey}, P., {Neff}, S.~G., {Seibert}, M.,
  {Schiminovich}, D., {Wyder}, T.~K., {Burgett}, W.~S., {Chambers}, K.~C.,
  {Kaiser}, N., {Magnier}, E.~A., {Price}, P.~A., and {Tonry}, J.~L. (2013).
\newblock {The GALEX time omain survey. I. Selection and classification of over
  a thousand ultraviolet variable sources}.
\newblock {\em The Astrophysical Journal}, 766(1):60.

\bibitem[Ghosh et~al., 2018]{2018_Ghosh}
Ghosh, S., Das, N., Gon{\c{c}}alves, T., Quaresma, P., and Kundu, M. (2018).
\newblock The journey of graph kernels through two decades.
\newblock {\em Computer Science Review}, 27:88--111.

\bibitem[Gibson et~al., 2012]{2012_Gibson}
Gibson, N., Aigrain, S., Roberts, S., Evans, T., Osborne, M., and Pont, F.
  (2012).
\newblock A {G}aussian process framework for modelling instrumental
  systematics: {A}pplication to transmission spectroscopy.
\newblock {\em Monthly notices of the royal astronomical society},
  419(3):2683--2694.

\bibitem[Gilmer et~al., 2017]{2017_Gilmer}
Gilmer, J., Schoenholz, S.~S., Riley, P.~F., Vinyals, O., and Dahl, G.~E.
  (2017).
\newblock Neural message passing for quantum chemistry.
\newblock In {\em Proceedings of the 34th International Conference on Machine
  Learning - Volume 70}, pages 1263--1272.

\bibitem[Ginsbourger et~al., 2010]{2010_Ginsbourger}
Ginsbourger, D., Le~Riche, R., and Carraro, L. (2010).
\newblock Kriging is well-suited to parallelize optimization.
\newblock {\em Computational Intelligence in Expensive Optimization Problems},
  2:131.

\bibitem[G{\'o}mez-Bombarelli et~al., 2018a]{2018_Design}
G{\'o}mez-Bombarelli, R., Wei, J.~N., Duvenaud, D., Hern{\'a}ndez-Lobato,
  J.~M., S{\'a}nchez-Lengeling, B., Sheberla, D., Aguilera-Iparraguirre, J.,
  Hirzel, T.~D., Adams, R.~P., and Aspuru-Guzik, A. (2018a).
\newblock Automatic chemical design using a data-driven continuous
  representation of molecules.
\newblock {\em ACS {C}entral {S}cience}, 4(2):268--276.

\bibitem[G{\'o}mez-Bombarelli et~al., 2018b]{2018_Bombarelli}
G{\'o}mez-Bombarelli, R., Wei, J.~N., Duvenaud, D., Hern{\'a}ndez-Lobato,
  J.~M., S{\'a}nchez-Lengeling, B., Sheberla, D., Aguilera-Iparraguirre, J.,
  Hirzel, T.~D., Adams, R.~P., and Aspuru-Guzik, A. (2018b).
\newblock Automatic chemical design using a data-driven continuous
  representation of molecules.
\newblock {\em ACS Central Science}.

\bibitem[Gonz{\'a}lez et~al., 2016]{gonzalez2016batch}
Gonz{\'a}lez, J., Dai, Z., Hennig, P., and Lawrence, N. (2016).
\newblock Batch {B}ayesian optimization via local penalization.
\newblock In {\em Artificial intelligence and statistics}, pages 648--657.
  PMLR.

\bibitem[Gonzalez et~al., 2016]{2016_Osborne}
Gonzalez, J., Osborne, M., and Lawrence, N. (2016).
\newblock {GLASSES}: {R}elieving the myopia of {B}ayesian optimisation.
\newblock In {\em Artificial Intelligence and Statistics}, pages 790--799.

\bibitem[Goodfellow et~al., 2014]{2014_Goodfellow}
Goodfellow, I., Pouget-Abadie, J., Mirza, M., Xu, B., Warde-Farley, D., Ozair,
  S., Courville, A., and Bengio, Y. (2014).
\newblock Generative adversarial nets.
\newblock {\em Advances in neural information processing systems}, 27.

\bibitem[Gordon et~al., 2020]{2020_Gordon}
Gordon, T.~A., Agol, E., and Foreman-Mackey, D. (2020).
\newblock A fast, two-dimensional {G}aussian process method based on celerite:
  Applications to transiting exoplanet discovery and characterization.
\newblock {\em The Astronomical Journal}, 160(5):240.

\bibitem[Gosnell, 2022]{2022_Gosnell}
Gosnell, A. (2022).
\newblock {\em A Conditional {G}aussian process model for molecular property
  prediction and chemical discovery}.
\newblock PhD thesis, University of Bath.

\bibitem[Goulet‐Hanssens et~al., 2020]{GouletHanssens2020}
Goulet‐Hanssens, A., Eisenreich, F., and Hecht, S. (2020).
\newblock {Enlightening materials with photoswitches}.
\newblock {\em Adv. Mater.}, 32(20):1905966.

\bibitem[Gower, 1971]{gower1971general}
Gower, J.~C. (1971).
\newblock A general coefficient of similarity and some of its properties.
\newblock {\em Biometrics}, pages 857--871.

\bibitem[{GPy}, 2012]{2014_gpy}
{GPy} (since 2012).
\newblock {GPy}: A {G}aussian process framework in {P}ython.
\newblock \url{http://github.com/SheffieldML/GPy}.

\bibitem[Graff et~al., 2022]{2022_Graff}
Graff, D.~E., Aldeghi, M., Morrone, J.~A., Jordan, K.~E., Pyzer-Knapp, E.~O.,
  and Coley, C.~W. (2022).
\newblock Self-focusing virtual screening with active design space pruning.
\newblock {\em arXiv preprint arXiv:2205.01753}.

\bibitem[Grant et~al., 2019]{2019_Grant}
Grant, J., Boukouvalas, A., Griffiths, R.-R., Leslie, D., Vakili, S., and
  De~Cote, E.~M. (2019).
\newblock Adaptive sensor placement for continuous spaces.
\newblock In {\em Proceedings of the 36th International Conference on Machine
  Learning}, pages 2385--2393.

\bibitem[Graves et~al., 2013]{2013_Graves}
Graves, A., Mohamed, A.-R., and Hinton, G. (2013).
\newblock Speech recognition with deep recurrent neural networks.
\newblock In {\em 2013 IEEE international conference on acoustics, speech and
  signal processing}, pages 6645--6649. IEEE.

\bibitem[Greeff and Schoellig, 2020]{2020_Greeff}
Greeff, M. and Schoellig, A.~P. (2020).
\newblock Exploiting differential flatness for robust learning-based tracking
  control using {G}aussian processes.
\newblock {\em IEEE Control Systems Letters}, 5(4):1121--1126.

\bibitem[Griffiths et~al., 2021a]{2021_Griffiths}
Griffiths, R.-R., Aldrick, A.~A., Garcia-Ortegon, M., Lalchand, V., et~al.
  (2021a).
\newblock Achieving robustness to aleatoric uncertainty with heteroscedastic
  {B}ayesian optimisation.
\newblock {\em Machine Learning: Science and Technology}, 3(1):015004.

\bibitem[Griffiths and Hern{\'a}ndez-Lobato, 2020]{2020_Griffiths}
Griffiths, R.-R. and Hern{\'a}ndez-Lobato, J.~M. (2020).
\newblock Constrained {B}ayesian optimization for automatic chemical design
  using variational autoencoders.
\newblock {\em Chemical {S}cience}, 11(2):577--586.

\bibitem[Griffiths et~al., 2021b]{2021_Mrk}
Griffiths, R.-R., Jiang, J., Buisson, D.~J., Wilkins, D., Gallo, L.~C., Ingram,
  A., Grupe, D., Kara, E., Parker, M.~L., Alston, W., et~al. (2021b).
\newblock Modeling the multiwavelength variability of {M}rk 335 using
  {G}aussian processes.
\newblock {\em The Astrophysical Journal}.

\bibitem[Griffiths et~al., 2022]{2022_Gauche}
Griffiths, R.-R., Klarner, L., Moss, H., Ravuri, A., Truong, S.~T., Rankovic,
  B., Du, Y., Jamasb, A.~R., Schwartz, J., Tripp, A., Kell, G., Bourached, A.,
  Chan, A., Moss, J., Guo, C., Lee, A., Schwaller, P., and Tang, J. (2022).
\newblock {GAUCHE}: A library for {G}aussian processes in chemistry.
\newblock In {\em ICML 2022 2nd AI for Science Workshop}.

\bibitem[Griffiths et~al., 2018]{2018_Griffiths}
Griffiths, R.-R., Schwaller, P., and Lee, A.~A. (2018).
\newblock Dataset bias in the natural sciences: A case study in chemical
  reaction prediction and synthesis design.
\newblock {\em ChemRxiv}.

\bibitem[Grosnit et~al., 2021a]{2020_Grosnit}
Grosnit, A., Cowen-Rivers, A.~I., Tutunov, R., Griffiths, R.-R., Wang, J., and
  Bou-Ammar, H. (2021a).
\newblock Are we forgetting about compositional optimisers in {B}ayesian
  optimisation?
\newblock {\em J. Mach. Learn. Res.}, 22:160--1.

\bibitem[Grosnit et~al., 2021b]{2021_Grosnit}
Grosnit, A., Tutunov, R., Maraval, A.~M., Griffiths, R.-R., Cowen-Rivers,
  A.~I., Yang, L., Zhu, L., Lyu, W., Chen, Z., Wang, J., et~al. (2021b).
\newblock High-dimensional {B}ayesian optimisation with variational
  autoencoders and deep metric learning.
\newblock {\em arXiv preprint arXiv:2106.03609}.

\bibitem[{Grupe} et~al., 2007]{grupe07}
{Grupe}, D., {Komossa}, S., and {Gallo}, L.~C. (2007).
\newblock {Discovery of the narrow-line Seyfert 1 galaxy {M}arkarian 335 in a
  historical low X-ray flux state}.
\newblock {\em The Astrophysical Journal Letters}, 668(2):L111--L114.

\bibitem[{Grupe} et~al., 2012]{grupe12}
{Grupe}, D., {Komossa}, S., {Gallo}, L.~C., {Longinotti}, A.~L., {Fabian},
  A.~C., {Pradhan}, A.~K., {Gruberbauer}, M., and {Xu}, D. (2012).
\newblock {A Remarkable Long-term Light Curve and Deep, Low-state Spectroscopy:
  Swift and XMM-Newton Monitoring of the {NLS1} Galaxy {M}kn 335}.
\newblock {\em The Astrophysical Journal Supplement Series}, 199(2):28.

\bibitem[Han et~al., 2016]{Han2016}
Han, M., Luo, Y., Damaschke, B., G{\'{o}}mez, L., Ribas, X., Jose, A.,
  Peretzki, P., Seibt, M., and Clever, G.~H. (2016).
\newblock {Light-controlled interconversion between a self-assembled triangle
  and a rhombicuboctahedral sphere}.
\newblock {\em Angew. Chem. Int. Ed.}, 55(1):445--449.

\bibitem[Harris et~al., 2020]{2020_numpy}
Harris, C.~R., Millman, K.~J., van~der Walt, S.~J., Gommers, R., Virtanen, P.,
  Cournapeau, D., Wieser, E., Taylor, J., Berg, S., Smith, N.~J., Kern, R.,
  Picus, M., Hoyer, S., van Kerkwijk, M.~H., Brett, M., Haldane, A., del
  R{\'{i}}o, J.~F., Wiebe, M., Peterson, P., G{\'{e}}rard-Marchant, P.,
  Sheppard, K., Reddy, T., Weckesser, W., Abbasi, H., Gohlke, C., and Oliphant,
  T.~E. (2020).
\newblock Array programming with {NumPy}.
\newblock {\em Nature}, 585(7825):357--362.

\bibitem[H{\"a}se et~al., 2021a]{2021_Hase}
H{\"a}se, F., Aldeghi, M., Hickman, R.~J., Roch, L.~M., and Aspuru-Guzik, A.
  (2021a).
\newblock Gryffin: {A}n algorithm for {B}ayesian optimization of categorical
  variables informed by expert knowledge.
\newblock {\em Applied Physics Reviews}, 8(3):031406.

\bibitem[H{\"a}se et~al., 2021b]{hase2021olympus}
H{\"a}se, F., Aldeghi, M., Hickman, R.~J., Roch, L.~M., Christensen, M., Liles,
  E., Hein, J.~E., and Aspuru-Guzik, A. (2021b).
\newblock Olympus: a benchmarking framework for noisy optimization and
  experiment planning.
\newblock {\em Machine Learning: Science and Technology}, 2(3):035021.

\bibitem[Hassan et~al., 2006]{2006_Hassan}
Hassan, M., Brown, R.~D., Varma-O’Brien, S., and Rogers, D. (2006).
\newblock Cheminformatics analysis and learning in a data pipelining
  environment.
\newblock {\em Molecular diversity}, 10(3):283--299.

\bibitem[Haywood et~al., 2022]{haywood2021kernel}
Haywood, A.~L., Redshaw, J., Hanson-Heine, M.~W., Taylor, A., Brown, A., Mason,
  A.~M., G{\"a}rtner, T., and Hirst, J.~D. (2022).
\newblock Kernel methods for predicting yields of chemical reactions.
\newblock {\em Journal of chemical information and modeling}, 62(9):2077--2092.

\bibitem[Hebbal et~al., 2021]{2021_Hebbal}
Hebbal, A., Brevault, L., Balesdent, M., Talbi, E.-G., and Melab, N. (2021).
\newblock Bayesian optimization using deep {G}aussian processes with
  applications to aerospace system design.
\newblock {\em Optimization and Engineering}, 22(1):321--361.

\bibitem[Heinze et~al., 2000]{runge_density-functional_1984}
Heinze, H.~H., G{\"o}rling, A., and R{\"o}sch, N. (2000).
\newblock An efficient method for calculating molecular excitation energies by
  time-dependent density-functional theory.
\newblock {\em The Journal of Chemical Physics}, 113(6):2088--2099.

\bibitem[Hennig and Schuler, 2012]{2012_Hennig}
Hennig, P. and Schuler, C.~J. (2012).
\newblock Entropy search for information-efficient global optimization.
\newblock {\em The Journal of Machine Learning Research}, 13(1):1809--1837.

\bibitem[Hermann et~al., 2020]{2020_Hermann}
Hermann, J., Sch{\"a}tzle, Z., and No{\'e}, F. (2020).
\newblock Deep-neural-network solution of the electronic {S}chr{\"o}dinger
  equation.
\newblock {\em Nature Chemistry}, 12(10):891--897.

\bibitem[Hern{\'a}ndez-Lobato et~al., 2014]{2014_Lobato}
Hern{\'a}ndez-Lobato, J.~M., Hoffman, M.~W., and Ghahramani, Z. (2014).
\newblock Predictive entropy search for efficient global optimization of
  black-box functions.
\newblock In {\em Advances in {N}eural {I}nformation {P}rocessing {S}ystems},
  pages 918--926.

\bibitem[Hern{\'a}ndez-Lobato et~al., 2016]{2016_Lobato}
Hern{\'a}ndez-Lobato, J.~M., Li, Y., Rowland, M., Hern{\'a}ndez-Lobato, D.,
  Bui, T.~D., and Turner, R.~E. (2016).
\newblock Black-box $\alpha$-divergence minimization.
\newblock In {\em Proceedings of the 33rd International Conference on
  International Conference on Machine Learning-Volume 48}, pages 1511--1520.

\bibitem[Hern{\'a}ndez-Lobato et~al., 2017]{2017_Knapp}
Hern{\'a}ndez-Lobato, J.~M., Requeima, J., Pyzer-Knapp, E.~O., and
  Aspuru-Guzik, A. (2017).
\newblock Parallel and distributed {T}hompson sampling for large-scale
  accelerated exploration of chemical space.
\newblock In {\em International Conference on Machine Learning}, pages
  1470--1479.

\bibitem[Hickman et~al., 2022]{hickman2022}
Hickman, R., Ruža, J., Roch, L., Tribukait, H., and García-Durán, A. (2022).
\newblock Equipping data-driven experiment planning for self-driving
  laboratories with semantic memory: {C}ase studies of transfer learning in
  chemical reaction optimization.
\newblock {\em ChemRxiv}.

\bibitem[Hie and Yang, 2022]{2022_Hie}
Hie, B.~L. and Yang, K.~K. (2022).
\newblock Adaptive machine learning for protein engineering.
\newblock {\em Current opinion in structural biology}, 72:145--152.

\bibitem[Himanen et~al., 2020]{2020_Himanen}
Himanen, L., J{\"a}ger, M.~O., Morooka, E.~V., Canova, F.~F., Ranawat, Y.~S.,
  Gao, D.~Z., Rinke, P., and Foster, A.~S. (2020).
\newblock {DS}cribe: {L}ibrary of descriptors for machine learning in materials
  science.
\newblock {\em Computer Physics Communications}, 247:106949.

\bibitem[Hoenberg and Kohn, 1964]{hohenberg_inhomogeneous_1964}
Hoenberg, P. and Kohn, W. (1964).
\newblock Inhomogeneous electron gas.
\newblock {\em Phys. Rev}, 136:B864--B871.

\bibitem[Hoffman et~al., 2011]{2011_Hoffman}
Hoffman, M., Brochu, E., and de~Freitas, N. (2011).
\newblock Portfolio allocation for {B}ayesian optimization.
\newblock In {\em Proceedings of the Twenty-Seventh Conference on Uncertainty
  in Artificial Intelligence}, pages 327--336.

\bibitem[Hoffman, 2013]{2013_Hoffman}
Hoffman, M.~W. (2013).
\newblock {\em Decision making with inference and learning methods}.
\newblock PhD thesis, University of British Columbia.

\bibitem[Hoffman et~al., 2020]{2020_Hoffman}
Hoffman, S., Chenthamarakshan, V., Wadhawan, K., Chen, P.-Y., and Das, P.
  (2020).
\newblock Optimizing molecules using efficient queries from property
  evaluations.
\newblock {\em arXiv preprint arXiv:2011.01921}.

\bibitem[{H{\"o}nig} et~al., 2013]{2013_Honig}
{H{\"o}nig}, S.~F., {Kishimoto}, M., {Tristram}, K.~R.~W., {Prieto}, M.~A.,
  {Gandhi}, P., {Asmus}, D., {Antonucci}, R., {Burtscher}, L., {Duschl}, W.~J.,
  and {Weigelt}, G. (2013).
\newblock {Dust in the polar region as a major contributor to the infrared
  emission of active galactic nuclei}.
\newblock {\em The Astrophysical Journal}, 771(2):87.

\bibitem[Hou et~al., 2018]{houGlobalDatasetPlant2018}
Hou, E., Tan, X., Heenan, M., and Wen, D. (2018).
\newblock A global dataset of plant available and unavailable phosphorus in
  natural soils derived by {{Hedley}} method.
\newblock {\em Scientific Data}, 5(1):180166.

\bibitem[Hou et~al., 2019]{Hou2019}
Hou, L., Zhang, X., Cotella, G.~F., Carnicella, G., Herder, M., Schmidt, B.~M.,
  P{\"{a}}tzel, M., Hecht, S., Cacialli, F., and Samor{\`{i}}, P. (2019).
\newblock {Optically switchable organic light-emitting transistors}.
\newblock {\em Nat. Nanotechnol.}, 14(April):347--353.

\bibitem[Howard et~al., 2015]{howard_assessing_2015}
Howard, J.~C., Enyard, J.~D., and Tschumper, G.~S. (2015).
\newblock Assessing the accuracy of some popular dft methods for computing
  harmonic vibrational frequencies of water clusters.
\newblock {\em The Journal of Chemical Physics}, 143(21):214103.

\bibitem[Hu et~al., 2019]{hu2019}
Hu, W., Liu, B., Gomes, J., Zitnik, M., Liang, P., Pande, V., and Leskovec, J.
  (2019).
\newblock Strategies for pre-training graph neural networks.
\newblock In {\em International Conference on Learning Representations}.

\bibitem[Huang et~al., 2006]{2006_Huang}
Huang, D., Allen, T.~T., Notz, W.~I., and Zeng, N. (2006).
\newblock Global optimization of stochastic black-box systems via sequential
  kriging meta-models.
\newblock {\em Journal of {G}lobal {O}ptimization}, 34(3):441--466.

\bibitem[Hughes et~al., 1992]{1992_Hughes}
Hughes, P., Aller, H., and Aller, M. (1992).
\newblock The university of {M}ichigan radio astronomy data base. i-structure
  function analysis and the relation between {BL} {L}acertae objects and
  quasi-stellar objects.
\newblock {\em The Astrophysical Journal}, 396:469--486.

\bibitem[Hwang et~al., 2020]{2020_Hwang}
Hwang, D., Lee, G., Jo, H., Yoon, S., and Ryu, S. (2020).
\newblock A benchmark study on reliable molecular supervised learning via
  {B}ayesian learning.
\newblock {\em arXiv preprint arXiv:2006.07021}.

\bibitem[Ioffe and Szegedy, 2015]{2015_Ioffe}
Ioffe, S. and Szegedy, C. (2015).
\newblock Batch normalization: Accelerating deep network training by reducing
  internal covariate shift.
\newblock In {\em Proceedings of the 32nd International Conference on
  International Conference on Machine Learning - Volume 37}, pages 448--456.

\bibitem[Israelian, 1997]{1997_Victor}
Israelian, G. (1997).
\newblock Victor {Amazasp} {Ambartsumian} (1908--1996).
\newblock {\em Bulletin of the AAS}, 29(4).
\newblock https://baas.aas.org/pub/victor-amazasp-ambartsumian-1908-1996.

\bibitem[Jablonka et~al., 2021]{2021_Jablonka}
Jablonka, K.~M., Jothiappan, G.~M., Wang, S., Smit, B., and Yoo, B. (2021).
\newblock Bias free multiobjective active learning for materials design and
  discovery.
\newblock {\em Nature communications}, 12(1):1--10.

\bibitem[Jablonka et~al., 2023]{2023_Jablonka}
Jablonka, K.~M., Schwaller, P., Ortega-Guerrero, A., and Smit, B. (2023).
\newblock Is {GPT-3} all you need for low-data discovery in chemistry?
\newblock {\em ChemrXiv}.

\bibitem[Jacquemin et~al., 2011]{2011_Jacquemin}
Jacquemin, D., Preat, J., Perp{\`e}te, E.~A., Vercauteren, D.~P., Andr{\'e},
  J.-M., Ciofini, I., and Adamo, C. (2011).
\newblock Absorption spectra of azobenzenes simulated with time-dependent
  density functional theory.
\newblock {\em International Journal of Quantum Chemistry and references},
  111(15):4224--4240.

\bibitem[Jamasb et~al., 2021]{2020_Jamasb}
Jamasb, A.~R., Vi{\~n}as, R., Ma, E.~J., Harris, C., Huang, K., Hall, D.,
  Li{\'o}, P., and Blundell, T.~L. (2021).
\newblock Graphein - a {P}ython library for geometric deep learning and network
  analysis on protein structures and interaction networks.
\newblock {\em bioRxiv}.

\bibitem[{Jaroszynski} et~al., 1980]{1980_Jaroszynski}
{Jaroszynski}, M., {Abramowicz}, M.~A., and {Paczynski}, B. (1980).
\newblock {Supercritical accretion disks around black holes}.
\newblock {\em Acta Astron.}, 30(1):1--34.

\bibitem[Jaynes, 2003]{2003_Jaynes}
Jaynes, E.~T. (2003).
\newblock {\em Probability theory: The logic of science}.
\newblock Cambridge {U}niversity {P}ress.

\bibitem[Jia et~al., 2021]{2021_Linlin}
Jia, L., Ga{\"u}z{\`e}re, B., and Honeine, P. (2021).
\newblock graphkit-learn: {A} {P}ython library for graph kernels based on
  linear patterns.
\newblock {\em Pattern Recognition Letters}, 143:113--121.

\bibitem[Jiang, 2019]{2019_Jiang}
Jiang, J. (2019).
\newblock {\em X-ray studies of the innermost regions of black hole accretion}.
\newblock PhD thesis, University of Cambridge.

\bibitem[{Jiang} et~al., 2021]{jiang21}
{Jiang}, J., {Cheng}, H., {Gallo}, L.~C., {Ho}, L.~C., {Buisson}, D. J.~K.,
  {Fabian}, A.~C., {Harrison}, F.~A., {Parker}, M.~L., {Reynolds}, C.~S.,
  {Steiner}, J.~F., {Tomsick}, J.~A., {Walton}, D.~J., and {Yuan}, W. (2021).
\newblock {The awakening beast in the Seyfert 1 Galaxy KUG 1141+371 - I}.
\newblock {\em Monthly Notices of the Royal Astronomical Society},
  501(1):916--932.

\bibitem[Jin et~al., 2020]{2020_Jin}
Jin, W., Barzilay, R., and Jaakkola, T. (2020).
\newblock Adaptive invariance for molecule property prediction.
\newblock {\em arXiv preprint arXiv:2005.03004}.

\bibitem[Johnson and Maggiora, 1990]{1990_Johnson}
Johnson, M.~A. and Maggiora, G.~M. (1990).
\newblock {\em Concepts and applications of molecular similarity}.
\newblock Wiley.

\bibitem[Jones et~al., 2017]{2017_Jones}
Jones, D.~E., Stenning, D.~C., Ford, E.~B., Wolpert, R.~L., Loredo, T.~J.,
  Gilbertson, C., and Dumusque, X. (2017).
\newblock Improving exoplanet detection power: Multivariate {G}aussian process
  models for stellar activity.
\newblock {\em arXiv preprint arXiv:1711.01318}.

\bibitem[Jones et~al., 1998]{1998_Jones}
Jones, D.~R., Schonlau, M., and Welch, W.~J. (1998).
\newblock Efficient global optimization of expensive black-box functions.
\newblock {\em Journal of Global optimization}, 13(4):455--492.

\bibitem[Jorayev et~al., 2022]{2022_Jorayev}
Jorayev, P., Russo, D., Tibbetts, J.~D., Schweidtmann, A.~M., Deutsch, P.,
  Bull, S.~D., and Lapkin, A.~A. (2022).
\newblock Multi-objective {B}ayesian optimisation of a two-step synthesis of
  p-cymene from crude sulphate turpentine.
\newblock {\em Chemical Engineering Science}, 247:116938.

\bibitem[Jumper et~al., 2021]{2021_Jumper}
Jumper, J., Evans, R., Pritzel, A., Green, T., Figurnov, M., Ronneberger, O.,
  Tunyasuvunakool, K., Bates, R., {\v{Z}}{\'\i}dek, A., Potapenko, A., et~al.
  (2021).
\newblock Highly accurate protein structure prediction with {A}lpha{F}old.
\newblock {\em Nature}, 596(7873):583--589.

\bibitem[Jurafsky and Martin, 2000]{jurasfky2000introduction}
Jurafsky, D. and Martin, J.~H. (2000).
\newblock {\em An introduction to natural language processing, computational
  linguistics, and speech recognition}.
\newblock Prentice Hall.

\bibitem[Kalaitzis and Lawrence, 2011]{2011_Kalaitzis}
Kalaitzis, A.~A. and Lawrence, N.~D. (2011).
\newblock A simple approach to ranking differentially expressed gene expression
  time courses through {G}aussian process regression.
\newblock {\em BMC bioinformatics}, 12(1):1--13.

\bibitem[Kalinin et~al., 2022]{2022_Kalinin}
Kalinin, S.~V., Ziatdinov, M., Sumpter, B.~G., and White, A.~D. (2022).
\newblock Physics is the new data.
\newblock {\em arXiv preprint arXiv:2204.05095}.

\bibitem[{Kammoun} et~al., 2021]{kammoun20}
{Kammoun}, E.~S., {Dov{\v{c}}iak}, M., {Papadakis}, I.~E.,
  {Caballero-Garc{\'\i}a}, M.~D., and {Karas}, V. (2021).
\newblock {UV/optical disk thermal reverberation in active galactic nuclei: An
  in-depth study with an analytic prescription for time-lag spectra}.
\newblock {\em The Astrophysical Journal}, 907(1):20.

\bibitem[Kandasamy et~al., 2020]{2019_dragonfly}
Kandasamy, K., Vysyaraju, K.~R., Neiswanger, W., Paria, B., Collins, C.~R.,
  Schneider, J., Poczos, B., and Xing, E.~P. (2020).
\newblock Tuning hyperparameters without grad students: Scalable and robust
  {B}ayesian optimisation with {D}ragonfly.
\newblock {\em Journal of Machine Learning Research}, 21(81):1--27.

\bibitem[{Kara} et~al., 2013]{kara13}
{Kara}, E., {Fabian}, A.~C., {Cackett}, E.~M., {Uttley}, P., {Wilkins}, D.~R.,
  and {Zoghbi}, A. (2013).
\newblock {Discovery of high-frequency iron K lags in Ark 564 and Mrk 335}.
\newblock {\em Monthly Notices of the Royal Astronomical Society},
  434:1129--1137.

\bibitem[Karamanavis, 2017]{2017_Karamanavis}
Karamanavis, V. (2017).
\newblock {G}aussian processes for blazar variability studies.
\newblock {\em Galaxies}, 5(1):19.

\bibitem[Karamanavis, 2015]{2015_Karamanavis}
Karamanavis, V.~V. (2015).
\newblock {\em Zooming into $\gamma$-ray loud galactic nuclei: {B}roadband
  emission and structure dynamics of the blazar {PKS} 1502+ 106 and the
  narrow-line Seyfert 1 1H 0323+ 342}.
\newblock PhD thesis, Universit{\"a}t zu K{\"o}ln.

\bibitem[Kearnes et~al., 2016]{2016_Kearnes}
Kearnes, S., McCloskey, K., Berndl, M., Pande, V., and Riley, P. (2016).
\newblock Molecular graph convolutions: {M}oving beyond fingerprints.
\newblock {\em Journal of computer-aided molecular design}, 30(8):595--608.

\bibitem[Kell et~al., 2022]{2022_Kell}
Kell, G., Griffiths, R.-R., Bourached, A., and Stork, D.~G. (2022).
\newblock Extracting associations and meanings of objects depicted in artworks
  through bi-modal deep networks.
\newblock {\em arXiv preprint arXiv:2203.07026}.

\bibitem[Kendall and Gal, 2017]{2017_Kendall}
Kendall, A. and Gal, Y. (2017).
\newblock What uncertainties do we need in {B}ayesian deep learning for
  computer vision?
\newblock In {\em Advances in neural information processing systems}, pages
  5574--5584.

\bibitem[Kennedy et~al., 2020]{data16}
Kennedy, A.~D., Sandler, I., Andr{\'e}asson, J., Ho, J., and Beves, J.~E.
  (2020).
\newblock Visible-light photoswitching by azobenzazoles.
\newblock {\em Chemistry--A European Journal}, 26(5):1103--1110.

\bibitem[Kenton and Toutanova, 2019]{devlin2018bert}
Kenton, J. D. M.-W.~C. and Toutanova, L.~K. (2019).
\newblock {BERT}: {P}re-training of deep bidirectional transformers for
  language understanding.
\newblock In {\em Proceedings of NAACL-HLT}, pages 4171--4186.

\bibitem[Kerr, 1963]{1963_Kerr}
Kerr, R.~P. (1963).
\newblock Gravitational field of a spinning mass as an example of algebraically
  special metrics.
\newblock {\em Phys. Rev. Lett.}, 11:237--238.

\bibitem[Kersting et~al., 2007]{2007_Kersting}
Kersting, K., Plagemann, C., Pfaff, P., and Burgard, W. (2007).
\newblock Most likely heteroscedastic {G}aussian process regression.
\newblock In {\em Proceedings of the 24th International Conference on Machine
  Learning}, pages 393--400.

\bibitem[Kim et~al., 2019]{2018_Kim}
Kim, H., Mnih, A., Schwarz, J., Garnelo, M., Eslami, A., Rosenbaum, D.,
  Vinyals, O., and Teh, Y.~W. (2019).
\newblock Attentive neural processes.
\newblock In {\em International Conference on Learning Representations}.

\bibitem[Kingma and Ba, 2015]{2014_Adam}
Kingma, D. and Ba, J. (2015).
\newblock Adam: {A} method for stochastic optimization.
\newblock {\em International Conference on Learning Representations}.

\bibitem[Kingma and Welling, 2014]{2014_Kingma}
Kingma, D.~P. and Welling, M. (2014).
\newblock {Auto-Encoding Variational Bayes}.
\newblock In {\em 2nd International Conference on Learning Representations,
  {ICLR} 2014, Banff, AB, Canada, April 14-16, 2014, Conference Track
  Proceedings}.

\bibitem[Kipf and Welling, 2017]{2017_Kipf}
Kipf, T.~N. and Welling, M. (2017).
\newblock Semi-supervised classification with graph convolutional networks.
\newblock In {\em International Conference on Learning Representations (ICLR)}.

\bibitem[Knie et~al., 2014]{data2}
Knie, C., Utecht, M., Zhao, F., Kulla, H., Kovalenko, S., Brouwer, A.~M.,
  Saalfrank, P., Hecht, S., and Bl{\'e}ger, D. (2014).
\newblock ortho-{F}luoroazobenzenes: visible light switches with very
  long-lived {Z} isomers.
\newblock {\em Chemistry--A European Journal}, 20(50):16492--16501.

\bibitem[Komer et~al., 2019]{2019_Komer}
Komer, B., Bergstra, J., and Eliasmith, C. (2019).
\newblock Hyperopt-{S}klearn.
\newblock In {\em Automated Machine Learning}, pages 97--111. Springer, Cham.

\bibitem[Komossa et~al., 2020]{2020_Komossa}
Komossa, S., Grupe, D., Gallo, L., Poulos, P., Blue, D., Kara, E., Kriss, G.,
  Longinotti, A., Parker, M., and Wilkins, D. (2020).
\newblock Lifting the curtain: The {S}eyfert galaxy {M}rk 335 emerges from deep
  low-state in a sequence of rapid flare events.
\newblock {\em Astronomy \& Astrophysics}, 643:L7.

\bibitem[Korovina et~al., 2020]{2020_Korovina}
Korovina, K., Xu, S., Kandasamy, K., Neiswanger, W., Poczos, B., Schneider, J.,
  and Xing, E. (2020).
\newblock Chem{BO}: Bayesian optimization of small organic molecules with
  synthesizable recommendations.
\newblock In {\em International {C}onference on {A}rtificial {I}ntelligence and
  {S}tatistics}, pages 3393--3403. PMLR.

\bibitem[Krenn et~al., 2020]{2020_Krenn}
Krenn, M., H{\"a}se, F., Nigam, A., Friederich, P., and Aspuru-Guzik, A.
  (2020).
\newblock Self-referencing embedded strings ({SELFIES}): A 100\% robust
  molecular string representation.
\newblock {\em Machine Learning: Science and Technology}, 1(4):045024.

\bibitem[Krizhevsky et~al., 2012]{2012_Krizhevsky}
Krizhevsky, A., Sutskever, I., and Hinton, G.~E. (2012).
\newblock Imagenet classification with deep convolutional neural networks.
\newblock {\em Advances in neural information processing systems}, 25.

\bibitem[Kuindersma et~al., 2013]{2013_Kuindersma}
Kuindersma, S.~R., Grupen, R.~A., and Barto, A.~G. (2013).
\newblock Variable risk control via stochastic optimization.
\newblock {\em The International Journal of Robotics Research}, 32(7):806--825.

\bibitem[Kumar et~al., 2019]{data6}
Kumar, P., Srivastava, A., Sah, C., Devi, S., and Venkataramani, S. (2019).
\newblock Arylazo-3, 5-dimethylisoxazoles: Azoheteroarene photoswitches
  exhibiting high {Z}-isomer stability, solid-state photochromism, and
  reversible light-induced phase transition.
\newblock {\em Chemistry--A European Journal}, 25(51):11924--11932.

\bibitem[Kushner, 1962]{1962_Kushner}
Kushner, H.~J. (1962).
\newblock A versatile stochastic model of a function of unknown and time
  varying form.
\newblock {\em Journal of Mathematical Analysis and Applications},
  5(1):150--167.

\bibitem[Kushner, 1964]{1964_Kushner}
Kushner, H.~J. (1964).
\newblock {A New Method of Locating the Maximum Point of an Arbitrary Multipeak
  Curve in the Presence of Noise}.
\newblock {\em Journal of Basic Engineering}, 86(1):97+.

\bibitem[Kusner et~al., 2017]{2017_Grammar}
Kusner, M.~J., Paige, B., and Hern{\'a}ndez-Lobato, J.~M. (2017).
\newblock Grammar variational autoencoder.
\newblock In {\em Proceedings of the 34th International Conference on Machine
  Learning-Volume 70}, pages 1945--1954.

\bibitem[Lalchand and Rasmussen, 2019]{2019_Lalchand}
Lalchand, V. and Rasmussen, C.~E. (2019).
\newblock Approximate inference for fully {B}ayesian {G}aussian process
  regression.
\newblock {\em arXiv preprint arXiv:1912.13440}.

\bibitem[Lambard and Gracheva, 2020]{2020_Lambard}
Lambard, G. and Gracheva, E. (2020).
\newblock {SMILES}-{X}: {A}utonomous molecular compounds characterization for
  small datasets without descriptors.
\newblock {\em Machine Learning: Science and Technology}, 1(2):025004.

\bibitem[Landrum, 2013]{rdkit}
Landrum, G. (2013).
\newblock {RDK}it: A software suite for cheminformatics, computational
  chemistry, and predictive modeling.

\bibitem[Langellier et~al., 2021]{2020_Langellier}
Langellier, N., Milbourne, T.~W., Phillips, D.~F., Haywood, R.~D., Saar, S.~H.,
  Mortier, A., Malavolta, L., Thompson, S., Cameron, A.~C., Dumusque, X., and
  et~al. (2021).
\newblock Detection limits of low-mass, long-period exoplanets using {G}aussian
  processes applied to harps-n solar radial velocities.
\newblock {\em The Astronomical Journal}, 161(6):287.

\bibitem[Lauritzen, 1981]{1981_Lauritzen}
Lauritzen, S.~L. (1981).
\newblock Time series analysis in 1880: {A} discussion of contributions made by
  {TN} {T}hiele.
\newblock {\em International Statistical Review/Revue Internationale de
  Statistique}, pages 319--331.

\bibitem[L{\'a}zaro-Gredilla and Titsias, 2011]{2011_Lazaro}
L{\'a}zaro-Gredilla, M. and Titsias, M.~K. (2011).
\newblock Variational heteroscedastic {G}aussian process regression.
\newblock In {\em Proceedings of the 28th International Conference on
  International Conference on Machine Learning}, pages 841--848. Omnipress.

\bibitem[Le et~al., 2005]{2005_Le}
Le, Q.~V., Smola, A.~J., and Canu, S. (2005).
\newblock Heteroscedastic {G}aussian process regression.
\newblock In {\em Proceedings of the 22nd {I}nternational {C}onference on
  {M}achine {L}earning}, pages 489--496.

\bibitem[Leach and Leach, 2001]{leach_molecular_2001}
Leach, A.~R. and Leach, A.~R. (2001).
\newblock {\em Molecular modelling: principles and applications}.
\newblock Pearson education.

\bibitem[Lee et~al., 2022]{Lee2022}
Lee, H., Tessarolo, J., Langbehn, D., Baksi, A., Herges, R., and Clever, G.~H.
  (2022).
\newblock {Light-powered dissipative assembly of diazocine coordination cages}.
\newblock {\em J. Am. Chem. Soc.}, 144(7):3099--3105.

\bibitem[Letham et~al., 2019]{2019_Letham}
Letham, B., Karrer, B., Ottoni, G., Bakshy, E., et~al. (2019).
\newblock Constrained {B}ayesian optimization with noisy experiments.
\newblock {\em Bayesian Analysis}, 14(2):495--519.

\bibitem[{Lewin} et~al., 2022]{2022_Lewin}
{Lewin}, C., {Kara}, E., {Wilkins}, D., {Mastroserio}, G., {Garcia}, J.,
  {Ingram}, A., {Van Der Klis}, M., {Dauser}, T., {Lucchini}, M., {Connors},
  R., {Wang}, J., {Zhang}, R., {Lohfink}, A., {Fabian}, A., {Reynolds}, C.,
  {Tombesi}, F., and {Jiang}, J. (2022).
\newblock {X-ray reverberation mapping of {A}rk 564 using {G}aussian process
  regression}.
\newblock In {\em AAS/High Energy Astrophysics Division}, volume~54 of {\em
  AAS/High Energy Astrophysics Division}, page 106.45.

\bibitem[Li et~al., 2021]{2020_Li}
Li, M., Zhou, J., Hu, J., Fan, W., Zhang, Y., Gu, Y., and Karypis, G. (2021).
\newblock {DGL}-{L}ife{S}ci: {A}n open-source toolkit for deep learning on
  graphs in life science.
\newblock {\em ACS omega}, 6(41):27233--27238.

\bibitem[Liu and Nocedal, 1989]{1989_Liu}
Liu, D.~C. and Nocedal, J. (1989).
\newblock On the limited memory {BFGS} method for large scale optimization.
\newblock {\em Mathematical Programming}, 45(1):503--528.

\bibitem[Liu et~al., 2020]{2020_Liu}
Liu, H., Ong, Y.-S., and Cai, J. (2020).
\newblock Large-scale heteroscedastic regression via {G}aussian process.
\newblock {\em IEEE transactions on neural networks and learning systems},
  32(2):708--721.

\bibitem[Lodhi et~al., 2002]{lodhi2002text}
Lodhi, H., Saunders, C., Shawe-Taylor, J., Cristianini, N., and Watkins, C.
  (2002).
\newblock Text classification using string kernels.
\newblock {\em Journal of machine learning research}, 2(Feb):419--444.

\bibitem[{Longinotti} et~al., 2019]{2019_Longinotti}
{Longinotti}, A.~L., {Kriss}, G., {Krongold}, Y., {Arellano-Cordova}, K.~Z.,
  {Komossa}, S., {Gallo}, L., {Grupe}, D., {Mathur}, S., {Parker}, M.~L.,
  {Pradhan}, A., and {Wilkins}, D. (2019).
\newblock {The XMM-Newton/HST view of the obscuring outflow in the Seyfert
  galaxy Mrk 335 observed at extremely low X-ray flux}.
\newblock {\em The Astrophysical Journal}, 875(2):150.

\bibitem[{Longinotti} et~al., 2013]{2013_Longinotti}
{Longinotti}, A.~L., {Krongold}, Y., {Kriss}, G.~A., {Ely}, J., {Gallo}, L.,
  {Grupe}, D., {Komossa}, S., {Mathur}, S., and {Pradhan}, A. (2013).
\newblock {The rise of an ionized wind in the narrow-line Seyfert 1 galaxy Mrk
  335 observed by XMM-Newton and HST}.
\newblock {\em The Astrophysical Journal}, 766(2):104.

\bibitem[Lopez et~al., 2017]{2017_Lopez}
Lopez, S.~A., Sanchez-Lengeling, B., de~Goes~Soares, J., and Aspuru-Guzik, A.
  (2017).
\newblock Design principles and top non-fullerene acceptor candidates for
  organic photovoltaics.
\newblock {\em Joule}, 1(4):857--870.

\bibitem[Luger et~al., 2021a]{2021_Luger_next}
Luger, R., Foreman-Mackey, D., and Hedges, C. (2021a).
\newblock Mapping stellar surfaces. ii. an interpretable {G}aussian process
  model for light curves.
\newblock {\em The Astronomical Journal}, 162(3):124.

\bibitem[Luger et~al., 2021b]{2021_Luger1}
Luger, R., Foreman-Mackey, D., Hedges, C., and Hogg, D.~W. (2021b).
\newblock Mapping stellar surfaces. i. degeneracies in the rotational
  light-curve problem.
\newblock {\em The Astronomical Journal}, 162(3):123.

\bibitem[Lynden-Bell, 1969]{1969_Lynden}
Lynden-Bell, D. (1969).
\newblock Galactic nuclei as collapsed old quasars.
\newblock {\em Nature}, 223(5207):690--694.

\bibitem[Lyon et~al., 2020]{2020_Lyon}
Lyon, R., Hosenie, Z., Mootoovaloo, A., Stappers, B., and McBride, V. (2020).
\newblock Imbalance learning for variable star classification.
\newblock {\em Monthly Notices of the Royal Astronomical Society},
  493(4):6050--6059.

\bibitem[MacKay, 1992]{1992_MacKay_Hierarchical}
MacKay, D.~J. (1992).
\newblock Bayesian interpolation.
\newblock {\em Neural {C}omputation}, 4(3):415--447.

\bibitem[MacKay, 2003]{2003_MacKay}
MacKay, D.~J. (2003).
\newblock {\em Information theory, inference and learning algorithms}.
\newblock Cambridge {U}niversity {P}ress.

\bibitem[MacKay, 1991]{1992_MacKay}
MacKay, D. J.~C. (1991).
\newblock {\em Bayesian Methods for Adaptive Models}.
\newblock PhD thesis, California Institute of Technology.

\bibitem[Makarova et~al., 2021]{2021_Makarova}
Makarova, A., Usmanova, I., Bogunovic, I., and Krause, A. (2021).
\newblock Risk-averse heteroscedastic {B}ayesian optimization.
\newblock {\em Advances in Neural Information Processing Systems},
  34:17235--17245.

\bibitem[Maraval et~al., 2022]{2022_Maraval}
Maraval, A., Zimmer, M., Grosnit, A., Tutunov, R., Wang, J., and Ammar, H.~B.
  (2022).
\newblock Sample-efficient optimisation with probabilistic transformer
  surrogates.
\newblock {\em arXiv preprint arXiv:2205.13902}.

\bibitem[Maro{\~n}as et~al., 2021]{2020_Maronas}
Maro{\~n}as, J., Hamelijnck, O., Knoblauch, J., and Damoulas, T. (2021).
\newblock Transforming {G}aussian processes with normalizing flows.
\newblock In {\em International Conference on Artificial Intelligence and
  Statistics}, pages 1081--1089. PMLR.

\bibitem[Martinez-Cantin et~al., 2017]{2017_Cantin}
Martinez-Cantin, R., McCourt, M., and Tee, K. (2017).
\newblock Robust {B}ayesian optimization with student-t likelihood.
\newblock {\em arXiv preprint arXiv:1707.05729}.

\bibitem[Marx, 2013]{2013_Cern}
Marx, V. (2013).
\newblock The big challenges of big data.
\newblock {\em Nature}, 498(7453):255--260.

\bibitem[Matthews et~al., 2017]{2017_Matthews}
Matthews, A. G. d.~G., Van Der~Wilk, M., Nickson, T., Fujii, K., Boukouvalas,
  A., Le{\'o}n-Villagr{\'a}, P., Ghahramani, Z., and Hensman, J. (2017).
\newblock {GP}flow: {A} {G}aussian process library using {T}ensor{F}low.
\newblock {\em J. Mach. Learn. Res.}, 18(40):1--6.

\bibitem[Maus et~al., 2022]{2022_Maus}
Maus, N., Jones, H.~T., Moore, J.~S., Kusner, M.~J., Bradshaw, J., and Gardner,
  J.~R. (2022).
\newblock Local latent space {B}ayesian optimization over structured inputs.
\newblock {\em arXiv preprint arXiv:2201.11872}.

\bibitem[McGregor and Pallai, 1997]{1997_McGregor}
McGregor, M.~J. and Pallai, P.~V. (1997).
\newblock Clustering of large databases of compounds: {U}sing the {MDL}
  “keys” as structural descriptors.
\newblock {\em Journal of chemical information and computer sciences},
  37(3):443--448.

\bibitem[Mchardy et~al., 2004]{2004_Mchardy}
Mchardy, I., Papadakis, I., Uttley, P., Page, M.~J., and Mason, K. (2004).
\newblock Combined long and short time-scale {X}-ray variability of {NGC} 4051
  with {RXTE} and {XMM}-{N}ewton.
\newblock {\em Monthly Notices of the Royal Astronomical Society},
  348:783--801.

\bibitem[{McHardy} et~al., 2016]{mchardy16}
{McHardy}, I.~M., {Connolly}, S.~D., {Peterson}, B.~M., {Bieryla}, A., {Chand},
  H., {Elvis}, M.~S., {Emmanoulopoulos}, D., {Falco}, E., {Gandhi}, P.,
  {Kaspi}, S., {Latham}, D., {Lira}, P., {McCully}, C., {Netzer}, H., and
  {Uemura}, M. (2016).
\newblock {The origin of UV-optical variability in AGN and test of disc models:
  XMM-Newton and ground-based observations of NGC 4395}.
\newblock {\em Astronomische Nachrichten}, 337(4-5):500.

\bibitem[McInnes et~al., 2018]{2018_UMAP}
McInnes, L., Healy, J., Saul, N., and Gro{\ss}berger, L. (2018).
\newblock {UMAP}: Uniform manifold approximation and projection.
\newblock {\em Journal of Open Source Software}, 3(29):861.

\bibitem[Michell, 1784]{1784_Michell}
Michell, J. (1784).
\newblock On the means of discovering the distance, magnitude, \&c. of the
  fixed stars, in consequence of the diminution of the velocity of their light,
  in case such a diminution should be found to take place in any of them, and
  such other data should be procured from observations, as would be farther
  necessary for that purpose. by the rev. john michell, b. d. f. r. s. in a
  letter to {H}enry {C}avendish, esq. f. r. s. and a. s.
\newblock {\em Philosophical Transactions of the Royal Society of London},
  74:35--57.

\bibitem[Miller et~al., 2010]{2010_Miller}
Miller, L., Turner, T., Reeves, J., Lobban, A., Kraemer, S., and Crenshaw, D.
  (2010).
\newblock Spectral variability and reverberation time delays in the {S}uzaku
  {X}-ray spectrum of {NGC} 4051.
\newblock {\em Monthly Notices of the Royal Astronomical Society},
  403(1):196--210.

\bibitem[Mobley and Guthrie, 2014]{2014_Mobley}
Mobley, D.~L. and Guthrie, J.~P. (2014).
\newblock Free{S}olv: {A} database of experimental and calculated hydration
  free energies, with input files.
\newblock {\em Journal of {C}omputer-{A}ided {M}olecular {D}esign},
  28(7):711--720.

\bibitem[Mo{\v{c}}kus, 1974]{1975_Mockus}
Mo{\v{c}}kus, J. (1974).
\newblock On {B}ayesian methods for seeking the extremum.
\newblock In {\em Optimization techniques IFIP technical conference}.

\bibitem[Mockus~J. and {\v{Z}}ilinskas, 1978]{1978_Mockus}
Mockus~J., T.~V. and {\v{Z}}ilinskas (1978).
\newblock The application of {B}ayesian methods for seeking the extremum.
\newblock In Dixon, I. and Szego, G., editors, {\em Toward Global
  Optimization}. Elsevier, Amsterdam.

\bibitem[{Morgan} et~al., 2008]{morgan08}
{Morgan}, C.~W., {Kochanek}, C.~S., {Dai}, X., {Morgan}, N.~D., and {Falco},
  E.~E. (2008).
\newblock {X-ray and optical microlensing in the lensed quasar PG 1115+080}.
\newblock {\em The Astrophysical Journal}, 689(2):755--761.

\bibitem[Moriconi et~al., 2020]{2020_Moriconi}
Moriconi, R., Deisenroth, M.~P., and Kumar, K.~S. (2020).
\newblock High-dimensional {B}ayesian optimization using low-dimensional
  feature spaces.
\newblock {\em Machine Learning}, 109(9):1925--1943.

\bibitem[Moss et~al., 2020a]{2020_Moss}
Moss, H., Leslie, D., Beck, D., Gonzalez, J., and Rayson, P. (2020a).
\newblock {BOSS}: {B}ayesian optimization over string spaces.
\newblock {\em Advances in neural information processing systems},
  33:15476--15486.

\bibitem[{Moss} and {Griffiths}, 2020]{2020_flowmo}
{Moss}, H.~B. and {Griffiths}, R.-R. (2020).
\newblock {Gaussian Process Molecule Property Prediction with FlowMO}.
\newblock {\em arXiv e-prints}, page arXiv:2010.01118.

\bibitem[Moss et~al., 2021]{moss2021gibbon}
Moss, H.~B., Leslie, D.~S., Gonzalez, J., and Rayson, P. (2021).
\newblock {GIBBON}: {G}eneral-purpose information-based {B}ayesian
  optimisation.
\newblock {\em Journal of Machine Learning Research}, 22:1--49.

\bibitem[Moss et~al., 2020b]{moss2020bosh}
Moss, H.~B., Leslie, D.~S., and Rayson, P. (2020b).
\newblock {BOSH}: {B}ayesian optimization by sampling hierarchically.
\newblock {\em arXiv preprint arXiv:2007.00939}.

\bibitem[Moss et~al., 2020c]{moss2020mumbo}
Moss, H.~B., Leslie, D.~S., and Rayson, P. (2020c).
\newblock {MUMBO}: {M}ulti-task max-value {B}ayesian optimization.
\newblock In {\em Joint European Conference on Machine Learning and Knowledge
  Discovery in Databases}, pages 447--462. Springer.

\bibitem[Mukadum et~al., 2021]{2021_Mukadum}
Mukadum, F., Nguyen, Q., Adrion, D., Appleby, G., Chen, R., Dang, H., Chang,
  R., Garnett, R., and Lopez, S. (2021).
\newblock Efficient discovery of visible light-activated azoarene photoswitches
  with long half-lives using active search.
\newblock {\em Journal of Chemical Information and Modelling},
  61(11):5524--5534.

\bibitem[M{\"u}ller et~al., 2022]{muller2022automated}
M{\"u}ller, P., Clayton, A.~D., Manson, J., Riley, S., May, O.~S., Govan, N.,
  Notman, S., Ley, S.~V., Chamberlain, T.~W., and Bourne, R.~A. (2022).
\newblock Automated multi-objective reaction optimisation: {W}hich algorithm
  should {I} use?
\newblock {\em Reaction Chemistry \& Engineering}, 7(4):987--993.

\bibitem[Mu{\~n}oz-Gonz{\'a}lez et~al., 2011]{2011_Munoz}
Mu{\~n}oz-Gonz{\'a}lez, L., L{\'a}zaro-Gredilla, M., and Figueiras-Vidal, A.~R.
  (2011).
\newblock Heteroscedastic {G}aussian process regression using expectation
  propagation.
\newblock In {\em 2011 IEEE International Workshop on Machine Learning for
  Signal Processing}, pages 1--6. IEEE.

\bibitem[Murray, 2008]{2008_Murray}
Murray, I. (2008).
\newblock Introduction to {G}aussian processes.

\bibitem[{Mushotzky} et~al., 1993]{mushotzky93}
{Mushotzky}, R.~F., {Done}, C., and {Pounds}, K.~A. (1993).
\newblock {X-ray spectra and time variability of active galactic nuclei.}
\newblock {\em Annual Review of Astronomy and Astrophysics}, 31:717--717.

\bibitem[Mustroph, 1991a]{data11}
Mustroph, H. (1991a).
\newblock Studies on the {UV}-vis absorption spectra of azo dyes: {P}art 25.
  analysis of the fine structure of the $\pi_1-\pi_1^*$ band of 4-donor-sub.
\newblock {\em Dyes and pigments}, 15(2):129--137.

\bibitem[Mustroph, 1991b]{data12}
Mustroph, H. (1991b).
\newblock Studies on {UV}-vis absorption spectra of azo dyes: {P}art 26.
  electronic absorption spectra of 4, 4'-diaminoazobenzenes.
\newblock {\em Dyes and pigments}, 16(3):223--230.

\bibitem[Mustroph and Gussmann, 1990]{data10}
Mustroph, H. and Gussmann, F. (1990).
\newblock Studies on {UV}-vis absorption spectra of azo dyes. 24. the different
  effect of a 2-methoxy and a 3-methoxy group in 4-nn-diethylaminoazobenzenes
  on colour.
\newblock {\em Journal f{\"u}r Praktische Chemie}, 332(1):93--97.

\bibitem[{Narayan} et~al., 1997]{1997_Narayan}
{Narayan}, R., {Garcia}, M.~R., and {McClintock}, J.~E. (1997).
\newblock {Advection-dominated accretion and black hole event horizons}.
\newblock {\em Astrophysical Journal Letters}, 478(2):L79--L82.

\bibitem[{Narayan} and {Yi}, 1994]{1994_Narayan}
{Narayan}, R. and {Yi}, I. (1994).
\newblock {Advection-dominated accretion: {A} self-similar Solution}.
\newblock {\em Astrophysical Journal letters}, 428:L13.

\bibitem[{Narayan} and {Yi}, 1995]{1995_Narayan}
{Narayan}, R. and {Yi}, I. (1995).
\newblock {Advection-dominated accretion: {S}elf-similarity and bipolar
  outflows}.
\newblock {\em The Astrophysical Journal}, 444:231.

\bibitem[Neilson and Bielawski, 2013]{Neilson2013}
Neilson, B.~M. and Bielawski, C.~W. (2013).
\newblock {Illuminating photoswitchable catalysis}.
\newblock {\em ACS Catal.}, 3(8):1874--1885.

\bibitem[Ng, 2021]{2021_Ng}
Ng, A. (2021).
\newblock {MLO}ps: {F}rom model-centric to data-centric {AI}.
\newblock {\em Published via: https://www.youtube.com/watch?v=06-AZXmwHjo}.

\bibitem[{Nigam} et~al., 2021]{2021_Nigam}
{Nigam}, A., {Pollice}, R., {Hurley}, M. F.~D., {Hickman}, R.~J., {Aldeghi},
  M., {Yoshikawa}, N., {Chithrananda}, S., {Voelz}, V.~A., and {Aspuru-Guzik},
  A. (2021).
\newblock {Assigning Confidence to Molecular Property Prediction}.
\newblock {\em arXiv e-prints}, page arXiv:2102.11439.

\bibitem[Nikolentzos et~al., 2021]{Nikolentzos_2021}
Nikolentzos, G., Siglidis, G., and Vazirgiannis, M. (2021).
\newblock Graph kernels: {A} survey.
\newblock {\em Journal of Artificial Intelligence Research}, 72:943--1027.

\bibitem[Nikolov et~al., 2018]{2018_Nikolov}
Nikolov, N., Sing, D.~K., Fortney, J.~J., Goyal, J.~M., Drummond, B., Evans,
  T.~M., Gibson, N.~P., De~Mooij, E.~J., Rustamkulov, Z., Wakeford, H.~R.,
  et~al. (2018).
\newblock An absolute sodium abundance for a cloud-free ‘hot {S}aturn’
  exoplanet.
\newblock {\em Nature}, 557(7706):526--529.

\bibitem[{Novikov} and {Thorne}, 1973]{1973_Novikov}
{Novikov}, I.~D. and {Thorne}, K.~S. (1973).
\newblock {Astrophysics of black holes.}
\newblock In {\em Black Holes (Les Astres Occlus)}, pages 343--450.

\bibitem[Olofsson et~al., 2018]{2018_Olofsson}
Olofsson, S., Mehrian, M., Calandra, R., Geris, L., Deisenroth, M.~P., and
  Misener, R. (2018).
\newblock Bayesian multiobjective optimisation with mixed analytical and
  black-box functions: Application to tissue engineering.
\newblock {\em IEEE Transactions on Biomedical Engineering}, 66(3):727--739.

\bibitem[{Paczynski} and {Abramowicz}, 1982]{1982_Paczynski}
{Paczynski}, B. and {Abramowicz}, M.~A. (1982).
\newblock {A model of a thick disk with equatorial accretion}.
\newblock {\em The Astrophysical Journal}, 253:897--907.

\bibitem[{Paczynski} and {Bisnovatyi-Kogan}, 1981]{1981_Paczynski}
{Paczynski}, B. and {Bisnovatyi-Kogan}, G. (1981).
\newblock {A model of a thin accretion disk around a black Hole}.
\newblock {\em Acta Astron.}, 31:283.

\bibitem[{Paczy{\'n}sky} and {Wiita}, 1980]{1980_Paczynski}
{Paczy{\'n}sky}, B. and {Wiita}, P.~J. (1980).
\newblock {Thick accretion disks and supercritical luminosities}.
\newblock {\em Astron. Astrophys.}, 88:23.

\bibitem[Parker et~al., 2019]{2019_Parker}
Parker, M.~L., Longinotti, A.~L., Schartel, N., Grupe, D., Komossa, S., Kriss,
  G., Fabian, A.~C., Gallo, L., Harrison, F.~A., Jiang, J., and et~al. (2019).
\newblock The nuclear environment of the {NLS1} {M}rk 335: Obscuration of the
  {X}-ray line emission by a variable outflow.
\newblock {\em Monthly Notices of the Royal Astronomical Society},
  490(1):683–697.

\bibitem[{Parker} et~al., 2014]{2014_Parker_New}
{Parker}, M.~L., {Wilkins}, D.~R., {Fabian}, A.~C., {Grupe}, D., {Dauser}, T.,
  {Matt}, G., {Harrison}, F.~A., {Brenneman}, L., {Boggs}, S.~E.,
  {Christensen}, F.~E., {Craig}, W.~W., {Gallo}, L.~C., {Hailey}, C.~J.,
  {Kara}, E., {Komossa}, S., {Marinucci}, A., {Miller}, J.~M., {Risaliti}, G.,
  {Stern}, D., {Walton}, D.~J., and {Zhang}, W.~W. (2014).
\newblock {The NuSTAR spectrum of Mrk 335: extreme relativistic effects within
  two gravitational radii of the event horizon?}
\newblock {\em Monthly Notices of the Royal Astronomical Society},
  443(2):1723--1732.

\bibitem[Pass et~al., 2019]{2019_Pass}
Pass, E.~K., Cowan, N.~B., Cubillos, P.~E., and Sklar, J.~G. (2019).
\newblock Estimating dayside effective temperatures of hot {J}upiters and
  associated uncertainties through {G}aussian process regression.
\newblock {\em Monthly Notices of the Royal Astronomical Society},
  489(1):941--950.

\bibitem[Pedregosa et~al., 2011]{scikit-learn}
Pedregosa, F., Varoquaux, G., Gramfort, A., Michel, V., Thirion, B., Grisel,
  O., Blondel, M., Prettenhofer, P., Weiss, R., Dubourg, V., Vanderplas, J.,
  Passos, A., Cournapeau, D., Brucher, M., Perrot, M., and Duchesnay, E.
  (2011).
\newblock Scikit-learn: Machine learning in {P}ython.
\newblock {\em Journal of Machine Learning Research}, 12:2825--2830.

\bibitem[Perdew et~al., 1996]{1996_Perdew}
Perdew, J.~P., Ernzerhof, M., and Burke, K. (1996).
\newblock Rationale for mixing exact exchange with density functional
  approximations.
\newblock {\em The Journal of Chemical Physics}, 105(22):9982--9985.

\bibitem[Perera et~al., 2018]{perera2018platform}
Perera, D., Tucker, J.~W., Brahmbhatt, S., Helal, C.~J., Chong, A., Farrell,
  W., Richardson, P., and Sach, N.~W. (2018).
\newblock A platform for automated nanomole-scale reaction screening and
  micromole-scale synthesis in flow.
\newblock {\em Science}, 359(6374):429--434.

\bibitem[Perrone et~al., 2018]{2018_Perrone}
Perrone, V., Jenatton, R., Seeger, M., and Archambeau, C. (2018).
\newblock Scalable hyperparameter transfer learning.
\newblock In {\em Proceedings of the 32nd International Conference on Neural
  Information Processing Systems}, pages 6846--6856.

\bibitem[Peterson, 1997]{1997_Peterson}
Peterson, B.~M. (1997).
\newblock {\em An introduction to active galactic nuclei}.
\newblock Cambridge {U}niversity {P}ress.

\bibitem[Picheny et~al., 2022]{torossian2020bayesian}
Picheny, V., Moss, H., Torossian, L., and Durrande, N. (2022).
\newblock Bayesian quantile and expectile optimisation.
\newblock In {\em The 38th Conference on Uncertainty in Artificial
  Intelligence}.

\bibitem[Picheny et~al., 2013]{2013_Picheny}
Picheny, V., Wagner, T., and Ginsbourger, D. (2013).
\newblock A benchmark of kriging-based infill criteria for noisy optimization.
\newblock {\em Structural and Multidisciplinary Optimization}, 48(3):607--626.

\bibitem[Plowright et~al., 2012]{2012_Plowright}
Plowright, A.~T., Johnstone, C., Kihlberg, J., Pettersson, J., Robb, G., and
  Thompson, R.~A. (2012).
\newblock Hypothesis driven drug design: improving quality and effectiveness of
  the design-make-test-analyse cycle.
\newblock {\em Drug discovery today}, 17(1-2):56--62.

\bibitem[Pomberger et~al., 2022]{pomberger2022}
Pomberger, A., Pedrina~McCarthy, A., Khan, A., Sung, S., Taylor, C., Gaunt, M.,
  Colwell, L., Walz, D., and Lapkin, A. (2022).
\newblock The effect of chemical representation on active machine learning
  towards closed-loop optimization.
\newblock {\em ChemRxiv}.

\bibitem[Press et~al., 1992]{1992_Press}
Press, W.~H., Rybicki, G.~B., and Hewitt, J.~N. (1992).
\newblock The time delay of gravitational lens 0957+ 561. i-{M}ethodology and
  analysis of optical photometric data. ii-{A}nalysis of radio data and
  combined optical-radio analysis.
\newblock {\em The Astrophysical Journal}, 385:404--420.

\bibitem[{Pringle}, 1981]{pringle81}
{Pringle}, J.~E. (1981).
\newblock {Accretion discs in astrophysics}.
\newblock {\em Annual Review of Astronomy and Astrophysics}, 19:137--162.

\bibitem[Probst and Reymond, 2018]{probst2018probabilistic}
Probst, D. and Reymond, J.-L. (2018).
\newblock A probabilistic molecular fingerprint for big data settings.
\newblock {\em Journal of cheminformatics}, 10(1):1--12.

\bibitem[Probst et~al., 2022]{probst2022reaction}
Probst, D., Schwaller, P., and Reymond, J.-L. (2022).
\newblock Reaction classification and yield prediction using the differential
  reaction fingerprint {DRFP}.
\newblock {\em Digital discovery}, 1(2):91--97.

\bibitem[Pyzer-Knapp, 2020]{2020_Pyzer}
Pyzer-Knapp, E.~O. (2020).
\newblock Using {B}ayesian optimization to accelerate virtual screening for the
  discovery of therapeutics appropriate for repurposing for {COVID}-19.
\newblock {\em arXiv preprint arXiv:2005.07121}.

\bibitem[Pyzer-Knapp et~al., 2015a]{2015_Knapp}
Pyzer-Knapp, E.~O., Suh, C., G{\'o}mez-Bombarelli, R., Aguilera-Iparraguirre,
  J., and Aspuru-Guzik, A. (2015a).
\newblock What is high-throughput virtual screening? a perspective from organic
  materials discovery.
\newblock {\em Annual Review of Materials Research}, 45:195--216.

\bibitem[Pyzer-Knapp et~al., 2015b]{2015_Pyzer}
Pyzer-Knapp, E.~O., Suh, C., G{\'o}mez-Bombarelli, R., Aguilera-Iparraguirre,
  J., and Aspuru-Guzik, A. (2015b).
\newblock What is high-throughput virtual screening? {A} perspective from
  organic materials discovery.
\newblock {\em Annual Review of Materials Research}, 45:195--216.

\bibitem[Rajpaul et~al., 2015]{2015_Rajpaul}
Rajpaul, V., Aigrain, S., Osborne, M.~A., Reece, S., and Roberts, S. (2015).
\newblock A {G}aussian process framework for modelling stellar activity signals
  in radial velocity data.
\newblock {\em Monthly Notices of the Royal Astronomical Society},
  452(3):2269--2291.

\bibitem[Ralaivola et~al., 2005]{2005_Ralaivola}
Ralaivola, L., Swamidass, S.~J., Saigo, H., and Baldi, P. (2005).
\newblock Graph kernels for chemical informatics.
\newblock {\em Neural networks}, 18(8):1093--1110.

\bibitem[Ramsundar et~al., 2019]{2019_Ramsundar}
Ramsundar, B., Eastman, P., Walters, P., Pande, V., Leswing, K., and Wu, Z.
  (2019).
\newblock {\em Deep Learning for the Life Sciences}.
\newblock O'Reilly Media.

\bibitem[Rankovic et~al., 2022]{2022_Rankovic}
Rankovic, B., Griffiths, R.-R., Moss, H.~B., and Schwaller, P. (2022).
\newblock Bayesian optimisation-accelerated reaction screening and yield
  improvements in chemical reactions.
\newblock Cambridge Ellis Machine Learning Summer School.

\bibitem[Rasmussen and Ghahramani, 2001]{2001_Rasmussen}
Rasmussen, C.~E. and Ghahramani, Z. (2001).
\newblock Occam's razor.
\newblock In {\em Advances in {N}eural {I}nformation {P}rocessing {S}ystems},
  pages 294--300.

\bibitem[Rasmussen and Williams, 2006]{2006_Rasmussen}
Rasmussen, C.~E. and Williams, C. K.~I. (2006).
\newblock {\em {G}aussian Processes for Machine Learning}.
\newblock MIT Press.

\bibitem[Reynolds, 2000]{2000_Reynolds}
Reynolds, C.~S. (2000).
\newblock On the lack of {X}-ray iron line reverberation in mcg--6-30-15:
  Implications for the black hole mass and accretion disk structure.
\newblock {\em The Astrophysical Journal}, 533(2):811.

\bibitem[Robert, 2007]{2007_Robert}
Robert, C. (2007).
\newblock {\em The {B}ayesian choice: {F}rom decision-theoretic foundations to
  computational implementation}.
\newblock Springer Science \& Business Media.

\bibitem[Roberts et~al., 2013]{2013_Roberts}
Roberts, S., Osborne, M., Ebden, M., Reece, S., Gibson, N., and Aigrain, S.
  (2013).
\newblock {G}aussian processes for time-series modelling.
\newblock {\em Philosophical Transactions of the Royal Society A: Mathematical,
  Physical and Engineering Sciences}, 371(1984):20110550.

\bibitem[Rodrigues and Pereira, 2018]{2018_Rodrigues}
Rodrigues, F. and Pereira, F.~C. (2018).
\newblock Heteroscedastic {G}aussian processes for uncertainty modeling in
  large-scale crowdsourced traffic data.
\newblock {\em Transportation {R}esearch {P}art C: {E}merging {T}echnologies},
  95:636--651.

\bibitem[Rogers and Hahn, 2010]{2010_Rogers}
Rogers, D. and Hahn, M. (2010).
\newblock Extended-connectivity fingerprints.
\newblock {\em Journal of Chemical Information and Modeling}, 50(5):742--754.

\bibitem[Rogers et~al., 2020]{2020_Rogers}
Rogers, T., Gardner, P., Dervilis, N., Worden, K., Maguire, A., Papatheou, E.,
  and Cross, E. (2020).
\newblock Probabilistic modelling of wind turbine power curves with application
  of heteroscedastic {G}aussian process regression.
\newblock {\em Renewable Energy}, 148:1124--1136.

\bibitem[{Roming} et~al., 2005]{2005_Roming}
{Roming}, P. W.~A., {Kennedy}, T.~E., {Mason}, K.~O., {Nousek}, J.~A., {Ahr},
  L., {Bingham}, R.~E., {Broos}, P.~S., {Carter}, M.~J., {Hancock}, B.~K.,
  {Huckle}, H.~E., {Hunsberger}, S.~D., {Kawakami}, H., {Killough}, R., {Koch},
  T.~S., {McLelland}, M.~K., {Smith}, K., {Smith}, P.~J., {Soto}, J.~C.,
  {Boyd}, P.~T., {Breeveld}, A.~A., {Holland}, S.~T., {Ivanushkina}, M.,
  {Pryzby}, M.~S., {Still}, M.~D., and {Stock}, J. (2005).
\newblock {The Swift Ultra-Violet/Optical Telescope}.
\newblock {\em Space Sci. Rev.}, 120(3-4):95--142.

\bibitem[Ruddigkeit et~al., 2012]{2012_Ruddigkeit}
Ruddigkeit, L., Van~Deursen, R., Blum, L.~C., and Reymond, J.-L. (2012).
\newblock Enumeration of 166 billion organic small molecules in the chemical
  universe database {GDB}-17.
\newblock {\em Journal of Chemical Information and Modeling},
  52(11):2864--2875.

\bibitem[Rustler et~al., 2020]{data1}
Rustler, K., Nitschke, P., Zahnbrecher, S., Zach, J., Crespi, S., and Konig, B.
  (2020).
\newblock Photochromic evaluation of 3 (5)-arylazo-1 h-pyrazoles.
\newblock {\em The Journal of Organic Chemistry}, 85(6):4079--4088.

\bibitem[Rybicki and Press, 1992]{1992_Rybicki}
Rybicki, G.~B. and Press, W.~H. (1992).
\newblock Interpolation, realization, and reconstruction of noisy, irregularly
  sampled data.
\newblock {\em The Astrophysical Journal}, 398:169--176.

\bibitem[Ryu et~al., 2019]{2019_Ryu}
Ryu, S., Kwon, Y., and Kim, W.~Y. (2019).
\newblock A {B}ayesian graph convolutional network for reliable prediction of
  molecular properties with uncertainty quantification.
\newblock {\em Chemical Science}, 10(36):8438--8446.

\bibitem[Salpeter, 1964]{1964_Salpeter}
Salpeter, E.~E. (1964).
\newblock {Accretion of interstellar matter by massive objects}.
\newblock {\em Astrophys. J.}, 140:796--800.

\bibitem[{S{\'a}nchez-S{\'a}ez} et~al., 2018]{sanchez18}
{S{\'a}nchez-S{\'a}ez}, P., {Lira}, P., {Mej{\'\i}a-Restrepo}, J., {Ho}, L.~C.,
  {Ar{\'e}valo}, P., {Kim}, M., {Cartier}, R., and {Coppi}, P. (2018).
\newblock {The {QUEST}-La Silla AGN variability survey: Connection between AGN
  variability and black hole physical properties}.
\newblock {\em The Astrophysical Journal}, 864(1):87.

\bibitem[Sandfort et~al., 2020]{sandfort2020structure}
Sandfort, F., Strieth-Kalthoff, F., K{\"u}hnemund, M., Beecks, C., and Glorius,
  F. (2020).
\newblock A structure-based platform for predicting chemical reactivity.
\newblock {\em Chem}, 6(6):1379--1390.

\bibitem[Saylam et~al., 2014]{data18}
Saylam, A., Sefero{\u{g}}lu, Z., and Ertan, N. (2014).
\newblock Azo-8-hydroxyquinoline dyes: {T}he synthesis, characterizations and
  determination of tautomeric properties of some new phenyl-and
  heteroarylazo-8-hydroxyquinolines.
\newblock {\em Journal of Molecular Liquids}, 195:267--276.

\bibitem[Scalia et~al., 2020]{2020_Scalia}
Scalia, G., Grambow, C.~A., Pernici, B., Li, Y.-P., and Green, W.~H. (2020).
\newblock Evaluating scalable uncertainty estimation methods for deep
  learning-based molecular property prediction.
\newblock {\em Journal of chemical information and modeling}, 60(6):2697--2717.

\bibitem[Schneider et~al., 2015]{schneider2015development}
Schneider, N., Lowe, D.~M., Sayle, R.~A., and Landrum, G.~A. (2015).
\newblock Development of a novel fingerprint for chemical reactions and its
  application to large-scale reaction classification and similarity.
\newblock {\em Journal of chemical information and modeling}, 55(1):39--53.

\bibitem[Sch{\"o}lkopf et~al., 2021]{2021_Scholkopf}
Sch{\"o}lkopf, B., Locatello, F., Bauer, S., Ke, N.~R., Kalchbrenner, N.,
  Goyal, A., and Bengio, Y. (2021).
\newblock Toward causal representation learning.
\newblock {\em Proceedings of the IEEE}, 109(5):612--634.

\bibitem[Sch{\"u}tt et~al., 2017]{2017_Schutt}
Sch{\"u}tt, K., Kindermans, P.-J., Sauceda, H., Chmiela, S., Tkatchenko, A.,
  and M{\"u}ller, K.-R. (2017).
\newblock Sch{N}et: {A} continuous-filter convolutional neural network for
  modeling quantum interactions.
\newblock In {\em Proceedings of the 31st International Conference on Neural
  Information Processing Systems}, pages 992--1002.

\bibitem[Schwaller and Laino, 2019]{2019_Schwaller}
Schwaller, P. and Laino, T. (2019).
\newblock Data-driven learning systems for chemical reaction prediction: {A}n
  analysis of recent approaches.
\newblock In {\em Machine Learning in Chemistry: Data-Driven Algorithms,
  Learning Systems, and Predictions}, pages 61--79. ACS Publications.

\bibitem[Schwaller et~al., 2019]{2019_Bolgar}
Schwaller, P., Laino, T., Gaudin, T., Bolgar, P., Hunter, C.~A., Bekas, C., and
  Lee, A.~A. (2019).
\newblock Molecular transformer: {A} model for uncertainty-calibrated chemical
  reaction prediction.
\newblock {\em ACS central science}, 5(9):1572--1583.

\bibitem[Schwaller et~al., 2021a]{schwaller2021mapping}
Schwaller, P., Probst, D., Vaucher, A.~C., Nair, V.~H., Kreutter, D., Laino,
  T., and Reymond, J.-L. (2021a).
\newblock Mapping the space of chemical reactions using attention-based neural
  networks.
\newblock {\em Nature Machine Intelligence}, 3(2):144--152.

\bibitem[Schwaller et~al., 2021b]{schwaller2021prediction}
Schwaller, P., Vaucher, A.~C., Laino, T., and Reymond, J.-L. (2021b).
\newblock Prediction of chemical reaction yields using deep learning.
\newblock {\em Machine learning: Science and Technology}, 2(1):015016.

\bibitem[Schwaller et~al., 2022]{2022_Schwaller}
Schwaller, P., Vaucher, A.~C., Laplaza, R., Bunne, C., Krause, A., Corminboeuf,
  C., and Laino, T. (2022).
\newblock Machine intelligence for chemical reaction space.
\newblock {\em Wiley Interdisciplinary Reviews: Computational Molecular
  Science}, page e1604.

\bibitem[Schwarzschild, 1916]{1916_Schwarzschild}
Schwarzschild, K. (1916).
\newblock {On the gravitational field of a mass point according to Einstein's
  theory}.
\newblock {\em Sitzungsber. Preuss. Akad. Wiss. Berlin (Math. Phys. )},
  1916:189--196.

\bibitem[Schweidtmann et~al., 2020]{2020_Schweidtmann}
Schweidtmann, A.~M., Bongartz, D., Grothe, D., Kerkenhoff, T., Lin, X., Najman,
  J., and Mitsos, A. (2020).
\newblock Global optimization of {G}aussian processes.
\newblock {\em arXiv preprint arXiv:2005.10902}.

\bibitem[Schweidtmann et~al., 2018]{schweidtmann2018machine}
Schweidtmann, A.~M., Clayton, A.~D., Holmes, N., Bradford, E., Bourne, R.~A.,
  and Lapkin, A.~A. (2018).
\newblock Machine learning meets continuous flow chemistry: {A}utomated
  optimization towards the {P}areto front of multiple objectives.
\newblock {\em Chemical Engineering Journal}, 352:277--282.

\bibitem[Sefero{\u{g}}lu et~al., 2008]{data14}
Sefero{\u{g}}lu, Z., Ertan, N., H{\"o}kelek, T., and {\c{S}}ahin, E. (2008).
\newblock The synthesis, spectroscopic properties and crystal structure of
  novel, bis-hetarylazo disperse dyes.
\newblock {\em Dyes and Pigments}, 77(3):614--625.

\bibitem[Sell et~al., 2013]{data3}
Sell, H., N{\"a}ther, C., and Herges, R. (2013).
\newblock Amino-substituted diazocines as pincer-type photochromic switches.
\newblock {\em Beilstein journal of organic chemistry}, 9(1):1--7.

\bibitem[Settles, 2012]{2012_Settles}
Settles, B. (2012).
\newblock Active learning.
\newblock {\em Synthesis lectures on artificial intelligence and machine
  learning}, 6(1):1--114.

\bibitem[{Seyfert}, 1943]{1943_Seyfert}
{Seyfert}, C.~K. (1943).
\newblock {Nuclear Emission in Spiral Nebulae.}
\newblock {\em The Astrophysical Journal}, 97:28.

\bibitem[Shah et~al., 2014]{2014_Shah}
Shah, A., Wilson, A., and Ghahramani, Z. (2014).
\newblock Student-t processes as alternatives to {G}aussian processes.
\newblock In {\em Artificial Intelligence and Statistics}, pages 877--885.

\bibitem[Shahriari et~al., 2015]{2015_Shahriari}
Shahriari, B., Swersky, K., Wang, Z., Adams, R.~P., and De~Freitas, N. (2015).
\newblock Taking the human out of the loop: {A} review of {B}ayesian
  optimization.
\newblock {\em Proceedings of the IEEE}, 104(1):148--175.

\bibitem[Shahriari et~al., 2014]{2014_Shahriari}
Shahriari, B., Wang, Z., Hoffman, M.~W., Bouchard-C{\^o}t{\'e}, A., and
  de~Freitas, N. (2014).
\newblock An entropy search portfolio for {B}ayesian optimization.
\newblock {\em arXiv preprint arXiv:1406.4625}.

\bibitem[{Shakura} and {Sunyaev}, 1973]{1973_Shakura}
{Shakura}, N.~I. and {Sunyaev}, R.~A. (1973).
\newblock {Black holes in binary systems. Observational appearance.}
\newblock {\em Astron. Astrophys.}, 24:337--355.

\bibitem[{Shappee} et~al., 2014]{shappee14}
{Shappee}, B.~J., {Prieto}, J.~L., {Grupe}, D., {Kochanek}, C.~S., {Stanek},
  K.~Z., {De Rosa}, G., {Mathur}, S., {Zu}, Y., {Peterson}, B.~M., {Pogge},
  R.~W., {Komossa}, S., {Im}, M., {Jencson}, J., {Holoien}, T.~W.~S., {Basu},
  U., {Beacom}, J.~F., {Szczygie{\l}}, D.~M., {Brimacombe}, J., {Adams}, S.,
  {Campillay}, A., {Choi}, C., {Contreras}, C., {Dietrich}, M., {Dubberley},
  M., {Elphick}, M., {Foale}, S., {Giustini}, M., {Gonzalez}, C., {Hawkins},
  E., {Howell}, D.~A., {Hsiao}, E.~Y., {Koss}, M., {Leighly}, K.~M., {Morrell},
  N., {Mudd}, D., {Mullins}, D., {Nugent}, J.~M., {Parrent}, J., {Phillips},
  M.~M., {Pojmanski}, G., {Rosing}, W., {Ross}, R., {Sand}, D., {Terndrup},
  D.~M., {Valenti}, S., {Walker}, Z., and {Yoon}, Y. (2014).
\newblock {The man behind the curtain: X-rays drive the UV through NIR
  variability in the 2013 active galactic nucleus outburst in NGC 2617}.
\newblock {\em The Astrophysical Journal}, 788(1):48.

\bibitem[{Shemmer} et~al., 2001]{shemmer01}
{Shemmer}, O., {Romano}, P., {Bertram}, R., {Brinkmann}, W., {Collier}, S.,
  {Crowley}, K.~A., {Detsis}, E., {Filippenko}, A.~V., {Gaskell}, C.~M.,
  {George}, T.~A., {Gliozzi}, M., {Hiller}, M.~E., {Jewell}, T.~L., {Kaspi},
  S., {Klimek}, E.~S., {Lannon}, M.~H., {Li}, W., {Martini}, P., {Mathur}, S.,
  {Negoro}, H., {Netzer}, H., {Papadakis}, I., {Papamastorakis}, I.,
  {Peterson}, B.~M., {Peterson}, B.~W., {Pogge}, R.~W., {Pronik}, V.~I.,
  {Rumstay}, K.~S., {Sergeev}, S.~G., {Sergeeva}, E.~A., {Stirpe}, G.~M.,
  {Taylor}, C.~J., {Treffers}, R.~R., {Turner}, T.~J., {Uttley}, P.,
  {Vestergaard}, M., {von Braun}, K., {Wagner}, R.~M., and {Zheng}, Z. (2001).
\newblock {Multiwavelength monitoring of the narrow-line Seyfert 1 galaxy
  Arakelian 564. III. Optical observations and the optical-UV-X-ray
  connection}.
\newblock {\em The Astrophysical Journal}, 561(1):162--170.

\bibitem[Shervashidze et~al., 2011]{shervashidze2011weisfeiler}
Shervashidze, N., Schweitzer, P., Van~Leeuwen, E.~J., Mehlhorn, K., and
  Borgwardt, K.~M. (2011).
\newblock Weisfeiler-{L}ehman graph kernels.
\newblock {\em Journal of Machine Learning Research}, 12(9):2539--2561.

\bibitem[Shields et~al., 2021]{2021_Shields}
Shields, B.~J., Stevens, J., Li, J., Parasram, M., Damani, F., Alvarado, J.
  I.~M., Janey, J.~M., Adams, R.~P., and Doyle, A.~G. (2021).
\newblock Bayesian reaction optimization as a tool for chemical synthesis.
\newblock {\em Nature}, 590(7844):89--96.

\bibitem[Siewertsen et~al., 2009]{d1}
Siewertsen, R., Neumann, H., Buchheim-Stehn, B., Herges, R., Nather, C., Renth,
  F., and Temps, F. (2009).
\newblock Highly efficient reversible {Z}-{E} photoisomerization of a bridged
  azobenzene with visible light through resolved s1 ($n-\pi^*$) absorption
  bands.
\newblock {\em Journal of the American Chemical Society}, 131(43):15594--15595.

\bibitem[Siglidis et~al., 2020]{2020_GraKel}
Siglidis, G., Nikolentzos, G., Limnios, S., Giatsidis, C., Skianis, K., and
  Vazirgiannis, M. (2020).
\newblock Gra{K}e{L}: {A} graph kernel library in {P}ython.
\newblock {\em J. Mach. Learn. Res.}, 21(54):1--5.

\bibitem[Simonetti et~al., 1985]{1985_Simonetti}
Simonetti, J., Cordes, J., and Heeschen, D. (1985).
\newblock Flicker of extragalactic radio sources at two frequencies.
\newblock {\em The Astrophysical Journal}, 296:46--59.

\bibitem[Slavov et~al., 2020]{data8}
Slavov, C., Yang, C., Heindl, A.~H., Wegner, H.~A., Dreuw, A., and Wachtveitl,
  J. (2020).
\newblock Thiophenylazobenzene: An alternative photoisomerization controlled by
  lone-pair pi interaction.
\newblock {\em Angewandte Chemie}, 132(1):388--395.

\bibitem[{Smith} et~al., 2018]{smith18}
{Smith}, K.~L., {Mushotzky}, R.~F., {Boyd}, P.~T., {Malkan}, M., {Howell},
  S.~B., and {Gelino}, D.~M. (2018).
\newblock {The Kepler light curves of AGN: A detailed analysis}.
\newblock {\em The Astrophysical Journal}, 857(2):141.

\bibitem[{Smith} and {Vaughan}, 2007]{smith07}
{Smith}, R. and {Vaughan}, S. (2007).
\newblock {X-ray and optical variability of Seyfert 1 galaxies as observed with
  XMM-Newton}.
\newblock {\em Monthly Notices of the Royal Astronomical Society},
  375(4):1479--1487.

\bibitem[Snoek et~al., 2015]{2015_Snoek}
Snoek, J., Rippel, O., Swersky, K., Kiros, R., Satish, N., Sundaram, N.,
  Patwary, M., Prabhat, M., and Adams, R. (2015).
\newblock Scalable {B}ayesian optimization using deep neural networks.
\newblock In {\em International Conference on Machine Learning}, pages
  2171--2180.

\bibitem[Springenberg et~al., 2016]{2016_Springenberg}
Springenberg, J.~T., Klein, A., Falkner, S., and Hutter, F. (2016).
\newblock Bayesian optimization with robust {B}ayesian neural networks.
\newblock In {\em Advances in {N}eural {I}nformation {P}rocessing {S}ystems},
  pages 4134--4142.

\bibitem[Srinivas et~al., 2010]{2010_Srinivas}
Srinivas, N., Krause, A., Kakade, S., and Seeger, M. (2010).
\newblock {G}aussian process optimization in the bandit setting: {N}o regret
  and experimental design.
\newblock In {\em Proceedings of the 27th International Conference on
  International Conference on Machine Learning}, pages 1015--1022. Omnipress.

\bibitem[Stanton et~al., 2022]{2022_Stanton}
Stanton, S., Maddox, W., Gruver, N., Maffettone, P., Delaney, E., Greenside,
  P., and Wilson, A.~G. (2022).
\newblock Accelerating {B}ayesian optimization for biological sequence design
  with denoising autoencoders.
\newblock {\em arXiv preprint arXiv:2203.12742}.

\bibitem[{Starkey} et~al., 2017]{starkey17}
{Starkey}, D., {Horne}, K., {Fausnaugh}, M.~M., {Peterson}, B.~M., {Bentz},
  M.~C., {Kochanek}, C.~S., {Denney}, K.~D., {Edelson}, R., {Goad}, M.~R., {De
  Rosa}, G., {Anderson}, M.~D., {Ar{\'e}valo}, P., {Barth}, A.~J., {Bazhaw},
  C., {Borman}, G.~A., {Boroson}, T.~A., {Bottorff}, M.~C., {Brandt}, W.~N.,
  {Breeveld}, A.~A., {Cackett}, E.~M., {Carini}, M.~T., {Croxall}, K.~V.,
  {Crenshaw}, D.~M., {Dalla Bont{\`a}}, E., {De Lorenzo-C{\'a}ceres}, A.,
  {Dietrich}, M., {Efimova}, N.~V., {Ely}, J., {Evans}, P.~A., {Filippenko},
  A.~V., {Flatland}, K., {Gehrels}, N., {Geier}, S., {Gelbord}, J.~M.,
  {Gonzalez}, L., {Gorjian}, V., {Grier}, C.~J., {Grupe}, D., {Hall}, P.~B.,
  {Hicks}, S., {Horenstein}, D., {Hutchison}, T., {Im}, M., {Jensen}, J.~J.,
  {Joner}, M.~D., {Jones}, J., {Kaastra}, J., {Kaspi}, S., {Kelly}, B.~C.,
  {Kennea}, J.~A., {Kim}, S.~C., {Kim}, M., {Klimanov}, S.~A., {Korista},
  K.~T., {Kriss}, G.~A., {Lee}, J.~C., {Leonard}, D.~C., {Lira}, P.,
  {MacInnis}, F., {Manne-Nicholas}, E.~R., {Mathur}, S., {McHardy}, I.~M.,
  {Montouri}, C., {Musso}, R., {Nazarov}, S.~V., {Norris}, R.~P., {Nousek},
  J.~A., {Okhmat}, D.~N., {Pancoast}, A., {Parks}, J.~R., {Pei}, L., {Pogge},
  R.~W., {Pott}, J.~U., {Rafter}, S.~E., {Rix}, H.~W., {Saylor}, D.~A.,
  {Schimoia}, J.~S., {Schn{\"u}lle}, K., {Sergeev}, S.~G., {Siegel}, M.~H.,
  {Spencer}, M., {Sung}, H.~I., {Teems}, K.~G., {Turner}, C.~S., {Uttley}, P.,
  {Vestergaard}, M., {Villforth}, C., {Weiss}, Y., {Woo}, J.~H., {Yan}, H.,
  {Young}, S., {Zheng}, W., and {Zu}, Y. (2017).
\newblock {Space telescope and optical reverberation mapping Project.VI.
  Reverberating disk models for NGC 5548}.
\newblock {\em The Astrophysical Journal}, 835(1):65.

\bibitem[Stein, 2012]{2012_Stein}
Stein, M.~L. (2012).
\newblock {\em Interpolation of spatial data: some theory for kriging}.
\newblock Springer Science \& Business Media.

\bibitem[Stork et~al., 2021]{2021_Stork}
Stork, D.~G., Bourached, A., Cann, G.~H., and Griffths, R.-R. (2021).
\newblock Computational identification of significant actors in paintings
  through symbols and attributes.
\newblock {\em Electronic Imaging}, 2021(14):15--1.

\bibitem[Struble et~al., 2020]{2020_Struble}
Struble, T.~J., Alvarez, J.~C., Brown, S.~P., Chytil, M., Cisar, J.,
  DesJarlais, R.~L., Engkvist, O., Frank, S.~A., Greve, D.~R., Griffin, D.~J.,
  et~al. (2020).
\newblock Current and future roles of artificial intelligence in medicinal
  chemistry synthesis.
\newblock {\em Journal of medicinal chemistry}, 63(16):8667--8682.

\bibitem[Sugiyama and Borgwardt, 2015]{2015_Sugiyama}
Sugiyama, M. and Borgwardt, K.~M. (2015).
\newblock Halting in random walk kernels.
\newblock In {\em Proceedings of the 28th International Conference on Neural
  Information Processing Systems-Volume 1}, pages 1639--1647.

\bibitem[Sugiyama et~al., 2018]{2018_Sugiyama}
Sugiyama, M., Ghisu, M.~E., Llinares-L{\'o}pez, F., and Borgwardt, K. (2018).
\newblock graphkernels: {R} and {P}ython packages for graph comparison.
\newblock {\em Bioinformatics}, 34(3):530--532.

\bibitem[Sui et~al., 2015]{2015_Sui}
Sui, Y., Gotovos, A., Burdick, J., and Krause, A. (2015).
\newblock Safe exploration for optimization with {G}aussian processes.
\newblock In {\em International Conference on Machine Learning}, pages
  997--1005.

\bibitem[Swersky et~al., 2013]{swersky2013multi}
Swersky, K., Snoek, J., and Adams, R.~P. (2013).
\newblock Multi-task {B}ayesian optimization.
\newblock In {\em Proceedings of the 26th International Conference on Neural
  Information Processing Systems-Volume 2}, pages 2004--2012.

\bibitem[Tabor et~al., 2018]{2018_Tabor}
Tabor, L., Goulet, J.-A., Charron, J.-P., and Desmettre, C. (2018).
\newblock Probabilistic modeling of heteroscedastic laboratory experiments
  using {G}aussian process regression.
\newblock {\em Journal of Engineering Mechanics}, 144(6):04018038.

\bibitem[Taleb, 2012]{2012_taleb}
Taleb, N.~N. (2012).
\newblock {\em Antifragile: Things That Gain from Disorder}.
\newblock {Random House}, {New York}, 1st ed edition.

\bibitem[Terayama et~al., 2020]{2020_Terayama}
Terayama, K., Sumita, M., Tamura, R., Payne, D.~T., Chahal, M.~K., Ishihara,
  S., and Tsuda, K. (2020).
\newblock Pushing property limits in materials discovery via boundless
  objective-free exploration.
\newblock {\em Chemical {S}cience}, 11(23):5959--5968.

\bibitem[{Theano Development Team}, 2016]{2016_Theano}
{Theano Development Team} (2016).
\newblock {Theano: A {Python} framework for fast computation of mathematical
  expressions}.
\newblock {\em arXiv e-prints}, abs/1605.02688.

\bibitem[Thies et~al., 2012]{data4}
Thies, S., Sell, H., Bornholdt, C., Sch{\"u}tt, C., K{\"o}hler, F., Tuczek, F.,
  and Herges, R. (2012).
\newblock Light-driven coordination-induced spin-state switching: Rational
  design of photodissociable ligands.
\newblock {\em Chemistry--A European Journal}, 18(51):16358--16368.

\bibitem[Thompson, 1933]{1933_Thompson}
Thompson, W.~R. (1933).
\newblock On the likelihood that one unknown probability exceeds another in
  view of the evidence of two samples.
\newblock {\em Biometrika}, 25(3/4):285--294.

\bibitem[Timmer and K{\"o}nig, 1995]{1995_Timmer}
Timmer, J. and K{\"o}nig, M. (1995).
\newblock On generating power law noise.
\newblock {\em Astronomy and Astrophysics}, 300:707.

\bibitem[Tobar, 2018]{2018_Tobar}
Tobar, F. (2018).
\newblock Bayesian nonparametric spectral estimation.
\newblock In {\em Advances in Neural Information Processing Systems}, pages
  10127--10137.

\bibitem[Tobar et~al., 2015]{2015_Tobar}
Tobar, F., Bui, T.~D., and Turner, R.~E. (2015).
\newblock Learning stationary time series using {G}aussian processes with
  nonparametric kernels.
\newblock In {\em Advances in Neural Information Processing Systems}, pages
  3501--3509.

\bibitem[Tom et~al., 2022]{2022_Tom}
Tom, G., Hickman, R.~J., Zinzuwadia, A., Mohajeri, A., Sanchez-Lengeling, B.,
  and Aspuru-Guzik, A. (2022).
\newblock Calibration and generalizability of probabilistic models on low-data
  chemical datasets with {DIONYSUS}.
\newblock {\em arXiv preprint arXiv:2212.01574}.

\bibitem[Tripathi et~al., 2020]{tripathi20}
Tripathi, S., McGrath, K.~M., Gallo, L.~C., Grupe, D., Komossa, S., Berton, M.,
  Kriss, G., and Longinotti, A.~L. (2020).
\newblock Tracking the year-to-year variation in the spectral energy
  distribution of the narrow-line {S}eyfert 1 galaxy {M}rk 335.
\newblock {\em Monthly Notices of the Royal Astronomical Society},
  499(1):1266–1286.

\bibitem[Tripp et~al., 2020]{2020_Tripp}
Tripp, A., Daxberger, E., and Hern{\'a}ndez-Lobato, J.~M. (2020).
\newblock Sample-efficient optimization in the latent space of deep generative
  models via weighted retraining.
\newblock {\em Advances in Neural Information Processing Systems},
  33:11259--11272.

\bibitem[{Troyer} et~al., 2016]{troyer16}
{Troyer}, J., {Starkey}, D., {Cackett}, E.~M., {Bentz}, M.~C., {Goad}, M.~R.,
  {Horne}, K., and {Seals}, J.~E. (2016).
\newblock {Correlated X-ray/ultraviolet/optical variability in NGC 6814}.
\newblock {\em Monthly Notices of the Royal Astronomical Society},
  456(4):4040--4050.

\bibitem[Turner et~al., 2021]{2021_Turner}
Turner, R., Eriksson, D., McCourt, M., Kiili, J., Laaksonen, E., Xu, Z., and
  Guyon, I. (2021).
\newblock Bayesian optimization is superior to random search for machine
  learning hyperparameter tuning: {A}nalysis of the black-box optimization
  challenge 2020.
\newblock In {\em NeurIPS 2020 Competition and Demonstration Track}.

\bibitem[Turner, 2010]{2010_Turner}
Turner, R.~E. (2010).
\newblock {\em Statistical models for natural sounds}.
\newblock PhD thesis, UCL (University College London).

\bibitem[Tutunov et~al., 2020]{2020_Tutunov}
Tutunov, R., Li, M., Cowen-Rivers, A.~I., Wang, J., and Bou-Ammar, H. (2020).
\newblock Compositional adam: {A}n adaptive compositional solver.
\newblock {\em arXiv preprint arXiv:2002.03755}.

\bibitem[Ullrich, 2011]{ullrich_time-dependent_2012}
Ullrich, C.~A. (2011).
\newblock {\em Time-dependent density-functional theory: concepts and
  applications}.
\newblock OUP Oxford.

\bibitem[{Urry} and {Padovani}, 1995]{1995_Urry}
{Urry}, C.~M. and {Padovani}, P. (1995).
\newblock {Unified schemes for radio-loud active galactic nuclei}.
\newblock {\em Publications of the Astronomical Society of the Pacific},
  107:803.

\bibitem[Uttley and McHardy, 2005]{2005_Uttley}
Uttley, P. and McHardy, I.~M. (2005).
\newblock X-ray variability of {NGC} 3227 and 5506 and the nature of active
  galactic nucleus ‘states’.
\newblock {\em Monthly Notices of the Royal Astronomical Society},
  363(2):586--596.

\bibitem[Vakili et~al., 2021]{2021_Moss}
Vakili, S., Moss, H., Artemev, A., Dutordoir, V., and Picheny, V. (2021).
\newblock Scalable {T}hompson sampling using sparse {G}aussian process models.
\newblock {\em Advances in Neural Information Processing Systems},
  34:5631--5643.

\bibitem[van~der Wilk et~al., 2018]{2018_Wilk}
van~der Wilk, M., Bauer, M., John, S., and Hensman, J. (2018).
\newblock Learning invariances using the marginal likelihood.
\newblock {\em Advances in Neural Information Processing Systems},
  31:9938--9948.

\bibitem[{van der Wilk} et~al., 2020]{2020_Wilk}
{van der Wilk}, M., Dutordoir, V., John, S., Artemev, A., Adam, V., and
  Hensman, J. (2020).
\newblock A framework for interdomain and multioutput {G}aussian processes.
\newblock {\em arXiv preprint arXiv:2003.01115}.

\bibitem[van Leeuwen, 1998]{van_leeuwen_causality_1998}
van Leeuwen, R. (1998).
\newblock Causality and symmetry in time-dependent density-functional theory.
\newblock {\em Physical review letters}, 80(6):1280.

\bibitem[Vaswani et~al., 2017]{2017_Vaswani}
Vaswani, A., Shazeer, N., Parmar, N., Uszkoreit, J., Jones, L., Gomez, A.~N.,
  Kaiser, {\L}., and Polosukhin, I. (2017).
\newblock Attention is all you need.
\newblock In {\em Proceedings of the 31st International Conference on Neural
  Information Processing Systems}, pages 6000--6010.

\bibitem[Vazquez et~al., 2008]{2008_Vasquez}
Vazquez, E., Villemonteix, J., Sidorkiewicz, M., and Walter, E. (2008).
\newblock Global optimization based on noisy evaluations: {A}n empirical study
  of two statistical approaches.
\newblock In {\em Journal of Physics: Conference Series}, volume 135, page
  012100. IOP Publishing.

\bibitem[Veli{\v{c}}kovi{\'{c}} et~al., 2018]{2018_Velickovic}
Veli{\v{c}}kovi{\'{c}}, P., Cucurull, G., Casanova, A., Romero, A., Li{\`{o}},
  P., and Bengio, Y. (2018).
\newblock {Graph Attention Networks}.
\newblock {\em International Conference on Learning Representations}.

\bibitem[Verma and Chakraborty, 2021]{2021_Verma}
Verma, E. and Chakraborty, S. (2021).
\newblock Uncertainty-aware labelled augmentations for high dimensional latent
  space {B}ayesian optimization.
\newblock In {\em NeurIPS 2021 Workshop on Deep Generative Models and
  Downstream Applications}.

\bibitem[Villemonteix et~al., 2009]{2009_Villemonteix}
Villemonteix, J., Vazquez, E., and Walter, E. (2009).
\newblock An informational approach to the global optimization of
  expensive-to-evaluate functions.
\newblock {\em Journal of Global Optimization}, 44(4):509--534.

\bibitem[Vishwanathan et~al., 2010]{2010_Viswanathan}
Vishwanathan, S. V.~N., Schraudolph, N.~N., Kondor, R., and Borgwardt, K.~M.
  (2010).
\newblock Graph kernels.
\newblock {\em Journal of Machine Learning Research}, 11:1201--1242.

\bibitem[Wang and Neal, 2012]{2012_Wang}
Wang, C. and Neal, R.~M. (2012).
\newblock Gaussian process regression with heteroscedastic or non-{G}aussian
  residuals.
\newblock {\em arXiv preprint arXiv:1212.6246}.

\bibitem[Wang et~al., 2019]{2019_Pleiss}
Wang, K.~A., Pleiss, G., Gardner, J.~R., Tyree, S., Weinberger, K.~Q., and
  Wilson, A.~G. (2019).
\newblock Exact {G}aussian processes on a million data points.
\newblock In {\em Proceedings of the 33rd International Conference on Neural
  Information Processing Systems}, volume~32, pages 14648--14659.

\bibitem[Wang and Ni, 2019]{2019_Wang}
Wang, Q.-A. and Ni, Y.-Q. (2019).
\newblock Measurement and forecasting of high-speed rail track slab deformation
  under uncertain {SHM} data using variational heteroscedastic {G}aussian
  process.
\newblock {\em Sensors}, 19(15):3311.

\bibitem[Wang and Chen, 2019]{2019_Wang_gp}
Wang, W. and Chen, X. (2019).
\newblock Distributed variational inference-based heteroscedastic {G}aussian
  process metamodeling.
\newblock In {\em 2019 Winter Simulation Conference (WSC)}, pages 380--391.
  IEEE.

\bibitem[Wang et~al., 2012]{2012_Wang_light}
Wang, Y., Khardon, R., and Protopapas, P. (2012).
\newblock Nonparametric {B}ayesian estimation of periodic light curves.
\newblock {\em The Astrophysical Journal}, 756(1):67.

\bibitem[Wang et~al., 2021]{Wang2021}
Wang, Z., Erhart, P., Li, T., Zhang, Z.-Y., Sampedro, D., Hu, Z., Wegner,
  H.~A., Brummel, O., Libuda, J., Nielsen, M.~B., and Moth-Poulsen, K. (2021).
\newblock {Storing energy with molecular photoisomers}.
\newblock {\em Joule}, 6611:789--792.

\bibitem[Wang and Ierapetritou, 2017]{2017_Wangb}
Wang, Z. and Ierapetritou, M. (2017).
\newblock A novel surrogate-based optimization method for black-box simulation
  with heteroscedastic noise.
\newblock {\em Industrial \& Engineering Chemistry Research},
  56(38):10720--10732.

\bibitem[Weininger, 1988]{1988_Weininger}
Weininger, D. (1988).
\newblock {SMILES}, a chemical language and information system. 1.
  {I}ntroduction to methodology and encoding rules.
\newblock {\em Journal of Chemical Information and Computer Sciences},
  28(1):31--36.

\bibitem[{Welsh} et~al., 2011]{welsh11}
{Welsh}, B.~Y., {Wheatley}, J.~M., and {Neil}, J.~D. (2011).
\newblock {GALEX observations of quasar variability in the ultraviolet}.
\newblock {\em Astronomy and Astrophysics}, 527:A15.

\bibitem[White et~al., 2022]{2022_White}
White, A.~D., Hocky, G.~M., Gandhi, H.~A., Ansari, M., Cox, S., Wellawatte,
  G.~P., Sasmal, S., Yang, Z., Liu, K., Singh, Y., et~al. (2022).
\newblock Do large language models know chemistry?
\newblock {\em ChemRxiv}.

\bibitem[White et~al., 2021]{2021_White}
White, C., Neiswanger, W., and Savani, Y. (2021).
\newblock {BANANAS}: {B}ayesian optimization with neural architectures for
  neural architecture search.
\newblock In {\em Proceedings of the AAAI Conference on Artificial
  Intelligence}, volume~35, pages 10293--10301.

\bibitem[Wiberg, 1968]{1968_Wiberg}
Wiberg, K. (1968).
\newblock Application of the pople-santry-segal {CNDO} method to the
  cyclopropylcarbinyl and cyclobutyl cation and to bicyclobutane.
\newblock {\em Tetrahedron}, 24(3):1083--1096.

\bibitem[Wiebe et~al., 2022]{2020_Wiebe}
Wiebe, J., Cec{\'\i}lio, I., Dunlop, J., and Misener, R. (2022).
\newblock A robust approach to warped {G}aussian process-constrained
  optimization.
\newblock {\em Mathematical Programming}, pages 1--35.

\bibitem[Wigh et~al., 2022]{2022_Wigh}
Wigh, D.~S., Goodman, J.~M., and Lapkin, A.~A. (2022).
\newblock A review of molecular representation in the age of machine learning.
\newblock {\em Wiley Interdisciplinary Reviews: Computational Molecular
  Science}, page e1603.

\bibitem[Wilbraham et~al., 2018]{2018_Wilbraham}
Wilbraham, L., Berardo, E., Turcani, L., Jelfs, K.~E., and Zwijnenburg, M.~A.
  (2018).
\newblock High-throughput screening approach for the optoelectronic properties
  of conjugated polymers.
\newblock {\em Journal of Chemical Information and Modeling},
  58(12):2450--2459.

\bibitem[Wilcoxon, 1945]{1945_Wilcoxon}
Wilcoxon, F. (1945).
\newblock Individual comparisons by ranking methods.
\newblock {\em Biometrics Bulletin}, 1(6):80--83.

\bibitem[Wilkins and Gallo, 2015]{2015_Wilkins_drive}
Wilkins, D. and Gallo, L.~C. (2015).
\newblock Driving extreme variability: {T}he evolving corona and evidence for
  jet launching in {M}arkarian 335.
\newblock {\em Monthly Notices of the Royal Astronomical Society},
  449(1):129--146.

\bibitem[Wilkins, 2019]{2019_Wilkins}
Wilkins, D.~R. (2019).
\newblock Low-frequency {X}-ray timing with {G}aussian processes and
  reverberation in the radio-loud agn 3c 120.
\newblock {\em Monthly Notices of the Royal Astronomical Society},
  489(2):1957–1972.

\bibitem[{Wilkins} et~al., 2015]{2015_Wilkins}
{Wilkins}, D.~R., {Gallo}, L.~C., {Grupe}, D., {Bonson}, K., {Komossa}, S., and
  {Fabian}, A.~C. (2015).
\newblock {Flaring from the supermassive black hole in Mrk 335 studied with
  Swift and NuSTAR}.
\newblock {\em Monthly Notices of the Royal Astronomical Society},
  454(4):4440--4451.

\bibitem[Williams et~al., 2007]{2007_Williams}
Williams, C., Bonilla, E.~V., and Chai, K.~M. (2007).
\newblock Multi-task {G}aussian process prediction.
\newblock {\em Advances in Neural Information Processing systems}, pages
  153--160.

\bibitem[Wilson and Adams, 2013]{2013_Wilson}
Wilson, A. and Adams, R. (2013).
\newblock Gaussian process kernels for pattern discovery and extrapolation.
\newblock In {\em International conference on machine learning}, pages
  1067--1075. PMLR.

\bibitem[Wilson et~al., 2020]{2020_Wilson}
Wilson, J., Borovitskiy, V., Terenin, A., Mostowsky, P., and Deisenroth, M.
  (2020).
\newblock Efficiently sampling functions from {G}aussian process posteriors.
\newblock In {\em International Conference on Machine Learning}, pages
  10292--10302. PMLR.

\bibitem[Wilson et~al., 2018]{2018_Wilson}
Wilson, J., Hutter, F., and Deisenroth, M. (2018).
\newblock Maximizing acquisition functions for {B}ayesian optimization.
\newblock {\em Advances in Neural Information Processing Systems},
  31:9884--9895.

\bibitem[Wu et~al., 2018]{2018_Wu}
Wu, Z., Ramsundar, B., N.~Feinberg, E., Gomes, J., Geniesse, C., S.~Pappu, A.,
  Leswing, K., and Pande, V. (2018).
\newblock {{MoleculeNet}}: A benchmark for molecular machine learning.
\newblock {\em Chemical Science}, 9(2):513--530.

\bibitem[Xia et~al., 2004]{2004_Xia}
Xia, X., Maliski, E.~G., Gallant, P., and Rogers, D. (2004).
\newblock Classification of kinase inhibitors using a {B}ayesian model.
\newblock {\em Journal of Medicinal Chemistry}, 47(18):4463--4470.

\bibitem[{Xin} et~al., 2020]{xin20}
{Xin}, C., {Charisi}, M., {Haiman}, Z., and {Schiminovich}, D. (2020).
\newblock {Correlation between optical and UV variability of a large sample of
  quasars}.
\newblock {\em Monthly Notices of the Royal Astronomical Society},
  495(1):1403--1413.

\bibitem[Yanai et~al., 2004]{2004_Yanai}
Yanai, T., Tew, D.~P., and Handy, N.~C. (2004).
\newblock A new hybrid exchange--correlation functional using the
  {C}oulomb-attenuating method ({CAM-B3LYP}).
\newblock {\em Chemical Physics Letters}, 393(1-3):51--57.

\bibitem[Yang et~al., 2019]{2019_Yang}
Yang, K., Swanson, K., Jin, W., Coley, C., Eiden, P., Gao, H., Guzman-Perez,
  A., Hopper, T., Kelley, B., Mathea, M., et~al. (2019).
\newblock Analyzing learned molecular representations for property prediction.
\newblock {\em Journal of Chemical Information and Modeling}, 59(8):3370--3388.

\bibitem[Yang et~al., 2021]{2020_Yang}
Yang, S., Yan, D., Zhang, P., Dai, B., and Zhang, L. (2021).
\newblock Gaussian process modeling {F}ermi-{LAT}$\gamma$-ray blazar
  variability: {A} sample of blazars with $\gamma$-ray quasi-periodicities.
\newblock {\em The Astrophysical Journal}, 907(2):105.

\bibitem[Yen and Wang, 2004]{data19}
Yen, M.~S. and Wang, J. (2004).
\newblock Synthesis and absorption spectra of hetarylazo dyes derived from
  coupler 4-aryl-3-cyano-2-aminothiophenes.
\newblock {\em Dyes and Pigments}, 61(3):243--250.

\bibitem[{Yu} and {Richards}, 2021]{2021_Yu}
{Yu}, W. and {Richards}, G.~T. (2021).
\newblock {Accelerating CARMA modeling with Gaussian Processes}.
\newblock In {\em American Astronomical Society Meeting Abstracts}, volume~53
  of {\em American Astronomical Society Meeting Abstracts}, page 541.08.

\bibitem[Zagar et~al., 2020]{2020_Zagar}
Zagar, C., Griffiths, R.-R., Podgornik, R., and Kornyshev, A.~A. (2020).
\newblock On the voltage-controlled assembly of nanoparticle arrays at
  electrochemical solid/liquid interfaces.
\newblock {\em Journal of Electroanalytical Chemistry}, 872:114275.

\bibitem[Zahrt et~al., 2019]{zahrt2019prediction}
Zahrt, A.~F., Henle, J.~J., Rose, B.~T., Wang, Y., Darrow, W.~T., and Denmark,
  S.~E. (2019).
\newblock Prediction of higher-selectivity catalysts by computer-driven
  workflow and machine learning.
\newblock {\em Science}, 363(6424):eaau5631.

\bibitem[Zeldovich and Novikov, 1965]{1964_Zeldovich}
Zeldovich, Y. and Novikov, I. (1965).
\newblock Mass of quasi-stellar objects.
\newblock {\em Sov. Phys. Dokl.}, 9:834.

\bibitem[Zhang et~al., 2020]{2020_Yehia}
Zhang, C., Amar, Y., Cao, L., and Lapkin, A.~A. (2020).
\newblock Solvent selection for {M}itsunobu reaction driven by an active
  learning surrogate model.
\newblock {\em Organic Process Research \& Development}, 24(12):2864--2873.

\bibitem[Zhang and Ni, 2020]{2020_Zhang}
Zhang, Q.-H. and Ni, Y.-Q. (2020).
\newblock Improved most likely heteroscedastic {G}aussian process regression
  via {B}ayesian residual moment estimator.
\newblock {\em IEEE Transactions on Signal Processing}, 68:3450--3460.

\bibitem[Zhang and Lee, 2019]{2019_Zhang}
Zhang, Y. and Lee, A.~A. (2019).
\newblock Bayesian semi-supervised learning for uncertainty-calibrated
  prediction of molecular properties and active learning.
\newblock {\em Chemical Science}, 10(35):8154--8163.

\bibitem[Zhang et~al., 2018]{2018_Zhang}
Zhang, Y., Saxe, A.~M., Advani, M.~S., and Lee, A.~A. (2018).
\newblock Energy--entropy competition and the effectiveness of stochastic
  gradient descent in machine learning.
\newblock {\em Molecular Physics}, 116(21-22):3214--3223.

\bibitem[Zhilinskas, 1975]{1975_Zhilinskas}
Zhilinskas, A. (1975).
\newblock Single-step {B}ayesian search method for an extremum of functions of
  a single variable.
\newblock {\em Cybernetics}, 11:160--166.

\bibitem[Zhou and Zhao, 2019]{2019_Zhou}
Zhou, Y. and Zhao, Y. (2019).
\newblock Chemical stability and instability of inorganic halide perovskites.
\newblock {\em Energy \& Environmental Science}, 12(5):1495--1511.

\bibitem[Zhu et~al., 1997]{1997_Zhu}
Zhu, C., Byrd, R.~H., Lu, P., and Nocedal, J. (1997).
\newblock Algorithm 778: {L-BFGS-B}: {F}ortran subroutines for large-scale
  bound-constrained optimization.
\newblock {\em ACM Transactions on Mathematical Software (TOMS)},
  23(4):550--560.

\bibitem[{Zhu} et~al., 2016]{zhu16}
{Zhu}, F.-F., {Wang}, J.-X., {Cai}, Z.-Y., and {Sun}, Y.-H. (2016).
\newblock {The timescale-dependent color variability of quasars viewed with
  GALEX}.
\newblock {\em The Astrophysical Journal}, 832(1):75.

\bibitem[Zhu et~al., 2022]{2022_Zhu}
Zhu, Z., Shi, C., Zhang, Z., Liu, S., Xu, M., Yuan, X., Zhang, Y., Chen, J.,
  Cai, H., Lu, J., Ma, C., Liu, R., Xhonneux, L.-P., Qu, M., and Tang, J.
  (2022).
\newblock Torch{D}rug: A powerful and flexible machine learning platform for
  drug discovery.
\newblock {\em arXiv preprint arXiv:2202.08320}.

\bibitem[Zoghbi et~al., 2013]{2013_Zoghbi}
Zoghbi, A., Reynolds, C., and Cackett, E. (2013).
\newblock Calculating time lags from unevenly sampled light curves.
\newblock {\em The Astrophysical Journal}, 777(1):24.

\bibitem[Šaltenis, 1971]{1971_Saltenis}
Šaltenis, V. (1971).
\newblock One method of multiextremum optimization.
\newblock {\em Avtomatika i Vychislitel’naya Tekhnika (Automatic Control and
  Computer Sciences)}, 5(3):33--38.

\end{thebibliography}

\end{spacing}

\begin{appendices} %

\chapter{Modelling Black Hole Signals with Gaussian Processes}

\section{Additional Graphical Tests for Identifying the Flux Distribution}
\label{dist_tests}

In \autoref{PP Plots} probability-probability (PP) plots and empirical cumulative distributions functions (ECDFs) are shown as graphical distribution tests for Gaussianity. It may be observed qualitatively that both X-ray band log count rates and UVW2 flux are well-modelled by a Gaussian distribution.

\begin{figure}[h!]
\centering
\subfigure[PP plot for X-ray log count rates]{\label{fig:4pt1}\includegraphics[width=0.49\textwidth]{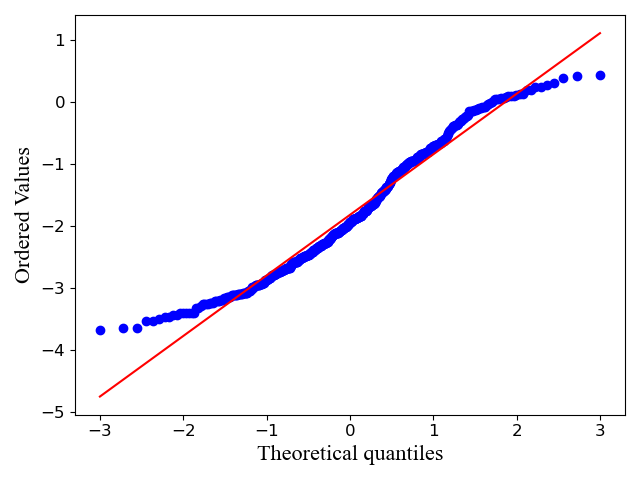}}
\subfigure[PP plot for UVW2 flux]{\label{fig:4pt2}\includegraphics[width=0.49\textwidth]{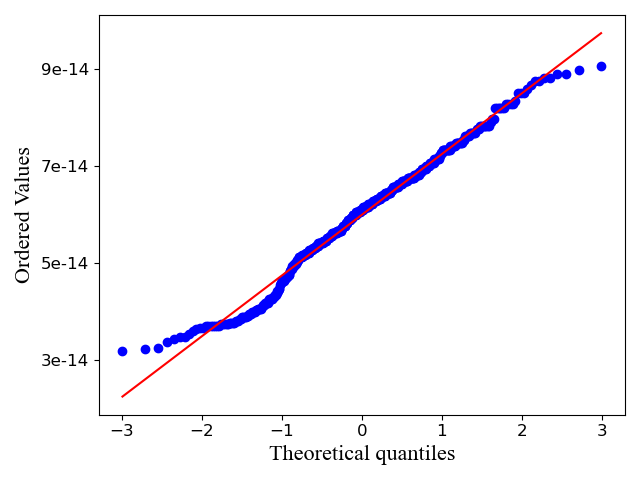}}
\subfigure[ECDF for X-ray log count rates]{\label{fig:4pt3}\includegraphics[width=0.49\textwidth]{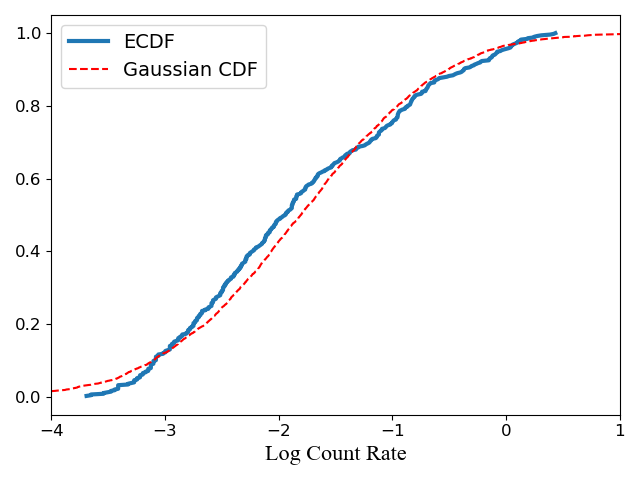}}
\subfigure[ECDF for UVW2 flux]{\label{fig:4pt4}\includegraphics[width=0.49\textwidth]{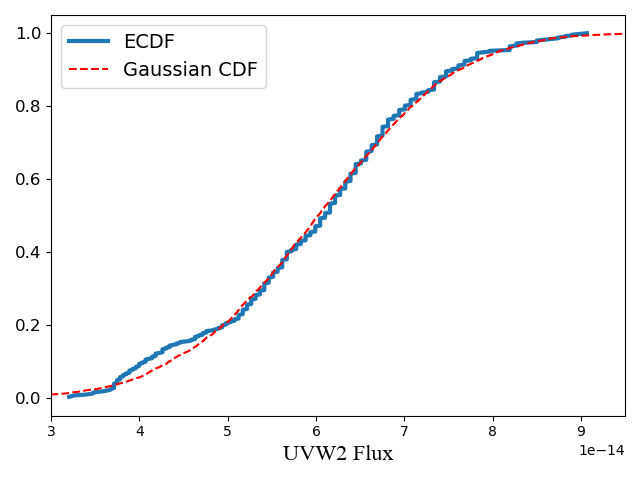}}  
\caption{PP plots and ECDFs for X-ray log count rates and UVW2 flux, graphical tests of Gaussianity. In the case of the PP plots, proximity to the line is an indicator of Gaussianity. In the case of the ECDF plots, resemblance to the cumulative distribution function of a Gaussian is indicative of Gaussianity. The  figures above were generated by Douglas Buisson.}
\label{PP Plots}
\end{figure}

\section{Spectral Properties of the Examined Kernels}
\label{kern_rat}

The autocorrelation functions, log autocorrelation functions and PSDs are illustrated for the Matérn, squared exponential and rational quadratic kernels in \autoref{kern_acfs}. The figures were generated by Douglas Buisson.

\begin{figure}[h!]
\centering
\subfigure[Kernel autocorrelation functions]{\label{fig:4k}\includegraphics[width=0.4\textwidth]{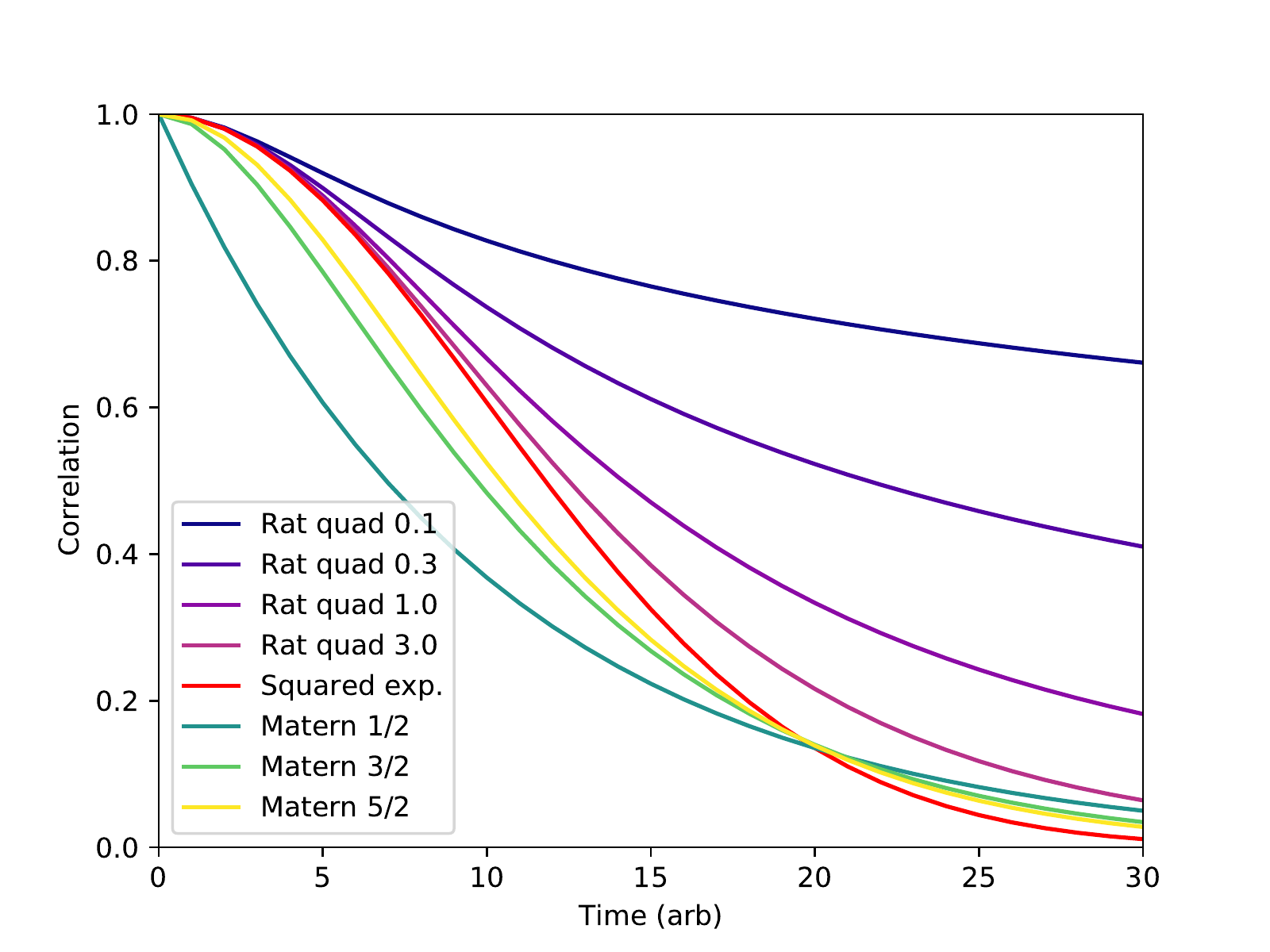}}
\subfigure[Kernel log autocorrelation functions]{\label{fig:3k}\includegraphics[width=0.4\textwidth]{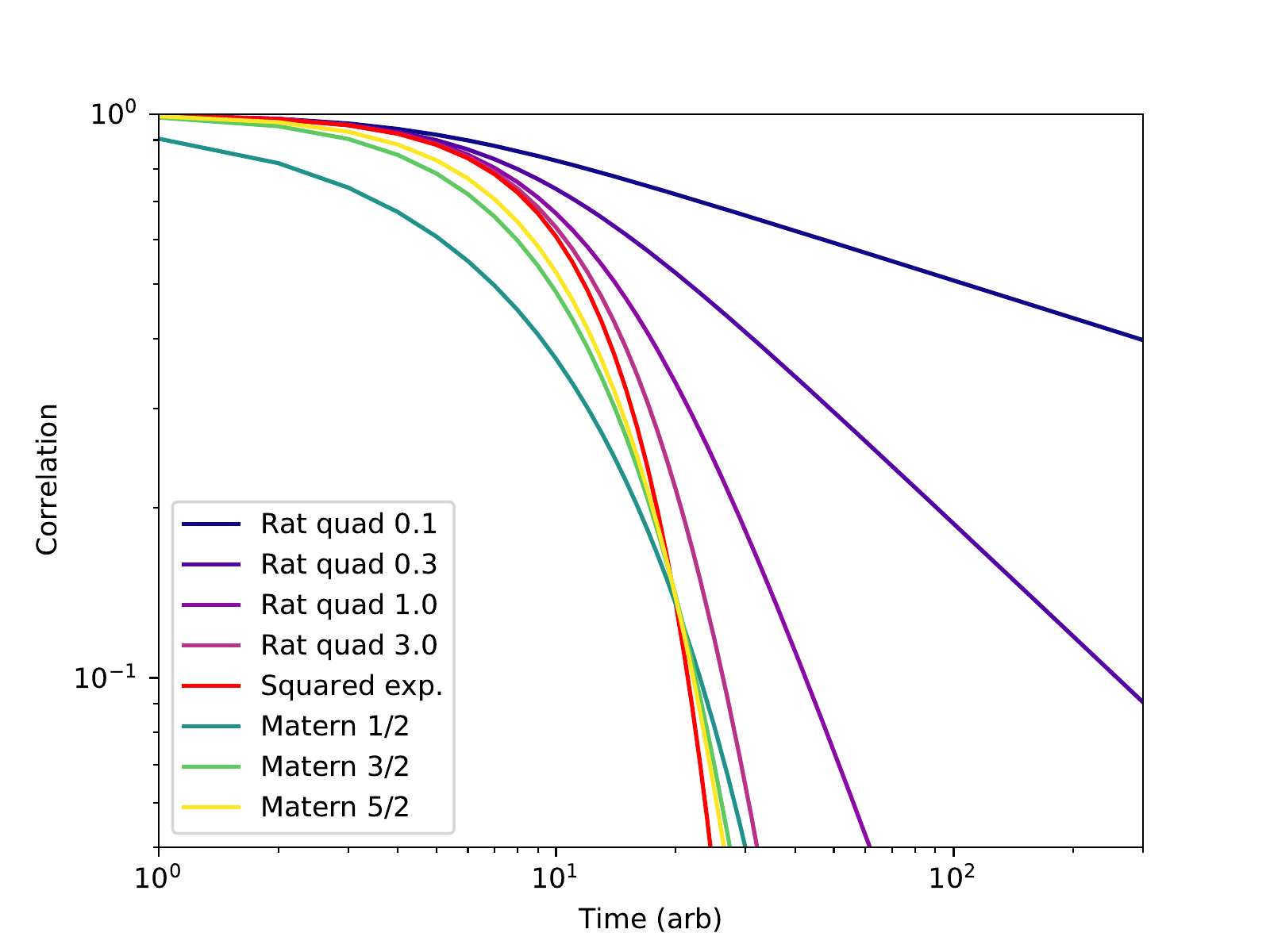}}
\subfigure[Kernel PSDs]{\label{fig:5k}\includegraphics[width=0.4\textwidth]{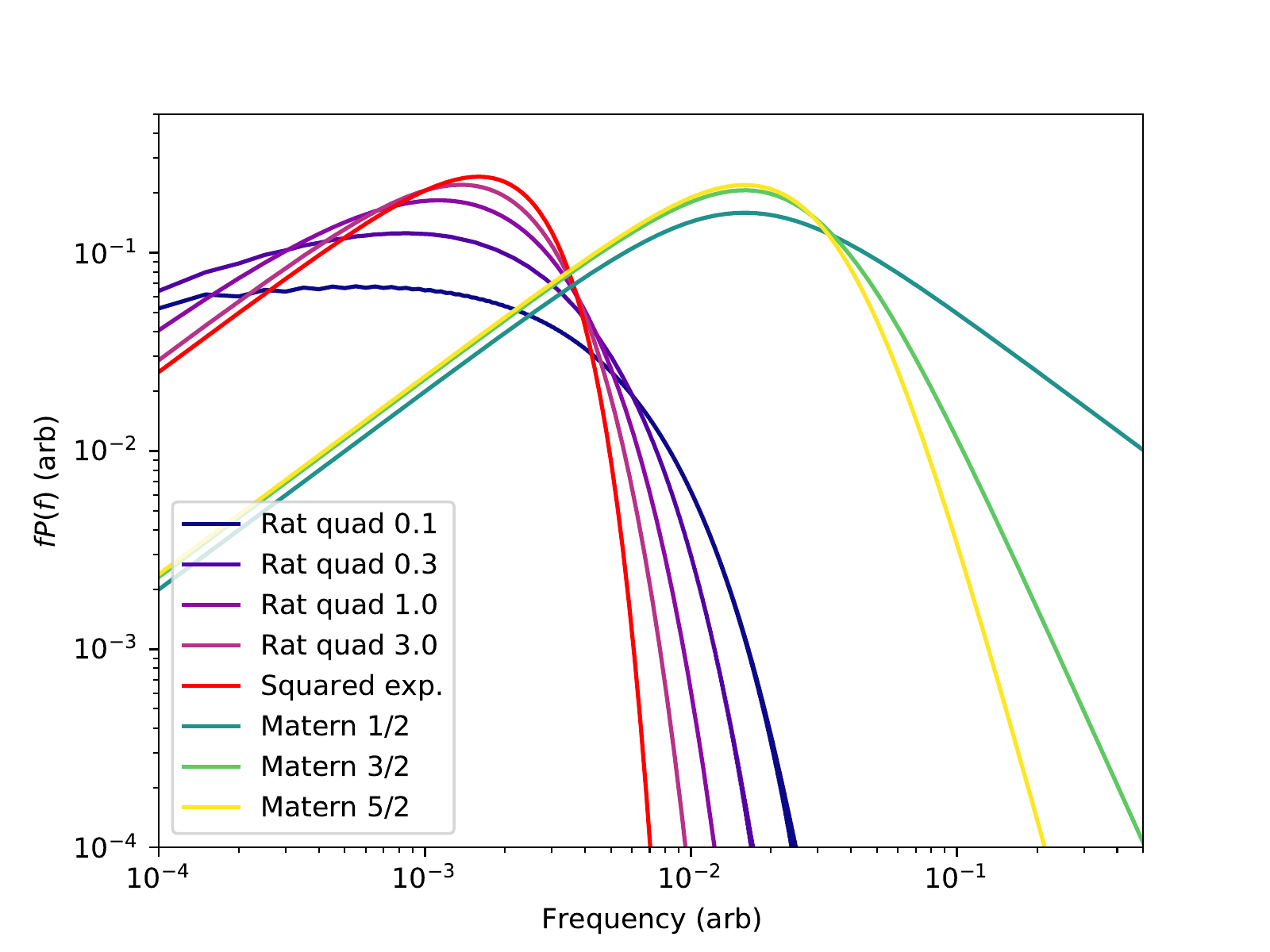}}
\caption{Kernel autocorrelation functions and PSDs. The rational quadratic kernel is plotted for different values of the $\alpha$ parameter. The Matérn kernel plots in the PSD figure are offset by a factor of 10 for clarity. A PSD of $f^{-2}$ will match the high frequency part of the Matérn $\frac{1}{2}$ kernel and the rational quadratic is endowed with additional flexibility to model PSDs by virtue of its $\alpha$ parameter. Such characteristics may explain why these kernels are preferred in the simulation study.}
\label{kern_acfs}
\end{figure}

\chapter{Modelling Experimental Noise with Gaussian Processes} 

\section{Heteroscedasticity of the Soil Phosphorus Fraction Dataset}
\label{soil_het_demo}

\autoref{the_table} is used to demonstrate the efficacy of modelling the soil phosphorus fraction dataset using a heteroscedastic \textsc{gp}. The heteroscedastic \textsc{gp} outperforms the homoscedastic \textsc{gp} on prediction based on the metric of negative log predictive density (NLPD)

\begin{equation*}
    \text{NLPD} = \frac{1}{n} \sum_{i=1}^n - \log p(t_i | \boldsymbol{x_i}),
\end{equation*}

\noindent which penalises both over and under-confident predictions.

\begin{table}[ht]
\centering
\caption{Comparison of NLPD values on the soil phosphorus fraction dataset. Standard errors are reported for 10 independent train/test splits. Lower scores are better.}
\label{the_table}
\vspace{3mm}
\resizebox{0.75\textwidth}{!}{%
\begin{tabular}{@{}lll@{}}
\toprule
\textbf{Soil Phosphorus Fraction Dataset} & \multicolumn{1}{c}{\textbf{GP}} & \multicolumn{1}{c}{\textbf{Het GP}} \\ \midrule
NLPD & $1.35 \pm 1.33$ & $1.00 \pm 0.95$ \\ \bottomrule
\end{tabular}%
}
\end{table}

\section{Additional Ablation Experiments}\label{more_ablation}

In this section the ablation results are presented on noiseless, homoscedastic and heteroscedastic noise tasks in line with \autoref{first_ablation} of the main thesis.

\subsection{Goldstein-Price Function}

The form of the Goldstein-Price function is the same as in the main thesis with noise function in \autoref{eq_g-p_noise}. The function is visualised in \autoref{fig:g-p_function}. 9 data points are used for initialisation in the noiseless and homoscedastic noise cases whereas 100 data points are used for initialisation in the heteroscedastic noise case. $\beta$ is set to 0.5 for the noiseless and homoscedastic noise tasks and $\frac{1}{11}$ for the heteroscedastic noise task. $\gamma$ is set to 500 for all experiments.

\begin{figure*}
\centering
\subfigure[Latent Function]{\label{fig:gp_1}\includegraphics[width=0.32\textwidth]{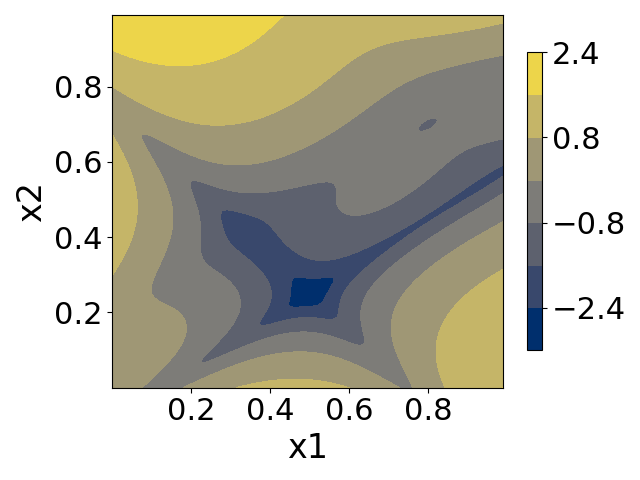}}
\subfigure[Noise Function ]{\label{fig:gp_2}\includegraphics[width=0.32\textwidth]{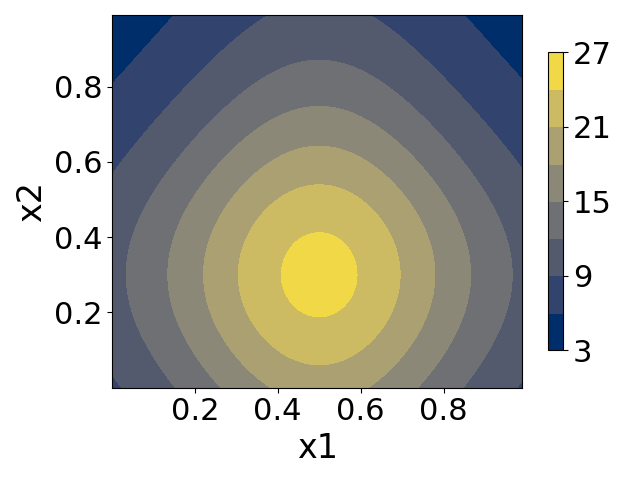}}
\subfigure[Objective Function]{\label{fig:gp_3}\includegraphics[width=0.32\textwidth]{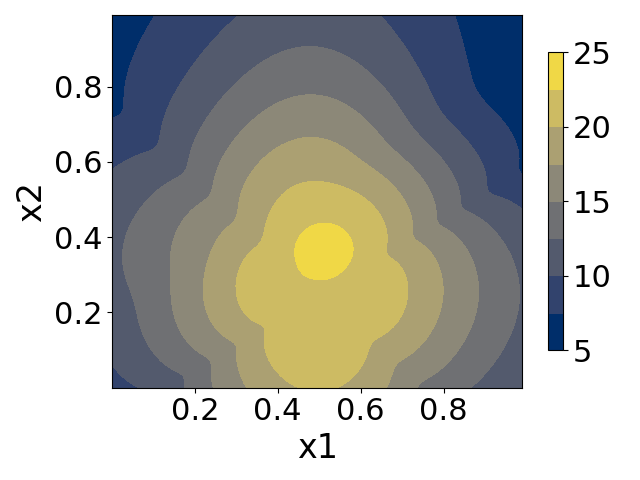}} 
\caption{(a) The latent Goldstein-Price Function $f(\mathbf{x})$ together with (b) its heteroscedastic noise function $g(\mathbf{x})$ and (c) the objective function $f(\mathbf{x}) + g(\mathbf{x})$.}
\label{fig:g-p_function}
\end{figure*}

\subsubsection{Noiseless Case}

The results of the noiseless case for Goldstein-Price are given in \autoref{noiseless_g-p}. All \textsc{bo} methods outperform random search with ANPEI best and HAEI second best.

\begin{figure*}[t]
\centering
    \includegraphics[width=.7\textwidth]{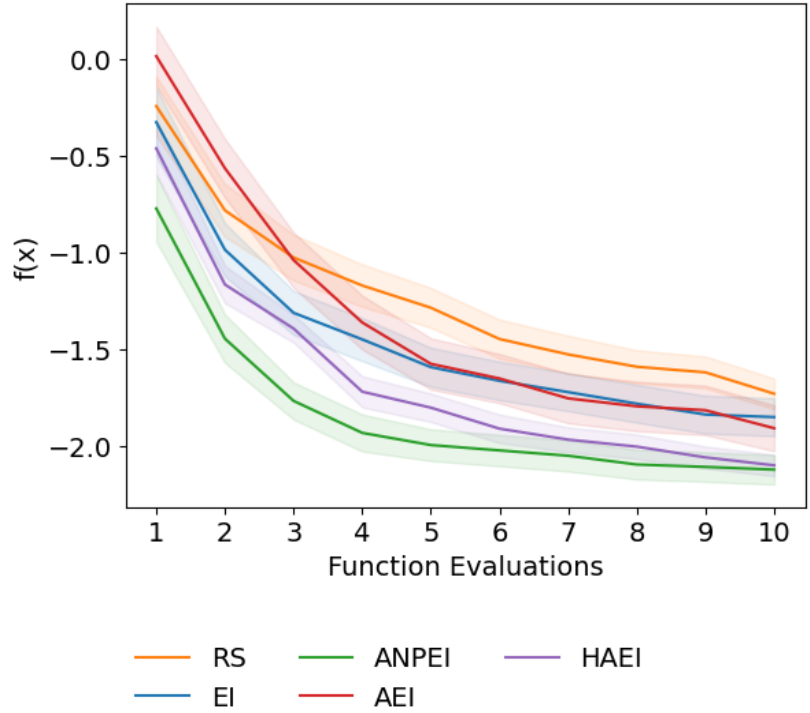}
    \caption{Goldstein-Price function noiseless case. All \textsc{bo} methods outperform random search. ANPEI performs best and HAEI is runner-up.}
    \label{noiseless_g-p}
\end{figure*}

\pagestyle{fancy}
\fancyhf{}
\lhead{Appendix}
\rhead{\thepage}

\subsubsection{Homoscedastic Noise Case}

The results of the homoscedastic noise case for Goldstein-Price are shown in \autoref{homo_g-p}. In this instance HAEI performs best.

\begin{figure*}[t]
\centering
    \includegraphics[width=.7\textwidth]{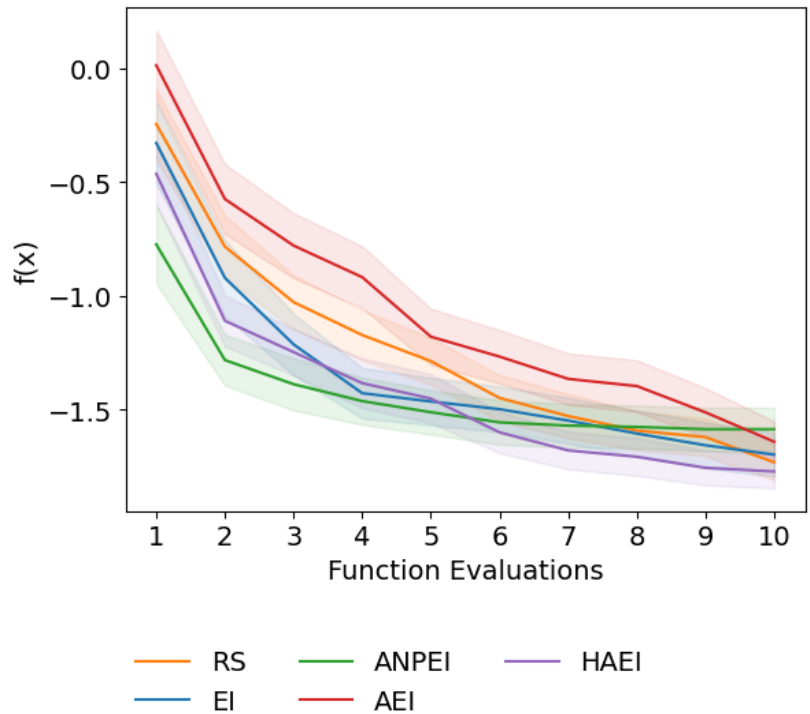}
    \caption{Goldstein-Price function homoscedastic noise case. HAEI performs best.}
    \label{homo_g-p}
\end{figure*}

\subsubsection{Heteroscedastic Noise}

The results of the heteroscedastic noise case for Goldstein-Price are shown in \autoref{fig:g-p_hetero}. ANPEI performs best whilst HAEI performs worse than random search.

\begin{figure*}
\centering
\subfigure[Best Objective Value Found so Far]{\label{fig:bo_g-p}\includegraphics[width=0.49\textwidth]{Chapter2/Figs/bayesopt_plot10_iters_heteroscedastic.png}}
\subfigure[Lowest Aleatoric Noise Found so Far ]{\label{fig:bo_2_g-p}\includegraphics[width=0.49\textwidth]{Chapter2/Figs/heteroscedastic_bayesopt_plot10_itersnoise_only.png}}
\caption{Comparison of heteroscedastic and homoscedastic \textsc{bo} on the heteroscedastic 2D Goldstein-Price function. (a) shows the optimisation of $h(\boldsymbol{x}) = f(\boldsymbol{x}) + g(\boldsymbol{x})$ (lower is better) where $g(\boldsymbol{x})$ is the aleatoric noise. (b) shows the values $g(\boldsymbol{x})$ obtained over the course of the optimisation of $h(\boldsymbol{x})$.}
\label{fig:g-p_hetero}
\end{figure*}

\subsection{Branin-Hoo Function}

The form of the Branin-Hoo function is given in \autoref{branin_eq} with noise function in \autoref{branin_noise_eq}. The function is visualised in \autoref{fig:double_bran}, a figure from the main thesis repeated here for clarity. 9 data points are used for initialisation in the noiseless and homoscedastic noise cases whereas 100 data points are used for initialisation in the heteroscedastic noise case. $\beta$ is set to 0.5 and $\gamma$ is set to 500 for all experiments.

\begin{figure*}
\centering
\subfigure[Latent Function]{\label{fig:double_bran_1}\includegraphics[width=0.32\textwidth]{Chapter2/Figs/branin/branin_function.png}}
\subfigure[Noise Function ]{\label{fig:double_bran_2}\includegraphics[width=0.32\textwidth]{Chapter2/Figs/branin/branin_noise_function.png}}
\subfigure[Objective Function]{\label{fig:double_bran_3}\includegraphics[width=0.32\textwidth]{Chapter2/Figs/branin/branin_composite_function.png}} 
\caption{Heteroscedastic Branin Function.}
\label{fig:double_bran}
\end{figure*}

\subsubsection{Noiseless Case}

The results of the noiseless case for the Branin-Hoo function are given in \autoref{noiseless_double_branin}. HAEI performs best in this case whereas ANPEI performs worst.

\begin{figure*}[t]
\centering
    \includegraphics[width=.7\textwidth]{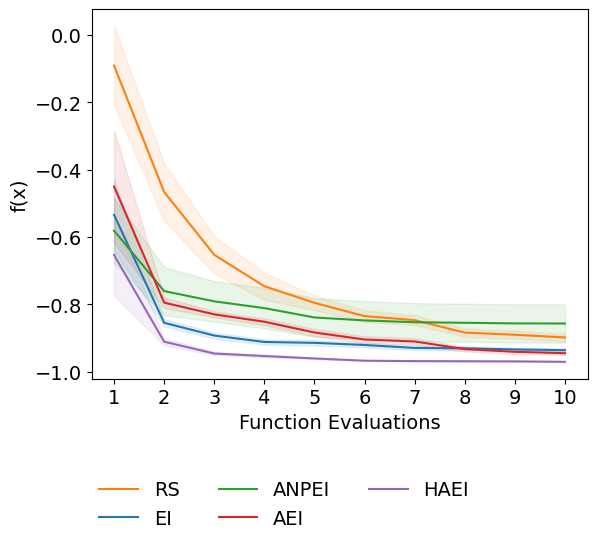}
    \caption{Branin-Hoo function noiseless case. HAEI performs best. ANPEI performs worst.}
    \label{noiseless_double_branin}
\end{figure*}

\subsubsection{Homoscedastic Noise Case}

The results of the homoscedastic noise case for the Branin-Hoo function are given in \autoref{homoscedastic_braninhoo}. All \textsc{bo} methods outperform random search yet perform comparably against each other.

\begin{figure*}[t]
\centering
    \includegraphics[width=.7\textwidth]{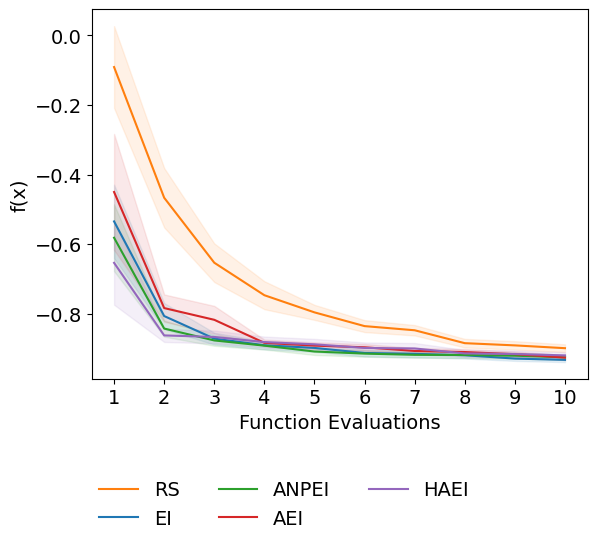}
    \caption{Branin-Hoo function homoscedastic noise case. All \textsc{bo} methods outperform random search.}
    \label{homoscedastic_braninhoo}
\end{figure*}

\subsubsection{Heteroscedastic Noise}

The results of the heteroscedastic noise case for the Branin-Hoo function are shown in \autoref{fig:bran_hetero2}. ANPEI performs best whilst HAEI performs worse than random search.

\begin{figure*}
\centering
\subfigure[Best Objective Value Found so Far]{\label{fig:bo_bran2}\includegraphics[width=0.495\textwidth]{Chapter2/Figs/branin_objective_hetero_500_weight_haei_rbf.png}}
\subfigure[Lowest Aleatoric Noise Found so Far ]{\label{fig:bo_2_bran2}\includegraphics[width=0.493\textwidth]{Chapter2/Figs/branin_noise_hetero_500_weight_haei_rbf.png}}
\caption{Comparison of heteroscedastic and homoscedastic \textsc{bo} on the heteroscedastic 2D Branin function. (a) shows the optimisation of $h(\boldsymbol{x}) = f(\boldsymbol{x}) + g(\boldsymbol{x})$ (lower is better) where $g(\boldsymbol{x})$ is the aleatoric noise. (b) shows the values $g(\boldsymbol{x})$ obtained over the course of the optimisation of $h(\boldsymbol{x})$.}
\label{fig:bran_hetero2}
\end{figure*}

\section{Performance Impact of the Kernel Choice}\label{kernel_exps}

In this section the impact that the choice of \textsc{gp} kernel has on \textsc{bo} performance is analysed. Three kernels are selected for this purpose: the RBF kernel

\begin{equation*}
    k_{\text{RBF}}(\boldsymbol{x}, \boldsymbol{x'}) = \sigma_{f}^{2}\cdot\text{exp}\Big(\frac{-\lVert\boldsymbol{x} - \boldsymbol{x'}\rVert^{2}}{2\ell^{2}}\Big),
\end{equation*}

\noindent used for all experiments in the main thesis, the exponential kernel (Exp)

\begin{equation*}
    k_{\text{exp}}(\boldsymbol{x}, \boldsymbol{x'}) = \sigma_{f}^{2}\cdot\text{exp}\Big(\frac{-\lVert\boldsymbol{x} - \boldsymbol{x'}\rVert}{\ell}\Big),
\end{equation*}

\noindent a special instance of the Mat\'{e}rn kernel for values of $\nu = \frac{1}{2}$ \citep{2006_Rasmussen}, as well as the Mat\'{e}rn 5/2 kernel

\begin{equation*}
    k_{\text{Mat\'{e}rn}(5/2)}(\boldsymbol{x}, \boldsymbol{x'}) = \sigma_{f}^{2}\cdot \Big(1 + \frac{\sqrt{5} \lVert\boldsymbol{x} - \boldsymbol{x'}\rVert}{\ell} + \frac{5 \lVert\boldsymbol{x} - \boldsymbol{x'}\rVert^2}{3\ell^2}\Big) \cdot \text{exp}\Big(\frac{-\sqrt{5} \lVert\boldsymbol{x} - \boldsymbol{x'}\rVert}{\ell}\Big),
\end{equation*}

\noindent which is one of the most popular kernels for large-scale empirical studies \citep{2018_Wilson, 2020_Grosnit}. It should be noted that while the equations are written assuming a single scalar lengthscale, in practice for the experiments in greater than 1D, each lengthscale is optimised per dimension under the marginal likelihood. For all experiments the same kernel is chosen for both \textsc{gp}s of the \textsc{mlhgp} model i.e. the \textsc{gp} modelling the objective as well as the \textsc{gp} modelling the noise. 100 points are used for initialisation in the Branin-Hoo and Goldstein-Price functions and 144 points are used for the Hosaki function. $\beta$ is set to 0.5 for the Branin-Hoo and Hosaki functions and $\frac{1}{11}$ for the Goldstein-Price function. $\gamma$ is set to 500 for all experiments. The results are shown in \autoref{fig:bran_kernel}, \autoref{fig:gold_kernel} and \autoref{fig:hos_kernel} for the Branin-Hoo function, Goldstein-Price function and Hosaki functions respectively. There is no significant difference in performance using each kernel save for the Branin-Hoo function where ANPEI underperforms using the somewhat rougher exponential kernel.

\begin{figure*}
\centering
\subfigure[ANPEI]{\label{fig:bo_brank}\includegraphics[width=0.495\textwidth]{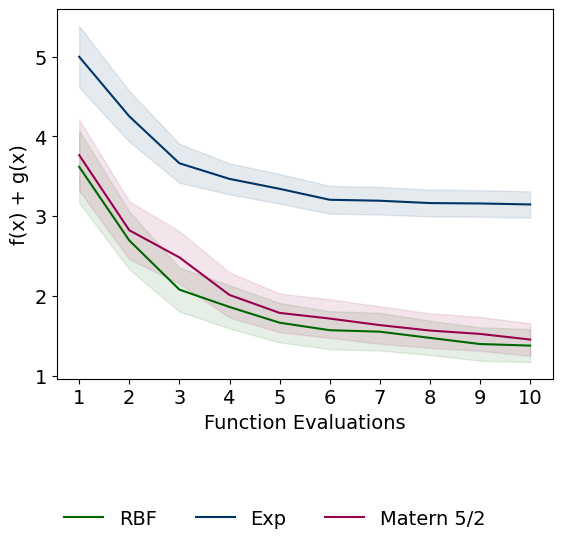}}
\subfigure[HAEI]{\label{fig:bo_2_brank}\includegraphics[width=0.495\textwidth]{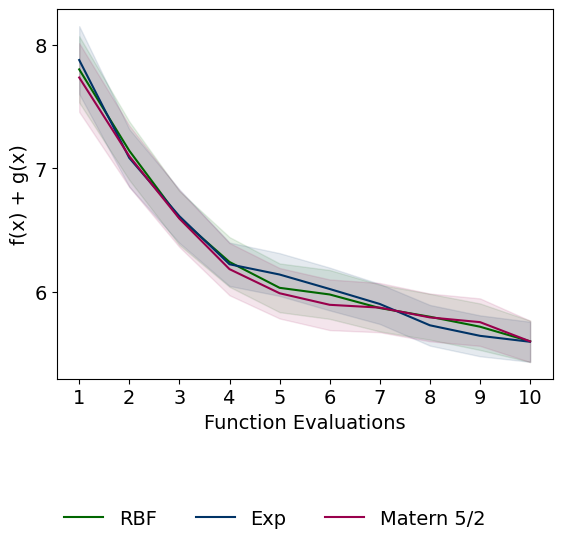}}
\caption{Branin-Hoo function kernel comparison.}
\label{fig:bran_kernel}
\end{figure*}

\begin{figure*}
\centering
\subfigure[ANPEI]{\label{fig:bo_g-p}\includegraphics[width=0.486\textwidth]{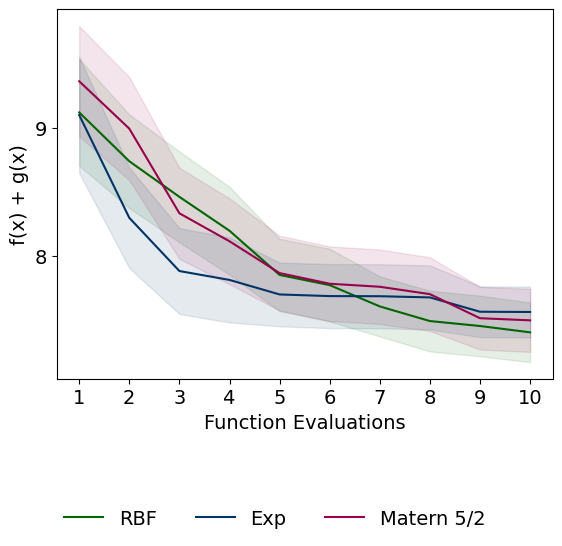}}
\subfigure[HAEI]{\label{fig:bo_2_g-p}\includegraphics[width=0.495\textwidth]{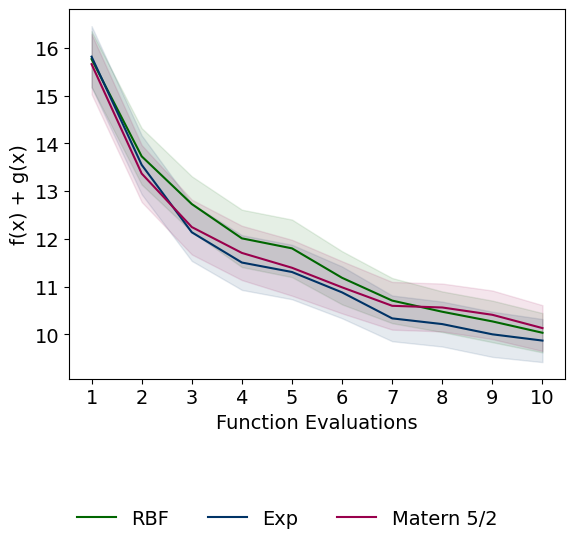}}
\caption{Goldstein-Price function kernel comparison.}
\label{fig:gold_kernel}
\end{figure*}

\begin{figure*}
\centering
\subfigure[ANPEI]{\label{fig:bo_hosk}\includegraphics[width=0.495\textwidth]{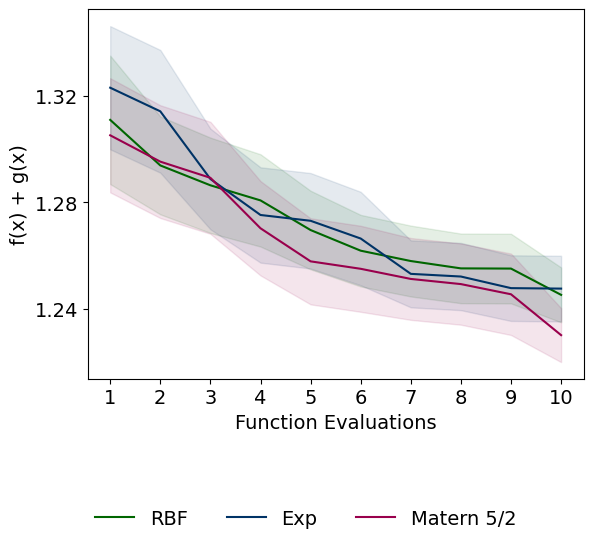}}
\subfigure[HAEI]{\label{fig:bo_2_hosk}\includegraphics[width=0.495\textwidth]{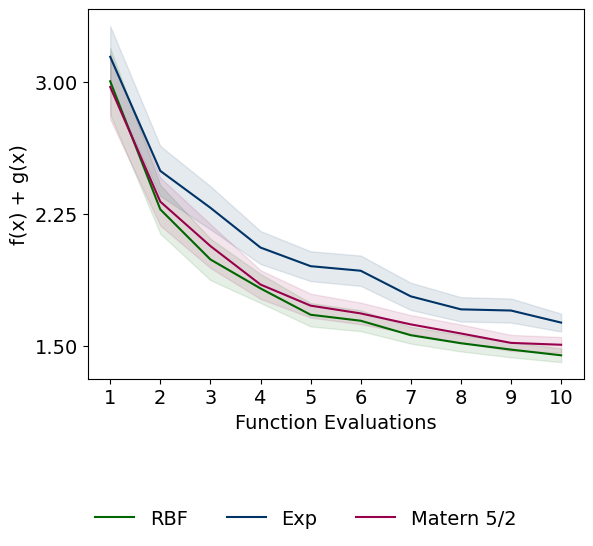}}
\caption{Hosaki function kernel comparison.}
\label{fig:hos_kernel}
\end{figure*}

\ifpdf
    \graphicspath{{Appendix3/Figs/}{Appendix3/Figs/PDF/}{Chapter1/Figs/}}
\else
    \graphicspath{{Appendix3/Figs/}{Appendix3/Figs/}}
\fi

\chapter{Molecular Discovery with Gaussian Processes}

\section{Sources of Experimental Data}
\label{exp_source}

Properties were collated from the photoswitch literature originally by Aditya Raymond Thawani. Emphasis was placed on obtaining a broad range of functional groups attached to the photoswitch scaffold. The set of literature articles consulted included \citet{d1,data1,data2,data3,data4,data5,data6,data8,2011_Jacquemin,data10,data11,data12,data13,data14,data15,data16,data17,data18,data19}.

\section{Dataset Visualisations}

The choice of molecular representation is known to be a key factor in the performance of machine learning algorithms on molecules \citep{2017_Faber, 2018_Wu, 2020_Faber}. Commonly-used representations such as fingerprint and fragment-based descriptors are high dimensional and as such, it can be challenging to interpret the inductive bias induced by the representation. To visualise the high-dimensional representation space of the Photoswitch Dataset the data matrix was projected to two dimensions using the UMAP algorithm. \citep{2018_UMAP}. The manifolds were compared under the Morgan fingerprint representation and a fragment-based representation computed using RDKit \citep{rdkit}. 512-bit Morgan fingerprints were generated with a bond radius of 2, setting the nearest neighbours parameter in the UMAP algorithm to a value of 50. The resulting visualisation was produced using the ASAP package (available at \href{https://github.com/BingqingCheng/ASAP}{https://github.com/BingqingCheng/ASAP}) and is shown in \autoref{umap}.

\begin{figure}[!htbp]
    \begin{center}
        \includegraphics[width=1.1\textwidth]{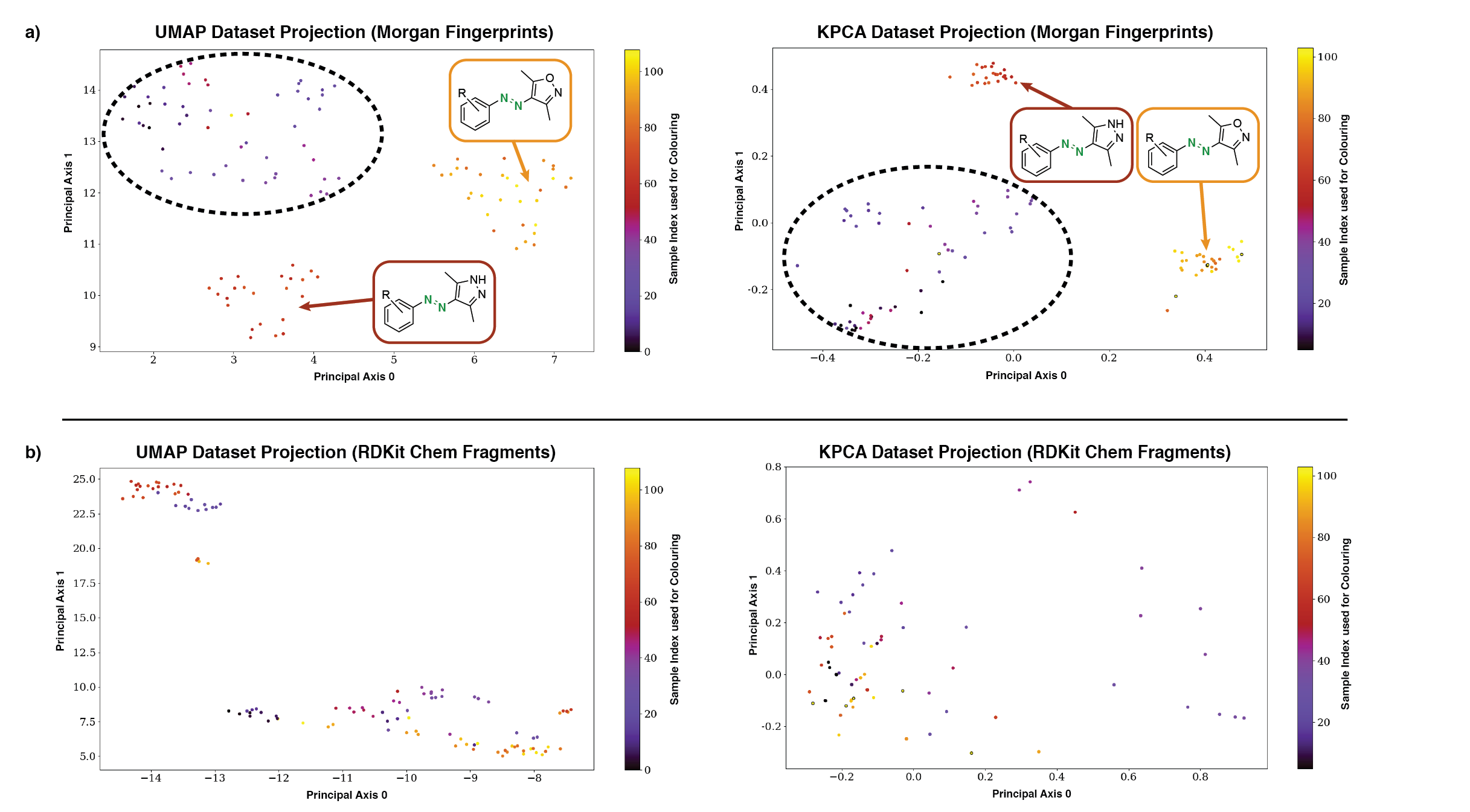}
    \end{center}
    \caption{a) UMAP and k-PCA projections of the dataset, using Morgan fingerprints, correctly identify clusters of chemically similar molecules. The regions demarcated by dashed black lines are composed of miscellaneous azoheteroarenes; no grouping was noted here due to the limited ($\leq$	10) examples per class. b) Similar projections using RDKit Fragment descriptors fails to identify any such clusters.}
    \label{umap}
\end{figure}

The structure of the manifold located under the Morgan fingerprint representation identifies meaningful subgroups of azophotoswitches when compared to the fragment-based representation. To demonstrate that the finding is due to the representation and not the dimensionality reduction algorithm the manifolds identified by k-PCA using a cosine kernel are included. Both algorithms identify the same manifold structure in the Morgan fingerprint representation. 

\section{Further Experiments}

The subsections below detail further experiments carried out during the design of the machine learning prediction pipeline.

\subsection{Property Prediction}
\label{benchmark_ml}

For representations, 2048-bit Morgan fingerprints with a bond radius of 3, implemented in RDKit, were used \citep{rdkit}. 85-dimensional fragment features computed using the RDKit descriptors module were used. The Dscribe library \citep{2020_Himanen} was used to compute (Smooth Overlap of Atomic Positions) (SOAP) descriptors using a \texttt{rcut} parameter of 3.0, a \texttt{sigma} value of 0.2, a \texttt{nmax} parameter of 12, and a \texttt{lmax} parameter of 8. An REMatch kernel was used with polynomial base kernel of degree 3.0, \texttt{gamma} = 1.0, \texttt{coef0} = 0, \texttt{alpha} = 0.5, and \texttt{threshold} = $1e^{-6}$.

Performance was evaluated on 20 random train/test splits in a ratio of 80/20 using the root mean square error (RMSE), mean absolute error (MAE) and coefficient of determination ($R^2$) as performance metrics, reporting the mean and standard error for each metric (Table \ref{property_pred1}). The following models were evaluated: Random Forest (RF), Gaussian Processes (\textsc{gp}), Attentive Neural Processes (ANP), \citep{2018_Kim} Graph Convolutional Networks (GCNs) \citep{2017_Kipf}, Graph Attention Networks (GATs) \citep{2018_Velickovic}, Directed Message-Passing Neural Networks (DMPNNs) \citep{2019_Yang}, and the following representations: Morgan fingerprints \citep{2010_Rogers}, RDKit fragments \citep{rdkit}, SOAP \citep{2013_Bartok}, the simplified molecular-input line-Entry system (SMILES) \citep{1988_Weininger}, and self-referencing embedded strings (SELFIES) \citep{2020_Krenn}. In addition, a new hybrid representation was introduced and termed \say{fragprints}. Fragprints are formed by concatenating the fragment and fingerprint vectors. For the purpose of the benchmark, hyperparameter selection for \textsc{gp}-based approaches was performed by optimising the marginal likelihood on the train set whereas for other methods cross-validation was performed using the Hyperopt-Sklearn library \citep{2019_Komer} for Sklearn models such as RF, and 1000 randomly sampled configurations for other models.
 
The RF model was trained using scikit-learn \citep{scikit-learn} with 1000 estimators and a maximum depth of 300. A \textsc{gp} was implemented in GPflow \citep{GPflow} using a Tanimoto kernel \citep{2005_Ralaivola, 2020_flowmo} for fingerprint, fragment and fragprint representations, and the subset string kernel of \citet{2020_Moss} (following the exact experimental setup in \citet{2020_flowmo}) for the character-based SMILES and SELFIES representations. Additionally, a multioutput Gaussian process (\textsc{mogp}) was trained based on the intrinsic coregionalisation model (ICM) \citep{2007_Williams} to leverage information in the multitask setting. For all \textsc{gp} models, the mean function was set to be the empirical mean of the data and the kernel variance and likelihood variance treated as hyperparameters, optimising their values under the marginal likelihood. For the ANP, 2 hidden layers of dimension 32 were used for each of the decoder, latent decoder and the deterministic encoder respectively. 8-dimensional latent variables $r$ and $z$ were used and optimisation was run for 500 iterations with the Adam optimiser \citep{2014_Adam} using a learning rate of 0.001. 

For the ANP, principal components regression was performed by reducing the representation dimension to 50. GCNs and GATs were implemented in the DGL-LifeSci library \citep{2020_Li}. Node features included one-hot representations of atom-type, atom degree, the number of implicit hydrogen atoms attached to each atom, the total number of hydrogen atoms per atom, atom hybridisation, the formal charge and the number of radical electrons on the atom. Edge features contained one-hot encodings of bond-type and Booleans indicating the stereogenic configuration of the bond and whether the bond was conjugated or in a ring. For the GCN, two hidden layers with 32 hidden units and ReLU activations were used, applying BatchNorm \citep{2015_Ioffe} to both layers. For the GAT, two hidden layers with 32 units each, 4 attention heads, and an alpha value of 0.2 were used in both layers with ELU activations. A single DMPNN model was trained for 50 epochs with additional normalised 2D RDKit features. All remaining parameters were set to the default values in \citet{2019_Yang}. SchNet \citep{2017_Schutt} was not benchmarked because it is designed for the prediction of molecular energies and atomic forces. All experiments were performed on the CPU of a MacBook Pro using a 2.3 GHz 8-Core Intel Core i9 processor.

Standardisation was applied (by subtracting the mean and dividing by the standard deviation) to the property values in all experiments. The results of the aforementioned models and representations are given in \autoref{property_pred1}. Additional results including Message-passing neural networks (MPNN) \citep{2017_Gilmer}, a black-box alpha divergence minimisation Bayesian neural network (BNN) \citep{2016_Lobato}, and an LSTM with augmented SMILES, SMILES-X \citep{2020_Lambard}, are presented in \autoref{property_pred2}. It should be noted that featurisations using standard molecular descriptors are more than competitive with neural representations for this dataset. The best-performing representation/model pair on the most data-rich \emph{E} isomer $\pi-\pi^{*}$ task was the \textsc{mogp}$^*$-Tanimoto kernel and the introduced hybrid descriptor set \say{fragprints}. Importantly, there is weak evidence that the \textsc{mogp}$^*$ is able to leverage multitask learning in learning correlations between the transition wavelengths of the isomers, a modelling feature that may be particularly useful in the low-data regimes characteristic of experimental datasets. A Wilcoxon signed-rank test \citep{1945_Wilcoxon} is carried out in order to determine whether the performance differential between the \textsc{gp}/fragprints combination and the \textsc{mogp}$^*$/fragprints combination is statistically significant. In this instance, the \textsc{mogp}$^*$ is provided with auxiliary task labels for test molecules where available (i.e. labels for tasks that are not being predicted). The null hypothesis is that there is no significant difference arising from multitask learning. In the case of the \emph{E} isomer $\pi-\pi^*$ transition, the resultant p-value is $0.33$, meaning that the null hypothesis cannot be rejected at the 95\% confidence level. In the case of the \emph{Z} isomer $\pi-\pi^*$ transition, the resultant p-value is $0.06$, meaning also that the null hypothesis cannot be rejected at the 95\% confidence level. In this latter case, however, rejection of the null hypothesis depends on the confidence level threshold specified. As such, it is concluded that only weak evidence is available to support the benefits of multitask learning over single task learning.  

\begin{table}[!htbp]
\caption{Test set performance in predicting the transition wavelengths of the \emph{E} and \emph{Z} isomers. Best-performing models are highlighted in bold. MOGP$^*$ denotes a multioutput \textsc{gp} such that auxiliary task labels (i.e. not the task being predicted) for test molecules are provided to the model where available.}
\label{tab1_photo}
\setlength{\extrarowheight}{2pt}
\begin{adjustbox}{width={\textwidth},totalheight={\textheight},keepaspectratio}
\begin{tabular}{l|cccc}
\toprule
& \emph{E} isomer $\pi-\pi{^*}$ (nm) & \emph{E} isomer \emph{n}$-\pi{^*}$ (nm) & \emph{Z} isomer $\pi-\pi{^*}$ (nm) & \emph{Z} isomer \emph{n}$-\pi{^*}$ (nm) \\ \hline
\multicolumn{1}{c|}{\underline{\textbf{RMSE}}} & & & & \\
RF + Morgan & $25.3 \pm \: 0.9$ & $\textbf{10.2} \pm \: \textbf{0.4}$ & $14.0 \pm \: 0.6$ & $11.1 \pm \: 0.4$ \\ 
RF + Fragments & $26.4 \pm \: 1.1$ & $11.4 \pm \: 0.5$ & $17.0 \pm \: 0.8$& $14.2 \pm \: 0.6$ \\
RF + Fragprints & $23.4 \pm \: 0.9$ & $11.0 \pm \: 0.4$ & $14.2 \pm \: 0.6$ &  $11.3 \pm \: 0.6$ \\ 
GP + Morgan & $23.4 \pm \: 0.8$ & $11.4 \pm \: 0.5$ & $13.2 \pm \: 0.7$ & $\textbf{11.0} \pm \: \textbf{0.7}$ \\
GP + Fragments & $26.3 \pm \: 0.8$ & $11.6 \pm \: 0.5$ & $15.5 \pm \: 0.8$ & $12.6 \pm \: 0.5$ \\
GP + Fragprints & $20.9 \pm \: 0.7$ & $11.1 \pm \: 0.5$ & $13.1 \pm \: 0.6$ & $11.4 \pm \: 0.7$ \\
GP + SOAP & $21.0 \pm \: 0.6$ & $22.7 \pm \: 0.6$ & $17.8 \pm \: 0.8$ & $15.0 \pm \: 0.5$ \\ 
GP + SMILES & $26.0 \pm \: 0.8$ & $12.3 \pm \: 0.4$& $12.5 \pm \: 0.5$ &  $11.8 \pm \: 0.6$\\ 
GP + SELFIES & $23.5 \pm \: 0.7$ &$12.9 \pm \: 0.5$  &$14.4 \pm \: 0.5$ & $12.2 \pm \: 0.5$\\ 

MOGP + Morgan & $23.6 \pm \: 0.8$ & $11.7 \pm \: 0.5$ & $15.5 \pm \: 0.6$ & $11.1 \pm \: 0.7$ \\
MOGP + Fragments & $27.0 \pm \: 0.9$ & $11.9 \pm \: 0.6$ & $16.4 \pm \: 0.9$ & $13.1 \pm \: 0.6$ \\
MOGP + Fragprints & $21.2 \pm \: 0.7$ & $11.3 \pm \: 0.5$ & $13.5 \pm \: 0.6$ & $11.4 \pm \: 0.7$ \\

MOGP$^*$ + Morgan & $22.6 \pm \: 0.8$ & $11.6 \pm \: 0.4$ & $12.3 \pm \: 0.7$ & $10.9 \pm \: 0.7$ \\
MOGP$^*$ + Fragments & $26.9 \pm \: 0.8$ & $12.1 \pm \: 0.6$ & $16.2 \pm \: 0.8$ & $13.8 \pm \: 0.6$ \\
MOGP$^*$ + Fragprints & $\textbf{20.4} \pm \: \textbf{0.7}$ & $11.2 \pm \: 0.5$ & $\textbf{11.3} \pm \: \textbf{0.4}$ & $11.4 \pm \: 0.7$ \\

ANP + Morgan & $28.1 \pm \: 1.3$ & $13.6 \pm \: 0.5$ & $13.5 \pm \: 0.6$ & $\textbf{11.0} \pm \: \textbf{0.6}$ \\
ANP + Fragments & $27.9 \pm \: 1.1$ & $13.8 \pm \: 0.9$ & $17.2 \pm \: 0.8$ & $14.1 \pm \: 0.7$ \\
ANP + Fragprints & $27.0 \pm \: 0.8$ & $11.6 \pm \: 0.5$ & $14.5 \pm \: 0.8$ & $11.3 \pm \: 0.7$ \\ 
GCN & $22.0 \pm \: 0.8$  & $12.8 \pm \: 0.8$ & $16.3 \pm \: 0.8$ & $13.1 \pm \: 0.8$\\
GAT & $26.4 \pm \: 1.1$ & $16.9 \pm \: 1.9$ & $19.6 \pm \: 1.0$ & $14.5 \pm \: 0.8$\\
DMPNN & $27.1 \pm \: 1.4$ & $13.9 \pm \: 0.6$ & $17.5 \pm \: 0.7$ & $13.8 \pm \: 0.4$ \\[5pt]\hline \multicolumn{1}{c|}{\underline{\textbf{MAE}}} & & & & \\
RF + Morgan & $15.5 \pm \: 0.5$ & $\textbf{7.3} \pm \: \textbf{0.3}$ & $10.1 \pm \: 0.4$ & $\textbf{6.6} \pm \: \textbf{0.3}$ \\ 
RF + Fragments & $16.4 \pm \: 0.5$ & $8.5 \pm \: 0.3$ & $12.2 \pm \: 0.6$ & $9.0 \pm \: 0.4$ \\ 
RF + Fragprints & $13.9 \pm \: 0.4$ & $7.7 \pm \: 0.3$ & $10.0 \pm \: 0.4$ & $6.8 \pm \: 0.3$ \\ 
GP + Morgan & $15.2 \pm \: 0.4$ & $8.4 \pm \: 0.3$ & $9.8 \pm \: 0.4$ & $6.9 \pm \: 0.3$ \\
GP + Fragments & $17.3 \pm \: 0.4$ & $8.6 \pm \: 0.3$ & $11.5 \pm \: 0.5$ & $8.2 \pm \: 0.3$ \\
GP + Fragprints & $13.3 \pm \: 0.3$ & $8.2 \pm \: 0.3$ & $9.8 \pm \: 0.4$ & $7.1 \pm \: 0.3$ \\
GP + SOAP & $14.3 \pm \: 0.3$ & $19.3 \pm \: 0.5$ & $12.9 \pm \: 0.6$ & $11.4 \pm \: 0.4$ \\
GP + SMILES &$16.6 \pm \: 0.5$ &$8.6 \pm \: 0.3$ & $9.4 \pm \: 0.4$ & $7.4 \pm \: 0.3$ \\ 
GP + SELFIES & $14.7 \pm \: 0.7$ & $8.8 \pm \: 0.3$&  $11.1 \pm \: 0.3$& $8.1 \pm \: 0.2$\\ 

MOGP + Morgan & $15.3 \pm \: 0.4$ & $8.6 \pm \: 0.3$ & $11.9 \pm \: 0.5$ & $7.0 \pm \: 0.3$ \\
MOGP + Fragments & $17.6 \pm \: 0.5$ & $8.8 \pm \: 0.4$ & $12.1 \pm \: 0.6$ & $8.3 \pm \: 0.3$ \\
MOGP + Fragprints & $13.5 \pm \: 0.3$ & $8.3 \pm \: 0.3$ & $10.2 \pm \: 0.5$ & $7.1 \pm \: 0.3$ \\

MOGP$^*$ + Morgan & $14.4 \pm \: 0.4$ & $8.5 \pm \: 0.3$ & $9.6 \pm \: 0.4$ & $6.9 \pm \: 0.4$ \\
MOGP$^*$ + Fragments & $17.2 \pm \: 0.4$ & $8.9 \pm \: 0.3$ & $11.9 \pm \: 0.5$ & $8.5 \pm \: 0.4$ \\
MOGP$^*$ + Fragprints & $\textbf{13.1} \pm \: \textbf{0.3}$ & $8.3 \pm \: 0.3$ & $\textbf{8.8} \pm \: \textbf{0.3}$ & $7.1 \pm \: 0.4$ \\

ANP + Morgan & $17.9 \pm \: 0.7$ & $10.1 \pm \: 0.4$ & $10.0 \pm \: 0.4$ & $7.2 \pm \: 0.3$ \\
ANP + Fragments & $17.4 \pm \: 0.6$ & $9.4 \pm \: 0.4$ & $12.3 \pm \: 0.6$ & $8.9 \pm \: 0.4$ \\
ANP + Fragprints & $18.1 \pm \: 0.5$ & $8.6 \pm \: 0.3$ & $10.4 \pm \: 0.5$ & $7.0 \pm \: 0.3$ \\ 
GCN & $13.9 \pm \: 0.3$ & $8.6 \pm \: 0.3$ & $11.6 \pm \: 0.5$ & $8.6 \pm \: 0.5$ \\
GAT & $18.1 \pm \: 0.7$ & $10.7 \pm \: 0.6$ & $14.4 \pm \: 0.8$ & $10.8 \pm \: 0.7$\\
DMPNN & $17.1 \pm \: 0.8$ & $10.6 \pm \: 0.4$ & $12.8 \pm \: 0.6$ & $9.8 \pm \: 0.3$\\[5pt] \hline \multicolumn{1}{c|}{\emph{\underline{\textbf{R$^{2}$}}}} & & & & \\
RF + Morgan & $0.85 \pm \: 0.01$ & $\textbf{0.80} \pm \: \textbf{0.01}$ & $0.25 \pm \: 0.06$ & $0.36 \pm \: 0.06$ \\ 
RF + Fragments & $0.83 \pm \: 0.01$ & $0.75 \pm \: 0.02$ & $-0.15 \pm \: 0.11$ & $-0.05 \pm \: 0.07$ \\
RF + Fragprints & $0.87 \pm \: 0.01$ & $0.77 \pm \: 0.02$ & $0.23 \pm \: 0.07$ & $0.33 \pm \: 0.06$ \\ 
GP + Morgan & $0.87 \pm \: 0.01$ & $0.76 \pm \: 0.01$ & $0.34 \pm \: 0.05$ &  $\textbf{0.38} \pm \: \textbf{0.05}$ \\
GP + Fragments & $0.84 \pm \: 0.01$ & $0.74 \pm \: 0.02$ & $0.07 \pm \: 0.08$ & $0.19 \pm \: 0.05$ \\
GP + Fragprints & $\textbf{0.90} \pm \: \textbf{0.01}$ & $0.77 \pm \: 0.02$ & $0.35 \pm \: 0.05$ & $0.33 \pm \: 0.05$\\
GP + SOAP & $0.89 \pm \: 0.01$ & $-0.08 \pm \: 0.03$ & $-0.05 \pm \: 0.02$ & $-0.07 \pm \: 0.02$\\
GP + SMILES & $0.84 \pm \: 0.02$& $0.72 \pm \: 0.02$&  $0.39 \pm \: 0.05$&  $0.29 \pm \: 0.04$\\ GP + SELFIES & $0.86 \pm \: 0.01$ & $0.68 \pm \: 0.02$ &$0.20 \pm \: 0.05$ & $0.23 \pm \: 0.04$\\ 

MOGP + Morgan & $0.87 \pm \: 0.01$ & $0.75 \pm \: 0.01$ & $0.06 \pm \: 0.08$ & $0.37 \pm \: 0.05$ \\
MOGP + Fragments & $0.83 \pm \: 0.01$ & $0.73 \pm \: 0.02$ & $-0.05 \pm \: 0.10$ & $0.11 \pm \: 0.06$ \\
MOGP + Fragprints & $0.89 \pm \: 0.01$ & $0.76 \pm \: 0.02$ & $0.30 \pm \: 0.06$ & $0.33 \pm \: 0.05$ \\

MOGP$^*$ + Morgan & $0.88 \pm \: 0.01$ & $0.75 \pm \: 0.01$ & $0.34 \pm \: 0.12$ & $0.39 \pm \: 0.05$ \\
MOGP$^*$ + Fragments & $0.83 \pm \: 0.01$ & $0.72 \pm \: 0.02$ & $-0.06 \pm \: 0.12$ & $0.00 \pm \: 0.08$ \\
MOGP$^*$ + Fragprints & $\textbf{0.90} \pm \: \textbf{0.01}$ & $0.76 \pm \: 0.01$ & $\textbf{0.49} \pm \: \textbf{0.05}$ & $0.33 \pm \: 0.06$ \\

ANP + Morgan & $0.70 \pm \: 0.02$ & $0.66 \pm \: 0.02$ & $0.30 \pm \: 0.06$ & $\textbf{0.38} \pm \: \textbf{0.05}$ \\ 
ANP + Fragments & $0.81 \pm \: 0.01$ & $0.62 \pm \: 0.05$ & $-0.16 \pm \: 0.11$ & $-0.06 \pm \: 0.10$ \\
ANP + Fragprints & $0.83 \pm \: 0.01$ & $0.75 \pm \: 0.01$ & $0.18 \pm \: 0.08$ & $0.35 \pm \: 0.05$ \\ 
GCN & $0.87 \pm \: 0.01$ & $0.66 \pm \: 0.03$ & $-0.41 \pm \: 0.22$ & $-0.92 \pm \: 0.3$\\
GAT & $0.81 \pm \: 0.02$ & $0.57 \pm \: 0.04$ & $0.39 \pm \: 0.17$ & $-1.07 \pm \: 0.4$\\
DMPNN & $0.82 \pm \: 0.02$ & $0.63 \pm \: 0.02$ & $-0.05 \pm \: 0.07$ & $0.11 \pm \: 0.04$\\[5 pt]
\bottomrule
\end{tabular}
\end{adjustbox}
\label{property_pred1}
\end{table}

Results with additional models on the property prediction benchmark for which extensive hyperparameter tuning was not undertaken, are presented in \autoref{property_pred2}. The black-box alpha divergence minimisation BNN was implemented in the Theano library \citep{2016_Theano} and is based on the implementation of \citep{2016_Lobato}. The network has 2 hidden layers of size 25 with ReLU activations. The alpha parameter was set to 0.5, the prior variance for the variational distribution q was set to 1, and 100 samples were taken to approximate the expectation over the variational distribution. For all tasks the network was trained using 8 iterations of the Adam optimiser \citep{2014_Adam} with a batch size of 32 and a learning rate of 0.05. The MPNN was trained for 100 epochs in the case of the \emph{E} isomer $\pi-\pi{^*}$ task and 200 epochs in the case of the other tasks with a learning rate of 0.001 and a batch size of 32. The model architecture was taken to be the library default with the same node and edge features used for the GCN and GAT models in the main paper. The SMILES-X implementation remained the same as that of \citet{2020_Lambard} save for the difference that the network was trained for 40 epochs without \textsc{bo} over model architectures. In the case of SMILES-X 3 random train/test splits were used instead of 20 for the \emph{Z} isomer tasks whereas 2 splits were used for the \emph{E} isomer \emph{n}$-\pi{^*}$ task. For the \emph{E} isomer $\pi-\pi{^*}$ prediction task results are missing due to insufficient RAM on the machine used to run the experiments. 

\begin{table}[!htbp]
\caption{Test set performance in predicting the transition wavelengths of the \emph{E} and \emph{Z} isomers.}
\label{tab2_photo}
\setlength{\extrarowheight}{2pt}
\resizebox{\textwidth}{!}{%
\begin{tabular}{l|cccc}
\toprule
& \emph{E} isomer $\pi-\pi{^*}$ (nm) & \emph{E} isomer \emph{n}$-\pi{^*}$ (nm) & \emph{Z} isomer $\pi-\pi{^*}$ (nm) & \emph{Z} isomer \emph{n}$-\pi{^*}$ (nm) \\ \hline
\multicolumn{1}{c|}{\underline{\textbf{RMSE}}} & & & & \\
BNN + Morgan & $27.0 \pm \: 0.9$ & $12.9 \pm \: 0.6$ & $13.9 \pm \: 0.6$ & $12.7 \pm \: 0.4$ \\ 
BNN + Fragments & $31.2 \pm \: 1.1$ & $14.8 \pm \: 0.8$ & $16.9 \pm \: 0.8$ & $12.7 \pm \: 0.4$ \\
BNN + Fragprints & $26.7 \pm \: 0.8$ & $13.1 \pm \: 0.5$ & $14.9 \pm \: 0.5$ &  $13.0 \pm \: 0.6$ \\ 
MPNN & $24.8 \pm \: 0.8$ & $12.5 \pm \: 0.6$ & $16.7 \pm \: 0.8$ & $12.8 \pm \: 0.7$ \\
SMILES-X &  & $25.1 \pm \: 4.2$ & $17.8 \pm \: 0.6$ & $14.8 \pm \: 0.9$
\\[5pt]\hline \multicolumn{1}{c|}{\underline{\textbf{MAE}}} & & & & \\
BNN + Morgan & $19.0 \pm \: 0.6$ & $9.9 \pm \: 0.4$ & $10.2 \pm \: 0.5$ & $8.6 \pm \: 0.3$ \\ 
BNN + Fragments & $22.4 \pm \: 0.8$ & $10.6 \pm \: 0.4$ & $12.9 \pm \: 0.6$ & $8.6 \pm \: 0.3$ \\ 
BNN + Fragprints & $19.1 \pm \: 0.6$ & $10.1 \pm \: 0.5$ & $10.8 \pm \: 0.4$ & $9.3 \pm \: 0.5$ \\ 
MPNN & $15.4 \pm \: 0.8$ & $8.6 \pm \: 0.3$ & $11.6 \pm \: 0.6$ & $8.4 \pm \: 0.4$ \\
SMILES-X &  & $20.6 \pm \: 3.1$ & $11.6 \pm \: 1.0$ & $11.2 \pm \: 1.0$ \\ \hline
\multicolumn{1}{c|}{\emph{\underline{\textbf{R$^{2}$}}}} & & & & \\
BNN + Morgan & $0.83 \pm \: 0.01$ & $0.69 \pm \: 0.02$ & $0.23 \pm \: 0.08$ & $0.18 \pm \: 0.05$ \\ 
BNN + Fragments & $0.77 \pm \: 0.01$ & $0.58 \pm \: 0.04$ & $-0.15 \pm \: 0.14$ & $0.18 \pm \: 0.05$ \\
BNN + Fragprints & $0.83 \pm \: 0.01$ & $0.68 \pm \: 0.02$ & $0.14 \pm \: 0.06$ & $0.11 \pm \: 0.08$ \\ 
MPNN & $0.83 \pm \: 0.01$ & $0.63 \pm \: 0.06$ & $-0.70 \pm \: 0.34$ &  $-0.68 \pm \: 0.27$ \\
SMILES-X &  & $-0.44 \pm \: 0.30$ & $-0.08 \pm \: 0.06$ & $-0.09 \pm \: 0.04$ \\
\bottomrule
\end{tabular}}
\label{property_pred2}
\end{table}

\subsection{Prediction Error as a Guide to Representation Selection}
\label{representation}

On the \emph{E} isomer $\pi-\pi{^*}$ transition wavelength prediction task, occasionally marked discrepancies were noted in the predictions made under the Morgan fingerprint and fragment representations. The resultant analysis motivated the expansion of the molecular feature set to include both representations as \say{fragprints}.

\subsection{Impact of Dataset Choice}
\label{sec:big_data}

In this section, the generalisation performance was evaluated for a model trained on the \emph{E} isomer $\pi-\pi{^*}$ values of a large dataset of 6142 out-of-domain molecules (including non-azoarene photoswitches) from \citet{2019_Beard}, with experimentally-determined labels. A RF regressor was (due to scalability issues with the \textsc{mogp} on 6000+ data points) implemented in the scikit-learn library with 1000 estimators and a max depth of 300 on the fragprint representation of the molecules. In \autoref{tab_merge2} results are presented for the case when the train set consists of the large dataset of 6142 molecules and the test set consists of the entire photoswitch dataset. Results are also presented on the original \emph{E} isomer $\pi-\pi{^*}$ transition wavelength prediction task where the train set of each random 80/20 train/test split was augmented with the molecules from the large dataset. The results indicate that the data for out-of-domain molecules provides no benefit for the prediction task and even degrades performance, when amalgamated, relative to training on in-domain data only.

\begin{table}[]\caption{Performance comparison of curated dataset against large non-curated dataset.}\label{tab_merge2}
\centering
\setlength{\extrarowheight}{2pt}
\begin{tabular}{lllcl}
\toprule
Dataset & Size & RMSE ($\downarrow$) & MAE ($\downarrow$) & $R^2$ ($\uparrow$) \\ \hline
Large Non-Curated & 6142 & $85.2$ & $72.5$ & $-0.66$ \\
Large Non-Curated + Curated & 6469 & $36.9 \pm 1.2$ &  $22.7 \pm 0.7$ & $0.67 \pm 0.02$ \\ 
Curated & 314 & $\textbf{23.4} \pm \textbf{0.9}$ & $\textbf{13.9} \pm \textbf{0.4}$ & $\textbf{0.87} \pm \textbf{0.01}$ \\
\bottomrule
\end{tabular}
\end{table}

Based on these results the importance of designing synthetic molecular machine learning benchmarks with a real-world application in mind is emphasised, as well as the importance of involving synthetic chemists in the curation process. By targeted data collation on a narrow and well-defined region of chemical space where the molecules are in-domain relative to the task, it becomes possible to mitigate generalisation error.

\subsection{Human Performance Benchmark}\label{sec:human_app}

Below in \autoref{tab_new} the full results breakdown of the human performance benchmark study is provided.

\begin{table}[]\caption{Results breakdown for the human expert performance benchmark predicting the transition wavelength (nm) of the \emph{E} isomer $\pi-\pi^*$ transition for 5 molecules. Closest prediction for each molecule is underlined and highlighted in bold. \textsc{mogp} achieves the lowest MAE relative to all individual human participants.}\label{tab_new}
\centering
\setlength{\extrarowheight}{2pt}
\begin{tabular}{l|lllll|l}
\toprule
 & Mol 1 & Mol 2 & Mol 3 & Mol 4 & Mol 5 & MAE ($\downarrow$) \\
True Value & \textbf{329} & \textbf{407} & \textbf{333} & \textbf{540} & \textbf{565} & \\ \midrule
Postdoc 1 & 325 & 360 & 410 & 490 & 490 & 54.7 \\
PhD 1 & 350 & 400 & 530 & 410 & 425 & 93.3 \\
PhD 2 &  380 & 280 & 530 & 600 & 250 & 177.5 \\
Postdoc 2 & \underline{\textbf{330}} & 350 & 500 & 475 & 500 & 66.7 \\
PhD 3 & 325 & 350 & 350 & \underline{\textbf{540}} & 550 & 16.3 \\
Postdoc 3 & 350 & 370 & 520 & 600 & 500 & 97.5 \\
PhD 4 & \underline{\textbf{330}} & 380 & 390 & 520 & 580 & 34.2 \\
Undergraduate 1 & 340 & 420 & 400 & \underline{\textbf{540}} & 570 & 41.8 \\
Postdoc 4 & 321 & 345 & \underline{\textbf{340}} & 500 & 520 & 28.7 \\
PhD 5 & \underline{\textbf{330}} & 360 & \underline{\textbf{340}} & 500 & 520 & 24.2 \\
PhD 6 & 303 & 367 & 435 & 411 & 450 & 78.7 \\
PhD 7 &  280 & 350 & 450 & 430 & 460 & 85.5 \\
PhD 8 &  270 & 390 & 420 & 420 & 440 & 73.8 \\
PhD 9 & \underline{\textbf{330}} & 310 & 462 & 512 & 512 & 55.3 \\ \midrule
\textsc{mogp} & 321 & \underline{\textbf{413}} & 354 & 518 & \underline{\textbf{569}} & \underline{\textbf{11.9}} \\

\bottomrule
 \end{tabular}
 \end{table}

\subsection{Confidence-Error Curves}
\label{conf_error}

An advantage of Bayesian models for the real-world prediction task is the ability to produce calibrated uncertainty estimates. If correlated with prediction error, a model's uncertainty may act as an additional decision-making criterion for the selection of candidates for lab synthesis. To investigate the benefits afforded by uncertainty estimates, confidence-error curves were produced using the \textsc{gp}-Tanimoto model in conjunction with the fingerprints representation. The confidence-error curves for the RMSE and MAE metrics are shown in \autoref{uc_rmse_plots} and \autoref{uc_mae_plots} respectively. The x-axis, confidence percentile, may be obtained simply by ranking each model prediction of the test set in terms of the predictive variance at the location of that test input. As an example, molecules that lie in the 80th confidence percentile will be the 20\% of test set molecules with the lowest model uncertainty. The prediction error is then measured at each confidence percentile across 200 random train/test splits to see whether the model's confidence is correlated with the prediction error. It is observed that across all tasks, the \textsc{gp}-Tanimoto model's uncertainty estimates are positively correlated with prediction error, offering a proof of concept that model uncertainty can be incorporated into the decision process for candidate selection.

\begin{figure*}[!htbp]
\centering
\subfigure[\emph{E} Isomer $\pi-\pi{^*}$]{\label{fig:4ppa}\includegraphics[width=0.49\textwidth]{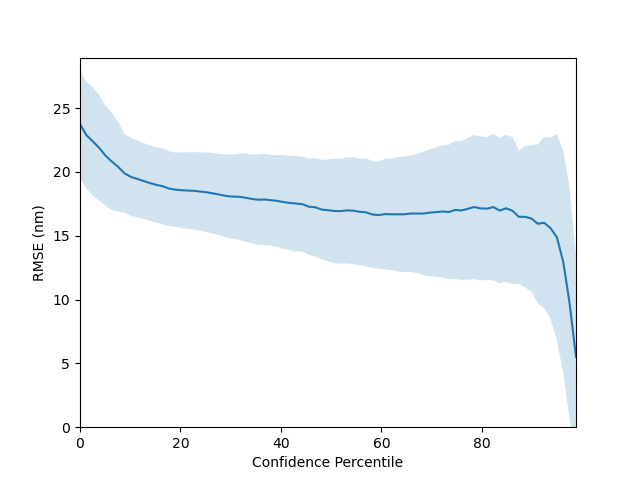}}  
\subfigure[\emph{E} Isomer \emph{n}$-\pi{^*}$]{\label{fig:4ppb}\includegraphics[width=0.49\textwidth]{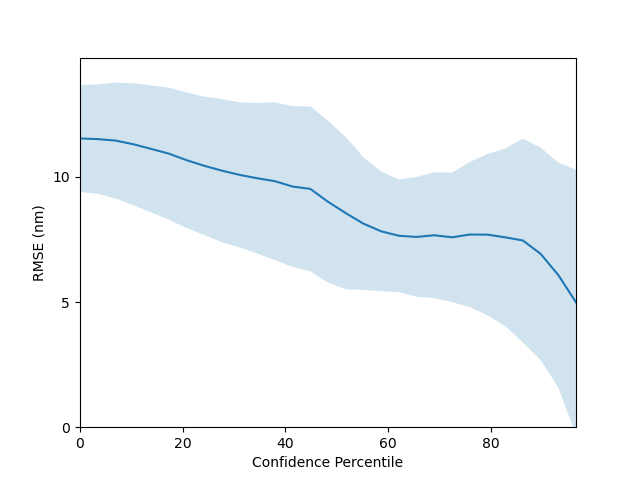}}
\subfigure[\emph{Z} Isomer $\pi-\pi{^*}$]{\label{fig:4ppc}\includegraphics[width=0.49\textwidth]{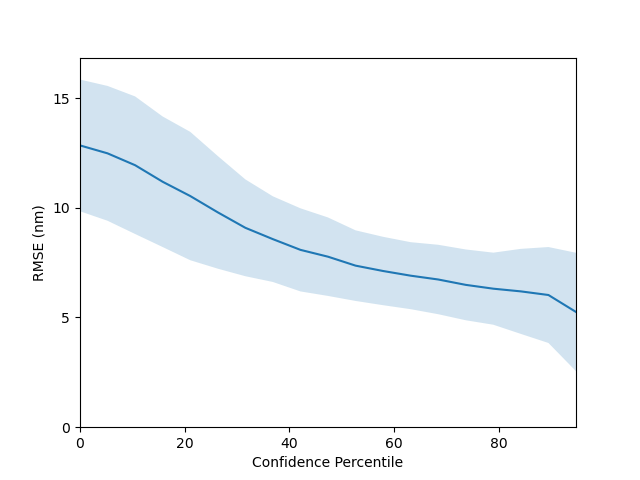}}  
\subfigure[\emph{Z} Isomer \emph{n}$-\pi{^*}$]{\label{fig:4ppd}\includegraphics[width=0.49\textwidth]{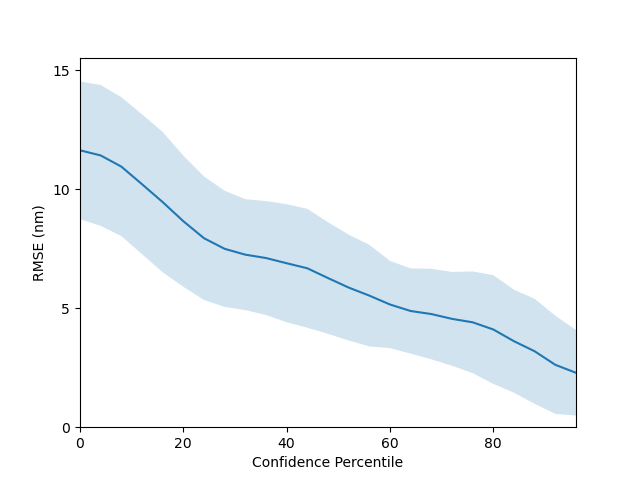}}  
\caption{RMSE confidence-error curves for property prediction using \textsc{gp} regression.}
\label{uc_rmse_plots}
\end{figure*}

\begin{figure*}[!htbp]
\centering
\subfigure[\emph{E} Isomer $\pi-\pi{^*}$]{\label{fig:4pa}\includegraphics[width=0.49\textwidth]{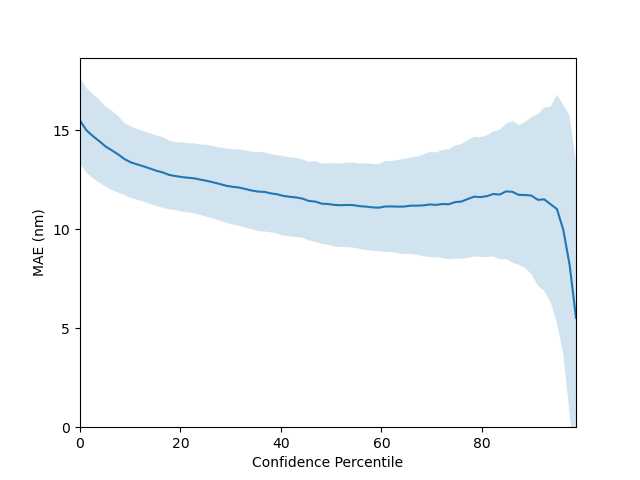}}  
\subfigure[\emph{E} Isomer \emph{n}$-\pi{^*}$]{\label{fig:4pb}\includegraphics[width=0.49\textwidth]{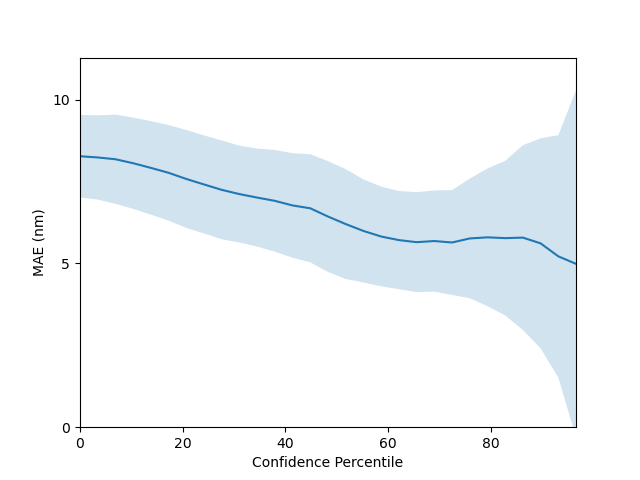}}
\subfigure[\emph{Z} Isomer $\pi-\pi{^*}$]{\label{fig:4pc}\includegraphics[width=0.49\textwidth]{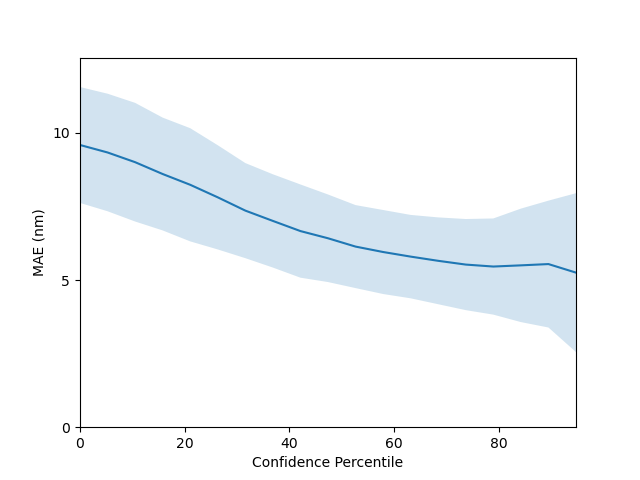}}  
\subfigure[\emph{Z} Isomer \emph{n}$-\pi{^*}$]{\label{fig:4pd}\includegraphics[width=0.49\textwidth]{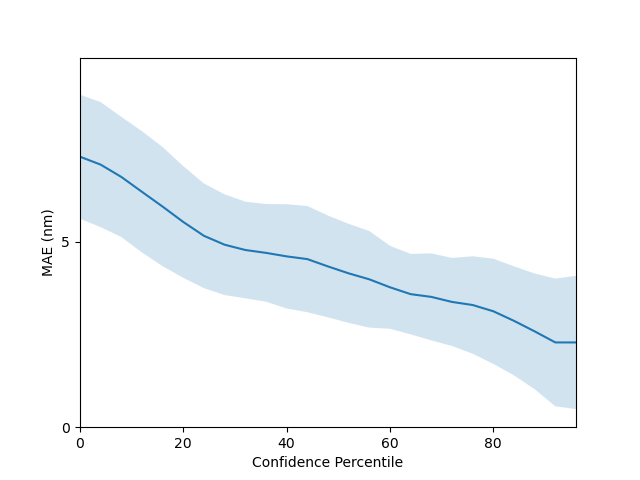}}  
\caption{MAE confidence-error curves for property prediction using \textsc{gp} regression.}
\label{uc_mae_plots}
\end{figure*}

\subsection{TD-DFT Benchmark}
\label{spearman_section}

Below, in \autoref{correlation} and \autoref{signed_error} further plots are included analysing the performance of the methods on the TD-DFT performance comparison benchmark. These plots motivated the use of the Lasso-correction to the TD-DFT predictions.

\begin{figure}[p]
  \centering
  \includegraphics[width=.45\textwidth]{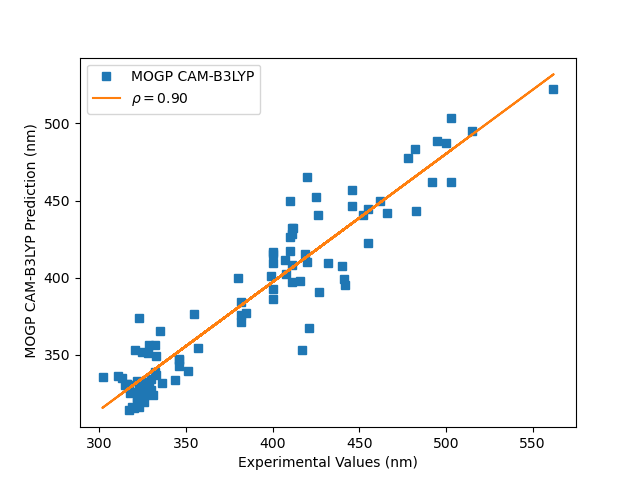}
  \hspace{1cm}
  \includegraphics[width=.45\textwidth]{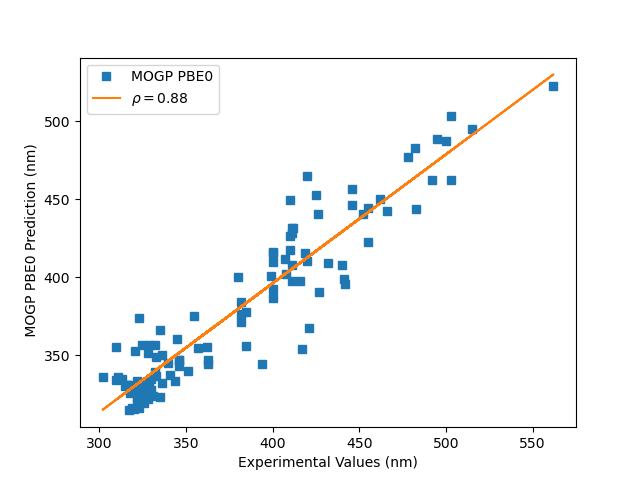}

  \vspace{1cm}

  \includegraphics[width=.45\textwidth]{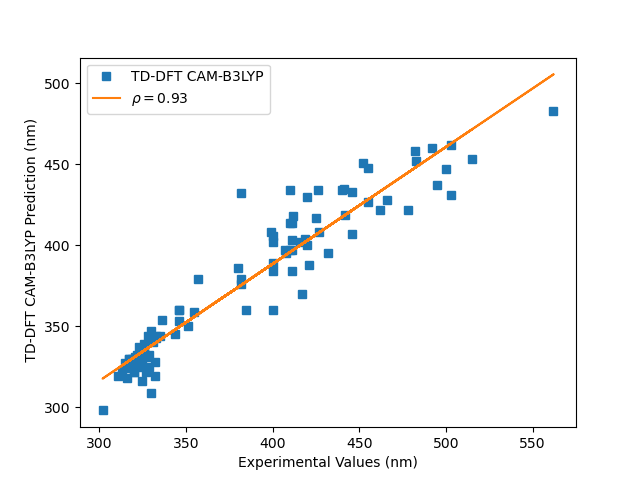}
  \hspace{1cm}
  \includegraphics[width=.45\textwidth]{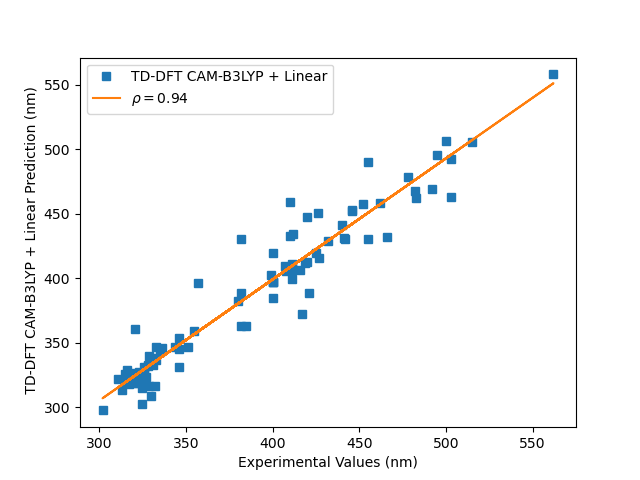}

  \vspace{1cm}

  \includegraphics[width=.45\textwidth]{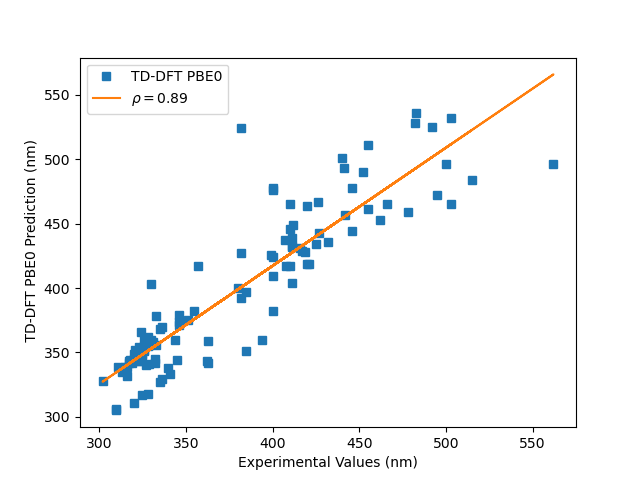}
  \hspace{1cm}
  \includegraphics[width=.45\textwidth]{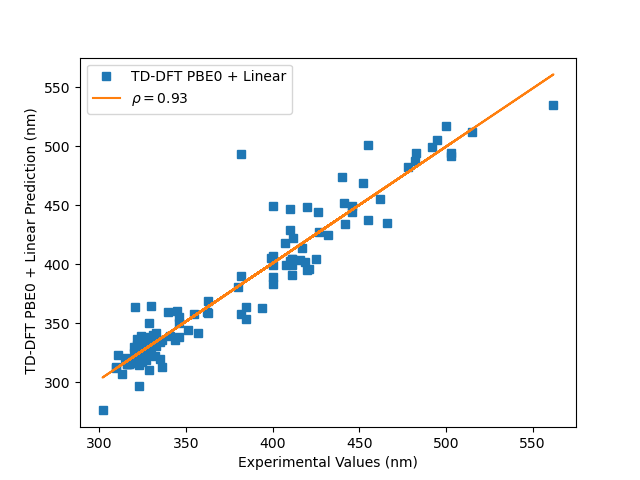}

  \caption{Regression plots for each method on the TD-DFT performance comparison benchmark with the Spearman rank-order correlation coefficient given as $\rho$. One may observe that the correlation between predictions and ground truth experimental values increases with the linear Lasso correction to the TD-DFT methods.}
  \label{correlation}
\end{figure}

\begin{figure}[p]
  \centering
  \includegraphics[width=.45\textwidth]{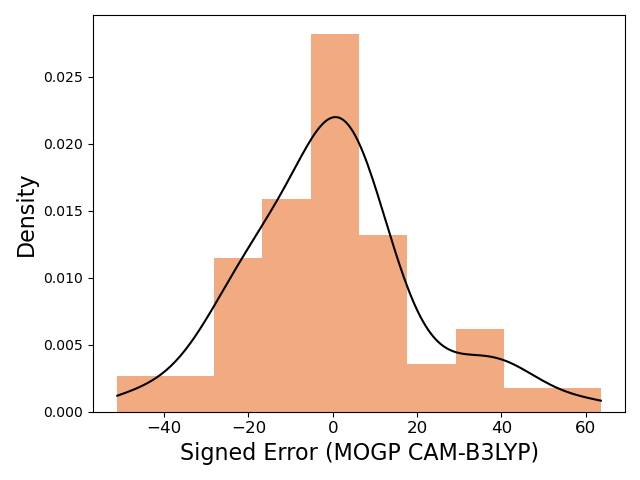}
  \hspace{1cm}
  \includegraphics[width=.45\textwidth]{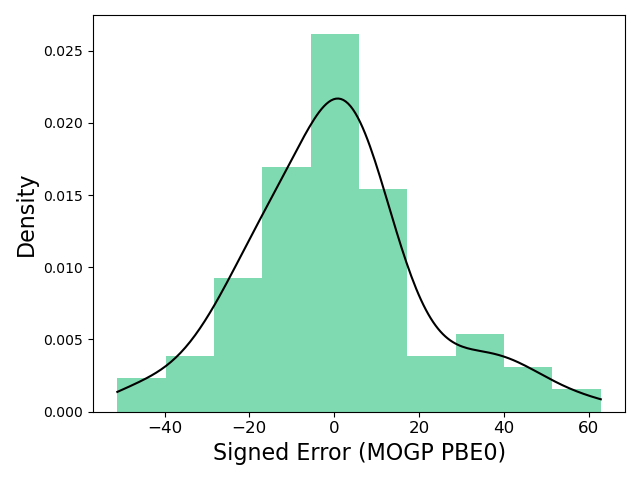}

  \vspace{1cm}

  \includegraphics[width=.45\textwidth]{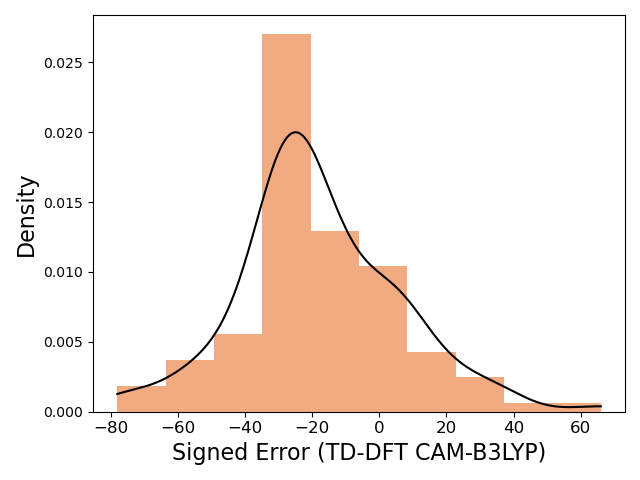}
  \hspace{1cm}
  \includegraphics[width=.45\textwidth]{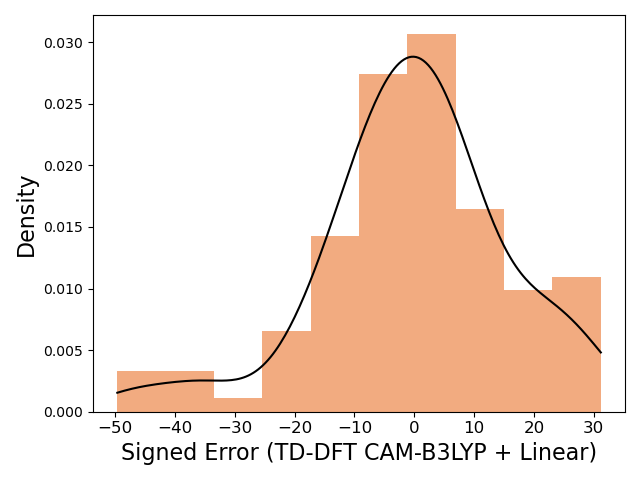}

  \vspace{1cm}

  \includegraphics[width=.45\textwidth]{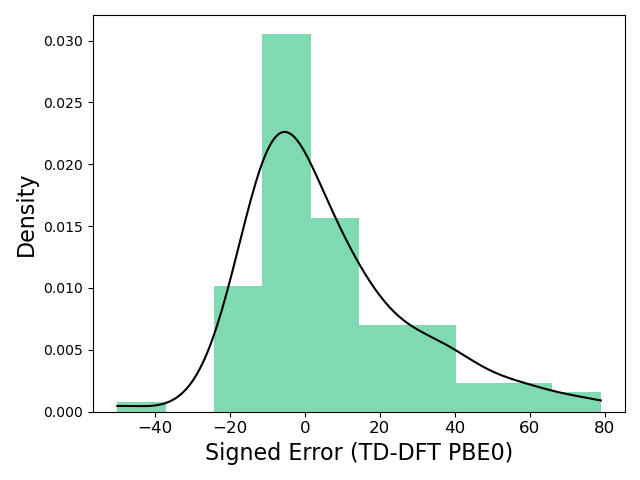}
  \hspace{1cm}
  \includegraphics[width=.45\textwidth]{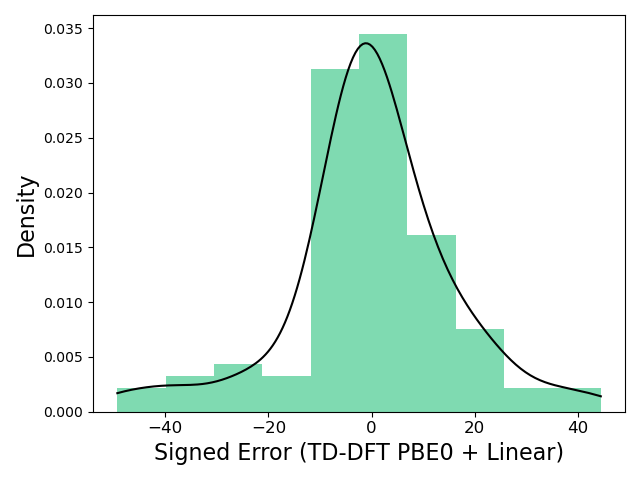}

  \caption{Signed error distributions for each method on the TD-DFT performance comparison benchmark. Signed error is recorded for each heldout molecule in leave-one-out-validation. Gaussian kernel density estimates are overlaid on the histograms. One may observe that the linear Lasso correction for the TD-DFT methods has a centering effect on the error distribution.}
  \label{signed_error}
\end{figure}

\section{Further Screening Details}
\label{exp_app}

Reagents and solvents were obtained from commercial sources (MolPort) and used as supplied. Experimental measurements were performed by Jake L. Greenfield at Imperial College London and below is included his account of the experimental procedure.

\subsection{UV-Vis Absorption Spectroscopy}

UV-Vis absorption spectra were obtained on an Agilent 8453 UV-Visible Spectrophotometer G1103A. A sampler holder with four open faces was used to enable in-situ irradiation (90° to the measurement beam). Samples were prepared in a UV Quartz cuvette with a path length of 10 mm. Solutions of the compounds were prepared in HPLC grade DMSO at a concentration of 25 $\mu\text{M}$.

\subsection{Photoswitching}

Samples were irradiated with a custom-built irradiation set up using 365 nm (3 × 800 mW Nichia NCSU276A LEDs, FWHM 9 nm), 405 nm (3 × 770 mW Nichia NCSU119C LEDs, FWHM 11 nm), 450 nm (3 × 900 mW Nichia NCSC219B-V1 LEDs, FWHM 18 nm), 495 nm (3 × 750 mW Nichia NCSE219B-V1 LEDs, FWHM 32 nm), 525 nm (3 × 450 mW NCSG219B-V1 LEDs, FWHM 38 nm) and 630 nm (3 × 780 mW Nichia NCSR219B-V1 LEDs, FWHM 16 nm) light sources. Samples were irradiated until no further change in the UV-vis absorption spectra was observed, indicating that the Photostationary State (PSS) was reached. The PSS, and the \say{predicted pure Z} spectra was determined using UV-vis following the procedure reported by \citet{Fischer1967}.

\section{Novelty of Screened Candidates relative to The Photoswitch Dataset}

In \autoref{discovered}, for each of the 6 candidates satisfying both performance criteria, some indication as to the novelty of the discovered photoswitch candidates is provided by giving the 3 closest molecules by Tanimoto similarity from the Photoswitch Dataset.

\begin{figure}[p]
  \centering
  \includegraphics[width=.45\textwidth]{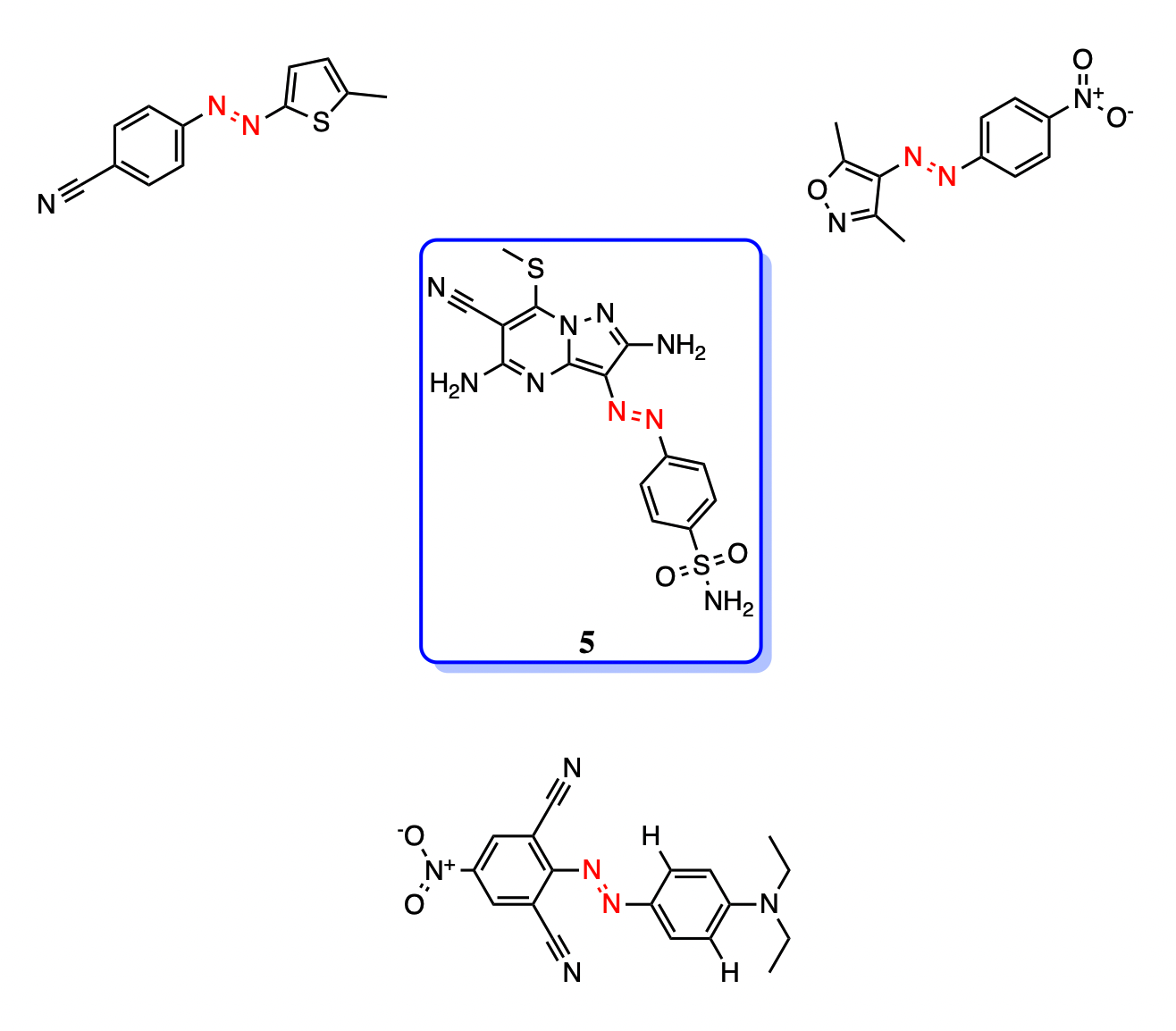}
  \hspace{1cm}
  \includegraphics[width=.45\textwidth]{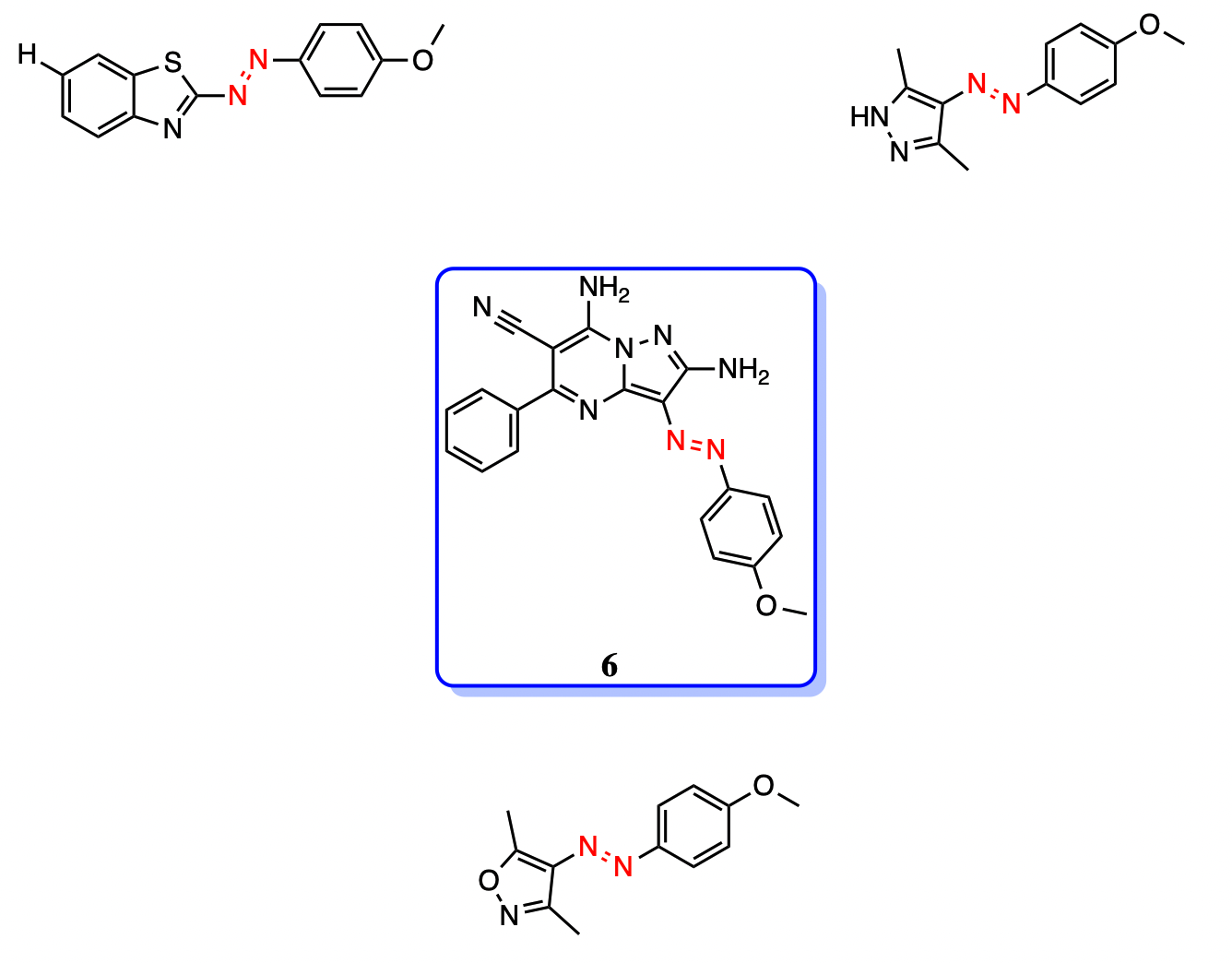}

  \vspace{1cm}

  \includegraphics[width=.45\textwidth]{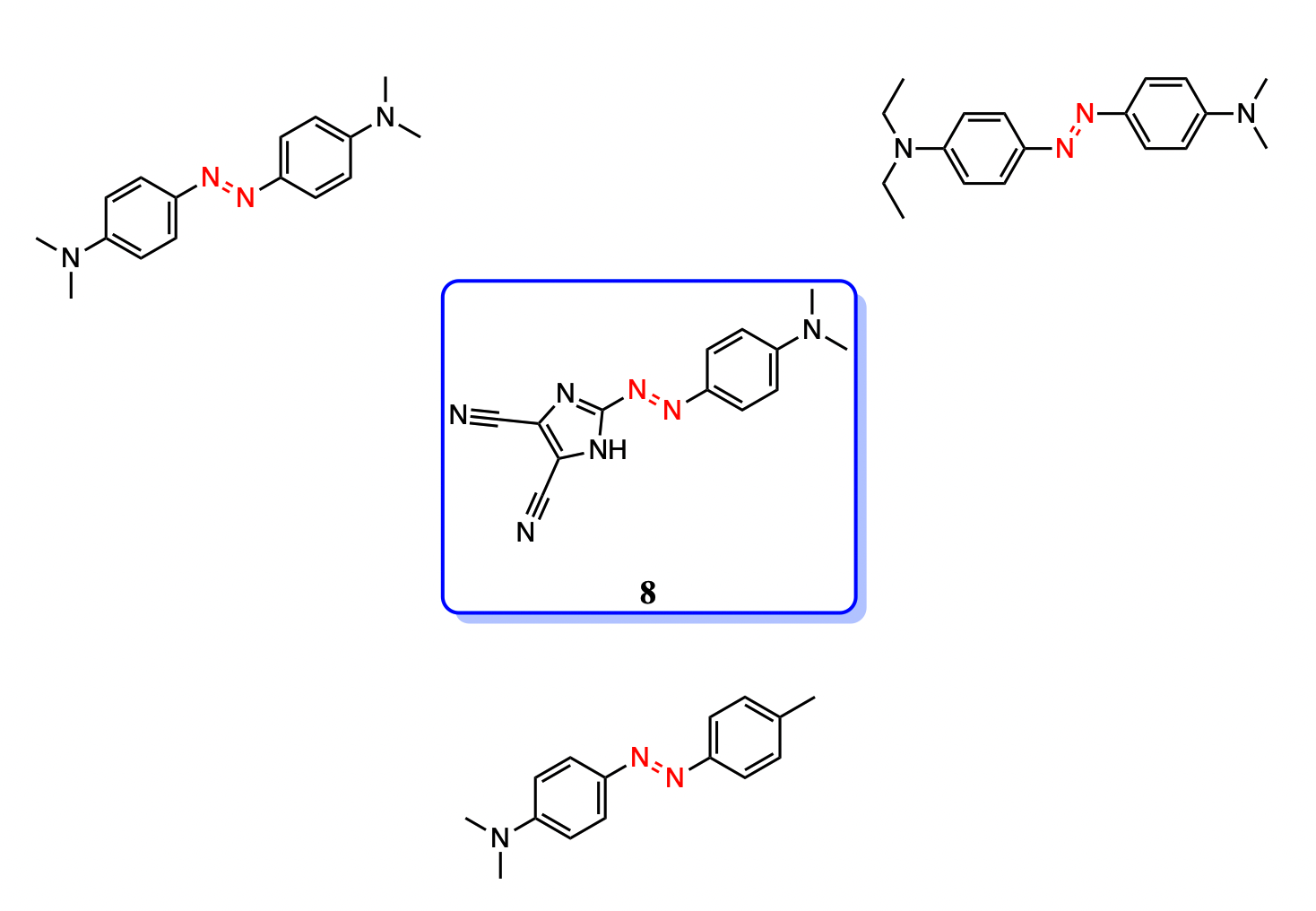}
  \hspace{1cm}
  \includegraphics[width=.45\textwidth]{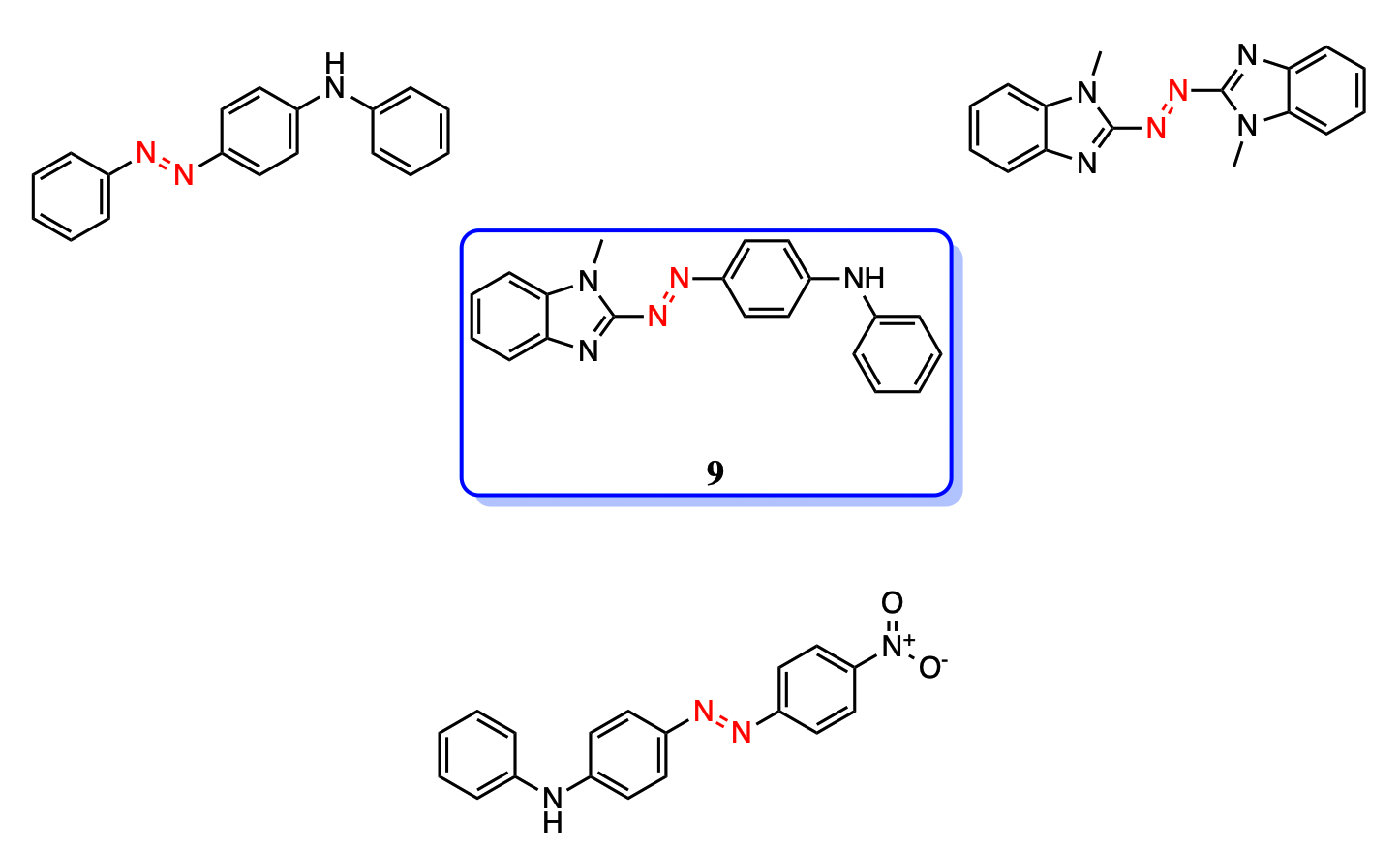}

  \vspace{1cm}

  \includegraphics[width=.45\textwidth]{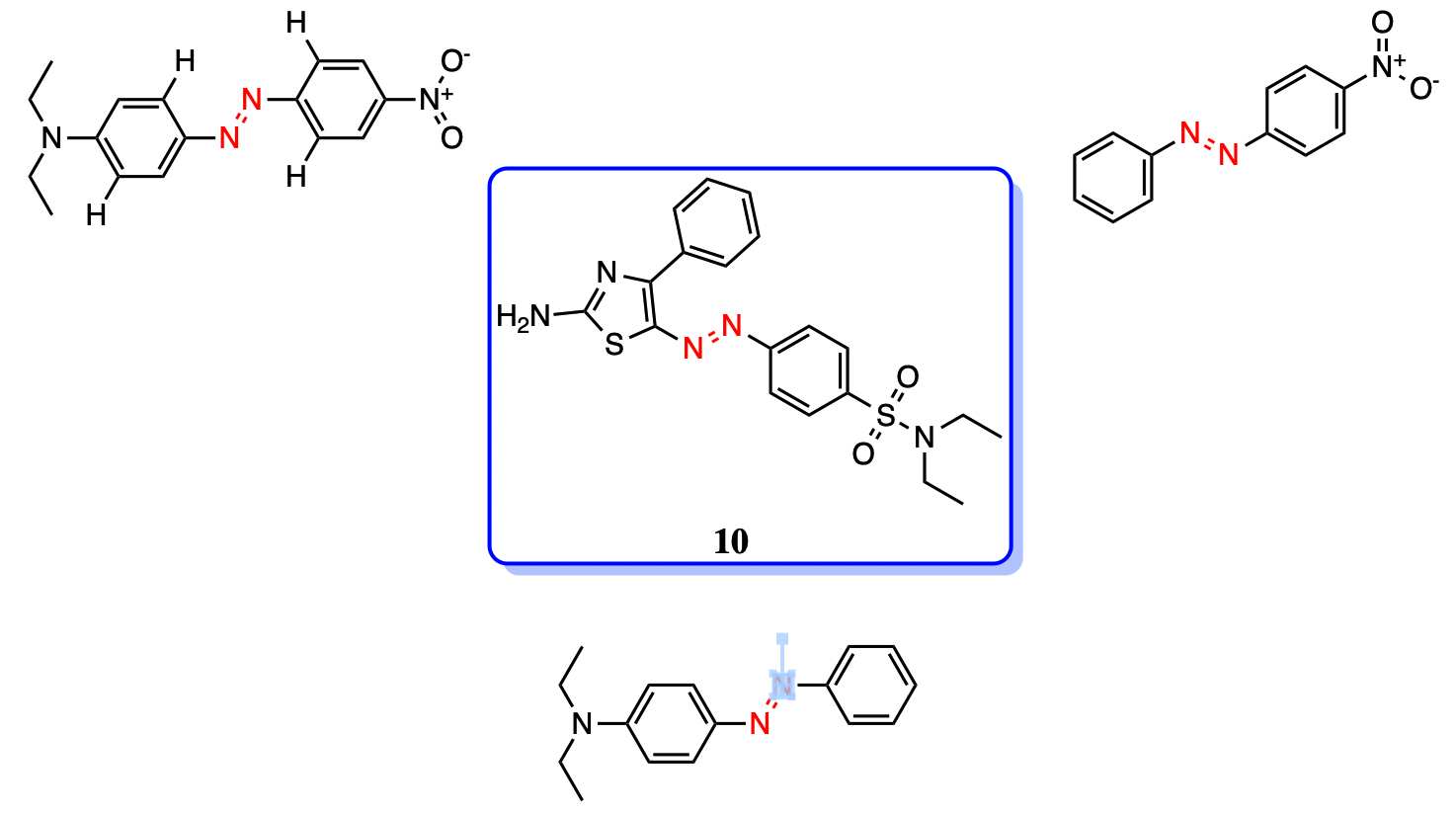}
  \hspace{1cm}
  \includegraphics[width=.45\textwidth]{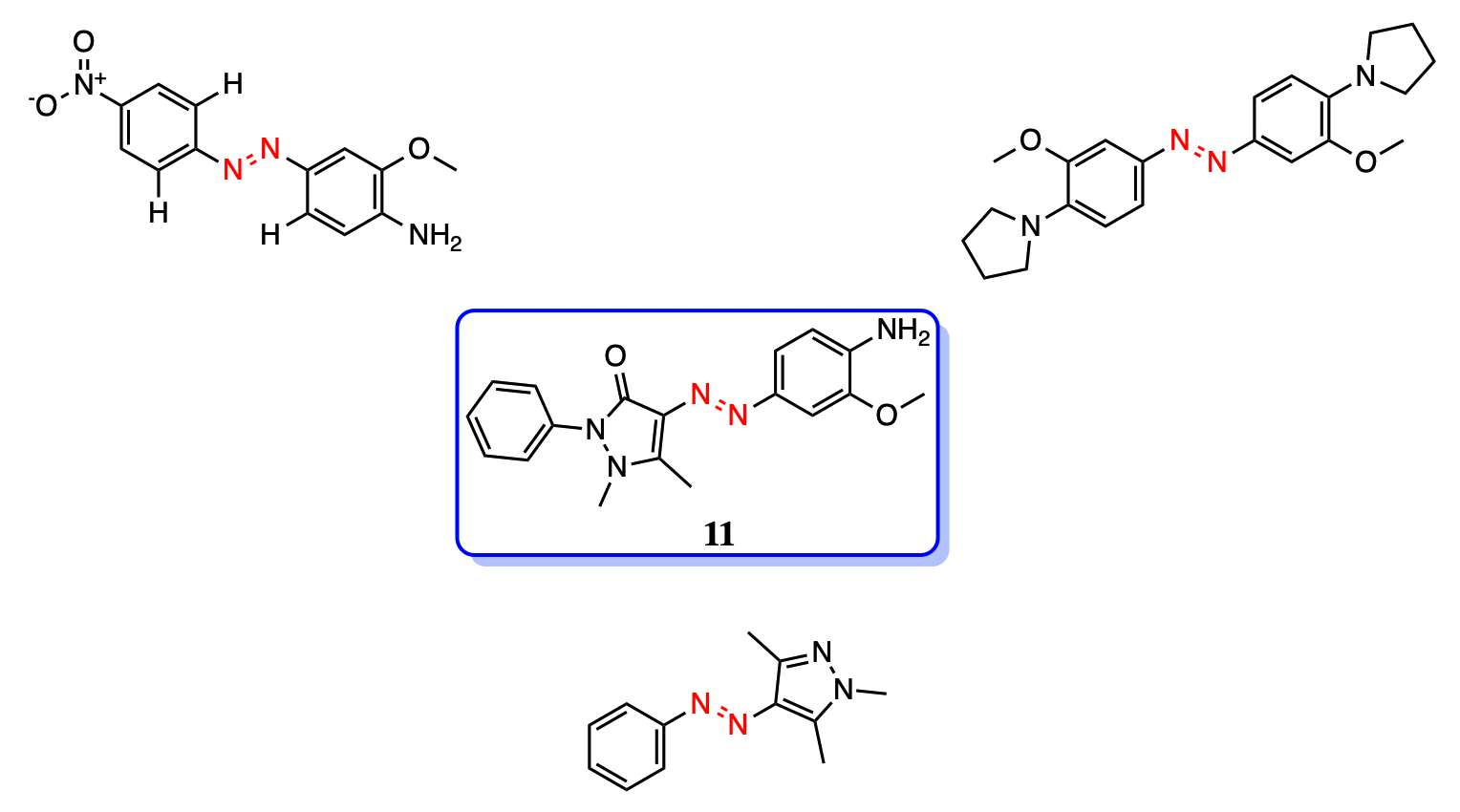}

  \caption{All 6 experimentally-tested candidates satisfying both performance criteria together with the 3 closest molecules by Tanimoto similarity in the Photoswitch Dataset.}
  \label{discovered}
\end{figure}

\end{appendices}

\printthesisindex %

\end{document}